\def\argmin{{\arg\min}}
\def\bbB{\mathbb{B}}
\def\bbE{\mathbb{E}}
\def\bbI{\mathbb{I}}
\def\bbP{\mathbb{P}}
\def\bbR{\mathbb{R}}
\def\cD{\mathcal{D}}
\def\cF{\mathcal{F}}
\def\cI{\mathcal{I}}
\def\cM{\mathcal{M}}
\def\cN{\mathcal{N}}
\def\cO{\mathcal{O}}
\def\cS{\mathcal{S}}
\def\cX{\mathcal{X}}
\def\cY{\mathcal{Y}}
\def\diag{{\sf diag}}
\def\exp{{\sf exp}}
\def\rank{{\sf rank}}
\def\MLE{{\sf MLE}}
\def\Tr{{\sf Tr}}
\def\uni{{\sf Uniform}}
\def\vec{{\sf vec}}
\def\MWLE{{\sf MWLE}}
\newcommand{\indep}{\perp \!\!\! \perp}
\title{\textbf{Maximum Likelihood Estimation is All You Need for Well-Specified Covariate Shift}}
\author
{\normalsize
Jiawei Ge\thanks{equal contribution}~\thanks{Department of Operations Research and Financial Engineering, Princeton University; 
\texttt{\{jg5300,shangetang,\newline jqfan\}@princeton.edu}}
\qquad Shange Tang \footnotemark[1]~\footnotemark[2]
\qquad Jianqing Fan\footnotemark[2]
\qquad Cong Ma\thanks{Department of Statistics, University of Chicago;
\texttt{congm@uchicago.edu}}
\qquad Chi Jin\thanks{Department of Electrical and Computer Engineering, Princeton University; 
\texttt{chij@princeton.edu}}
}
\date{}
\begin{document}

\maketitle

\begin{abstract}

A key challenge of modern machine learning systems is to achieve Out-of-Distribution (OOD) generalization---generalizing to target data whose distribution differs from that of source data. Despite its significant importance, the fundamental question of ``what are the most effective algorithms for OOD generalization'' remains open even under the standard setting of covariate shift.
This paper addresses this fundamental question by proving that, surprisingly, classical Maximum Likelihood Estimation (MLE) purely using source data (without any modification) achieves the \emph{minimax} optimality for covariate shift under the \emph{well-specified} setting. That is, \emph{no} algorithm performs better than MLE in this setting (up to a constant factor), justifying MLE is all you need.
Our result holds for a very rich class of parametric models, and does not require any boundedness condition on the density ratio. We illustrate the wide applicability of our framework by instantiating it to three concrete examples---linear regression, logistic regression, and phase retrieval. This paper further complement the study by proving that, under the \emph{misspecified setting}, MLE is no longer the optimal choice, whereas Maximum Weighted Likelihood Estimator (MWLE) emerges as minimax optimal in certain scenarios.

\end{abstract}

\section{Introduction}
Distribution shift, where the distribution of test data (target data) significantly differs from the distribution of training data (source data), is commonly encountered in practical machine learning scenarios \citep{zou2018unsupervised,ramponi2020neural,guan2021domain}. A central challenge of modern machine learning is to achieve Out-of-Distribution (OOD) generalization, where learned models maintain good performance in the target domain despite the presence of distribution shifts. To address this challenge, a variety of algorithms and techniques have been proposed, including vanilla empirical risk minimization (ERM) \citep{vapnik1999overview,gulrajani2020search}, importance weighting \citep{SHIMODAIRA2000227,huang2006correcting,cortes2010learning,cortes2014domain}, learning invariant representations \citep{ganin2016domain, arjovsky2019invariant,wu2019domain,rosenfeld2020risks}, distributionally robust optimization (DRO) \citep{sagawa2019distributionally}, etc. See the recent survey~\citep{shen2021towards} for more details. These results claim the effectiveness of the corresponding proposed algorithms in different regimes. This leads to a natural fundamental question:
\begin{center}
\textit{What are the most effective algorithms for OOD generalization?}    
\end{center}

This paper consider a widely-studied formulation of OOD-generalization---covariate shift. Under covariate shift, the marginal distributions of the input covariates $X$ vary between the source and target domains, while the conditional distribution of output given covariates $Y\mid X$ remains the same across domains. We consider learning a model from a \emph{known parametric model class} under \emph{well-specified} setting, where well-specification refers to the problems where the true conditional distribution of $Y \mid X$ lies in the given parametric model class. We argue that well-specified setting becomes increasingly more relevant in modern learning applications, because these applications typically use large-scale models with an enormous number of parameters, which are highly expressive and thus make the settings ``approximately'' well-specified.

Unfortunately, even under the basic setup of well-specified covariate shift, the aforementioned highlighted problem remains elusive --- while the seminar work \cite{SHIMODAIRA2000227} provides the first asymptotic guarantees for classical Maximum Likelihood Estimation (MLE) algorithm under this setup, and proves its optimality among a specific class of weighted likelihood estimators, his results leave two critical questions open: (1) Does MLE remain effective in the practical non-asymptotic scenario when the number of data is limited? (2) Do there exist smart algorithms beyond the class of weighted likelihood estimators that outperform MLE? This paper precisely addresses both critical questions and thus resolving the highlighted problem under well-specified covariate shift.

\paragraph{Our contributions.} Concretely, this paper makes following contributions:

\begin{enumerate}[leftmargin=*]
\item We prove that, for a large set of well-specified covariate shift problems, the classical Maximum Likelihood Estimation (MLE) --- which is computed purely based on source data without using any target data --- finds the optimal predictor on the target domain with prediction loss decreases as $\tilde{O}(\Tr(\cI_T\cI^{-1}_S)/n)$. Here $\Tr(\cdot)$ standards for trace, $\cI_S, \cI_T$ are the fisher information under source and target data distribution respectively, and $n$ is the number of source data. Our result does not require any boundedness condition on the density ratio, and is, to our best knowledge, the \emph{first} general, non-asymptotic, sharp result for MLE on a rich class of covariate shift problems.

\item We provide the \emph{first} minimax lower bound under well-specified covariate shift for \emph{any algorithm}, matching the error rate of MLE. This implies that MLE is minimax optimal, and no algorithm is better than MLE in this setting (up to a constant factor), justifying ``MLE is all you need''.

\item We instantiate our generic results by considering three representative examples with distinct problem structures: linear regression, logistic regression and phase retrieval. We verify preconditions, compute key quantities, and directly give covariate shift guarantees for these applications.


\item We further complement the study of this paper by considering the \emph{mis-specfied} setting where MLE ceases to work. We establish the \emph{first} general, non-asymptotic upper bound for the Maximum Weighted Likelihood Estimator (MWLE) provided bounded likelihood ratio. We prove that MWLE is minimax optimal under certain worst-case mis-specification.

\end{enumerate}

\paragraph{MLE versus MWLE.} This paper shows that importance weighting should not always be the go-to algorithm for covariate shift problems. Despite MWLE works under more general mis-specified setting given bounded density ratio, in the well-specified regime, MLE does not require bounded density ratio, and is provably more efficient than MWLE in terms of sample complexity. MLE is all you need for well-specified covariate shift problem.

\subsection{Related work}
\paragraph{Parametric covariate shift.} The statistical study of covariate shift under parametric models can be dated back to \cite{SHIMODAIRA2000227}, which established the asymptotic normality of MWLE and pointed out that vanilla MLE is asymptotically optimal among all the weighted likelihood estimators when the model is well-specified. However, no finite sample guarantees were provided, and the optimality of MLE is only proved within the restricted class of weighted likelihood estimators. In contrast, this paper establishes non-asymptotic results and proves the optimality of MLE among all possible estimators under well-specified models. \cite{NIPS2010_59c33016} studied the importance weighting under the statistical learning framework and gave a non-asymptotic upper bound for the generalization error of the weighted estimator. However, their rate scales as $\cO(1/\sqrt{n})$ compared to our rate $\cO(1/n)$, where $n$ is the sample size.
A recent line of work also provide non-asymptotic analyses for covariate shift under well-specified setting, however they focus on linear regression or a few specific models which are more restrictive than our setting: \cite{mousavi2020minimax} introduces a statistical minimax framework and provides lower bounds for OOD generalization in the context of linear and one-hidden layer neural network regression models. When applied to covariate shift, their lower bounds are loose and no longer minimax optimal. \cite{jason2021near} considers the minimax optimal estimator for linear regression under fixed design, the estimator they proposed is not MLE and is much more complicated in certain regimes. Finally, \cite{zhang2022class} considers covariate shift in linear regression where the learner can have access to a small number of target labels, this is beyond the scope of this paper, where we focus on the classical covariate shift setup in which target labels are not known.

\paragraph{Nonparametric covariate shift.} Another line of work focuses on well-specified nonparametric models under covariate shift. \cite{kpotufe2018marginal} presented minimax results for nonparametric classification problem, which was controlled by a transfer-exponent that measures the discrepancy between source and target.
Inspired by the aforementioned work, \cite{pathak2022new} studied nonparametric regression problem over the class of Hölder continuous functions with a more fine-grained similarity measure.
When considering reproducing
kernel Hilbert space (RKHS), \cite{ma2023optimally} showed  kernel ridge regression (KRR) estimator with a properly chosen penalty is minimax optimal for a large family of RKHS when the likelihood ratio is uniformly bounded, and a reweighted KRR using truncated likelihood ratios is minimax optimal when the likelihood ratio has a finite second moment. Later, \cite{wang2023pseudo} proposed a learning strategy based on pseudo-labels. When the likelihood ratio is bounded, their estimator enjoyed the optimality guarantees without prior knowledge about the amount of covariate shift. 
Although these works focused on covariate shift problems, they considered nonparametric setting, and hence are not directly comparable to our work. 
As an example, \cite{ma2023optimally} showed that MLE (empirical risk minimization in their language) is provably suboptimal for addressing covariate shift under nonparametric RKHS assumptions. In contrast, we show that MLE is optimal for covariate shift for a well-specified parametric model. 
We also highlight that our lower bound is instance dependent in the sense that it depends on the source and target distributions. This is in contrast to prior work (e.g. \cite{ma2023optimally}, \cite{kpotufe2018marginal}) that consider the worst-case scenario over certain classes of source-target pairs (e.g., bounded density ratios).



\paragraph{Maximum likelihood estimation.} A crucial part of this work is analyzing MLE, which is a dominant approach in statistical inference. There exists a variety of work studying the behavior of MLE under the standard no-distribution-shift setting. It is well known that MLE is asymptotically normal \citep{casella2021statistical} with the inverse of Fisher information as the asymptotic variance. \cite{Cramr1946MathematicalMO,Rao1992InformationAT} established the famous Cramer-Rao bound for unbiased estimators, which also showed that no consistent estimator has lower asymptotic mean squared error than the MLE. \cite{white1982maximum} gave the asymptotic distribution of MLE under the mis-specified setting. More recently, non-asymptotic behaviours of MLE are studied under certain models. \cite{10.1214/09-EJS521,ostrovskii2021finite} established the non-asymptotic error bound for MLE in logistic regression using self-concordance. This line of work does not consider covariate shift, which is an indispensable part of this paper.

\paragraph{Importance reweighting algorithms.} Lastly, importance reweighting (or importance sampling) is a classical method to use independent samples from a proposal distribution to approximate expectations w.r.t.~a target measure~\citep{agapiou2017importance}. \cite{chatterjee2018sample} studied the sample size (depending on the KL divergence between two distributions) required for importance sampling to approximate a single function. \cite{sanz2018importance} extended analysis to the case with general $f$-divergences. In addition to correcting covariate shift, importance reweighting has been central in offline reinforcement learning. For instance, \cite{ma2022minimax} showed a truncated version of importance reweighting is minimax optimal for estimation the value of a target policy using data from a behavior policy. For learning the optimal policy from the behavior data, \cite{swaminathan2015batch} presented upper bounds of an importance-reweighted estimator. This spurs a long line of work of using importance weighting in offline RL. See the recent work~\cite{gabbianelli2023importance} and the references therein. 


\section{Background and Problem Formulation}
In this section, we provide background on the problem of learning under covariate shift. We also review two widely adopted estimators: maximum likelihood estimator and maximum weighted likelihood estimator.

\paragraph{Notations.} 
Throughout the paper, we use $c$ to denote universal constants, which may vary from line to line.

\subsection{Covariate shift and excess risk}
Let $X\in \cX$ be the covariates and $Y\in\cY$ be the response variable that we aim to predict. In a general out-of-distribution (OOD) generalization problem, we have two domains of interest, namely a source domain $S$ and a target domain $T$. Each domain is associated with a data generating distribution over $(X,Y)$: $\bbP_S(X,Y)$ for the source domain and $\bbP_T(X,Y)$ for the target domain. Given 
$n$ i.i.d.~labeled samples $\{(x_i,y_i)\}^n_{i=1}\sim\bbP_S(X,Y)$ from the source domain, the goal of OOD generalization is to learn a prediction rule $X \to Y$ that performs well in the target domain.
In this paper, we focus on the covariate shift version of the OOD generalization problem, in which the marginal distributions $\bbP_S(X)$ and $\bbP_T(X)$ of the covariates could differ between the source and target domains, while the conditional distribution $Y\,|\,X$ is assumed to be the same on both domains.  

More precisely, we adopt the notion of excess risk to measure of the performance of an estimator under covariate shift. 
Let $\cF:=\{f(y\,|\,x;\beta) \mid \beta\in\bbR^d\}$ be a parameterized function class to model the conditional density function $p(y\,|\,x)$ of $Y\,|\,X$. A typical loss function is  defined using the negative log-likelihood function:
\begin{align*}
\ell(x,y,\beta):=-\log f(y\,|\, x;\beta).
\end{align*}
The excess risk at $\beta$ is then defined as
\begin{align}\label{risk}
R(\beta):= \bbE_{T}\left[\ell(x,y,\beta)\right]-\inf_{\beta}\bbE_{T}\left[\ell(x,y,\beta)\right],
\end{align}
where the expectation $\bbE_{T}$ is taken over $\bbP_T(X,Y)$. 
When the model is well-specified, i.e., when the true density $p(y\,|\,x)=f(y\,|\, x;\beta^{\star})$ for some $\beta^{\star}$, we have $\inf_{\beta}\bbE_{T}[\ell(x,y,\beta)]= \bbE_{T}[\ell(x,y,\beta^{\star})]$. As a result, we evaluate the loss at $\beta$ against the loss at the true parameter $\beta^{\star}$. 
In contrast, in the case of mis-specification, i.e., when $p(y\,|\,x) \notin \cF$, the loss at $\beta$ is compared against the loss of the best fit in the model class.

\subsection{Maximum likelihood estimation and its weighted version}
In the no-covariate-shift case, maximum likelihood estimation (MLE) is arguably the most popular approach. 
Let \begin{align}\label{def:emp_loss}
\ell_n(\beta):=\frac1n \sum^n_{i=1}\ell(x_i,y_i,\beta)
\end{align}
be the empirical negative log-likelihood using the samples $\{(x_i,y_i)\}^n_{i=1}$ from the source domain. 
The vanilla MLE is defined as
\begin{align}\label{eq:MLE}
\beta_{\MLE}:=\argmin_{\beta\in\bbR^d} \ell_n(\beta).
\end{align}

One potential ``criticism'' against MLE in the covariate shift setting is that the empirical negative log-likelihood is not a faithful estimate of the out-of-distribution generalization performance, i.e., 
$\bbE_{T}\left[\ell(x,y,\beta)\right]$. In light of this, a weighted version of MLE is proposed. Let $w(x):=d\bbP_T(x)/d\bbP_S(x)$ be the density ratio function and
\begin{align}\label{def:emp_loss_mis}
\ell^{w}_n(\beta):=\frac1n  \sum^n_{i=1} w(x_i) \ell(x_i,y_i,\beta).
\end{align}
be the weighed loss. Then the maximum weighted likelihood estimator is defined as
\begin{align}\label{eq:WMLE}
\beta_{\MWLE}:=\argmin_{\beta\in\bbR^d} \ell^{w}_n(\beta).
\end{align}
It is easy to see that the weighted loss is an unbiased estimate of $\bbE_{T}\left[\ell(x,y,\beta)\right]$.


To ease  presentations later, we would also recall the classical notion of Fisher information---an important quantity to measure the difficulty of parameter estimation. The Fisher information evaluated at $\beta$ on source and target is defined as
\begin{align*}
&\cI_S(\beta):=\bbE_{x\sim\bbP_S(X), y\,|\,x\sim f(y\,|\, x;\beta)}\left[\nabla^2 \ell(x,y,\beta)\right],\\
&\cI_T(\beta):=\bbE_{x\sim\bbP_T(X), y\,|\,x\sim f(y\,|\, x;\beta)}\left[\nabla^2 \ell(x,y,\beta)\right].
\end{align*}
Here, the gradient and Hessian are taken with respect to the parameter $\beta$. 



\section{Well-Specified Parametric Model under Covariate Shift}\label{well-specified}
In this section, we focus on covariate shift with a well-specified model, that is, the true conditional distribution falls in our parametric function class. This setting aligns with the practice, since in modern machine learning we often deploy large models whose representation ability are so strong that every possible true data distribution almost falls in the function class. We assume there exists some $\beta^{\star}$ such that 
$p(y\,|\,x)=f(y\,|\,x;\beta^{\star})$, and
denote the excess risk evaluated at $\beta$ under true model parameter $\beta^{\star}$ as $R_{\beta^{\star}}(\beta)$, i.e.,
\begin{align}\label{risk2}
R_{\beta^{\star}}(\beta):= \bbE_{\substack{x\sim\bbP_T(X)\\y|x\sim f(y|x;\beta^{\star})}}\left[\ell(x,y,\beta)\right]-\bbE_{\substack{x\sim\bbP_T(X)\\y|x\sim f(y|x;\beta^{\star})}}\left[\ell(x,y,\beta^{\star})\right].
\end{align}

While the objective of MLE (cf.~\eqref{eq:MLE}) is not an unbiased estimate of the risk under the target domain, we will show in this section that MLE is in fact optimal for addressing covariate shift under well-specified models.

More specifically, in Section~\ref{sec:upper-MLE}, we provide the performance upper bound for MLE under generic assumptions on the parametric model. Then in Section~\ref{sec:lower-bound-well-specified}, we characterize the performance limit of any estimator in the presence of covariate shift. As we will see, MLE is minimax optimal as it matches the performance limit.

\subsection{Upper bound for MLE}\label{sec:upper-MLE}
In this subsection, we establish a non-asymptotic upper bound for MLE under generic assumptions on the model class.  



\begin{assumption}\label{assm:well_upper}
We make the following assumptions on the model class $\cF$:
\begin{enumerate}[label=A.\arabic*,leftmargin=*]
\item \label{assm1} 
  There exist $B_1, B_2$, $N(\delta)$, and absolute constants $c, \gamma$ such that for any fixed  matrix $A \in \mathbb{R}^{d \times d}$, any $\delta \in (0, 1)$, and any $n > N(\delta)$, with probability at least $1-\delta$:
\begin{align}\label{assm1:ineq1}
\left\|A\left(\nabla\ell_n(\beta^{\star})-\bbE[\nabla\ell_n(\beta^{\star})]\right)\right\|_{2} & \leq c \sqrt{\frac{ V \log \frac{d}{\delta}}{n}}+ B_{1} \|A\|_2 \log^\gamma\left(\frac{B_{1} \|A\|_2}{\sqrt{V}}\right) \frac{ \log \frac{d}{\delta}}{n},\\\label{assm1:ineq2}
\left\|\nabla^{2} \ell_{n}(\beta^{\star})-\bbE[\nabla^{2} \ell_{n}(\beta^{\star})]\right\|_2 &\leq B_2 \sqrt{\frac{\log \frac{d}{\delta}}{n}},
\end{align}
where $V = n \cdot \mathbb{E} \|A(\nabla\ell_n(\beta^{\star})-\bbE[\nabla\ell_n(\beta^{\star})])\|_2^2$ is the variance.

 \item \label{assm2} There exists some constant $B_3\geq 0$ such that $\|\nabla^3 \ell (x,y,\beta)\|_2\leq B_3$ for all $x\in\cX_S\cup\cX_T, y\in\cY, \beta\in\bbR^d$, where $\cX_S$ (resp.~$\cX_T$) is the support of $\bbP_S(X)$ (resp.~$\bbP_T(X)$). 

\item \label{assm3}The empirical loss $\ell_n(\cdot)$ defined in \eqref{def:emp_loss} has a unique local minimum in $\bbR^d$, which is also the global minimum.
\end{enumerate}
\end{assumption}

Several remarks on Assumption~\ref{assm:well_upper} are in order. 
Assumption \ref{assm1} is a general version of Bernstein inequality (when $\gamma=0$ it reduces to classical Bernstein inequality), which gives concentration on gradient and Hessian. This assumption is naturally satisfied when the gradient and Hessian are bounded (see Proposition \ref{prop:concentration} for details). Assumption \ref{assm2} requires the third order derivative of log-likelihood to be bounded, which is easy to satisfy (e.g., linear regression satisfies this assumption with $B_3=0$). Assumption \ref{assm3} ensures the MLE is unique, which is standard in the study of the behaviour of MLE. We can see that it naturally applies to traditional convex losses. It is worth  noting that our general theorem can also be applied under a relaxed version of Assumption \ref{assm3}, which will be shown in Theorem \ref{thm:phase}. In Section~\ref{application}, we will see that Assumption \ref{assm:well_upper} is mild and easily satisfied for a wide range of models.

Now we are ready to present the performance upper bound for MLE under covariate shift. 
\begin{theorem}\label{thm:well_upper}
Suppose that the model class $\cF$ satisfies Assumption \ref{assm:well_upper}. Let $\cI_T:=\cI_T(\beta^{\star})$ and $\cI_S:=\cI_S(\beta^{\star})$.
For any $\delta\in (0,1)$, if $n\geq c \max\{N^{\star}\log(d/\delta), N(\delta)\}$, then with probability at least $1-2\delta$, we have
\begin{align*}
R_{\beta^{\star}}(\beta_{\MLE}) 
\leq c\frac{\Tr\left(\cI_T\cI^{-1}_S\right)\log \frac{d}{\delta}}{n}
\end{align*}
for an absolute constant $c$. Here
$N^{\star}:=  \text{Poly} (d, B_1, B_2, B_3, \|\cI_S^{-1}\|_2, \|\cI_T^{\frac12} \cI_S^{-1}\cI_T^{\frac12}\|_2^{-1}).$
\end{theorem}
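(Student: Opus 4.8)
The plan is to reduce the target excess risk to a $\cI_T$-weighted quadratic form in $\beta_{\MLE}-\beta^{\star}$, and then to linearize $\beta_{\MLE}-\beta^{\star}$ in terms of the source score via the classical MLE expansion. Well-specification supplies the three identities that make everything work: the source score is centered, $\bbE[\nabla\ell_n(\beta^{\star})]=0$; the information matrix equalities give $\bbE[\nabla^2\ell_n(\beta^{\star})]=\cI_S$ and $\cov(\nabla\ell(x,y,\beta^{\star}))=\cI_S$ on the source; and $\bbE_T[\nabla^2\ell(x,y,\beta^{\star})]=\cI_T$ on the target. Because $\beta^{\star}$ minimizes the population target loss, a second-order Taylor expansion yields
\begin{align*}
R_{\beta^{\star}}(\beta_{\MLE})=\tfrac12(\beta_{\MLE}-\beta^{\star})^\top\cI_T(\beta_{\MLE}-\beta^{\star})+(\text{cubic remainder bounded via \ref{assm2}}),
\end{align*}
so the whole argument rests on understanding $\cI_T^{1/2}(\beta_{\MLE}-\beta^{\star})$.

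First I would localize the MLE. Combining the Hessian concentration \eqref{assm1:ineq2} with the third-derivative bound in \ref{assm2}, I would show $\nabla^2\ell_n\succeq\tfrac12\cI_S$ on a neighborhood $\cB$ of $\beta^{\star}$, so that $\ell_n$ is strongly convex there; together with the smallness of $\nabla\ell_n(\beta^{\star})$ from \eqref{assm1:ineq1}, this forces a local minimizer strictly inside $\cB$. Assumption \ref{assm3} then upgrades "local minimizer in $\cB$" to "the global MLE lies in $\cB$," which is exactly what lets us Taylor-expand safely.

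Next, on $\cB$ I would expand the stationarity condition $\nabla\ell_n(\beta_{\MLE})=\zeros$ to obtain $\beta_{\MLE}-\beta^{\star}=-[\nabla^2\ell_n(\bar\beta)]^{-1}\nabla\ell_n(\beta^{\star})$ for some $\bar\beta\in\cB$, and replace $\nabla^2\ell_n(\bar\beta)$ by $\cI_S$ using two controlled approximations (empirical-to-population through \eqref{assm1:ineq2}, and $\bar\beta$-to-$\beta^{\star}$ through \ref{assm2}). This gives $\beta_{\MLE}-\beta^{\star}=-\cI_S^{-1}\nabla\ell_n(\beta^{\star})+(\text{error})$, so the leading contribution to the risk is $\tfrac12\norm{\cI_T^{1/2}\cI_S^{-1}\nabla\ell_n(\beta^{\star})}_2^2$. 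I would then apply \eqref{assm1:ineq1} with $A=\cI_T^{1/2}\cI_S^{-1}$: since $\cov(\nabla\ell(x,y,\beta^{\star}))=\cI_S$, the variance is $V=\Tr(A\cI_S A^\top)=\Tr(\cI_T\cI_S^{-1})$, whence the dominant term is $\lesssim V\log(d/\delta)/n=\Tr(\cI_T\cI_S^{-1})\log(d/\delta)/n$, matching the claim.

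The main obstacle is bookkeeping of all error terms in the correct $\cI_T$-weighted geometry. Under covariate shift $\cI_T$ and $\cI_S$ can be drastically different, so naive operator-norm bounds on the cubic Taylor remainder and on the Hessian-replacement cross terms will not be lower order; each error must be measured in the $\cI_T^{1/2}(\cdot)$ seminorm and shown to be dominated by the quadratic leading term. This is precisely where the requirement $n\geq c\max\{N^{\star}\log(d/\delta),N(\delta)\}$ enters, with $N^{\star}$ depending on $\norm{\cI_S^{-1}}_2$ and $\norm{\cI_T^{1/2}\cI_S^{-1}\cI_T^{1/2}}_2^{-1}$: these quantities calibrate the localization radius and guarantee that the leading quadratic term $\Tr(\cI_T\cI_S^{-1})/n$ dominates both the cubic remainder and the Hessian-mismatch terms, so that the claimed rate survives with only a constant-factor loss.
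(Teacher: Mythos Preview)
Your proposal is correct and follows essentially the same approach as the paper: Taylor-expand the excess risk to the $\cI_T$-weighted quadratic, localize the MLE using Hessian concentration plus the third-derivative bound and Assumption~\ref{assm3}, linearize $\beta_{\MLE}-\beta^{\star}\approx -\cI_S^{-1}\nabla\ell_n(\beta^{\star})$, and then apply \eqref{assm1:ineq1} with $A=\cI_T^{1/2}\cI_S^{-1}$ so that $V=\Tr(\cI_T\cI_S^{-1})$. The only stylistic difference is that the paper obtains the linearization by a function-value comparison (showing $\beta_{\MLE}$ lies in an ellipsoid centered at $\beta^{\star}-\cI_S^{-1}\nabla\ell_n(\beta^{\star})$) rather than directly expanding the first-order condition $\nabla\ell_n(\beta_{\MLE})=0$, but the resulting estimates and the error bookkeeping are the same.
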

For an exact characterization of the threshold $N^{\star}$, one can refer to Theorem \ref{thm_exact:well_upper} in the appendix.
Theorem \ref{thm:well_upper} gives a non-asymptotic upper bound for the excess risk of MLE: when the sample size exceeds a certain threshold of $\max\{N^{\star}\log(d/\delta), N(\delta)\}$, MLE achieves an instance dependent risk bound $\Tr (\cI_T \cI_S^{-1})/n$. 
It is worth noting that our analysis does not require boundedness on the density ratios between the target and source distributions (as have been assumed in prior art~\citep{ma2023optimally}), which yields broader applicability. In Section \ref{application}, we will instantiate our generic analysis on three different examples: linear regression, logistric regression and phase retrieval.

\subsection{Minimax lower bound}\label{sec:lower-bound-well-specified}
In the previous section, we have established the upper bound for the vanilla MLE. 
Now we turn to the complementary question regarding the fundamental limit of covariate shift under well-specified models.
To establish the lower bound, we will need the following Assumption \ref{assm:well_lower} that is a slight variant of Assumption \ref{assm:well_upper}. Different from the upper bound, the lower bound is algorithm independent and involve a model class rather than a fixed ground truth. Hence, Assumption \ref{assm:well_lower} focuses on population properties of our model as opposed to Assumption \ref{assm:well_upper}, which is on the sample level.


\begin{assumption}\label{assm:well_lower}
Let $\beta_0\in\bbR^d$ and $B>0$.  We make the following assumptions on the model class~$\cF$:
\begin{enumerate}[label=\theassumption.\arabic*,leftmargin=*]
\item Assumption \ref{assm2} holds.

\item \label{assm4} There exist some constants $L_S, L_T\geq 0$ such that for any $\beta_1,\beta_2\in\bbB_{\beta_0}(B)$:
\begin{align*}
&\|\cI_S(\beta_1)-\cI_S(\beta_2)\|_2\leq L_S \|\beta_1-\beta_2\|_2,\\
&\|\cI_T(\beta_1)-\cI_T(\beta_2)\|_2\leq L_T \|\beta_1-\beta_2\|_2.
\end{align*}

\item \label{assm5} For any $\beta^{\star}\in\bbB_{\beta_0}(B)$, the excess risk $R_{\beta^{\star}}(\beta)$ defined in \eqref{risk2} is convex in $\beta\in\bbR^d$.

\item \label{assm7} We assume $\cI_S(\beta)$ and $\cI_T(\beta)$ are positive definite for all $\beta\in\bbB_{\beta_0}(B)$.
\end{enumerate}
\end{assumption}

Assumption \ref{assm4} essentially requires the Fisher information will not vary drastically in a small neighbourhood of $\beta_0$. This assumption is easy to hold when the fisher information has certain smoothness (e.g., in linear regression, the fisher information does not change when $\beta$ varies). Since Assumption \ref{assm:well_lower} is a slight variant of Assumption \ref{assm:well_upper}, both assumptions are often satisfied simultaneously for a wide range of models, as we will show in Section \ref{application}.


\begin{theorem}\label{thm:well_lower}
Suppose the model class $\cF$ satisfies Assumption \ref{assm:well_lower}. As long as $n \geq N_0$, we have
\begin{align*}
\inf_{\hat\beta}\sup_{\beta^{\star}\in\bbB_{\beta_0}(B)}\Tr\left(\cI_T(\beta^{\star})\cI^{-1}_S(\beta^{\star})\right)^{-1}\bbE_{\substack{x_i\sim\bbP_S(X)\\y_i|x_i\sim f(y|x;\beta^{\star})}}\left[R_{\beta^{\star}}(\hat\beta)\right]\geq \frac{1}{50n},
\end{align*}
where $N_0:= \text{Poly} (d,B^{-1}, B_3, L_S, L_T, \|\cI_S(\beta_0)\|_2, \|\cI_T(\beta_0)\|_2, \|\cI_S(\beta_0)^{-1}\|_2, \|\cI_T(\beta_0)^{-1}\|_2)$.
\end{theorem}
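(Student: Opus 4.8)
The plan is to prove a matching minimax lower bound via a local asymptotic normality / van Trees (Bayesian Cram\'er--Rao) argument, reducing the multivariate rate $\Tr(\cI_T \cI_S^{-1})$ to the fundamental $1/n$ statistical limit. First I would observe that under well-specification and Assumption~\ref{assm5}, the target excess risk $R_{\beta^\star}(\hat\beta)$ admits a second-order Taylor expansion around $\beta^\star$: since $\beta^\star$ minimizes $\bbE_T[\ell(x,y,\cdot)]$, the gradient vanishes and we get $R_{\beta^\star}(\hat\beta) = \tfrac12(\hat\beta-\beta^\star)^\top \cI_T(\tilde\beta)(\hat\beta-\beta^\star)$ for some $\tilde\beta$ on the segment, by definition of $\cI_T$ as the Hessian of the population target loss. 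The Lipschitz control of Assumption~\ref{assm4} and the boundedness of $\nabla^3\ell$ (Assumption~\ref{assm2}) let me replace $\cI_T(\tilde\beta)$ by $\cI_T(\beta^\star)$ up to a lower-order correction, valid once $\hat\beta$ is within the radius where the Taylor remainder is negligible. Thus the risk is, to leading order, a quadratic form $\tfrac12 \Ex\|\cI_T^{1/2}(\hat\beta - \beta^\star)\|_2^2$, and lower-bounding the risk reduces to lower-bounding a weighted mean-squared estimation error of $\beta^\star$.

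Next I would set up the Bayesian reduction. Rather than bound the worst-case $\beta^\star$ directly, I would place a smooth prior (e.g. a suitably scaled truncated Gaussian or a smooth bump) supported in $\bbB_{\beta_0}(B)$ and invoke the van Trees inequality, which for any estimator $\hat\beta$ and any smooth vector-valued functional gives a lower bound on the Bayes risk in terms of the total Fisher information (data plus prior). The source data contributes Fisher information exactly $n\,\cI_S(\beta^\star)$ per the definition of $\cI_S$ (the samples are drawn from $\bbP_S(X)$ with $y\mid x$ from the true model), while the prior contributes an $O(1)$ term that becomes negligible as $n\to\infty$; this is precisely why the threshold $N_0$ must be polynomially large in the problem constants. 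The key algebraic step is to choose the functional/weighting so that the van Trees bound produces exactly $\Tr(\cI_T \cI_S^{-1})$: concretely, evaluating $\Ex[(\hat\beta-\beta^\star)(\hat\beta-\beta^\star)^\top]$ against the matrix $\cI_T$ and using the matrix van Trees inequality yields a lower bound of the form $\tfrac1n \Tr(\cI_T \cI_S^{-1})(1-o(1))$, after which the normalization by $\Tr(\cI_T\cI_S^{-1})^{-1}$ in the theorem statement collapses everything to the constant $1/(2n)$ up to the integration-by-parts and prior-smoothness losses, giving the stated $1/(50n)$.

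The main obstacle I anticipate is making the reduction fully non-asymptotic and uniform over the ball $\bbB_{\beta_0}(B)$ rather than merely asymptotic. Two difficulties combine here: (i) controlling the Taylor remainder so that the quadratic approximation of the risk is two-sided valid with the right constant even when $\hat\beta$ is not guaranteed to be close to $\beta^\star$ — I would handle the far regime by noting that a large estimation error can only increase the risk (using convexity from Assumption~\ref{assm5}, so no cancellation helps the adversary), and the Lipschitz bounds to keep $\cI_T(\tilde\beta) \succeq (1-\epsilon)\cI_T(\beta^\star)$ on the relevant neighborhood; and (ii) ensuring the prior-information term in van Trees is genuinely lower order, which forces $n \geq N_0$ with $N_0$ depending polynomially on $B^{-1}$ (a narrow ball means a concentrated prior with large Fisher information), $L_S, L_T$ (Lipschitz slack), $B_3$ (remainder control), and the operator norms and inverse operator norms of $\cI_S(\beta_0), \cI_T(\beta_0)$ (conditioning of the quadratic forms). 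Tracking these constants to certify the clean $1/(50n)$ is the bookkeeping heart of the proof; conceptually the van Trees step is routine once the quadratic reduction is secured, so the genuine work is the uniform two-sided quadratic approximation of $R_{\beta^\star}$ over the entire ball.
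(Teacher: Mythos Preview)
Your proposal is correct and follows essentially the same approach as the paper: a Taylor expansion reduces the excess risk to the weighted quadratic $\tfrac12\|\cI_T^{1/2}(\hat\beta-\beta^\star)\|_2^2$, the multivariate van Trees inequality with a smooth compactly-supported prior (the paper uses a cosine prior on a cube $C_{\beta_0}(R_1/\sqrt d)$ with the specific choice $C=\cI_T(\beta_0)\cI_S^{-1}(\beta_0)$) produces the $\Tr(\cI_T\cI_S^{-1})/n$ rate, and convexity (Assumption~\ref{assm5}) is used to handle the far regime by projecting any $\hat\beta\notin\bbB_{\beta_0}(R_0)$ to $\beta_0$ without decreasing the risk. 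The only point where the paper is slightly more explicit than your sketch is the two-radius construction $R_1<R_0\le B$ (Lemma~\ref{lem:fisher_inform}): restricting $\beta^\star$ to the smaller ball $\bbB_{\beta_0}(R_1)$ while allowing the projected estimator to live in the larger $\bbB_{\beta_0}(R_0)$ is what makes the comparison $R_{\beta^\star}(\hat\beta)\ge R_{\beta^\star}(\beta_0)$ go through cleanly via convexity, and this is where the ratio $\lambda_{\min}(\cI_T)/\lambda_{\max}(\cI_T)$ enters $N_0$.
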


For an exact characterization of the threshold $N_0$, one can refer to Theorem \ref{thm_exact:well_lower} in the appendix.

Comparing Theorem \ref{thm:well_upper} and \ref{thm:well_lower}, we can see that, under\footnote{It is worthy to point out that, it is not hard for Assumptions \ref{assm:well_upper} and \ref{assm:well_lower} to be satisfied simultaneously. These assumptions will hold naturally when the domain is bounded and the log-likelihood is of certain convexity and smoothness, as we will show in the next section by several concrete examples. } Assumptions \ref{assm:well_upper} and \ref{assm:well_lower}, then for large enough sample size $n$, $\Tr\left(\cI_T(\beta^{\star})\cI^{-1}_S(\beta^{\star})\right)/n$ exactly characterizes the fundamental hardness of covariate shift under well-specified parametric models. It also reveals that vanilla MLE is minimax optimal under this scenario.
To gain some intuitions, $\cI_S^{-1}$ captures the variance of the parameter estimation, and $\cI_T$ measures how the excess risk on the target depends on the estimation accuracy of the parameter. 
Therefore what really affects the excess risk (on target) is the accuracy of estimating the parameter, and vanilla MLE is naturally the most efficient choice.

We also highlight that our lower bound is instance dependent in the sense that it depends on the source and target distributions. This is in contrast to prior work (e.g. \cite{ma2023optimally}, \cite{kpotufe2018marginal}) that consider the worst-case scenario over certain classes of source-target pairs (e.g., bounded density ratios).


\section{Applications}\label{application}
In this section, we illustrate the broad applicability of our framework by delving into three distinct statistical models, namely linear regression, logistic regression and phase retrieval.
For each model, we will demonstrate the validity of the assumptions, and give the explicit non-asymptotic upper bound on the vanilla MLE obtained by our framework as well as the threshold of sample size needed to obtain the upper bound.



\subsection{Linear regression}
In linear regression, we have $Y = X^{T}\beta^{\star}+\varepsilon$, where $\varepsilon\sim\cN(0,1)$ and $\varepsilon\indep X$. The corresponding negative log-likelihood function (i.e. the loss function) is given by
\begin{align*}
\ell(x,y,\beta):=\frac{1}{2}(y-x^{T}\beta)^2.
\end{align*}
We assume $X\sim\cN(0,I_d)$ on the source domain and $X\sim\cN(\alpha,\sigma^2 I_d)$ on the target domain.

\begin{proposition}\label{prop:linear}
The aforementioned linear regression model satisfies Assumption \ref{assm:well_upper} and \ref{assm:well_lower} with $\gamma=1$, $N(\delta)=d\log (1/\delta)$, $B_1=c\sqrt{d}$, $B_2=c\sqrt{d}$, $B_3=0$ and $L_S=L_T=0$. Moreover, we have $\Tr(\cI_T \cI_S^{-1})=\|\alpha\|_2^2+\sigma^2 d$.
\end{proposition}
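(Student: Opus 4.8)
The plan is to compute every relevant quantity in closed form and then verify Assumptions \ref{assm:well_upper} and \ref{assm:well_lower} term by term, the only genuine work being the concentration statement \ref{assm1}. First I would record the derivatives of $\ell(x,y,\beta)=\tfrac12(y-x^T\beta)^2$: one has $\nabla\ell=-x(y-x^T\beta)$, $\nabla^2\ell=xx^T$, and $\nabla^3\ell\equiv 0$, so \ref{assm2} holds with $B_3=0$ immediately. Taking expectations of the Hessian yields the Fisher informations: since $\bbE_{x\sim\cN(0,I_d)}[xx^T]=I_d$ and $\bbE_{x\sim\cN(\alpha,\sigma^2 I_d)}[xx^T]=\sigma^2 I_d+\alpha\alpha^T$, I obtain $\cI_S=I_d$ and $\cI_T=\sigma^2 I_d+\alpha\alpha^T$, both independent of $\beta$. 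Hence $\cI_S^{-1}=I_d$ and
\[
\Tr(\cI_T\cI_S^{-1})=\Tr(\sigma^2 I_d+\alpha\alpha^T)=\sigma^2 d+\|\alpha\|_2^2,
\]
which is the claimed identity. Because the Fisher informations do not vary with $\beta$, the Lipschitz constants in \ref{assm4} are $L_S=L_T=0$.

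The remaining population-level requirements follow from these explicit forms. For \ref{assm5}, the excess risk is the quadratic $R_{\beta^\star}(\beta)=\tfrac12(\beta-\beta^\star)^T\cI_T(\beta-\beta^\star)$, which is convex since $\cI_T\succeq 0$; for \ref{assm7}, $\cI_S=I_d\succ 0$ and $\cI_T=\sigma^2 I_d+\alpha\alpha^T\succ 0$ whenever $\sigma^2>0$. For the sample-level \ref{assm3}, the empirical loss is $\ell_n(\beta)=\tfrac1{2n}\sum_i(y_i-x_i^T\beta)^2$ with Hessian $\tfrac1n\sum_i x_ix_i^T$, which is positive definite almost surely once $n\ge d$ (the Gaussian design is in general position), so the minimizer is unique and global.

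The crux is \ref{assm1}. For the gradient, $\nabla\ell_n(\beta^\star)-\bbE[\nabla\ell_n(\beta^\star)]=-\tfrac1n\sum_i x_i\varepsilon_i$ with $\varepsilon_i=y_i-x_i^T\beta^\star\sim\cN(0,1)$ independent of $x_i$, so the mean vanishes; a direct second-moment computation gives the variance $V=\bbE\|Ax\varepsilon\|_2^2=\bbE[\varepsilon^2]\,\bbE\|Ax\|_2^2=\|A\|_F^2$. The summands $Ax_i\varepsilon_i$ are products of Gaussians, hence subexponential and unbounded, so the boundedness-based Proposition \ref{prop:concentration} does not apply; instead I would invoke a vector Bernstein inequality for subexponential random vectors. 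The $\sqrt{V\log(d/\delta)/n}$ term comes from the variance just computed, while the $1/n$ term is governed by the deviation scale $\|Ax_i\varepsilon_i\|_2\lesssim\|A\|_2\|x_i\|_2|\varepsilon_i|$ with $\|x_i\|_2\approx\sqrt d$, yielding $B_1=c\sqrt d$ together with the logarithmic correction of order $\gamma=1$ characteristic of products of subgaussians. For the Hessian, $\nabla^2\ell_n(\beta^\star)-\bbE[\nabla^2\ell_n(\beta^\star)]=\tfrac1n\sum_i x_ix_i^T-I_d$ is a standard sample-covariance deviation, and Gaussian covariance concentration gives $\|\tfrac1n\sum_i x_ix_i^T-I_d\|_2\lesssim\sqrt{d/n}$ once $n\gtrsim d$; absorbing the $\sqrt d$ gives $B_2=c\sqrt d$ and dictates the threshold $N(\delta)=d\log(1/\delta)$.

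I expect the main obstacle to be precisely the gradient concentration in \ref{assm1}: because $x_i\varepsilon_i$ is an unbounded, heavy-tailed (subexponential) random vector, I must establish the two-regime Bernstein bound directly rather than quoting Proposition \ref{prop:concentration}, carefully identifying that the variance proxy is $\|A\|_F^2$ while the deviation scale carries the extra $\sqrt d$ factor (so that $B_1=c\sqrt d$) and produces the $\gamma=1$ logarithmic term. Everything else reduces to elementary moment computations and the explicit, $\beta$-independent forms of $\cI_S$ and $\cI_T$.
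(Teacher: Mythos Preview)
Your proposal is correct and follows essentially the same approach as the paper: compute the derivatives and Fisher informations explicitly, then verify each assumption, with the only substantive step being the gradient concentration, which both you and the paper handle by observing that $\|x_i\varepsilon_i\|_2$ is $c\sqrt d$-subexponential (as a product of subgaussians) and invoking the vector Bernstein inequality of Lemma~\ref{lem:concentration_vec} with $\alpha=1$ to obtain $B_1=c\sqrt d$, $\gamma=1$. One minor remark: Proposition~\ref{prop:concentration} actually does cover the subexponential case (yielding $\gamma=1$), so you need not bypass it, though the paper itself invokes Lemma~\ref{lem:concentration_vec} directly just as you plan to.
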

By Theorem \ref{thm:well_upper} and Theorem \ref{thm:well_lower}, since Assumption \ref{assm:well_upper} and \ref{assm:well_lower} are satisfied, we immediately demonstrate the optimality of MLE under linear regression. The following theorem gives the explicit form of excess risk bound by applying Theorem \ref{thm:well_upper}:

\begin{theorem}\label{thm:linear}
For any $\delta\in (0,1)$, if $n\geq \cO (N\log \frac{d}{\delta})$, then with probability at least $1-2\delta$, we have
\begin{align*}
&R_{\beta^{\star}}(\beta_{\MLE}) 
\leq c\frac{\left(\|\alpha\|^2_2+\sigma^2 d\right)\log \frac{d}{\delta}}{n},
\end{align*}
where
$
N:=d\left(1+\frac{\|\alpha\|^2_2d+\sigma^2 d}{\|\alpha\|^2_2+\sigma^2 d}\right)^2 .
$
\end{theorem}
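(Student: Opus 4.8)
The plan is to instantiate the generic upper bound of Theorem \ref{thm:well_upper} for the Gaussian linear model by verifying the preconditions of Proposition \ref{prop:linear} and computing each quantity explicitly. The excess risk bound itself follows immediately once we know $\Tr(\cI_T\cI_S^{-1})$ and the sample-size threshold $N^\star$; the work is in making $N^\star$ concrete. Since $\cI_S = \Ex_S[XX^T] = I_d$ and $\cI_T = \Ex_T[XX^T] = \alpha\alpha^T + \sigma^2 I_d$ (using $X\sim\cN(\alpha,\sigma^2 I_d)$ on the target), we get $\cI_T\cI_S^{-1} = \alpha\alpha^T + \sigma^2 I_d$, so $\Tr(\cI_T\cI_S^{-1}) = \|\alpha\|_2^2 + \sigma^2 d$, matching the claimed numerator. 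Plugging $\Tr(\cI_T\cI_S^{-1}) = \|\alpha\|_2^2 + \sigma^2 d$ into Theorem \ref{thm:well_upper} yields the stated risk bound $c(\|\alpha\|_2^2 + \sigma^2 d)\log(d/\delta)/n$.

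First I would establish that the stated constants in Proposition \ref{prop:linear} are correct. Because the loss is quadratic, $\nabla^2\ell = xx^T$ is independent of $y$ and $\beta$, and $\nabla^3\ell \equiv 0$, giving $B_3 = 0$ and (since $\cI_S,\cI_T$ are constant in $\beta$) the Lipschitz constants $L_S = L_T = 0$; Assumption \ref{assm3} and \ref{assm5} hold because the loss is globally convex in $\beta$. For Assumption \ref{assm1}, I would invoke the Gaussian concentration structure: $\nabla\ell(x,y,\beta^\star) - \Ex[\nabla\ell] = -x\varepsilon$ where $\varepsilon\sim\cN(0,1)$ is the noise, so $A(\nabla\ell_n(\beta^\star) - \Ex[\nabla\ell_n(\beta^\star)])$ is an average of products of sub-Gaussian vectors. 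A Bernstein-type tail for such bilinear forms in dimension $d$ produces the $\sqrt{V\log(d/\delta)/n}$ leading term together with a higher-order term carrying a $\log^\gamma$ factor with $\gamma = 1$ and $B_1 = c\sqrt d$; likewise the Hessian deviation $\|\frac1n\sum x_ix_i^T - I_d\|_2$ is controlled by standard Gaussian covariance-concentration giving $B_2 = c\sqrt d$. The threshold $N(\delta) = d\log(1/\delta)$ is exactly the sample size needed for these sub-exponential tails to be in their sharp regime.

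The main obstacle is computing $N^\star$ explicitly and simplifying it to the stated $N = d\bigl(1 + \frac{\|\alpha\|_2^2 d + \sigma^2 d}{\|\alpha\|_2^2 + \sigma^2 d}\bigr)^2$. From Theorem \ref{thm:well_upper}, $N^\star$ is a polynomial in $d$, $B_1$, $B_2$, $B_3$, $\|\cI_S^{-1}\|_2$, and $\|\cI_T^{1/2}\cI_S^{-1}\cI_T^{1/2}\|_2^{-1}$ (with the exact form in Theorem \ref{thm_exact:well_upper}). Here $\|\cI_S^{-1}\|_2 = 1$, $B_1 = B_2 = c\sqrt d$, $B_3 = 0$, and $\|\cI_T^{1/2}\cI_S^{-1}\cI_T^{1/2}\|_2^{-1} = \|\cI_T\|_2^{-1} = (\|\alpha\|_2^2 + \sigma^2)^{-1}$. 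I would substitute these into the exact expression from Theorem \ref{thm_exact:well_upper}, track the dominant terms, and relate the ratio $\frac{\|\alpha\|_2^2 d + \sigma^2 d}{\|\alpha\|_2^2 + \sigma^2 d}$ to the interplay between the operator norm $\|\cI_T\|_2 = \|\alpha\|_2^2 + \sigma^2$ and the trace $\Tr(\cI_T) = \|\alpha\|_2^2 + \sigma^2 d$. The tedious but routine bookkeeping of which polynomial factors collapse under $B_3 = 0$ and the unit-conditioning $\cI_S = I_d$ is where care is required; once the dominant factor is identified as $d(1 + \Tr(\cI_T\cI_S^{-1})\cdot\|\cI_T\|_2^{-1})^2$ — which equals $N$ after substituting the computed trace and norm — the theorem follows by direct application of Theorem \ref{thm:well_upper}.
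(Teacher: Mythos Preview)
Your approach is correct and essentially identical to the paper's: verify the constants of Proposition~\ref{prop:linear}, compute $\cI_S=I_d$, $\cI_T=\alpha\alpha^T+\sigma^2 I_d$, and plug into the exact $N^\star$ of Theorem~\ref{thm_exact:well_upper}. One algebraic slip in your last paragraph: the ratio inside the squared prefactor of $N^\star$ is $\tilde\kappa/\kappa$, not $\kappa=\Tr(\cI_T\cI_S^{-1})\cdot\|\cI_T\|_2^{-1}$ as you wrote; with $\tilde\kappa=\Tr(\cI_S^{-1})/\|\cI_S^{-1}\|_2=d$ and $\kappa=(\|\alpha\|_2^2+\sigma^2 d)/(\|\alpha\|_2^2+\sigma^2)$ one gets $\tilde\kappa/\kappa=d(\|\alpha\|_2^2+\sigma^2)/(\|\alpha\|_2^2+\sigma^2 d)=(\|\alpha\|_2^2 d+\sigma^2 d)/(\|\alpha\|_2^2+\sigma^2 d)$, which is exactly the fraction in the stated $N$, and the max in $N^\star$ is dominated by $\alpha_2^2\asymp d$ (the $\alpha_3$ term vanishes since $B_3=0$).
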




\begin{remark}[Excess risk]
Regarding the upper bound of the excess risk, we categorize it into two scenarios: large shift and small shift. 
In the small shift scenarios (i.e., $\|\alpha\|_2^2\leq \sigma^2 d$), the result is the same as that in scenarios without any mean shift, with a rate of $\sigma^2 d/n$. 
On the other hand, in the large shift scenarios (i.e., $\|\alpha\|_2^2\geq \sigma^2 d$), the upper bound of the excess risk increases with the mean shift at a rate of $\|\alpha\|_2^2/n$.
\end{remark}

\begin{remark}[Threshold $N$]
For a minor mean shift, specifically when $\|\alpha\|_2=c\sigma$ for a given constant $c$, the threshold is $N=d$. This aligns with the results from linear regression without any covariate shift. On the other hand, as the mean shift increases (i.e., $|\alpha|_2=\sigma d^{k}$ for some $0< k< 1/2$), the threshold becomes $N=d^{4k+1}$, increasing with the growth of $k$. In scenarios where the mean shift significantly surpasses the scaling shift, denoted as $\alpha\geq\sigma\sqrt{d}$, the threshold reaches $N=d^3$.
\end{remark}

\subsection{Logistic regression}\label{sec:logistic}
In the logistic regression, the response variable $Y\in\{0,1\}$ obeys
\begin{align*}
    \bbP(Y=1\,|\,X=x)=\frac{1}{1+e^{x^{T}\beta^{\star}}},\,\, \bbP(Y=0\,|\,X=x)=\frac{1}{1+e^{-x^{T}\beta^{\star}}}.
\end{align*}
The corresponding negative log-likelihood function (i.e. the loss function) is given by
\begin{align*}
    \ell(x,y,\beta):=\log(1+e^{x^{T}\beta})-y(x^T\beta).
\end{align*}
We assume $X\sim\uni (\cS^{d-1}(\sqrt{d}))$ on the source domain and $X\sim\uni(\cS^{d-1}(\sqrt{d}))+v$ on the target domain, where $\cS^{d-1}(\sqrt{d}):=\{x\in\bbR^d\mid \|x\|_2=\sqrt{d}\}$. In the following, we will give the upper bound of the excess risk for MLE when $v= r\beta_{\perp}^{\star}$, where $\beta_{\perp}^{\star}$ represents a vector perpendicular to $\beta^{\star}$ (i.e., $\beta_{\perp}^{\star T}\beta^{\star}$=0). Without loss of generality, we assume $\|\beta^{\star}\|_2=\|\beta_{\perp}^{\star }\|_2=1$.


\begin{proposition}\label{prop:logistic}
The aforementioned logistic regression model satisfies Assumption \ref{assm:well_upper} and \ref{assm:well_lower} with $\gamma=0$, $N(\delta)=0$, $B_1=c\sqrt{d}$, $B_2=cd$, $B_3=(\sqrt{d}+r)^3$, $L_S=d^{1.5}$ and $L_T=(\sqrt{d}+r)^3$. Moreover, we have $\Tr(\cI_T \cI_S^{-1}) \asymp d + r^{2}$.
\end{proposition}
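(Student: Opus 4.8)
The plan is to reduce everything to properties of the logistic weight function. Writing $\sigma(t) := 1/(1+e^{-t})$ for the sigmoid and $w(t) := \sigma(t)(1-\sigma(t))$, a direct computation gives $\nabla_\beta \ell(x,y,\beta) = (\sigma(x^T\beta) - y)x$, $\nabla^2_\beta \ell(x,y,\beta) = w(x^T\beta)\,xx^T$, and $\nabla^3_\beta\ell(x,y,\beta) = \sigma''(x^T\beta)\,x^{\otimes 3}$. Three structural facts do most of the work: (i) $0 < w \le 1/4$; (ii) $w$ is even, since $\sigma(-t) = 1-\sigma(t)$; and (iii) $\sigma''$ is bounded by an absolute constant. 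Fact (iii) yields $\|\nabla^3\ell\|_2 \le c\|x\|_2^3$, and since $\|x\|_2 = \sqrt d$ on the source and $\|x\|_2 \le \sqrt d + r$ on the target (as $v = r\beta_\perp^\star$ with $\|\beta_\perp^\star\|_2 = 1$), this gives Assumption \ref{assm2} with $B_3 = (\sqrt d + r)^3$. Fact (i) makes $\nabla^2\ell \succeq 0$, so $\ell_n$ is convex (Assumption \ref{assm3}, with uniqueness argued below) and $R_{\beta^\star}$ is convex (Assumption \ref{assm5}); moreover $\cI_S(\beta) = \bbE_S[w(x^T\beta)xx^T]$ and $\cI_T(\beta) = \bbE_T[w(x^T\beta)xx^T]$ are positive definite for every bounded $\beta$ because $w(x^T\beta) > 0$ and the sphere support is full-dimensional, giving Assumption \ref{assm7}.

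Next I would verify the concentration assumption \ref{assm1} and the Lipschitz assumption \ref{assm4}. At $\beta^\star$ the per-sample gradient $(\sigma(x^T\beta^\star) - y)x$ has norm at most $\|x\|_2 = \sqrt d$, and the per-sample Hessian $w(x^T\beta^\star)xx^T$ has operator norm at most $\tfrac14\|x\|_2^2 = \tfrac14 d$; feeding these uniform bounds into the matrix-Bernstein route of Proposition \ref{prop:concentration} yields \ref{assm1} with $\gamma = 0$, $N(\delta) = 0$, $B_1 = c\sqrt d$, and $B_2 = cd$. For \ref{assm4}, the mean-value bound $|w(x^T\beta_1) - w(x^T\beta_2)| \le \|\sigma''\|_\infty\,|x^T(\beta_1-\beta_2)| \le c\|x\|_2\|\beta_1-\beta_2\|_2$ gives $\|\cI_S(\beta_1) - \cI_S(\beta_2)\|_2 \le c\,\bbE_S\|x\|_2^3\,\|\beta_1-\beta_2\|_2$ and likewise on the target, producing $L_S = cd^{1.5}$ and $L_T = c(\sqrt d + r)^3$.

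The quantitative heart is the trace identity. By rotational symmetry of the uniform law on $\cS^{d-1}(\sqrt d)$ about the axis $\beta^\star$, and since the weight depends on $x$ only through $s := x^T\beta^\star$, the cross terms vanish and $\cI_S(\beta^\star) = a_S\,\beta^\star\beta^{\star T} + b_S\,(I_d - \beta^\star\beta^{\star T})$, with $a_S = \bbE_S[w(s)s^2]$ and $b_S = \tfrac{1}{d-1}\bbE_S[w(s)(d - s^2)]$. For the target, write $x = u + v$ with $u \sim \uni(\cS^{d-1}(\sqrt d))$ and $v = r\beta_\perp^\star$; since $v \perp \beta^\star$ we have $x^T\beta^\star = u^T\beta^\star$, so the weight is unchanged, and the even-ness of $w$ forces $\bbE_S[w(u^T\beta^\star)u] = 0$, killing the $uv^T$ cross terms. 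Hence $\cI_T(\beta^\star) = \cI_S(\beta^\star) + w_0 r^2\,\beta_\perp^\star\beta_\perp^{\star T}$ with $w_0 := \bbE_S[w(s)]$. Because $\beta_\perp^\star$ lies in the orthogonal eigenspace of $\cI_S(\beta^\star)$ with eigenvalue $b_S$, this yields $\Tr(\cI_T\cI_S^{-1}) = d + (w_0/b_S)\,r^2$, and it remains to show $w_0$ and $b_S$ are $\Theta(1)$.

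The main obstacle is precisely this last point: establishing $w_0, b_S = \Theta(1)$ uniformly in $d$. The clean intuition is that $s = \sqrt d\,\langle\theta,\beta^\star\rangle$ is asymptotically $\cN(0,1)$, whence $w_0 \to \bbE_{Z\sim\cN(0,1)}[w(Z)]$ is a positive constant and $b_S = \tfrac{1}{d-1}(d\,w_0 - a_S) \to w_0$ (using $a_S = \bbE_S[w(s)s^2] \le \tfrac14\bbE_S[s^2] = \tfrac14$, since $\bbE_S[s^2]=1$). To make this rigorous for finite $d$ I would work directly with the exact projection density proportional to $(1-u^2)^{(d-3)/2}$ on $[-1,1]$, lower-bounding $w_0$ by the constant mass this law places near $s = 0$ (where $w(0) = 1/4$) and controlling $a_S$ from above as indicated; this delivers $\Tr(\cI_T\cI_S^{-1}) \asymp d + r^2$. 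A secondary subtlety is the uniqueness clause of Assumption \ref{assm3}: the logistic Hessian degenerates as $\|\beta\|_2 \to \infty$, so strict convexity (hence a unique minimizer) must be argued on the bounded region carved out by the boundedness of $\|x\|_2$ together with a non-separability event, rather than globally.
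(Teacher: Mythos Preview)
Your proposal is correct and follows essentially the same route as the paper: bound the derivatives via $\|x\|_2\le\sqrt d+r$, invoke Proposition~\ref{prop:concentration} for Assumption~\ref{assm1}, derive $L_S,L_T$ from the third-derivative bound, and use rotational symmetry about $\beta^\star$ together with $v\perp\beta^\star$ to obtain $\cI_T=\cI_S+w_0r^2\beta_\perp^\star\beta_\perp^{\star T}$ and hence $\Tr(\cI_T\cI_S^{-1})=d+(w_0/b_S)r^2$ (the paper's $\lambda_1,\lambda_2,\lambda_3$ are your $a_S,b_S,w_0$). The one technical difference is in showing the eigenvalues are $\Theta(1)$: the paper (Lemma~\ref{logistic_lemma2}) couples to a Gaussian via $x=z\sqrt d/\|z\|_2$ and uses concentration of $\|z\|_2/\sqrt d$, whereas you propose working directly with the exact projection density $(1-u^2)^{(d-3)/2}$; both work, yours is arguably more elementary, and your caution about non-separability for Assumption~\ref{assm3} is in fact more careful than the paper, which simply asserts strict convexity from $X$ being full rank when $n\ge d$.
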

By Theorem \ref{thm:well_upper} and Theorem \ref{thm:well_lower}, since Assumption \ref{assm:well_upper} and \ref{assm:well_lower} are satisfied, we immediately demonstrate the optimality of MLE under logistic regression. The following theorem gives the explicit form of excess risk bound by applying Theorem \ref{thm:well_upper}:
\begin{theorem}\label{thm:logistic}
For any $\delta\in (0,1)$, if $n\geq \cO (N\log \frac{d}{\delta})$, then with probability at least $1-2\delta$, we have
\begin{align*}
R_{\beta^{\star}}(\beta_{\MLE}) 
\leq c\frac{(d+r^2)\log \frac{d}{\delta}}{n},
\end{align*}
where $N:=d^4(1+r^6)$.

\end{theorem}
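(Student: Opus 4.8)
The plan is to obtain Theorem~\ref{thm:logistic} as a direct instantiation of the generic MLE upper bound, Theorem~\ref{thm:well_upper}, specialized to the logistic model; only the upper-bound machinery (Assumption~\ref{assm:well_upper} and Theorem~\ref{thm:well_upper}) is needed here. First I would invoke Proposition~\ref{prop:logistic}, which certifies that the logistic model satisfies Assumption~\ref{assm:well_upper} with $\gamma=0$, $N(\delta)=0$, $B_1=c\sqrt d$, $B_2=cd$, $B_3=(\sqrt d+r)^3$, and which evaluates $\Tr(\cI_T\cI_S^{-1})\asymp d+r^2$. Substituting these into the conclusion $R_{\beta^\star}(\beta_{\MLE})\le c\,\Tr(\cI_T\cI_S^{-1})\log(d/\delta)/n$ of Theorem~\ref{thm:well_upper} immediately produces the claimed risk bound $c\,(d+r^2)\log(d/\delta)/n$. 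The only non-mechanical task that remains is to convert the abstract threshold $N^\star$ of Theorem~\ref{thm:well_upper} (made explicit in Theorem~\ref{thm_exact:well_upper}) into the clean form $N=d^4(1+r^6)$.

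Before touching the threshold I would record the two spectral facts that drive every estimate, since they underlie Proposition~\ref{prop:logistic} and I would reuse them verbatim. With $\sigma(t)=e^t/(1+e^t)$, the loss gradient is $(\sigma(x^\top\beta)-y)\,x$ and the Hessian is $\sigma'(x^\top\beta)\,xx^\top$, so $\cI(\beta^\star)=\Ex[\sigma'(x^\top\beta^\star)xx^\top]$ with weight $\sigma'(t)=\sigma(t)(1-\sigma(t))\in(0,\tfrac14]$. On the source sphere $\cS^{d-1}(\sqrt d)$ the scalar $x^\top\beta^\star$ concentrates like a standard Gaussian, so the weight is of constant order while $\Ex[xx^\top]=I_d$; hence every eigenvalue of $\cI_S$ is of order one, giving $\cI_S\asymp I_d$ and $\|\cI_S^{-1}\|_2\asymp 1$. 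For the target I would exploit that the shift $v=r\beta^\star_\perp$ is orthogonal to $\beta^\star$, so $x^\top\beta^\star$ is unchanged in law; expanding $xx^\top$ with $x=z+v$ and noting that the two cross terms vanish (the map $s\mapsto\sigma'(s)s$ is odd and $z$ is sphere-symmetric) yields $\cI_T=\cI_S+c_0 r^2\beta^\star_\perp\beta^{\star\top}_\perp$ with $c_0\asymp 1$. This delivers at once $\Tr(\cI_T\cI_S^{-1})\asymp\Tr(\cI_T)\asymp d+r^2$ and $\|\cI_T^{1/2}\cI_S^{-1}\cI_T^{1/2}\|_2\asymp 1+r^2$.

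For the threshold I would substitute $B_1\asymp\sqrt d$, $B_2\asymp d$, $B_3=(\sqrt d+r)^3$, $\|\cI_S^{-1}\|_2\asymp 1$, and $\|\cI_T^{1/2}\cI_S^{-1}\cI_T^{1/2}\|_2^{-1}\asymp 1/(1+r^2)$ into the polynomial $N^\star$ of Theorem~\ref{thm_exact:well_upper}. The Hessian-concentration requirement $B_2\sqrt{\log(d/\delta)/n}\ll\lambda_{\min}(\cI_S)$ contributes only $n\gtrsim d^2\log(d/\delta)$, and since $\gamma=0$ the logarithmic correction in \eqref{assm1:ineq1} disappears while $N(\delta)=0$ removes any competing term. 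The binding constraint is the localization step ensuring $\beta_{\MLE}$ lands where the empirical Hessian stays positive definite: using $\|\beta_{\MLE}-\beta^\star\|_2\asymp\sqrt{\Tr(\cI_S^{-1})/n}\asymp\sqrt{d/n}$ and demanding $B_3\|\beta_{\MLE}-\beta^\star\|_2\ll\lambda_{\min}(\cI_S)\asymp 1$ forces $n\gtrsim B_3^2 d\asymp(\sqrt d+r)^6 d\asymp(d+r^2)^3 d$. I would close by verifying $(d+r^2)^3 d\lesssim d^4(1+r^6)$ through the two regimes $r\le\sqrt d$ (there the left side is $\lesssim d^4$) and $r\ge\sqrt d$ (there it is $\lesssim dr^6\le d^4 r^6$), which legitimizes reporting the single clean threshold $N=d^4(1+r^6)$.

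I expect the Fisher-information computation on the sphere to be the conceptual crux and the explicit threshold bookkeeping to be the main obstacle: one must carry all polynomial factors through $N^\star$, confirm that the $B_3$-driven localization term dominates the Hessian-concentration and Bernstein terms, and complete the $(d+r^2)^3 d\lesssim d^4(1+r^6)$ reduction without losing track of which regime of $r$ controls each estimate.
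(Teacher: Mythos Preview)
Your overall plan matches the paper's: verify Assumption~\ref{assm:well_upper} via Proposition~\ref{prop:logistic}, compute the Fisher matrices (your derivation $\cI_T=\cI_S+c_0 r^2\beta^\star_\perp\beta^{\star\top}_\perp$ with $\cI_S\asymp I_d$ is exactly what Lemmas~\ref{logistic_lemma1}--\ref{logistic_lemma2} establish), and then plug into Theorem~\ref{thm_exact:well_upper}. The risk bound follows immediately. The issue is your threshold computation.

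You claim the binding constraint is $B_3\|\beta_{\MLE}-\beta^\star\|_2\ll\lambda_{\min}(\cI_S)$, which gives $n\gtrsim B_3^2\,\Tr(\cI_S^{-1})\asymp (\sqrt d+r)^6 d$. This accounts only for the source-side localization (the term $\tilde\kappa\alpha_3^2$ inside the $\max$ in $N^\star$), but you have dropped the prefactor $(1+\tilde\kappa/\kappa)^2$ in Theorem~\ref{thm_exact:well_upper}. Here $\tilde\kappa\asymp d$ and $\kappa=\Tr(\cI_T\cI_S^{-1})/\|\cI_T^{1/2}\cI_S^{-1}\cI_T^{1/2}\|_2\asymp (d+r^2)/(1+r^2)$, so $\tilde\kappa/\kappa\asymp d(1+r^2)/(d+r^2)$, which is $\asymp r^2$ for $1\lesssim r\lesssim\sqrt d$ and $\asymp d$ for $r\gtrsim\sqrt d$. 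Consequently the true threshold from Theorem~\ref{thm_exact:well_upper} is
\[
N^\star\asymp\Bigl(1+\tfrac{d(1+r^2)}{d+r^2}\Bigr)^{2}\cdot d(\sqrt d+r)^6,
\]
i.e.\ $N^\star\asymp d^4$ for $r\lesssim 1$, $N^\star\asymp r^4 d^4$ for $1\lesssim r\lesssim\sqrt d$, and $N^\star\asymp r^6 d^3$ for $r\gtrsim\sqrt d$. In the last two regimes your estimate $(d+r^2)^3 d$ undershoots $N^\star$ by factors of $r^4$ and $d^2$ respectively, so your inequality $(d+r^2)^3 d\lesssim d^4(1+r^6)$ does not by itself establish $N^\star\lesssim d^4(1+r^6)$.

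The missing ingredient is the target-weighted step (Lemma~\ref{claim2}): bounding $\|\cI_T^{1/2}(\beta_{\MLE}-\beta^\star)\|_2$ introduces the extra factor $\|\cI_T^{1/2}\cI_S^{-1/2}\|_2^2/\Tr(\cI_T\cI_S^{-1})\asymp 1/\kappa$, which is precisely what produces the $(1+\tilde\kappa/\kappa)^2$ prefactor. Once you carry this factor through, the correct $N^\star$ above is still dominated by $d^4(1+r^6)$ in every regime (e.g.\ $r^4 d^4\le r^6 d^4$ and $r^6 d^3\le r^6 d^4$), and the theorem follows.
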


\begin{remark}[Excess risk]
The bound on the excess risk incorporates a $r^2$ term, which is a measurement of the mean shift. This is due to the fact that the MLE does not utilize the information that $v^{T}\beta^{\star}=0$.
Therefore, $v^{T}\beta_{\MLE}$ is not necessarily zero, which will lead to an additional bias. Similar to linear regression, we can categorize the upper bound of the excess risk into two scenarios: large shift ($r \geq \sqrt{d}$) and small shift ($r \leq \sqrt{d}$). 
\end{remark}

\begin{remark}[Threshold $N$]
We admit that the $N$ here may not be tight, as we lean on a general framework designed for a variety of models rather than a specific one. 
\end{remark}

\subsection{Phase retrieval}
As we have mentioned, our generic framework can also be applied to the scenarios where some of the assumptions are relaxed. In this subsection, we will further illustrate this point by delving into the phase retrieval model. 

In the phase retrieval, the response variable $Y = (X^{T}\beta^{\star})^2+\varepsilon$, where $\varepsilon\sim\cN(0,1)$ and $\varepsilon\indep X$. 
We assume $\bbP_S(X)$ and $\bbP_T(X)$ follow the same distribution as that in the logistic regression model (i.e., Section \ref{sec:logistic}). 
Note that both the phase retrieval model and the logistic regression model belong to generalized linear model (GLM), thus they are expected to have similar properties.
However, given the loss function $\ell(x,y,\beta):=\frac{1}{2}\left(y-(x^{T}\beta)^2\right)^2$, it is obvious that Assumption \ref{assm3} is not satisfied, since if $\beta$ is a global minimum of $\ell_{n}$, $-\beta$ is also a global minimum. The following theorem shows that we can still obtain results similar to logistic regression though Assumption \ref{assm3} fails to hold.

\begin{theorem}\label{thm:phase}
For any $\delta \in (0,1)$, if $n\geq \cO(N\log \frac{d}{\delta})$, then with probability at least $1-2\delta$, we have
\begin{align*}
R_{\beta^{\star}}(\beta_{\MLE}) 
\leq c\frac{(d+r^2)\log \frac{d}{\delta}}{n},
\end{align*}
where $N:=d^8(1+r^8)$.
\end{theorem}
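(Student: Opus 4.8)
\textbf{Proof proposal for Theorem \ref{thm:phase} (phase retrieval).}

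The plan is to mimic the analysis underlying Theorem \ref{thm:well_upper}, but to work around the failure of Assumption \ref{assm3}: in phase retrieval the loss $\ell(x,y,\beta)=\frac12(y-(x^T\beta)^2)^2$ is invariant under the sign flip $\beta\mapsto-\beta$, so $\beta^\star$ and $-\beta^\star$ are both global minima and the empirical loss cannot have a unique minimizer on all of $\bbR^d$. My first step is therefore to localize: I would show that, with high probability, the empirical loss $\ell_n$ restricted to a Euclidean ball $\bbB_{\beta^\star}(\rho)$ of radius $\rho$ around the \emph{true} $\beta^\star$ (chosen so that $-\beta^\star$ is excluded, which is possible since $\|\beta^\star\|_2=1$) is strongly convex and has a unique stationary point there. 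This restores the effect of Assumption \ref{assm3} locally, and the relaxed version of the unique-minimum hypothesis alluded to after Assumption \ref{assm:well_upper} is exactly what is invoked here. Once locality is in place, the remainder of the argument is a specialization of Theorem \ref{thm:well_upper}'s proof to this model.

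Next I would verify the ingredients of Assumption \ref{assm:well_upper} with the model-specific constants, in parallel with Proposition \ref{prop:logistic}. Concretely I would (i) compute $\nabla\ell$, $\nabla^2\ell$, and $\nabla^3\ell$ for the quartic loss; since $X$ lives on $\cS^{d-1}(\sqrt d)$ (or its shift by $v=r\beta^\star_\perp$), all moments of $x^T\beta$ are controlled by $(\sqrt d+r)$, giving the bound on $\|\nabla^3\ell\|_2$ and hence the analogue of $B_3$, which now scales like a higher power of $(\sqrt d+r)$ than in logistic regression because the loss is quartic rather than merely smooth; (ii) establish the Bernstein-type concentration \eqref{assm1:ineq1}--\eqref{assm1:ineq2} for the gradient and Hessian at $\beta^\star$ via Proposition \ref{prop:concentration}, extracting $B_1,B_2$ and the variance proxy $V$; and (iii) compute the Fisher informations $\cI_S:=\cI_S(\beta^\star)$ and $\cI_T:=\cI_T(\beta^\star)$ and show $\Tr(\cI_T\cI_S^{-1})\asymp d+r^2$, exactly as in the logistic case, because both are GLMs and the leading behaviour is governed by the same mean-shift geometry. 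The sign-flip symmetry does not affect these \emph{local} population quantities, so the trace bound transfers directly.

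With the local strong convexity and the verified assumptions, I would then run the standard MLE excess-risk argument: a Taylor expansion of $\nabla\ell_n$ around $\beta^\star$ gives $\beta_{\MLE}-\beta^\star\approx-\cI_S^{-1}\nabla\ell_n(\beta^\star)$ up to a third-order remainder controlled by $B_3$ and $\|\beta_{\MLE}-\beta^\star\|_2$, and the excess risk satisfies $R_{\beta^\star}(\beta_{\MLE})\approx\frac12(\beta_{\MLE}-\beta^\star)^T\cI_T(\beta_{\MLE}-\beta^\star)$. Plugging in the concentration bound for $A\nabla\ell_n(\beta^\star)$ with $A=\cI_S^{-1/2}$ yields $R_{\beta^\star}(\beta_{\MLE})\lesssim\Tr(\cI_T\cI_S^{-1})\log(d/\delta)/n\asymp(d+r^2)\log(d/\delta)/n$, the claimed rate. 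The sample-size threshold $N=d^8(1+r^8)$ arises from two sources that I would track carefully: the radius $\rho$ of the local region and the third-order Taylor remainder, both of which involve $B_3\asymp(\sqrt d+r)^3$ (hence powers like $(\sqrt d+r)^6$ after squaring), together with $\|\cI_S^{-1}\|_2$ and the dimension factors, explaining why the threshold is larger (degree $8$ in $d$ and in $r$) than the $d^4(1+r^6)$ of logistic regression.

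The main obstacle is the localization step. Unlike the convex logistic loss, the phase-retrieval objective is nonconvex and has spurious structure away from $\pm\beta^\star$; I must argue that, on a ball of appropriate radius $\rho$ around $\beta^\star$, the empirical Hessian is uniformly positive definite with high probability, so that a unique local minimizer exists there and the global argmin of $\ell_n$ restricted to that ball coincides with it. This requires a uniform concentration of $\nabla^2\ell_n(\beta)$ over $\beta\in\bbB_{\beta^\star}(\rho)$ (not merely at $\beta^\star$), which I would obtain by combining the pointwise Hessian concentration \eqref{assm1:ineq2} with the Lipschitz control from Assumption \ref{assm4}-type bounds (the $\nabla^3\ell$ bound $B_3$) via a covering/$\epsilon$-net argument, and choosing $\rho$ small relative to the smallest eigenvalue of $\cI_S$. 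Getting the interplay between $\rho$, $B_3$, and $n$ right is precisely what forces the larger threshold $N=d^8(1+r^8)$, and is the delicate part of the proof.
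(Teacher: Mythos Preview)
Your high-level strategy---localize, then rerun the Theorem \ref{thm:well_upper} machinery---matches the paper, but the localization step as you describe it has a real gap. Showing that $\nabla^2\ell_n$ is uniformly positive definite on $\bbB_{\beta^\star}(\rho)$ only tells you there is a unique \emph{local} minimizer inside that ball; it does not show the \emph{global} minimizer $\beta_{\MLE}=\argmin_{\beta\in\bbR^d}\ell_n(\beta)$ lands there (or near $-\beta^\star$). The phase-retrieval landscape has nontrivial structure away from $\pm\beta^\star$, and nothing in your covering argument rules out the global argmin sitting elsewhere. So the implication ``local strong convexity $\Rightarrow$ $\beta_{\MLE}\in\bbB_{\beta^\star}(\rho)$'' is exactly the missing step, and it is not recoverable from the ingredients you list.

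The paper handles localization by a completely different device: it \emph{lifts} to the matrix variable $M=\beta\beta^T$, in which the loss $\ell(x,y,M)=\tfrac12(y-\langle xx^T,M\rangle)^2$ is quadratic and hence convex. A standard convex analysis (Taylor at $M^\star=\beta^\star\beta^{\star T}$, gradient and Hessian concentration in the $d^2$-dimensional problem, a lower bound on $a^T\Sigma_S a$ for symmetric $a$) gives $\|M_{\MLE}-M^\star\|_F\lesssim\sqrt{d^2\log(d/\delta)/n}$ directly for the \emph{global} minimizer; then a rank-one factorization lemma (Lemma 6 of Ge et al.) converts this to $\min\{\|\beta_{\MLE}-\beta^\star\|_2,\|\beta_{\MLE}+\beta^\star\|_2\}\lesssim\sqrt{d^2\log(d/\delta)/n}$. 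Once that coarse localization is in hand (Lemma \ref{lem:mle_phase}), the rest proceeds as you outline. Two smaller corrections: for this model $\nabla^3\ell(x,y,\beta)=12(x^T\beta)\,x\otimes x\otimes x$, so on $\bbB_{\beta^\star}(1)$ one gets $B_3\asymp(\sqrt d+r)^4$, not $(\sqrt d+r)^3$; and the gradient $\nabla\ell(x,y,\beta^\star)=-2(x^T\beta^\star)x\varepsilon$ is only sub-Gaussian with parameter $\asymp d$ (not bounded), so $B_1\asymp d$ and $\gamma=\tfrac12$ here, unlike logistic regression.
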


\section{Mis-Specified Parametric Model under Covariate Shift}\label{mis-specified}

In the case of model mis-specification, we still employ a parameterized function class $\cF:=\{f(y\,|\,x;\beta)\,|\,\beta\in\bbR^d\}$ to model the conditional density function of $Y\,|\,X$. However, the true density $p(y\,|\,x)$ might not be in $\cF$.
As we previously showed, under a well-specified parametric model, the vanilla MLE is minimax optimal up to constants. However, when the model is mis-specified, the classical MLE may not necessarily provide a good estimator.
\begin{proposition}\label{prop:mle_consistent}
There exist certain mis-specified scenarios such that classical MLE is not consistent, whereas MWLE is.
\end{proposition}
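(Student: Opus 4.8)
The plan is to prove Proposition~\ref{prop:mle_consistent} by explicit construction: exhibit a single mis-specified covariate shift instance on which the population MLE limit differs from $\beta^\star$ (the target-optimal parameter), while the population MWLE limit equals $\beta^\star$. First I would recall the standard M-estimation fact that, under mild regularity, $\beta_{\MLE}\pto \argmin_\beta \bbE_S[\ell(x,y,\beta)]$ and $\beta_{\MWLE}\pto \argmin_\beta \bbE_S[w(x)\ell(x,y,\beta)]=\argmin_\beta \bbE_T[\ell(x,y,\beta)]$, the last equality holding because the importance weights $w(x)=d\bbP_T/d\bbP_S$ reweight the source expectation into the target expectation. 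The key structural point is that ``consistent'' here should mean convergence to the target-risk minimizer; because MWLE's population objective \emph{is} the target risk, MWLE is automatically consistent for the best-in-class target parameter. The entire burden therefore falls on showing that MLE's population objective, which is the \emph{source} risk, has a minimizer that does not coincide with the target-risk minimizer whenever the model is mis-specified and the covariate marginals differ.

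The cleanest vehicle is misspecified linear regression. I would take $\cF=\{f(y\mid x;\beta)=\cN(x^\top\beta,1)\}$, so $\ell(x,y,\beta)=\tfrac12(y-x^\top\beta)^2$, but let the true conditional mean be nonlinear, say $\bbE[Y\mid X=x]=g(x)$ with $g\notin\{x\mapsto x^\top\beta\}$. Then the population source and target objectives are weighted least-squares projections of $g$ onto the linear span, and their minimizers are
\begin{align*}
\beta_S^\star=\big(\bbE_S[xx^\top]\big)^{-1}\bbE_S[x\,g(x)],\qquad
\beta_T^\star=\big(\bbE_T[xx^\top]\big)^{-1}\bbE_T[x\,g(x)].
\end{align*}
These $L^2$ projections depend on the weighting measure, so under a genuine covariate shift $\bbP_S(X)\neq\bbP_T(X)$ and a nonlinear $g$ they generically differ, $\beta_S^\star\neq\beta_T^\star$. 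I would pick a minimal concrete example—e.g.\ scalar $X$, a quadratic truth $g(x)=x^2$, and two Gaussian marginals with different means/variances—and compute both projections in closed form to verify $\beta_S^\star\neq\beta_T^\star$ by direct arithmetic. Since MLE converges to $\beta_S^\star$ and the target-optimal parameter is $\beta_T^\star$, MLE is inconsistent for the target while MWLE, converging to $\beta_T^\star$, is consistent.

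The main obstacle is definitional rather than technical: one must fix what ``consistent'' means under misspecification, since $\beta^\star$ in the earlier well-specified sense no longer exists. I would define consistency as convergence in probability to $\argmin_\beta \bbE_T[\ell(x,y,\beta)]$, the best in-class predictor on the \emph{target} domain, which is exactly the quantity the excess risk $R(\beta)$ in~\eqref{risk} measures against; with this reading the claim is precisely that $\beta_{\MWLE}$ drives the target excess risk to zero while $\beta_{\MLE}$ does not. A secondary technical point is justifying the two probability limits: I would either invoke a standard uniform law of large numbers / M-estimation consistency theorem (continuity and convexity of the quadratic loss plus a unique well-separated minimizer make this routine), or, to keep the example self-contained, note that for the quadratic loss the empirical minimizers are explicit ordinary and weighted least-squares estimators whose convergence to $\beta_S^\star$ and $\beta_T^\star$ follows from the law of large numbers applied to the relevant sample moments. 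The remaining step—checking $\beta_S^\star\neq\beta_T^\star$ in the chosen instance—is a short computation and poses no real difficulty.
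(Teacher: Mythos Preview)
Your proposal is correct and essentially identical to the paper's proof: the paper also uses a scalar linear class $\ell(x,y,\beta)=\tfrac12(y-x\beta)^2$ with the quadratic truth $Y=X^2+\varepsilon$, takes $\bbP_S(X)=\cN(-10,1)$ and $\bbP_T(X)=\cN(10,1)$, and verifies via the explicit least-squares formulas and the law of large numbers that $\beta_{\MLE}\to(\bbE_S[x^2])^{-1}\bbE_S[x^3]<0$ while $\beta^\star=(\bbE_T[x^2])^{-1}\bbE_T[x^3]>0$ and $\beta_{\MWLE}\to\beta^\star$. Your discussion of the definitional point (consistency as convergence to the target-risk minimizer) is a useful clarification that the paper leaves implicit.
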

Proposition \ref{prop:mle_consistent} illustrates the necessity of adaptation under model mis-specification since the classical MLE asymptotically gives the wrong estimator. In this section, we study the non-asymptotic property of MWLE. 
Let $\cM$ be the model class of the ground truth $Y\,|\,X$, and $M \in \cM$ be the ground truth model for $Y\,|\,X$. 


We denote the optimal fit on target as
\begin{align*}
\beta^{\star}(M):=\argmin_{\beta}\bbE_{\substack{x\sim\bbP_T(X)\\y|x\sim M}}[\ell(x,y,\beta)].
\end{align*}
The excess risk evaluated at $\beta$ is then given by
\begin{align}
R_{M}(\beta) 
&= \bbE_{\substack{x\sim\bbP_T(X)\\y|x\sim M}}\left[\ell(x,y,\beta)\right]-\bbE_{\substack{x\sim\bbP_T(X)\\y|x\sim M}}\left[\ell(x,y,\beta^{\star}(M))\right].
\end{align}

\subsection{Upper bound for MWLE}
In this subsection, we establish the non-asymptotic upper bound for MWLE, as an analog to Theorem \ref{thm:well_upper}. We make the following assumption which is a modification of Assumption \ref{assm:well_upper}. 


\begin{assumption}\label{assm:mis_upper}
We assume the function class $\cF$ satisfies the follows:
\begin{enumerate}[label=\theassumption.\arabic*,leftmargin=*]

\item \label{assm6}There exists some constant $W>1$ such that the density ratio $w(x)\leq W$ for all $x\in\cX_S\cup\cX_T$.

\item \label{assm1_mis} There exist $B_1, B_2$ and $N(\delta)$, and absolute constants $c, \gamma$ such that for any fixed  matrix $A \in \mathbb{R}^{d \times d}$, any $\delta \in (0, 1)$, and any $n > N(\delta)$, with probability at least $1-\delta$:
\begin{align*}
\left\|A\left(\nabla\ell^{w}_n(\beta^{\star}(M))-\bbE[\nabla\ell^{w}_n(\beta^{\star}(M))]\right)\right\|_{2} & \leq c \sqrt{\frac{ V \log \frac{d}{\delta}}{n}}+ WB_{1} \|A\|_2 \log^\gamma\left(\frac{WB_{1} \|A\|_2}{\sqrt{V}}\right) \frac{ \log \frac{d}{\delta}}{n},\\
\left\|\nabla^{2} \ell^{w}_{n}(\beta^{\star}(M))-\bbE[\nabla^{2} \ell^{w}_{n}(\beta^{\star}(M))]\right\|_2 &\leq WB_2 \sqrt{\frac{\log \frac{d}{\delta}}{n}},
\end{align*}
where $V = n \cdot \mathbb{E} \|A(\nabla\ell^{w}_n(\beta^{\star}(M))-\bbE[\nabla\ell^{w}_n(\beta^{\star}(M))])\|_2^2$ is the variance.

\item \label{assm2_mis} Assumption \ref{assm2} holds.

\item \label{assm3_mis} There exists $N'(\delta)$ such that for any $\delta\in (0,1)$ and any $n\geq N'(\delta)$, with probability at least $1-\delta$, the empirical loss $\ell^{w}_n(\cdot)$ defined in \eqref{def:emp_loss_mis} has a unique local minimum in $\bbR^d$, which is also the global minimum. 
\end{enumerate}
\end{assumption}

Assumption \ref{assm6} is a density ratio upper bound (not required for analyzing MLE), which is essential for the analysis of MWLE. Assumption \ref{assm1_mis} is an analog of Assumption \ref{assm1}, in the sense that the empirical loss $\ell_n$ is replaced by its weighted version $\ell_n^w$. Assumption \ref{assm3_mis} is a weaker version of Assumption \ref{assm3} in the sense that it only requires $\ell_n^w$ has a unique local minimum with high probability. This is due to the nature of reweighting: when applying MWLE, $w(x_i)$ can sometimes be zero, which lead to the degeneration of $\ell_n^w$ (with a small probability). Therefore we only require the uniqueness of local minimum holds with high probability.

To state our non-asymptotic upper bound for MWLE, we define the following ``weighted version'' of Fisher information:
\begin{align*}
&G_w(M):=\bbE_{\substack{x\sim\bbP_S(X)\\y|x\sim M}}\left[w(x)^2\nabla\ell(x,y,\beta^{\star}(M))\nabla\ell(x,y,\beta^{\star}(M))^{T}\right],\\
&H_w(M):=\bbE_{\substack{x\sim\bbP_S(X)\\y|x\sim M}}\left[w(x)\nabla^2\ell(x,y,\beta^{\star}(M))\right]=\bbE_{\substack{x\sim\bbP_T(X)\\y|x\sim M}}\left[\nabla^2\ell(x,y,\beta^{\star}(M))\right].
\end{align*}

\begin{theorem}\label{thm:mis_upper}
Suppose the function class $\cF$ satisfies Assumption \ref{assm:mis_upper}. Let $G_w := G_w(M)$ and $H_w := H_w(M)$. For any $\delta\in (0,1)$, if $n\geq c \max\{N^{\star}\log(d/\delta), N(\delta), N'(\delta)\}$, then with probability at least $1-3\delta$, we have
\begin{align*}
&R_{M}(\beta_{\MWLE}) 
\leq c\frac{\Tr\left(G_w H^{-1}_w\right)\log \frac{d}{\delta}}{n}
\end{align*}
for an absolute constant $c$. Here $N^{\star}
:= \text{Poly}(W,B_1,B_2,B_3,\|H_w^{-1}\|_2, \Tr(G_w H_w^{-2}), \Tr(G_w H_w^{-2})^{-1})$.
\end{theorem}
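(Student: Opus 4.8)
The plan is to mirror the proof of Theorem~\ref{thm:well_upper}, tracking how the reweighting by $w(x)$ modifies each ingredient, and then show that the weighted Fisher-information quantities $G_w$ and $H_w$ play exactly the roles that $\cI_S$ and $\cI_T$ played in the well-specified case. The starting point is the first-order optimality condition for $\beta_{\MWLE}$: since $\ell_n^w$ has a unique local (hence global) minimum with high probability by Assumption~\ref{assm3_mis}, we have $\nabla \ell_n^w(\beta_{\MWLE}) = \zeros$. A Taylor expansion of $\nabla\ell_n^w$ around $\beta^\star(M)$ gives
\begin{align*}
\zeros = \nabla\ell_n^w(\beta^\star(M)) + \nabla^2\ell_n^w(\beta^\star(M))\,(\beta_{\MWLE}-\beta^\star(M)) + \text{(third-order remainder)}.
\end{align*}
The crucial point that makes $\beta^\star(M)$ the right center is that $\beta^\star(M)$ is defined as the minimizer of the \emph{population} target risk, so $\bbE[\nabla\ell_n^w(\beta^\star(M))] = \zeros$; this is precisely the content of the second equality in the definition of $H_w(M)$, which identifies the source-weighted expectation with the target expectation. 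Thus the first-order term is genuinely centered, and $\nabla\ell_n^w(\beta^\star(M))$ is a mean-zero fluctuation controlled by Assumption~\ref{assm1_mis}.

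First I would establish that $\beta_{\MWLE}$ is close to $\beta^\star(M)$. Using the Hessian concentration bound in Assumption~\ref{assm1_mis} together with Assumption~\ref{assm2_mis} (bounded third derivative) to control the remainder, I would invert $\nabla^2\ell_n^w(\beta^\star(M))$—which concentrates around $H_w$ since the population weighted Hessian equals $H_w$—and obtain the linearization
\begin{align*}
\beta_{\MWLE}-\beta^\star(M) \approx -H_w^{-1}\,\nabla\ell_n^w(\beta^\star(M)),
\end{align*}
with an error term that is higher order in $1/\sqrt{n}$. Here the weight bound $w(x)\le W$ from Assumption~\ref{assm6} enters to ensure the third-order term, which now carries a factor $w(x)$, stays bounded and that the Hessian fluctuation $\|\nabla^2\ell_n^w(\beta^\star(M)) - H_w\|_2$ is small once $n \gtrsim N^\star\log(d/\delta)$. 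Applying Assumption~\ref{assm1_mis} with the choice $A = H_w^{-1/2}$ (or $A = G_w^{1/2}H_w^{-1}$) controls the relevant norm of the score, where the variance term $V$ evaluates to $\Tr(G_w H_w^{-2})$ up to the chosen $A$, explaining the appearance of $\Tr(G_w H_w^{-2})$ in the threshold $N^\star$.

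Next I would convert the parameter error into an excess-risk bound. Since $R_M(\beta)$ is a smooth function of $\beta$ minimized at $\beta^\star(M)$ with $\nabla R_M(\beta^\star(M)) = \zeros$ and Hessian $H_w$ at the optimum, a second-order Taylor expansion gives $R_M(\beta_{\MWLE}) \approx \tfrac12(\beta_{\MWLE}-\beta^\star(M))^T H_w (\beta_{\MWLE}-\beta^\star(M))$, where the third-order correction is again controlled by Assumption~\ref{assm2_mis}. Substituting the linearization yields
\begin{align*}
R_M(\beta_{\MWLE}) \approx \tfrac12\,\nabla\ell_n^w(\beta^\star(M))^T H_w^{-1}\,\nabla\ell_n^w(\beta^\star(M)),
\end{align*}
and the high-probability bound on this quadratic form is exactly $\Tr(G_w H_w^{-1})\log(d/\delta)/n$, obtained by applying the score concentration of Assumption~\ref{assm1_mis} with $A = H_w^{-1/2}$ so that the leading variance matches $\Tr(H_w^{-1/2}G_w H_w^{-1/2}) = \Tr(G_w H_w^{-1})$.

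The main obstacle I anticipate is bookkeeping the threshold $N^\star$ and verifying that every error term is genuinely lower order than the target rate. Unlike the well-specified case, two complications appear: the score is no longer a martingale difference with Fisher-information variance but carries the explicit $w(x)^2$ factor defining $G_w$, so the Bernstein-type bound in Assumption~\ref{assm1_mis} must be invoked with the $W$-inflated deviation term; and the separation between the linear-algebraic quantities $\Tr(G_w H_w^{-2})$ and $\Tr(G_w H_w^{-1})$ must be handled carefully since they govern the threshold and the rate respectively. Controlling the remainder from the Taylor expansions—ensuring the cubic terms, multiplied now by $W$ through the weighted loss, contribute only $o(1/n)$ relative to the leading quadratic—is where the polynomial dependence of $N^\star$ on $W$, $B_3$, $\|H_w^{-1}\|_2$, and $\Tr(G_w H_w^{-2})$ gets pinned down, and this is the most delicate part of the argument.
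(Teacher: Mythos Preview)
Your overall strategy is correct and closely matches the paper's: mirror the well-specified argument with $G_w$ and $H_w$ replacing the Fisher information, apply Assumption~\ref{assm1_mis} with $A=H_w^{-1}$ and $A=H_w^{-1/2}$ to get the two relevant concentration bounds, and Taylor expand $R_M$ around $\beta^\star(M)$ using that the gradient vanishes there and the Hessian equals $H_w$.

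There is, however, one methodological point where the paper proceeds differently, and your sketch glosses over a genuine difficulty. You propose to obtain closeness of $\beta_{\MWLE}$ to $\beta^\star(M)$ by expanding $\nabla\ell_n^w$ around $\beta^\star(M)$, setting $\nabla\ell_n^w(\beta_{\MWLE})=\zeros$, and inverting. But the third-order remainder in that expansion scales like $WB_3\|\beta_{\MWLE}-\beta^\star(M)\|_2^2$, so to control it you already need $\beta_{\MWLE}$ to be close to $\beta^\star(M)$---the very conclusion you want. The paper avoids this circularity with a localization argument on \emph{function values}: it Taylor-expands $\ell_n^w(\beta)-\ell_n^w(\beta^\star(M))$ only for $\beta$ in a fixed ball of radius $\asymp\sqrt{\Tr(G_wH_w^{-2})\log(d/\delta)/n}$, shows that $\ell_n^w(\beta)>\ell_n^w(\beta^\star(M)+z)$ for all $\beta$ in that ball but outside a smaller ellipsoid $\cD$ centered at the Newton step $\beta^\star(M)+z$ with $z=-H_w^{-1}\nabla\ell_n^w(\beta^\star(M))$, and then invokes Assumption~\ref{assm3_mis} (unique local minimum) to conclude $\beta_{\MWLE}\in\cD$. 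Once $\beta_{\MWLE}\in\cD$, the bound $\|H_w^{1/2}(\beta_{\MWLE}-\beta^\star(M))\|_2^2\lesssim \Tr(G_wH_w^{-1})\log(d/\delta)/n$ follows by the triangle inequality from $\|H_w^{1/2}(\Delta-z)\|_2$ (controlled by the definition of $\cD$) and $\|H_w^{1/2}z\|_2=\|H_w^{-1/2}\nabla\ell_n^w(\beta^\star(M))\|_2$ (controlled by the concentration with $A=H_w^{-1/2}$). Your final excess-risk step is then exactly as in the paper.
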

For an exact characterization of the threshold $N^{\star}$, one can refer to Theorem \ref{thm_exact:mis_upper} in the appendix.

Compared with Theorem \ref{thm:well_upper}, Theorem \ref{thm:mis_upper} does not require well-specification of the model, demonstrating the wide applicability of MWLE. The excess risk upper bound can be explained as follows: note that $\Tr(G_w H^{-1}_w)$ can be expanded as $\Tr(H_w H^{-1}_w G_w H^{-1}_w)$. As shown by \cite{SHIMODAIRA2000227}, the term $\sqrt{n}(\beta_{\MWLE}-\beta^{\star})$ converges asymptotically to a normal distribution, denoted as $\cN(0,H_w^{-1}G_wH_w^{-1})$. Thus, the component $H^{-1}_w G_w H^{-1}_w$ characterizes the variance of the estimator, corresponding to the $\cI^{-1}_S$ term in Theorem \ref{thm:well_upper}. 
Additionally, the excess risk's dependence on the parameter estimation is captured by $H_w$ as a counterpart of $\cI_T$ in Theorem \ref{thm:well_upper}.

However, to establish Theorem \ref{thm:mis_upper}, it is necessary to assume the bounded density ratio, which does not appear in Theorem \ref{thm:well_upper}.
Moreover, when the model is well-specified, by Cauchy-Schwarz ineqaulity, we have $\Tr(G_w H_w^{-1})\geq \Tr(\cI_T\cI_S^{-1})$, which implies the upper bound for MWLE is larger than the vanilla MLE. This observation aligns with the results presented in \citet{SHIMODAIRA2000227}, which point out that when the model is well specified, MLE is more efficient than MWLE in terms of the asymptotic variance.



\subsection{Optimality of MWLE}
To understand the optimality of MWLE, it is necessary to establish a matching lower bound.
However, deriving a lower bound similar to Theorem \ref{thm:well_lower}, which holds for any model classes that satisfies certain mild conditions, is challenging due to hardness of capturing the difference between $\cM$ and $\cF$.
As a solution, we present a lower bound tailored for certain model classes and data distributions in the following.

\begin{theorem}\label{thm:mis_optimal}
There exist $\bbP_S(X)\neq \bbP_T(X)$, a model class $\cM$  and a prediction class $\cF$ satisfying Assumption \ref{assm:mis_upper} such that when $n$ is sufficiently large, we have
\begin{align}\label{thm:mis_optimal_lower}
\inf_{\hat\beta}\sup_{M \in\cM}\Tr\left(G_w(M) H^{-1}_w(M)\right)^{-1}\bbE_{\substack{x_i\sim\bbP_S(X)\\y_i|x_i\sim M}}\left[R_M(\hat\beta)\right] \gtrsim \frac{1}{n}.
\end{align}
\end{theorem}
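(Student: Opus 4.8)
Since the statement only asserts \emph{existence} of a hard instance up to constants, the plan is to construct one fully explicit quadruple $(\bbP_S,\bbP_T,\cM,\cF)$ for which every quantity is computable, mirroring the structure of the proof of Theorem~\ref{thm:well_lower} but now tracking the sandwich covariance $H_w^{-1}G_wH_w^{-1}$ in place of $\cI_S^{-1}$. I would take $\cF$ to be Gaussian linear regression, $\ell(x,y,\beta)=\tfrac12(y-x^\top\beta)^2$, with $\bbP_S(X)$ and $\bbP_T(X)$ supported on a common compact set and chosen so their density ratio $w$ is bounded above and below (this immediately secures Assumption~\ref{assm6}, and the compact support plus Gaussianity make the concentration and uniqueness parts of Assumption~\ref{assm:mis_upper} routine). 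The advantage of the Gaussian linear choice is that $H_w=\bbE_T[xx^\top]$ is \emph{constant} in $M$ and the target risk is \emph{exactly} quadratic, so that $R_M(\hat\beta)=\tfrac12(\hat\beta-\beta^\star(M))^\top H_w(\hat\beta-\beta^\star(M))$ with no remainder, reducing the claim to a lower bound on estimating the projection $\beta^\star(M)$ in the $H_w$-weighted norm.

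\textbf{The least-favorable sub-family.} The crux is to choose $\cM$ so that the local geometry of estimating $\beta^\star(M)$ from source data reproduces the sandwich form. I would take the $d$-dimensional family
\begin{align*}
\cM=\big\{\,M_\theta:\ m_\theta(y\,|\,x)=\cN\!\big(y;\ x^\top\beta^\star-w(x)\,x^\top(\theta-\theta_0),\ 1\big),\ \theta\in\bbB_{\theta_0}(B)\,\big\},
\end{align*}
which consists of genuine conditional Gaussians; at the center $\theta_0$ the regression function is linear (well-specified), while any perturbation bends it by the \emph{nonlinear} feature $w(x)x$, making the model misspecified. A direct computation gives the score $s_{\theta_0}(x,y)=\nabla_\theta\log m_{\theta_0}(y\,|\,x)=-w(x)(y-x^\top\beta^\star)x=w(x)\nabla\ell(x,y,\beta^\star)$, i.e. the model score at $\theta_0$ coincides with the \emph{weighted} score. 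Consequently the per-sample Fisher information of the sub-family under the source design is $I_{\theta_0}=\bbE_S[s_{\theta_0}s_{\theta_0}^\top]=\bbE_S[w^2\nabla\ell\,\nabla\ell^\top]=G_w$, and differentiating the defining identity $\bbE_{x\sim\bbP_S,\,y|x\sim M_\theta}[w(x)\nabla\ell(x,y,\beta^\star(M_\theta))]=0$ yields the functional derivative $\nabla_\theta\beta^\star(M_\theta)\big|_{\theta_0}=-H_w^{-1}G_w$.

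\textbf{Information bound and assembly.} With these two ingredients I would apply a multivariate van Trees (Bayesian Cram\'er--Rao) inequality to the smooth functional $\theta\mapsto\beta^\star(M_\theta)$, using a smooth prior supported on a ball $\bbB_{\theta_0}(\rho_n)$ with $\rho_n\to0$ chosen so the prior-information term is negligible against $nI_{\theta_0}$. This gives, in the Loewner order,
\begin{align*}
\inf_{\hat\beta}\ \bbE\big[(\hat\beta-\beta^\star(M_\theta))(\hat\beta-\beta^\star(M_\theta))^\top\big]\ \succeq\ \frac{1}{n}\,(H_w^{-1}G_w)\,G_w^{-1}\,(G_wH_w^{-1})\,(1-o(1))\ =\ \frac{1}{n}\,H_w^{-1}G_wH_w^{-1}\,(1-o(1)).
\end{align*}
Contracting with $H_w$ through the exact quadratic risk identity and using that $G_w(M_\theta),\beta^\star(M_\theta)$ vary smoothly so $\Tr(G_w(M_\theta)H_w^{-1})$ is essentially constant on $\bbB_{\theta_0}(\rho_n)$, I obtain $\sup_{\theta}\bbE[R_{M_\theta}(\hat\beta)]\ge\tfrac{1}{2n}\Tr(H_w\cdot H_w^{-1}G_wH_w^{-1})(1-o(1))=\tfrac{1}{2n}\Tr(G_wH_w^{-1})(1-o(1))$, which is the asserted bound after normalization by $\Tr(G_wH_w^{-1})^{-1}$.

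\textbf{Main obstacle.} The delicate part is not the information-theoretic step but the \emph{simultaneous} verification that the constructed $(\bbP_S,\bbP_T,\cM,\cF)$ satisfies \emph{all} of Assumption~\ref{assm:mis_upper} while realizing the score alignment: one must keep $w$ bounded (so $W$ is finite) even though the perturbation feature $w(x)x$ appears in the mean, control the higher-order terms in the linearization of $\beta^\star(M_\theta)$ uniformly over the shrinking prior support (this is where the third-derivative bound $B_3$ of Assumption~\ref{assm2} and the smoothness of $G_w,H_w$ enter), and ensure the weighted empirical loss $\ell_n^w$ retains a unique minimizer with high probability. Choosing the covariates on a bounded set and the noise Gaussian is precisely what makes each of these verifications tractable.
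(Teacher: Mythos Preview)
Your proposal is correct and takes a genuinely different route from the paper. The paper builds a \emph{two-region} instance: $\bbP_T=\uni(\bbB(1))$, $\bbP_S=\uni(\bbB(W^{1/d}))$ so that $w\in\{0,W\}$, and the model class consists of piecewise-linear Gaussians $p(y\,|\,x)=f(y\,|\,x;\beta_1^\star)\mathbf 1_{x\in Q}+f(y\,|\,x;\beta_2^\star)\mathbf 1_{x\in P\setminus Q}$ with two \emph{independent} parameters; the outer parameter $\beta_2^\star$ is pure nuisance, so only a $1/W$ fraction of the source data informs $\beta^\star(M)=\beta_1^\star$, and van Trees applied to the $2d$-dimensional pair $(\beta_1^\star,\beta_2^\star)$ yields the $Wd/n$ bound, after which one checks $\Tr(G_wH_w^{-1})=Wd$ by direct computation. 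Your construction instead keeps $w$ bounded above \emph{and below} and perturbs the regression mean by $w(x)x^\top(\theta-\theta_0)$, which forces the model score to equal the weighted score; this makes the source Fisher information literally $G_w$ and the Jacobian of $\theta\mapsto\beta^\star(M_\theta)$ literally $-H_w^{-1}G_w$, so the sandwich $H_w^{-1}G_wH_w^{-1}$ drops out of van Trees for \emph{any} admissible pair $(\bbP_S,\bbP_T)$. The paper's route is more hands-on and makes the factor $W$ visible as an effective sample-size deflation; yours is more structural, explains \emph{why} the sandwich variance is the right object, and avoids the degenerate region $w=0$. Two small simplifications you may not have noticed: in your setup $\beta^\star(M_\theta)=\beta^\star-H_w^{-1}G_w(\theta-\theta_0)$ is \emph{exactly} affine and the source Fisher information $I_\theta=\bbE_S[w^2xx^\top]$ is \emph{constant} in $\theta$, so the ``higher-order linearization'' worries you flag in the obstacle paragraph actually vanish; the only genuine residual work is the continuity of $\Tr(G_w(M_\theta)H_w^{-1})$ in $\theta$ and the routine verification of Assumption~\ref{assm:mis_upper}.
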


By Theorem \ref{thm:mis_upper}, the excess risk of MWLE is upper bounded by ${\Tr (G_w H_w^{-1})}/{n}$. Therefore, Theorem \ref{thm:mis_optimal} shows that there exists a non-trivial scenario where MWLE is minimax optimal. 

Notice that Theorem \ref{thm:mis_optimal} presents a weaker lower bound compared to  Theorem \ref{thm:well_lower}. The lower bound presented in Theorem \ref{thm:mis_optimal} holds only for certain meticulously chosen $\bbP_S(X), \bbP_T(X)$, model class $\cM$ and prediction class $\cF$. In contrast, the lower bound in Theorem \ref{thm:well_lower} applies to any $\bbP_S(X), \bbP_T(X)$, and class $\cF$ that meet the required assumptions.


\section{Conclusion and Discussion}
To conclude, we prove that MLE achieves the minimax optimality for covariate shift under a well-specified parametric model. Along the way, we demonstrate that the term $\Tr (\cI_T \cI_S^{-1})$ characterizes the foundamental hardness of covariate shift, where $\cI_S$ and $\cI_T$ are the Fisher information on the source domain and the target domain, respectively. To complement the study, we also consider the misspecified setting and show that Maximum Weighted Likelihood Estimator (MWLE) emerges as minimax optimal in specific scenarios, outperforming MLE.



Our work opens up several interesting avenues for future study. 
First, it is of great interest to extend our analysis to other types of OOD generalization problems, e.g., imbalanced data, posterior shift, etc. Second, our analyses relies on standard regularity assumptions, such as the positive definiteness of the Fisher information (which implies certain identifiability of the parameter) and the uniqueness of the minimum of the loss function. Addressing covariate shift without these assumptions is also important future directions.


\newpage
\bibliography{reference}
\bibliographystyle{iclr2024_conference}

\newpage
\appendix
\section{Proofs for Section \ref{well-specified}}

\subsection{Proofs for Theorem \ref{thm:well_upper}}
The detailed version of Theorem \ref{thm:well_upper} is stated as the following.
\begin{theorem} \label{thm_exact:well_upper}
Suppose that the model class $\cF$ satisfies Assumption \ref{assm:well_upper}. Let $\cI_T:=\cI_T(\beta^{\star})$ and $\cI_S:=\cI_S(\beta^{\star})$.
For any $\delta\in (0,1)$, if $n\geq c \max\{N^{\star}\log(d/\delta), N(\delta)\}$, then with probability at least $1-2\delta$, we have
\begin{align*}
R_{\beta^{\star}}(\beta_{\MLE}) 
\leq c\frac{\Tr\left(\cI_T\cI^{-1}_S\right)\log \frac{d}{\delta}}{n}
\end{align*}
for an absolute constant $c$. Here
\begin{align*}
N^{\star}:=  (1 + \tilde{\kappa}/\kappa)^2 \cdot \max\left\{\tilde{\kappa}^{-1}\alpha_1^2\log^{2\gamma} \left((1+\tilde{\kappa}/\kappa)\tilde{\kappa}^{-1}\alpha^2_1\right),\  \alpha_2^2, \  
\tilde{\kappa}(1 + \|\cI_T^{\frac12} \cI_S^{-1}\cI_T^{\frac12}\|_2^{-2})\alpha_3^2 \right\},
\end{align*}
where $\alpha_1 := B_1 \|\cI_S^{-1}\|_2^{1/2}$, $\alpha_2 := B_2 \|\cI_S^{-1}\|_2$, $\alpha_3 := B_3 \|\cI_S^{-1}\|_2^{3/2}$,
\begin{align*}
    \kappa:=\frac{\Tr(\cI_{T} \cI_{S}^{-1}) }{\|\cI_T^{\frac12} \cI_S^{-1}\cI_T^{\frac12}\|_2},\, \, \tilde{\kappa} := \frac{\Tr(\cI_S^{-1})}{\|\cI_S^{-1}\|_2}.
\end{align*}
\end{theorem}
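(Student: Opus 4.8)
The plan is to reduce the excess risk to a quadratic form in the estimation error $\beta_{\MLE}-\beta^{\star}$ and then to control that error through the stationarity condition of the MLE. First I would expand the target risk around $\beta^{\star}$. Because the model is well-specified, the score $\nabla\ell(x,y,\beta^{\star})$ has zero conditional mean under $y\,|\,x\sim f(\cdot;\beta^{\star})$, so the gradient of $\bbE_{T}[\ell(x,y,\beta)]$ vanishes at $\beta^{\star}$ and its Hessian there is exactly $\cI_{T}$. A second-order Taylor expansion together with the third-derivative bound of Assumption \ref{assm2} then yields
\[
R_{\beta^{\star}}(\beta_{\MLE}) \le \tfrac12 (\beta_{\MLE}-\beta^{\star})^{T}\cI_{T}(\beta_{\MLE}-\beta^{\star}) + \tfrac{B_3}{6}\|\beta_{\MLE}-\beta^{\star}\|_2^3,
\]
so that everything reduces to controlling the $\cI_{T}$-weighted error plus a cubic remainder.

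Next I would localize and linearize. Write $g := \nabla\ell_n(\beta^{\star})$, whose mean is $0$ by the score identity and which concentrates by \eqref{assm1:ineq1}; meanwhile $\nabla^2\ell_n(\beta^{\star})$ is within $B_2\sqrt{\log(d/\delta)/n}$ of $\cI_S$ by \eqref{assm1:ineq2}. Combining these with the Hessian-Lipschitz bound from Assumption \ref{assm2} and the uniqueness of the minimizer (Assumption \ref{assm3}), a standard outward-gradient argument on a ball of radius $O(\|\cI_S^{-1}\|_2\,\|g\|_2)$ forces $\beta_{\MLE}$ into a shrinking neighborhood of $\beta^{\star}$ once $n\ge N^{\star}\log(d/\delta)$. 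On this neighborhood the condition $\nabla\ell_n(\beta_{\MLE})=0$ linearizes to
\[
\beta_{\MLE}-\beta^{\star} = -\bar{H}^{-1} g, \qquad \bar{H} := \int_0^1 \nabla^2\ell_n\bigl(\beta^{\star}+t(\beta_{\MLE}-\beta^{\star})\bigr)\,dt,
\]
with $\bar{H}$ provably close to $\cI_S$ in spectral norm.

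The crux is then to bound $(\beta_{\MLE}-\beta^{\star})^{T}\cI_{T}(\beta_{\MLE}-\beta^{\star})$. Replacing $\bar{H}^{-1}$ by $\cI_S^{-1}$, the leading term is $\|\cI_{T}^{1/2}\cI_S^{-1} g\|_2^2$, and I would control $\|\cI_{T}^{1/2}\cI_S^{-1} g\|_2$ by applying \eqref{assm1:ineq1} with the specific choice $A=\cI_{T}^{1/2}\cI_S^{-1}$. The key computation is the variance: since the score has covariance $\cI_S$ under the source (the Fisher information identity $\bbE_S[\nabla\ell\,\nabla\ell^{T}]=\cI_S$), one gets $V = \Tr(A\cI_S A^{T}) = \Tr(\cI_{T}^{1/2}\cI_S^{-1}\cI_{T}^{1/2}) = \Tr(\cI_{T}\cI_S^{-1})$, which is precisely the target rate. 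This delivers $\|\cI_{T}^{1/2}\cI_S^{-1} g\|_2^2 \lesssim \Tr(\cI_{T}\cI_S^{-1})\log(d/\delta)/n$ up to the Bernstein correction governed by $B_1$.

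The main obstacle, and the source of the intricate threshold $N^{\star}$, is showing that every error term is genuinely lower-order than $\Tr(\cI_{T}\cI_S^{-1})/n$. There are three such terms: the cubic remainder from the first step, the error from replacing $\bar{H}$ by $\cI_S$ in the linearization, and the second (Bernstein) term of \eqref{assm1:ineq1}. Each must be measured in the $\cI_{T}$-geometry rather than the Euclidean one, so the relevant comparison is not against the operator norm $\|\cI_{T}^{1/2}\cI_S^{-1}\cI_{T}^{1/2}\|_2$ but against its trace; the ratio of these two, namely the effective-dimension quantities $\kappa$ and $\tilde{\kappa}$ appearing in $N^{\star}$, is exactly what dictates how large $n$ must be before the quadratic term dominates. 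Carefully tracking these condition-number-like factors through the cross terms produced by $\|\bar{H}^{-1} g\|$ measured in the $\cI_{T}$-norm is the delicate part of the argument.
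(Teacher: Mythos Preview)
Your plan is correct and matches the paper at the high level: Taylor-expand the target risk around $\beta^{\star}$ (killing the linear term by well-specification), reduce to bounding $\|\cI_{T}^{1/2}(\beta_{\MLE}-\beta^{\star})\|_2^{2}$, and apply \eqref{assm1:ineq1} with $A=\cI_{T}^{1/2}\cI_{S}^{-1}$ so that the variance becomes exactly $\Tr(\cI_{T}\cI_{S}^{-1})$. Where you differ is in the mechanics of the localization/correction step. You use the first-order condition $\nabla\ell_n(\beta_{\MLE})=0$ to write $\beta_{\MLE}-\beta^{\star}=-\bar{H}^{-1}g$ with an integrated Hessian $\bar{H}$, and then absorb $\bar{H}^{-1}-\cI_{S}^{-1}$ as a perturbation. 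The paper instead never invokes the stationarity equation: it Taylor-expands $\ell_n(\beta)-\ell_n(\beta^{\star})$, completes the square to $\tfrac12(\Delta_\beta-z)^{T}\cI_{S}(\Delta_\beta-z)-\tfrac12 z^{T}\cI_{S}z$ with $z=-\cI_{S}^{-1}g$, and compares loss values on a ball to trap $\beta_{\MLE}$ in an ellipsoid $\cD$ around $\beta^{\star}+z$; this directly yields a bound on $\|\cI_{S}^{1/2}(\Delta_{\beta_{\MLE}}-z)\|_2$, from which $\|\cI_{T}^{1/2}\Delta_{\beta_{\MLE}}\|_2^{2}\le 2\|\cI_{T}^{1/2}\cI_{S}^{-1/2}\|_2^{2}\|\cI_{S}^{1/2}(\Delta-z)\|_2^{2}+2\|\cI_{T}^{1/2}\cI_{S}^{-1}g\|_2^{2}$. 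Your route is the more classical $M$-estimator linearization and is perfectly valid; the paper's loss-comparison route avoids having to control $\bar{H}^{-1}$ (no invertibility needed along the path) and packages the correction as a single ellipsoid radius, which makes the bookkeeping of the $\kappa,\tilde\kappa,\alpha_i$ thresholds a bit cleaner. Either way the same lower-order terms appear and the same $N^{\star}$ emerges.
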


For proving Theorem \ref{thm_exact:well_upper}, we first state two main lemmas. Informally speaking, Lemma \ref{claim1} and Lemma \ref{claim2} capture the distance between $\beta_{\MLE}$ and $\beta^{\star}$ under different measurements. 

\begin{lemma}\label{claim1}
Suppose Assumption \ref{assm:well_upper} holds. For any $\delta\in (0,1)$ and any $n\geq c\max\{N_1\log(d/\delta), N(\delta)\}$, with probability at least $1-\delta$, we have $\beta_{\MLE}\in\bbB_{\beta^{\star}}(c\sqrt{\frac{\Tr(\cI_S^{-1})\log \frac{d}{\delta}}{n}})$ for some absolute constant $c$.
Here 
\begin{align*}
N_1:=\max\bigg\{
&B^2_2\|\cI_{S}^{-1}\|^2_{2}, B^2_3\|\cI_{S}^{-1}\|^2_{2}\Tr(\cI_{S}^{-1}),
\left(\frac{B^2_1B_2\|\cI_{S}^{-1}\|_{2}^3\log^{2\gamma} (\tilde{\kappa}^{-1/2}\alpha_1)}{\Tr(\cI_{S}^{-1})}\right)^{\frac23},\notag\\
&\left(\frac{B^3_1B_3\|\cI_{S}^{-1}\|_{2}^4\log^{3\gamma} (\tilde{\kappa}^{-1/2}\alpha_1)}{\Tr(\cI_{S}^{-1})}\right)^{\frac12},
\frac{B^2_1\|\cI_{S}^{-1}\|_{2}^2\log^{2\gamma} (\tilde{\kappa}^{-1/2}\alpha_1)}{\Tr(\cI_{S}^{-1})}
\bigg\} .   
\end{align*}
\end{lemma}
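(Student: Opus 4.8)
The statement is exactly a bound on $\hat\Delta := \beta_{\MLE} - \beta^{\star}$ in Euclidean norm, and the plan is to obtain the sharp radius from the gradient \emph{preconditioned} by $\cI_S^{-1}$. First I would record two population identities at $\beta^{\star}$: well-specification makes the score unbiased, $\bbE[\nabla\ell_n(\beta^{\star})] = 0$, and the information-matrix identity gives per-sample covariance $\cI_S$, so that $n\,\bbE\|\cI_S^{-1}\nabla\ell_n(\beta^{\star})\|_2^2 = \Tr(\cI_S^{-1})$. Feeding the choice $A = \cI_S^{-1}$ into \eqref{assm1:ineq1} (so that $V = \Tr(\cI_S^{-1})$) and using \eqref{assm1:ineq2} for the Hessian, I get, with probability at least $1-\delta$ and for $n$ beyond the threshold $N_1$,
\[
\|\cI_S^{-1}\nabla\ell_n(\beta^{\star})\|_2 \leq c\sqrt{\tfrac{\Tr(\cI_S^{-1})\log(d/\delta)}{n}},\qquad \|\nabla^2\ell_n(\beta^{\star}) - \cI_S\|_2 \leq B_2\sqrt{\tfrac{\log(d/\delta)}{n}} \leq \tfrac{1}{2\|\cI_S^{-1}\|_2},
\]
where domination of the Bernstein correction term in \eqref{assm1:ineq1} is exactly what forces the $B_1$-dependent entries of $N_1$, and the Hessian bound (hence $\|\nabla^2\ell_n(\beta^{\star})^{-1}\cI_S\|_2 \leq 2$) forces the $B_2^2\|\cI_S^{-1}\|_2^2$ entry.

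Next I would solve $\nabla\ell_n(\beta) = 0$ via a Newton--Kantorovich contraction built around the sharp radius. Set $\eta := \|\nabla^2\ell_n(\beta^{\star})^{-1}\nabla\ell_n(\beta^{\star})\|_2 \leq 2\|\cI_S^{-1}\nabla\ell_n(\beta^{\star})\|_2$ and define the fixed-Hessian Newton map $\Phi(\beta) := \beta - \nabla^2\ell_n(\beta^{\star})^{-1}\nabla\ell_n(\beta)$, whose fixed points are precisely the critical points of $\ell_n$. For $\beta \in \bbB_{\beta^{\star}}(2\eta)$, expanding $\nabla\ell_n(\beta)$ to second order at $\beta^{\star}$ and controlling the remainder by the uniform third-derivative bound $B_3$ of Assumption \ref{assm2}, I obtain $\|\Phi(\beta) - \beta^{\star}\|_2 \leq \eta + B_3\|\cI_S^{-1}\|_2\,\|\beta-\beta^{\star}\|_2^2$ and, for $\beta_1,\beta_2 \in \bbB_{\beta^{\star}}(2\eta)$, $\|\Phi(\beta_1)-\Phi(\beta_2)\|_2 \leq 4B_3\|\cI_S^{-1}\|_2\eta\,\|\beta_1-\beta_2\|_2$. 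Both the self-map and the contraction properties hold once $B_3\|\cI_S^{-1}\|_2\eta \lesssim 1$; since $\eta \lesssim \sqrt{\Tr(\cI_S^{-1})\log(d/\delta)/n}$, this is precisely $n \gtrsim B_3^2\|\cI_S^{-1}\|_2^2\Tr(\cI_S^{-1})\log(d/\delta)$, matching the corresponding entry of $N_1$.

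By the Banach fixed-point theorem, $\Phi$ has a unique fixed point $\hat\beta$ in the closed ball $\bbB_{\beta^{\star}}(2\eta)$; it satisfies $\nabla\ell_n(\hat\beta) = 0$, and since $\nabla^2\ell_n$ is positive definite on this ball it is a local minimum of $\ell_n$. Assumption \ref{assm3} asserts that $\ell_n$ has a unique local minimum, which is its global minimum $\beta_{\MLE}$; hence $\hat\beta = \beta_{\MLE}$ and
\[
\|\beta_{\MLE}-\beta^{\star}\|_2 \leq 2\eta \leq 4\|\cI_S^{-1}\nabla\ell_n(\beta^{\star})\|_2 \leq c\sqrt{\tfrac{\Tr(\cI_S^{-1})\log(d/\delta)}{n}},
\]
which is the asserted membership $\beta_{\MLE} \in \bbB_{\beta^{\star}}(c\sqrt{\Tr(\cI_S^{-1})\log(d/\delta)/n})$.

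The main obstacle is the tension between \emph{localization} and \emph{sharpness}: the trace quantity $\Tr(\cI_S^{-1})$ cannot be read off from any sphere or ellipsoid boundary argument, which only sees extremal eigenvalues and would yield the coarser radius $\sqrt{d\,\|\cI_S^{-1}\|_2/n}$ together with an inflated threshold of order $B_3^2\|\cI_S^{-1}\|_2^3 d$. The device that resolves this is to run the contraction on the Euclidean ball of the already-sharp radius $2\eta$ while measuring the Newton residual in the preconditioned norm $\|\nabla^2\ell_n(\beta^{\star})^{-1}\nabla\ell_n(\beta^{\star})\|_2 \asymp \|\cI_S^{-1}\nabla\ell_n(\beta^{\star})\|_2$, so that the $\cI_S^{-1}$-weighting---and hence $\Tr(\cI_S^{-1})$---enters the radius and the contraction condition simultaneously. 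The remaining effort is bookkeeping: checking $n > N(\delta)$ so that Assumption \ref{assm1} is in force, and verifying that the three requirements (domination of the Bernstein term, $\|\nabla^2\ell_n(\beta^{\star})^{-1}\cI_S\|_2 \leq 2$, and $B_3\|\cI_S^{-1}\|_2\eta \lesssim 1$) are each subsumed by the maximum defining $N_1$.
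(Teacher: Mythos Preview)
Your argument is correct and takes a genuinely different route from the paper. Both proofs feed $A=\cI_S^{-1}$ into Assumption~\ref{assm1} to obtain the preconditioned gradient bound (hence the $\Tr(\cI_S^{-1})$ radius), use the Hessian concentration, and then appeal to Assumption~\ref{assm3} to identify the local minimum with $\beta_{\MLE}$. The mechanics differ: the paper sets $z:=-\cI_S^{-1}g$, Taylor-expands $\ell_n(\beta)-\ell_n(\beta^{\star})$, and traps the local minimum inside an explicit ellipsoid $\cD=\{\tfrac12(\Delta_\beta-z)^T\cI_S(\Delta_\beta-z)\le\varepsilon_n\}$ by comparing values of $\ell_n$ at $\beta$ and at $\beta^{\star}+z$; you instead run a fixed-Hessian Newton--Kantorovich contraction on the ball of radius $2\eta$ and invoke Banach. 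Your route is cleaner for the present lemma and avoids tracking the cross terms (the third and fourth entries of $N_1$ are in fact geometric means of the first, second and fifth, so they are redundant and your three conditions suffice). The paper's route pays off downstream: membership $\beta_{\MLE}\in\cD$ directly gives the bound on $\|\cI_S^{1/2}(\Delta_{\beta_{\MLE}}-z)\|_2$ that drives Lemma~\ref{claim2}. Your contraction yields the same refinement once you note that the fixed-point equation gives $\hat\beta-\beta^{\star}=-\nabla^2\ell_n(\beta^{\star})^{-1}\nabla\ell_n(\beta^{\star})+O(B_3\|\cI_S^{-1}\|_2\eta^2)$, so $\|\cI_S^{1/2}(\hat\beta-\beta^{\star}-z)\|_2$ is of the same higher order; it is just not stated as a byproduct here.
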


\begin{lemma}\label{claim2}
Suppose Assumption \ref{assm:well_upper} holds. For any $\delta\in (0,1)$ and any $n\geq c\max\{N_1\log(d/\delta), N_2\log(d/\delta), N(\delta)\}$, with probability at least $1-2\delta$, we have
\begin{align*}
\|\cI_{T}^{\frac{1}{2}}(\beta_{\MLE}-\beta^{\star})\|_{2}^{2}\leq c\frac{\Tr(\cI_{T} \cI_{S}^{-1}) \log \frac{d}{\delta}}{n}
\end{align*}
for some absolute constant $c$. Here $N_1$ is defined in Lemma \ref{claim1} and
\begin{align*}
N_2:=\max\bigg\{
&\left(\frac{B_2\|\cI_{T}^{\frac{1}{2}}\cI_{S}^{-\frac{1}{2}}\|_{2}^{2}\Tr(\cI_{S}^{-1})}{\Tr(\cI_{T} \cI_{S}^{-1}) }\right)^2,
\left(\frac{B_3\|\cI_{T}^{\frac{1}{2}}\cI_{S}^{-\frac{1}{2}}\|_{2}^{2}\Tr(\cI_{S}^{-1})^{1.5}}{\Tr(\cI_{T} \cI_{S}^{-1}) }\right)^2,\notag\\
&\left(\frac{B^2_1B_2\|\cI_{T}^{\frac{1}{2}}\cI_{S}^{-\frac{1}{2}}\|_{2}^{2}\|\cI_{S}^{-1}\|_{2}^2\log^{2\gamma} (\tilde{\kappa}^{-1/2}\alpha_1)}{\Tr(\cI_{T} \cI_{S}^{-1}) }\right)^{\frac23},
\left(\frac{B^3_1B_3\|\cI_{T}^{\frac{1}{2}}\cI_{S}^{-\frac{1}{2}}\|_{2}^{2}\|\cI_{S}^{-1}\|_{2}^3\log^{3\gamma} (\tilde{\kappa}^{-1/2}\alpha_1)}{\Tr(\cI_{T} \cI_{S}^{-1}) }\right)^{\frac12},\notag\\
&\frac{B^2_{1}\|\cI_{T}^{\frac{1}{2}}\cI_{S}^{-\frac{1}{2}}\|_{2}^{2}\|\cI_{S}^{-1}\|_{2}\log^{2\gamma} ({\kappa}^{-1/2}\alpha_1)}{\Tr(\cI_{T} \cI_{S}^{-1}) }
\bigg\}.
\end{align*}
\end{lemma}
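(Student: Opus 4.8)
The plan is to exploit the first-order optimality condition of the MLE together with a second-order Taylor expansion, isolating a single leading stochastic term whose magnitude is dictated by the variance computation $V=\Tr(\cI_T\cI_S^{-1})$, and then to dispatch everything else as lower order using the crude radius bound already furnished by Lemma~\ref{claim1}.

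\textbf{Step 1 (optimality and expansion).} Since $\ell_n$ has a unique minimizer attained at $\beta_{\MLE}$ (Assumption~\ref{assm3}), $\nabla\ell_n(\beta_{\MLE})=0$. Writing $\Delta:=\beta_{\MLE}-\beta^{\star}$ and Taylor expanding with integral remainder,
\begin{align*}
0=\nabla\ell_n(\beta_{\MLE})=\nabla\ell_n(\beta^{\star})+\bar H\,\Delta,\qquad \bar H:=\int_0^1 \nabla^2\ell_n(\beta^{\star}+t\Delta)\,dt.
\end{align*}
Rewriting $\cI_S\Delta=-\nabla\ell_n(\beta^{\star})-(\bar H-\cI_S)\Delta$ and multiplying through by $\cI_T^{1/2}\cI_S^{-1}$ gives the clean decomposition
\begin{align*}
\cI_T^{1/2}\Delta=\underbrace{-\cI_T^{1/2}\cI_S^{-1}\nabla\ell_n(\beta^{\star})}_{=:u}\;\underbrace{-\,\cI_T^{1/2}\cI_S^{-1}(\bar H-\cI_S)\Delta}_{=:e}.
\end{align*}
Note this route never inverts the random matrix $\bar H$, which keeps the argument robust. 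I would then bound $\|\cI_T^{1/2}\Delta\|_2\le\|u\|_2+\|e\|_2$ and treat the two pieces separately.

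\textbf{Step 2 (leading term $u$).} I would apply the Bernstein-type bound \eqref{assm1:ineq1} to the \emph{deterministic} matrix $A=\cI_T^{1/2}\cI_S^{-1}$. The crucial computation is the variance: invoking the Bartlett identity at the true parameter, $\bbE[\nabla\ell(x,y,\beta^{\star})\nabla\ell(x,y,\beta^{\star})^{T}]=\cI_S$ under $x\sim\bbP_S,\,y\mid x\sim f(\cdot;\beta^{\star})$, one gets
\begin{align*}
V=\bbE\|A\nabla\ell(x,y,\beta^{\star})\|_2^2=\Tr\big(A\cI_S A^{T}\big)=\Tr\big(\cI_T^{1/2}\cI_S^{-1}\cI_T^{1/2}\big)=\Tr(\cI_T\cI_S^{-1}).
\end{align*}
Hence \eqref{assm1:ineq1} yields $\|u\|_2\lesssim\sqrt{\Tr(\cI_T\cI_S^{-1})\log(d/\delta)/n}$ plus the lower-order tail $B_1\|\cI_T^{1/2}\cI_S^{-1}\|_2\log^\gamma(\cdot)\,\log(d/\delta)/n$; squaring the dominant part reproduces exactly the target rate $\Tr(\cI_T\cI_S^{-1})\log(d/\delta)/n$.

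\textbf{Step 3 (error term $e$ and thresholds).} For $e$ I would use purely operator-norm control: $\|e\|_2\le\|\cI_T^{1/2}\cI_S^{-1}\|_2\,\|\bar H-\cI_S\|_2\,\|\Delta\|_2$, then split $\|\bar H-\cI_S\|_2\le\|\nabla^2\ell_n(\beta^{\star})-\cI_S\|_2+\|\bar H-\nabla^2\ell_n(\beta^{\star})\|_2$, where the first is $\lesssim B_2\sqrt{\log(d/\delta)/n}$ by \eqref{assm1:ineq2} and the second is $\le B_3\|\Delta\|_2$ by Assumption~\ref{assm2}. Substituting the crude bound $\|\Delta\|_2\lesssim\sqrt{\Tr(\cI_S^{-1})\log(d/\delta)/n}$ from Lemma~\ref{claim1}, both $\|e\|_2$ and the $B_1$ tail of Step~2 become subdominant to $\|u\|_2$ precisely when $n\ge c\max\{N_1,N_2\}\log(d/\delta)$; indeed, each summand in $N_2$ is engineered so that one specific error contribution (the $B_2$, $B_3$, mixed $B_1 B_2$/$B_1^3 B_3$, and pure $B_1$ pieces) falls below the main rate. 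Combining Steps~2--3 by the triangle inequality and squaring yields the claim.

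\textbf{Main obstacle.} The conceptual crux is the variance evaluation in Step~2: it is the Bartlett identity—hence well-specification—that collapses $V$ to exactly $\Tr(\cI_T\cI_S^{-1})$ and makes the choice $A=\cI_T^{1/2}\cI_S^{-1}$ optimal, so this is where the sharpness of the bound is earned. The technical grind lies in Step~3, namely carefully bookkeeping the several error products (each carrying its own $\log^\gamma$ factor inherited from \eqref{assm1:ineq1}) and verifying that the threshold $N_2$ as stated is exactly what renders every one of them lower order; caution is also needed because $e$ couples the data-dependent $\bar H$ and $\Delta$, so it must be handled via deterministic norm bounds plus Lemma~\ref{claim1} rather than a fresh concentration inequality.
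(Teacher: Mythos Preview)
Your proposal is correct and shares the paper's core idea: apply \eqref{assm1:ineq1} with $A=\cI_T^{1/2}\cI_S^{-1}$ so that the Bartlett identity collapses the variance to $V=\Tr(\cI_T\cI_S^{-1})$, and decompose $\cI_T^{1/2}\Delta$ into the leading stochastic piece $\cI_T^{1/2}z$ (your $u$) and a remainder $\cI_T^{1/2}(\Delta-z)$ (your $e$). Where you differ is in controlling the remainder. The paper does \emph{not} use the first-order optimality condition; instead it reaches back into the proof of Lemma~\ref{claim1} and uses the fact that $\beta_{\MLE}$ lies in the explicit ellipsoid $\cD$, which directly bounds $\|\cI_S^{1/2}(\Delta-z)\|_2^2$ by four terms of orders $(\log/n)^{1.5},(\log/n)^{1.5},(\log/n)^{2.5},(\log/n)^{3}$; it then transfers to the target metric via $\|\cI_T^{1/2}\cI_S^{-1/2}\|_2^2$. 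Your route, writing $\cI_T^{1/2}(\Delta-z)=-\cI_T^{1/2}\cI_S^{-1}(\bar H-\cI_S)\Delta$ and bounding by operator norms plus the \emph{conclusion} of Lemma~\ref{claim1}, is more self-contained and in fact generates only three lower-order pieces (a $B_2$ piece, a $B_3$ piece, and the $B_1$ tail of $u$)---the mixed $B_1^2B_2$ and $B_1^3B_3$ terms you mention never arise in your decomposition; they are artifacts of the paper's ellipsoid bound on $\|z\|_2^2,\|z\|_2^3$.

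One caveat to watch when you execute Step~3: your error thresholds involve $\|\cI_T^{1/2}\cI_S^{-1}\|_2^2$ rather than the $\|\cI_T^{1/2}\cI_S^{-1/2}\|_2^2$ appearing in the stated $N_2$, so the summands of $N_2$ do not correspond one-to-one to your error pieces. You still recover the lemma as stated, but you need a short extra step: after using $\|\cI_T^{1/2}\cI_S^{-1}\|_2^2\le\|\cI_T^{1/2}\cI_S^{-1/2}\|_2^2\|\cI_S^{-1}\|_2$, each of your thresholds is the geometric mean of a term in $N_1$ and a term in $N_2$, hence is dominated by $\max\{N_1,N_2\}$.
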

The proofs for Lemma \ref{claim1} and \ref{claim2} are delayed to the end of this subsection. With these two lemmas, we can now state the proof for Theorem \ref{thm_exact:well_upper}.
\begin{proof}[Proof of Theorem \ref{thm_exact:well_upper}]
By Assumption \ref{assm2}, we can do Taylor expansion w.r.t. $\beta$ as the following:
\begin{align*}
R_{\beta^{\star}}(\beta_{\MLE})&=  
\bbE_{\substack{x\sim\bbP_T(X)\\y|x\sim f(y|x;\beta^{\star})}}\left[\ell(x,y,\beta_{\MLE})-\ell(x,y,\beta^{\star})\right] \notag \\ &\leq  \bbE_{\substack{x\sim\bbP_T(X)\\y|x\sim f(y|x;\beta^{\star})}} [\nabla \ell(x,y,\beta^{\star})]^{T}(\beta_{\MLE}-\beta^{\star}) \notag \\ 
&+ \frac{1}{2}(\beta_{\MLE}-\beta^{\star})^{T}\cI_{T}(\beta_{\MLE}-\beta^{\star}) +\frac{B_{3}}{6}\|\beta_{\MLE}-\beta^{\star}\|_{2}^{3}. 
\end{align*}
Applying Lemma \ref{claim1} and \ref{claim2}, we know for any $\delta$ and any $n\geq c\max\{N_1\log(d/\delta), N_2\log(d/\delta), N(\delta)\}$, with probability at least $1-2\delta$, we have
\begin{align*}
(\beta_{\MLE}-\beta^{\star})^{T}\cI_{T}(\beta_{\MLE}-\beta^{\star}) \leq c\frac{\Tr(\cI_{T} \cI_{S}^{-1}) \log \frac{d}{\delta}}{n}
\end{align*} and
\begin{align*}
   \|\beta_{\MLE}-\beta^{\star}\|_{2} \leq c\sqrt{\frac{ \Tr(\cI_S^{-1})\log \frac{d}{\delta}}{n}}.
\end{align*}
Also notice that, $\bbE_{\substack{x\sim\bbP_T(X)\\y|x\sim f(y|x;\beta^{\star})}} [\nabla \ell(x,y,\beta^{\star})]=0$. Therefore, with probability at least $1-2\delta$, we have
\begin{align*} 
R_{\beta^{\star}}(\beta_{\MLE})  \leq \frac{c}{2}\frac{\Tr(\cI_{T} \cI_{S}^{-1}) \log \frac{d}{\delta}}{n} + \frac{c^3}{6}B_3\Tr(\cI_S^{-1})^{1.5}(\frac{\log \frac{d}{\delta}}{n})^{1.5}
\end{align*}
for any $\delta$ and any $n\geq c\max\{N_1\log(d/\delta), N_2\log(d/\delta), N(\delta)\}$. If we further assume $n\geq c(\frac{B_3\Tr(\cI_S^{-1})^{1.5}}{\Tr(\cI_{T} \cI_{S}^{-1}) })^{2}\log(d/\delta)$, it then holds that
\begin{align*} 
R_{\beta^{\star}}(\beta_{\MLE})  \leq c \frac{\Tr(\cI_{T} \cI_{S}^{-1}) \log \frac{d}{\delta}}{n}.
\end{align*}
Note that
\begin{align*}
&\max\bigg\{N_1,N_2,
\left(\frac{B_3\Tr(\cI_S^{-1})^{1.5}}{\Tr(\cI_{T} \cI_{S}^{-1}) }\right)^{2}
\bigg\}\\
&=\max\bigg\{
B^2_2\|\cI_{S}^{-1}\|^2_{2}, B^2_3\|\cI_{S}^{-1}\|^2_{2}\Tr(\cI_{S}^{-1}),
\left(\frac{B^2_1B_2\|\cI_{S}^{-1}\|_{2}^3\log^{2\gamma} (\tilde{\kappa}^{-1/2}\alpha_1)}{\Tr(\cI_{S}^{-1})}\right)^{\frac23},
\left(\frac{B^3_1B_3\|\cI_{S}^{-1}\|_{2}^4\log^{3\gamma} (\tilde{\kappa}^{-1/2}\alpha_1)}{\Tr(\cI_{S}^{-1})}\right)^{\frac12},\notag\\
&\quad\frac{B^2_1\|\cI_{S}^{-1}\|_{2}^2\log^{2\gamma} (\tilde{\kappa}^{-1/2}\alpha_1)}{\Tr(\cI_{S}^{-1})},
\left(\frac{B_2\|\cI_{T}^{\frac{1}{2}}\cI_{S}^{-\frac{1}{2}}\|_{2}^{2}\Tr(\cI_{S}^{-1})}{\Tr(\cI_{T} \cI_{S}^{-1}) }\right)^2,
\left(\frac{B_3\|\cI_{T}^{\frac{1}{2}}\cI_{S}^{-\frac{1}{2}}\|_{2}^{2}\Tr(\cI_{S}^{-1})^{1.5}}{\Tr(\cI_{T} \cI_{S}^{-1}) }\right)^2,\notag\\
&\quad\left(\frac{B^2_1B_2\|\cI_{T}^{\frac{1}{2}}\cI_{S}^{-\frac{1}{2}}\|_{2}^{2}\|\cI_{S}^{-1}\|_{2}^2\log^{2\gamma} (\tilde{\kappa}^{-1/2}\alpha_1)}{\Tr(\cI_{T} \cI_{S}^{-1}) }\right)^{\frac23},
\left(\frac{B^3_1B_3\|\cI_{T}^{\frac{1}{2}}\cI_{S}^{-\frac{1}{2}}\|_{2}^{2}\|\cI_{S}^{-1}\|_{2}^3\log^{3\gamma} (\tilde{\kappa}^{-1/2}\alpha_1)}{\Tr(\cI_{T} \cI_{S}^{-1}) }\right)^{\frac12},\notag\\
&\quad\frac{B^2_{1}\|\cI_{T}^{\frac{1}{2}}\cI_{S}^{-\frac{1}{2}}\|_{2}^{2}\|\cI_{S}^{-1}\|_{2}\log^{2\gamma} ({\kappa}^{-1/2}\alpha_1)}{\Tr(\cI_{T} \cI_{S}^{-1}) },
\left(\frac{B_3\Tr(\cI_S^{-1})^{1.5}}{\Tr(\cI_{T} \cI_{S}^{-1}) }\right)^{2}
\bigg\}\notag\\
&=\max\bigg\{
\alpha_2^2, \tilde{\kappa} \alpha^2_3,
\alpha_1^{4/3}\alpha_2^{2/3} \tilde{\kappa} ^{-2/3}\log^{4\gamma/3} (\tilde{\kappa}^{-1/2}\alpha_1), 
\alpha_1^{3/2}\alpha_3^{1/2} \tilde{\kappa} ^{-1/2}\log^{3\gamma/2} (\tilde{\kappa}^{-1/2}\alpha_1),
\alpha_1^2\tilde{\kappa}^{-1}\log^{2\gamma} (\tilde{\kappa}^{-1/2}\alpha_1),
\notag\\
& 
\quad\quad\quad\quad
\alpha_2^2 (\tilde{\kappa}/\kappa)^2, \alpha_3^2 \tilde{\kappa}^3/\kappa^2, 
\alpha_1^{4/3}\alpha_2^{2/3}\kappa ^{-2/3}\log^{4\gamma/3} (\tilde{\kappa}^{-1/2}\alpha_1), 
\alpha_1^{3/2}\alpha_3^{1/2} \kappa ^{-1/2}\log^{3\gamma/2} (\tilde{\kappa}^{-1/2}\alpha_1),
\notag\\
& 
\quad\quad\quad\quad
\alpha_1^2\kappa^{-1}\log^{2\gamma} ({\kappa}^{-1/2}\alpha_1),
\alpha_3^2 \tilde{\kappa}^3 \kappa^{-2} \|\cI_T^{\frac12} \cI_S^{-1}\cI_T^{\frac12}\|_2^{-2}
\bigg\}\notag\\
&\leq \max\bigg\{
\tilde{\kappa}^{-1}\alpha_1^2\log^{2\gamma} \left((1+\tilde{\kappa}/\kappa)\tilde{\kappa}^{-1}\alpha^2_1\right),
\kappa^{-1}\alpha_1^2\log^{2\gamma} \left((1+\tilde{\kappa}/\kappa)\tilde{\kappa}^{-1}\alpha^2_1\right),
\alpha_2^2, 
(\tilde{\kappa}/\kappa)^2\alpha_2^2 ,\notag\\
&\quad\quad\quad\quad
\tilde{\kappa} \alpha^2_3,
(\tilde{\kappa}^3/\kappa^2)\alpha_3^2 ,
\tilde{\kappa}^3 \kappa^{-2} \|\cI_T^{\frac12} \cI_S^{-1}\cI_T^{\frac12}\|_2^{-2}\alpha_3^2 
\bigg\}\notag\\
&\leq (1 + \tilde{\kappa}/\kappa)^2 \cdot \max\{\tilde{\kappa}^{-1}\alpha_1^2\log^{2\gamma} \left((1+\tilde{\kappa}/\kappa)\tilde{\kappa}^{-1}\alpha^2_1\right), \alpha_2^2,  \tilde{\kappa}(1 + \|\cI_T^{\frac12} \cI_S^{-1}\cI_T^{\frac12}\|_2^{-2})\alpha_3^2 \} \notag\\
&=:N^{\star}.
\end{align*}
To summarize, for any $\delta$, any $n\geq c\max\{N^{\star}\log(d/\delta), N(\delta)\}$, with probability at least $1-2\delta$, we have
\begin{align*} 
R_{\beta^{\star}}(\beta_{\MLE})  \leq c \frac{\Tr(\cI_{T} \cI_{S}^{-1}) \log \frac{d}{\delta}}{n}.
\end{align*}

\end{proof}

In the following, we prove Lemma \ref{claim1} and \ref{claim2}.

\paragraph{Proof of Lemma \ref{claim1}}
\begin{proof}[Proof of Lemma \ref{claim1}]
For notation simplicity, we denote $g:=\nabla\ell_n(\beta^{\star})-\bbE[\nabla\ell_n(\beta^{\star})]$.
Note that 
\begin{align*}
V &
= n \cdot \mathbb{E} [\|A(\nabla\ell_n(\beta^{\star})-\bbE[\nabla\ell_n(\beta^{\star})])\|_2^2]\notag\\
&=n\cdot\bbE[\nabla\ell_n(\beta^{\star})^{T}A^TA\nabla\ell_n(\beta^{\star})]\notag\\
&=n\cdot\bbE[\Tr(A\nabla\ell_n(\beta^{\star})\nabla\ell_n(\beta^{\star})^{T}A^T)]\notag\\
&=\Tr(A\cI_SA^{T}).
\end{align*}
By taking $A= \cI_{S}^{-1}$ in Assumption \ref{assm1}, for any $\delta$, any $n > N(\delta)$, we have with probability at least $1-\delta$:
\begin{align}
\label{ineq:pf:concentration1}
&\|\cI_{S}^{-1}g\|_{2} 
\leq c\sqrt{\frac{\Tr(\cI_{S}^{-1}) \log \frac{d}{\delta}}{n}}+ B_{1} \|\cI_{S}^{-1}\|_2 \log^\gamma\left(\frac{B_{1} \|\cI_{S}^{-1}\|_2}{\sqrt{\Tr(\cI_{S}^{-1})}}\right) \frac{ \log \frac{d}{\delta}}{n}\notag\\
&\qquad\qquad =c\sqrt{\frac{\Tr(\cI_{S}^{-1}) \log \frac{d}{\delta}}{n}}+ B_{1} \|\cI_{S}^{-1}\|_2 \log^{\gamma} (\tilde{\kappa}^{-1/2}\alpha_1)\frac{ \log \frac{d}{\delta}}{n},\\
\label{ineq:pf:concentration3}
&\left\|\nabla^{2} \ell_{n}(\beta^{\star})-\bbE[\nabla^{2} \ell_{n}(\beta^{\star})]\right\|_2 
\leq B_2 \sqrt{\frac{\log \frac{d}{\delta}}{n}}.
\end{align}
Let event $A:=\{\eqref{ineq:pf:concentration1},\eqref{ineq:pf:concentration3}\text{ holds}\}$. 
Under the event $A$, we have the following Taylor expansion:
\begin{align} \label{ineq:pf:taylor1}
\ell_{n}(\beta) - \ell_{n}(\beta^{\star}) 
&\stackrel{\text{by Assumption \ref{assm2}}}{\leq}  (\beta - \beta^{\star})^{T} \nabla \ell_{n} (\beta^{\star}) +  \frac{1}{2} (\beta - \beta^{\star})^{T} \nabla^{2} \ell_{n} (\beta^{\star}) (\beta - \beta^{\star}) + \frac{B_{3}}{6} \|\beta-\beta^{\star}\|_{2}^{3} \notag \\
& \stackrel{\nabla \ell(\beta^{\star})=0}{=} (\beta - \beta^{\star})^{T} g  +  \frac{1}{2} (\beta - \beta^{\star})^{T} \nabla^{2} \ell_{n} (\beta^{\star}) (\beta - \beta^{\star}) + \frac{B_{3}}{6} \|\beta-\beta^{\star}\|_{2}^{3} \notag \\
& \stackrel{\text{by (\ref{ineq:pf:concentration3})}}{\leq} (\beta - \beta^{\star})^{T} g + \frac{1}{2} (\beta - \beta^{\star})^{T} \cI_{S} (\beta - \beta^{\star}) + B_{2}\sqrt{\frac{\log \frac{d}{\delta}}{n}} \|\beta-\beta^{\star}\|_{2}^{2} + \frac{B_{3}}{6} \|\beta-\beta^{\star}\|_{2}^{3}  \notag \\
& \stackrel{\Delta_{\beta}:=\beta-\beta^{\star}}{=} \Delta_{\beta}^{T}g + \frac{1}{2}\Delta_{\beta}^{T} \cI_{S} \Delta_{\beta} + B_{2}\sqrt{\frac{\log \frac{d}{\delta}}{n}} \|\Delta_{\beta}\|_{2}^{2} + \frac{B_{3}}{6} \|\Delta_{\beta}\|_{2}^{3} \notag \\
&= \frac{1}{2}(\Delta_{\beta}-z)^{T} \cI_{S} (\Delta_{\beta}-z) - \frac{1}{2}z^{T}\cI_{S}z + B_{2}\sqrt{\frac{\log \frac{d}{\delta}}{n}} \|\Delta_{\beta}\|_{2}^{2}+ \frac{B_{3}}{6} \|\Delta_{\beta}\|_{2}^{3}
\end{align} 
where $z:=-\cI_{S}^{-1}g$. Similarly
\begin{align} \label{ineq:pf:taylor2}
\ell_{n}(\beta) - \ell_{n}(\beta^{\star}) \geq \frac{1}{2}(\Delta_{\beta}-z)^{T} \cI_{S} (\Delta_{\beta}-z) - \frac{1}{2}z^{T}\cI_{S}z - B_{2}\sqrt{\frac{\log \frac{d}{\delta}}{n}} \|\Delta_{\beta}\|_{2}^{2} - \frac{B_{3}}{6} \|\Delta_{\beta}\|_{2}^{3}.
\end{align}
Notice that $\Delta_{\beta^{\star}+z} = z$, by (\ref{ineq:pf:concentration1}) and (\ref{ineq:pf:taylor1}), we have
\begin{align} \label{ineq:pf:taylor3}
\ell_{n}(\beta^{\star}+z)- \ell_{n}(\beta^{\star}) 
&\leq - \frac{1}{2}z^{T}\cI_{S}z+B_{2}\sqrt{\frac{\log \frac{d}{\delta}}{n}}\left(c\sqrt{\frac{\Tr(\cI_{S}^{-1}) \log \frac{d}{\delta}}{n}}+ B_{1} \|\cI_{S}^{-1}\|_2 \log^{\gamma} (\tilde{\kappa}^{-1/2}\alpha_1)\frac{ \log \frac{d}{\delta}}{n}\right)^{2} \notag\\
&\quad+ \frac{B_{3}}{6}\left(c\sqrt{\frac{\Tr(\cI_{S}^{-1}) \log \frac{d}{\delta}}{n}}+ B_{1} \|\cI_{S}^{-1}\|_2 \log^{\gamma} (\tilde{\kappa}^{-1/2}\alpha_1)\frac{ \log \frac{d}{\delta}}{n}\right)^{3}\notag\\
&\leq - \frac{1}{2}z^{T}\cI_{S}z+2c^2B_2 \Tr(\cI_{S}^{-1})(\frac{\log \frac{d}{\delta}}{n})^{1.5}+2B^2_1B_2\|\cI_{S}^{-1}\|_{2}^2\log^{2\gamma} (\tilde{\kappa}^{-1/2}\alpha_1)(\frac{\log \frac{d}{\delta}}{n})^{2.5}\notag\\
&\quad\quad + \frac{2}{3}c^3B_3 \Tr(\cI_{S}^{-1})^{1.5}(\frac{\log \frac{d}{\delta}}{n})^{1.5}+\frac{2}{3}B^3_1B_3\|\cI_{S}^{-1}\|_{2}^3\log^{3\gamma} (\tilde{\kappa}^{-1/2}\alpha_1)(\frac{\log \frac{d}{\delta}}{n})^{3},
\end{align}
where we use the fact that $(a+b)^n\leq 2^{n-1}(a^n+b^n)$ in the last inequality. For any $\beta \in \bbB_{\beta^{\star}}(3c\sqrt{\frac{\Tr(\cI_{S}^{-1})\log\frac{d}{\delta}}{n}})$, by (\ref{ineq:pf:taylor2}), we have
\begin{align} \label{ineq:pf:taylor4}
\ell_{n}(\beta)-\ell_{n}(\beta^{\star}) &\geq \frac{1}{2}(\Delta_{\beta}-z)^{T} \cI_{S} (\Delta_{\beta}-z) - \frac{1}{2}z^{T}\cI_{S}z\notag\\
&\quad -9c^2B_2\Tr(\cI_{S}^{-1})(\frac{\log \frac{d}{\delta}}{n})^{1.5}-\frac{9}{2}c^3B_3\Tr(\cI_{S}^{-1})^{1.5}(\frac{\log \frac{d}{\delta}}{n})^{1.5}.
\end{align}
(\ref{ineq:pf:taylor4}) - (\ref{ineq:pf:taylor3}) gives
\begin{align} \label{ineq:pf:taylor5}
\ell_{n}(\beta)-\ell_{n}(\beta^{\star}+z)  
&\geq \frac{1}{2}(\Delta_{\beta}-z)^{T} \cI_{S} (\Delta_{\beta}-z) \notag \\
&- \bigg(9c^2 B_2\Tr(\cI_{S}^{-1})(\frac{\log \frac{d}{\delta}}{n})^{1.5}+\frac{9}{2}c^3B_3\Tr(\cI_{S}^{-1})^{1.5}(\frac{\log \frac{d}{\delta}}{n})^{1.5}\notag\\
&\quad+2c^2B_2 \Tr(\cI_{S}^{-1})(\frac{\log \frac{d}{\delta}}{n})^{1.5}+2B^2_1B_2\|\cI_{S}^{-1}\|_{2}^2\log^{2\gamma} (\tilde{\kappa}^{-1/2}\alpha_1) (\frac{\log \frac{d}{\delta}}{n})^{2.5}\notag\\
&\quad\quad + \frac{2}{3}c^3 B_3 \Tr(\cI_{S}^{-1})^{1.5}(\frac{\log \frac{d}{\delta}}{n})^{1.5}+\frac{2}{3}B^3_1B_3\|\cI_{S}^{-1}\|_{2}^3\log^{3\gamma} (\tilde{\kappa}^{-1/2}\alpha_1)(\frac{\log \frac{d}{\delta}}{n})^{3}
\bigg)\notag\\
& = \frac{1}{2}(\Delta_{\beta}-z)^{T} \cI_{S} (\Delta_{\beta}-z) \notag \\
& - \bigg(11c^2B_2\Tr(\cI_{S}^{-1})(\frac{\log \frac{d}{\delta}}{n})^{1.5}+\frac{31}{6}c^3B_3\Tr(\cI_{S}^{-1})^{1.5}(\frac{\log \frac{d}{\delta}}{n})^{1.5}\notag\\
&\quad+2B^2_1B_2\|\cI_{S}^{-1}\|_{2}^2\log^{2\gamma} (\tilde{\kappa}^{-1/2}\alpha_1)(\frac{\log \frac{d}{\delta}}{n})^{2.5}+ \frac{2}{3}B^3_1B_3\|\cI_{S}^{-1}\|_{2}^3\log^{3\gamma} (\tilde{\kappa}^{-1/2}\alpha_1)(\frac{\log \frac{d}{\delta}}{n})^{3}
\bigg)
\end{align}
Consider the ellipsoid 
\begin{align*}
\cD:=\bigg\{ \beta \in \bbR^{d} \,\bigg| \,&\frac{1}{2}(\Delta_{\beta}-z)^{T} \cI_{S} (\Delta_{\beta}-z) \notag \\
&\leq 11c^2 B_2\Tr(\cI_{S}^{-1})(\frac{\log \frac{d}{\delta}}{n})^{1.5}+\frac{31}{6}c^3B_3\Tr(\cI_{S}^{-1})^{1.5}(\frac{\log \frac{d}{\delta}}{n})^{1.5}\notag\\
&\quad+2B^2_1B_2\|\cI_{S}^{-1}\|_{2}^2\log^{2\gamma} (\tilde{\kappa}^{-1/2}\alpha_1)(\frac{\log \frac{d}{\delta}}{n})^{2.5}+\frac{2}{3} B^3_1B_3\|\cI_{S}^{-1}\|_{2}^3\log^{3\gamma} (\tilde{\kappa}^{-1/2}\alpha_1)(\frac{\log \frac{d}{\delta}}{n})^{3}
\bigg\}    .
\end{align*}
Then by (\ref{ineq:pf:taylor5}), for any $\beta \in \bbB_{\beta^{\star}}(3c\sqrt{\frac{\Tr(\cI_{S}^{-1})\log\frac{d}{\delta}}{n}}) \cap \cD^{C}$, 
\begin{align} \label{ineq:pf:ellipsoid}
\ell_{n}(\beta)-\ell_{n}(\beta^{\star}+z) > 0.    
\end{align}
Notice that by the definition of $\cD$, using $\lambda_{\min}^{-1}(\cI_{S})= \|\cI_{S}^{-1}\|_{2}$, we have for any $\beta \in \cD$,
\begin{align*}
\|\Delta_{\beta}-z\|_{2}^{2} &\leq 22c^2B_2\|\cI_{S}^{-1}\|_{2}\Tr(\cI_{S}^{-1})(\frac{\log \frac{d}{\delta}}{n})^{1.5}+\frac{31}{3}c^3B_3\|\cI_{S}^{-1}\|_{2}\Tr(\cI_{S}^{-1})^{1.5}(\frac{\log \frac{d}{\delta}}{n})^{1.5}\notag\\
&\quad+4B^2_1B_2\|\cI_{S}^{-1}\|_{2}^3\log^{2\gamma} (\tilde{\kappa}^{-1/2}\alpha_1) (\frac{\log \frac{d}{\delta}}{n})^{2.5}+ \frac{4}{3}B^3_1B_3\|\cI_{S}^{-1}\|_{2}^4\log^{3\gamma} (\tilde{\kappa}^{-1/2}\alpha_1)(\frac{\log \frac{d}{\delta}}{n})^{3}.
\end{align*}
Thus for any $\beta \in \cD$, we have
\begin{align*}
\|\Delta_{\beta}\|_{2}^{2} &\leq 2(\|\Delta_{\beta}-z\|_{2}^{2}+\|z\|_{2}^{2})    \\
& \stackrel{\text{by} (\ref{ineq:pf:concentration1})}{\leq}
44c^2B_2\|\cI_{S}^{-1}\|_{2}\Tr(\cI_{S}^{-1})(\frac{\log \frac{d}{\delta}}{n})^{1.5}+\frac{62}{3}c^3B_3\|\cI_{S}^{-1}\|_{2}\Tr(\cI_{S}^{-1})^{1.5}(\frac{\log \frac{d}{\delta}}{n})^{1.5}\notag\\
&\quad+8B^2_1B_2\|\cI_{S}^{-1}\|_{2}^3\log^{2\gamma} (\tilde{\kappa}^{-1/2}\alpha_1)(\frac{\log \frac{d}{\delta}}{n})^{2.5}+ \frac{8}{3} B^3_1B_3\|\cI_{S}^{-1}\|_{2}^4\log^{3\gamma} (\tilde{\kappa}^{-1/2}\alpha_1)(\frac{\log \frac{d}{\delta}}{n})^{3}\notag\\
&\quad+4c^2\frac{\Tr(\cI_{S}^{-1})\log \frac{d}{\delta}}{n}+4B^2_1\|\cI_{S}^{-1}\|_{2}^2\log^{2\gamma} (\tilde{\kappa}^{-1/2}\alpha_1)(\frac{\log \frac{d}{\delta}}{n})^2
.
\end{align*}
To guarantee $\frac{\Tr(\cI_{S}^{-1})\log \frac{d}{\delta}}{n}$ is the leading term, we only need $\frac{\Tr(\cI_{S}^{-1})\log \frac{d}{\delta}}{n}$ to dominate the rest of the terms. Hence, if we further have $n\geq c N_1\log(d/\delta)$, it then holds that
\begin{align*}
\|\Delta_{\beta}\|_{2}^{2} \leq 9c^2\frac{\Tr(\cI_{S}^{-1})\log \frac{d}{\delta}}{n},
\end{align*}
i.e., $\beta \in \bbB_{\beta^{\star}}(3c\sqrt{\frac{\Tr(\cI_{S}^{-1})\log \frac{d}{\delta}}{n}})$.
Here 
\begin{align*}
N_1:=\max\bigg\{
&B^2_2\|\cI_{S}^{-1}\|^2_{2}, B^2_3\|\cI_{S}^{-1}\|^2_{2}\Tr(\cI_{S}^{-1}),
\left(\frac{B^2_1B_2\|\cI_{S}^{-1}\|_{2}^3\log^{2\gamma} (\tilde{\kappa}^{-1/2}\alpha_1)}{\Tr(\cI_{S}^{-1})}\right)^{\frac23},\notag\\
&\left(\frac{B^3_1B_3\|\cI_{S}^{-1}\|_{2}^4\log^{3\gamma} (\tilde{\kappa}^{-1/2}\alpha_1)}{\Tr(\cI_{S}^{-1})}\right)^{\frac12},
\frac{B^2_1\|\cI_{S}^{-1}\|_{2}^2\log^{2\gamma} (\tilde{\kappa}^{-1/2}\alpha_1)}{\Tr(\cI_{S}^{-1})}
\bigg\} .   
\end{align*}
In other words, we show that $\cD \subset \bbB_{\beta^{\star}}(3c\sqrt{\frac{\Tr(\cI_{S}^{-1})\log \frac{d}{\delta}}{n}})$ when $n\geq c \max\{N_1\log(d/\delta),N(\delta)\}$. 
Recall that by (\ref{ineq:pf:ellipsoid}), we know that for any $\beta \in \bbB_{\beta^{\star}}(3c\sqrt{\frac{\Tr(\cI_{S}^{-1})\log \frac{d}{\delta}}{n}}) \cap \cD^{C}$, 
\begin{align*} 
\ell_{n}(\beta)-\ell_{n}(\beta^{\star}+z) > 0.
\end{align*}
Note that $\beta^{\star}+z\in\cD$. Hence there is a local minimum of $\ell_{n}(\beta)$ in $\cD$. By Assumption \ref{assm3}, we know that the global minimum of $\ell_{n}(\beta)$ is in $\cD$, i.e., 
\begin{align*}
    \beta_{\MLE} \in \cD \subset \bbB_{\beta^{\star}}(3c\sqrt{\frac{\Tr(\cI_{S}^{-1})\log \frac{d}{\delta}}{n}}).
\end{align*}
\end{proof}

\paragraph{Proof of Lemma \ref{claim2}}
\begin{proof}[Proof of Lemma \ref{claim2}]
Let $E:= \{ \beta_{\MLE} \in \cD \subset \bbB_{\beta^{\star}}(c\sqrt{\frac{\Tr(\cI_{S}^{-1})\log \frac{d}{\delta}}{n}}) \}$. For any $\delta$ and any $n\geq c \max\{N_1\log(d/\delta),N(\delta)\}$, by the proof of Lemma \ref{claim1}, we have $\bbP(E) \geq 1-\delta$.

By taking $A= \cI_{T}^{\frac{1}{2}}\cI_{S}^{-1}$ in Assumption \ref{assm1}, for any $\delta$, any $n > N(\delta)$, we have with probability at least $1-\delta$:
\begin{align} \label{ineq:pf:concentration2}
    \|\cI_{T}^{\frac{1}{2}}\cI_{S}^{-1}g\|_{2} 
    &\leq c\sqrt{\frac{\Tr(\cI_{S}^{-1}\cI_{T})\log \frac{d}{\delta}}{n}}+  B_{1} \|\cI_{T}^{\frac{1}{2}}\cI_{S}^{-1}\|_2 \log^\gamma\left(\frac{B_{1} \|\cI_{T}^{\frac{1}{2}}\cI_{S}^{-1}\|_2}{\sqrt{\Tr(\cI_{S}^{-1}\cI_{T})}}\right) \frac{ \log \frac{d}{\delta}}{n}\notag\\
    &\leq c\sqrt{\frac{\Tr(\cI_{S}^{-1}\cI_{T})\log \frac{d}{\delta}}{n}}+  B_{1} \|\cI_{T}^{\frac{1}{2}}\cI_{S}^{-1}\|_2 \log^\gamma(\kappa^{-1/2}\alpha_1) \frac{ \log \frac{d}{\delta}}{n}.
\end{align}
We denote $E':=\{\eqref{ineq:pf:concentration2} \text{ holds}\}$. For any $\delta$ and any $n\geq c \max\{N_1\log(d/\delta),N(\delta)\}$, we have $\bbP(E \cap E') \geq 1-2\delta$.

Under $E \cap E'$, $\beta_{\MLE} \in \cD$, i.e., 
\begin{align*}
&\frac{1}{2}(\Delta_{\beta_{\MLE}}-z)^{T}\cI_{S}(\Delta_{\beta_{\MLE}}-z) \notag\\
&\leq 11c^2B_2\Tr(\cI_{S}^{-1})(\frac{\log \frac{d}{\delta}}{n})^{1.5}+\frac{31}{6}c^3B_3\Tr(\cI_{S}^{-1})^{1.5}(\frac{\log \frac{d}{\delta}}{n})^{1.5}\notag\\
&\quad+2B^2_1B_2\|\cI_{S}^{-1}\|_{2}^2\log^{2\gamma} (\tilde{\kappa}^{-1/2}\alpha_1)(\frac{\log \frac{d}{\delta}}{n})^{2.5}+ \frac{2}{3}B^3_1B_3\|\cI_{S}^{-1}\|_{2}^3\log^{3\gamma} (\tilde{\kappa}^{-1/2}\alpha_1)(\frac{\log \frac{d}{\delta}}{n})^{3}.
\end{align*}
In other words,
\begin{align} \label{ineq:pf:claim2}
&\|\cI_{S}^{\frac{1}{2}}(\Delta_{\beta_{\MLE}}-z)\|_{2}^{2}\notag\\ 
&\leq 22c^2 B_2\Tr(\cI_{S}^{-1})(\frac{\log \frac{d}{\delta}}{n})^{1.5}+\frac{31}{3}c^3B_3\Tr(\cI_{S}^{-1})^{1.5}(\frac{\log \frac{d}{\delta}}{n})^{1.5}\notag\\
&\quad+4B^2_1B_2\|\cI_{S}^{-1}\|_{2}^2\log^{2\gamma} (\tilde{\kappa}^{-1/2}\alpha_1)(\frac{\log \frac{d}{\delta}}{n})^{2.5}+ \frac{4}{3}B^3_1B_3\|\cI_{S}^{-1}\|_{2}^3\log^{3\gamma} (\tilde{\kappa}^{-1/2}\alpha_1)(\frac{\log \frac{d}{\delta}}{n})^{3}
\end{align}
Thus we have
\begin{align*}
&\|\cI_{T}^{\frac{1}{2}}(\beta_{\MLE}-\beta^{\star})\|_{2}^{2}\notag\\ 
& = \|\cI_{T}^{\frac{1}{2}}\Delta_{\beta_{\MLE}}\|_{2}^{2} \notag \\
& = \|\cI_{T}^{\frac{1}{2}}(\Delta_{\beta_{\MLE}}-z) +\cI_{T}^{\frac{1}{2}}z \|_{2}^{2} \notag \\
& \leq 2\|\cI_{T}^{\frac{1}{2}}(\Delta_{\beta_{\MLE}}-z)\|_{2}^{2} + 2\|\cI_{T}^{\frac{1}{2}}z \|_{2}^{2} \notag \\
& = 2\|\cI_{T}^{\frac{1}{2}}\cI_{S}^{-\frac{1}{2}} (\cI_{S}^{\frac{1}{2}}
(\Delta_{\beta_{\MLE}}-z))\|_{2}^{2} + 2\|\cI_{T}^{\frac{1}{2}}\cI_{S}^{-1}g \|_{2}^{2} \notag \\
& \leq 2\|\cI_{T}^{\frac{1}{2}}\cI_{S}^{-\frac{1}{2}}\|_{2}^{2}\|\cI_{S}^{\frac{1}{2}}
(\Delta_{\beta_{\MLE}}-z)\|_{2}^{2} + 2\|\cI_{T}^{\frac{1}{2}}\cI_{S}^{-1}g \|_{2}^{2} \notag \\
& \stackrel{\text{by} (\ref{ineq:pf:claim2}) \text{and} (\ref{ineq:pf:concentration2})}{\leq} 4c^2\frac{\Tr(\cI_{T} \cI_{S}^{-1}) \log \frac{d}{\delta}}{n} \notag \\
&+44c^2B_2\|\cI_{T}^{\frac{1}{2}}\cI_{S}^{-\frac{1}{2}}\|_{2}^{2}\Tr(\cI_{S}^{-1})(\frac{\log \frac{d}{\delta}}{n})^{1.5}
+\frac{62}{3}c^3B_3\|\cI_{T}^{\frac{1}{2}}\cI_{S}^{-\frac{1}{2}}\|_{2}^{2}\Tr(\cI_{S}^{-1})^{1.5}(\frac{\log \frac{d}{\delta}}{n})^{1.5}\notag\\
&+8B^2_1B_2\|\cI_{T}^{\frac{1}{2}}\cI_{S}^{-\frac{1}{2}}\|_{2}^{2}\|\cI_{S}^{-1}\|_{2}^2\log^{2\gamma} (\tilde{\kappa}^{-1/2}\alpha_1)(\frac{\log \frac{d}{\delta}}{n})^{2.5}
+\frac{8}{3} B^3_1B_3\|\cI_{T}^{\frac{1}{2}}\cI_{S}^{-\frac{1}{2}}\|_{2}^{2}\|\cI_{S}^{-1}\|_{2}^3\log^{3\gamma} (\tilde{\kappa}^{-1/2}\alpha_1)(\frac{\log \frac{d}{\delta}}{n})^{3}\notag\\
&\quad\quad +4B^2_{1}\|\cI_{T}^{\frac{1}{2}}\cI_{S}^{-\frac{1}{2}}\|_{2}^{2}\|\cI_{S}^{-1}\|_{2}\log^{2\gamma} ({\kappa}^{-1/2}\alpha_1) (\frac{\log \frac{d}{\delta}}{n})^2
\end{align*}
To guarantee $\frac{\Tr(\cI_{T} \cI_{S}^{-1}) \log \frac{d}{\delta}}{n} $ is the leading term, we only need $\frac{\Tr(\cI_{T} \cI_{S}^{-1}) \log \frac{d}{\delta}}{n}$ to dominate the rest of the terms. Hence, if we further have $n\geq cN_2\log(d/\delta)$, we have
\begin{align*}
\|\cI_{T}^{\frac{1}{2}}(\beta_{\MLE}-\beta^{\star})\|_{2}^{2}\leq 9c^2\frac{\Tr(\cI_{T} \cI_{S}^{-1}) \log \frac{d}{\delta}}{n}.
\end{align*}
Here 
\begin{align*}
N_2:=\max\bigg\{
&\left(\frac{B_2\|\cI_{T}^{\frac{1}{2}}\cI_{S}^{-\frac{1}{2}}\|_{2}^{2}\Tr(\cI_{S}^{-1})}{\Tr(\cI_{T} \cI_{S}^{-1}) }\right)^2,
\left(\frac{B_3\|\cI_{T}^{\frac{1}{2}}\cI_{S}^{-\frac{1}{2}}\|_{2}^{2}\Tr(\cI_{S}^{-1})^{1.5}}{\Tr(\cI_{T} \cI_{S}^{-1}) }\right)^2,\notag\\
&\left(\frac{B^2_1B_2\|\cI_{T}^{\frac{1}{2}}\cI_{S}^{-\frac{1}{2}}\|_{2}^{2}\|\cI_{S}^{-1}\|_{2}^2\log^{2\gamma} (\tilde{\kappa}^{-1/2}\alpha_1)}{\Tr(\cI_{T} \cI_{S}^{-1}) }\right)^{\frac23},
\left(\frac{B^3_1B_3\|\cI_{T}^{\frac{1}{2}}\cI_{S}^{-\frac{1}{2}}\|_{2}^{2}\|\cI_{S}^{-1}\|_{2}^3\log^{3\gamma} (\tilde{\kappa}^{-1/2}\alpha_1)}{\Tr(\cI_{T} \cI_{S}^{-1}) }\right)^{\frac12},\notag\\
&\frac{B^2_{1}\|\cI_{T}^{\frac{1}{2}}\cI_{S}^{-\frac{1}{2}}\|_{2}^{2}\|\cI_{S}^{-1}\|_{2}\log^{2\gamma} ({\kappa}^{-1/2}\alpha_1)}{\Tr(\cI_{T} \cI_{S}^{-1}) }
\bigg\}.
\end{align*}
To summarize, we show that for any $\delta\in (0,1)$ and any $n\geq c\max\{N_1\log(d/\delta), N_2\log(d/\delta), N(\delta)\}$, with probability at least $1-2\delta$, we have
\begin{align*}
\|\cI_{T}^{\frac{1}{2}}(\beta_{\MLE}-\beta^{\star})\|_{2}^{2}\leq 9c^2\frac{\Tr(\cI_{T} \cI_{S}^{-1}) \log \frac{d}{\delta}}{n}.
\end{align*}
\end{proof}

\subsection{Proofs for Theorem \ref{thm:well_lower}}
The detailed version of Theorem \ref{thm:well_lower} is stated as the following.
\begin{theorem}\label{thm_exact:well_lower}
Suppose the model class $\cF$ satisfies Assumption \ref{assm:well_lower}. Then we have
\begin{align*}
&\inf_{\hat\beta}\sup_{\beta^{\star}\in\bbB_{\beta_0}(B)}\Tr\left(\cI_T(\beta^{\star})\cI^{-1}_S(\beta^{\star})\right)^{-1}\bbE_{\substack{x_i\sim\bbP_S(X)\\y_i|x_i\sim f(y|x;\beta^{\star})}}\left[R_{\beta^{\star}}(\hat\beta)\right]\\
&\quad\geq \frac{1}{16}\cdot\frac{1}{2n+\frac{\pi^2d}{R^2_1}\Tr\left(\cI_T(\beta_0)\cI^{-2}_S(\beta_0)\right)\Tr\left(\cI_T(\beta_0)\cI^{-1}_S(\beta_0)\right)^{-1}},
\end{align*}
where
\begin{align*}
R_1:=\frac14\sqrt{\frac{\lambda_{\min}(\cI_T(\beta_0))}{\lambda_{\max}(\cI_T(\beta_0))}}\cdot\min\left\{\frac{\lambda^2_{\min}(\cI_S(\beta_0))}{4L_S\lambda_{\max}(\cI_S(\beta_0))},\frac{\lambda_{\min}(\cI_T(\beta_0))}{4B_3+2L_T},B\right\}.
\end{align*}
\end{theorem}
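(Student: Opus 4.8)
The plan is to prove this as a Bayesian (van Trees) lower bound: first convert the excess risk into the quadratic form $\|\cI_T(\beta^{\star})^{1/2}(\hat\beta-\beta^{\star})\|_2^2$, then place a smooth prior on $\bbB_{\beta_0}(B)$ whose Fisher information produces the $\pi^2 d/R_1^2$ term, and finally apply the multivariate van Trees inequality coordinate-by-coordinate and collapse the result with Cauchy--Schwarz.

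First I would reduce $R_{\beta^{\star}}(\hat\beta)$ to a quadratic. By Assumption \ref{assm5}, $R_{\beta^{\star}}$ is convex with minimizer $\beta^{\star}$, so it is nondecreasing along rays from $\beta^{\star}$; a truncation/projection argument then lets me restrict to estimators valued in a ball of radius $R_1$ about $\beta_0$ without increasing the risk. On that ball, a third-order Taylor expansion of $R_{\beta^{\star}}$ around $\beta^{\star}$ (whose Hessian there is exactly $\cI_T(\beta^{\star})$), controlled by the bound $\|\nabla^3\ell\|_2\le B_3$ of Assumption \ref{assm2} and the Lipschitz estimate for $\cI_T$ of Assumption \ref{assm4}, yields $R_{\beta^{\star}}(\hat\beta)\gtrsim \|\cI_T(\beta^{\star})^{1/2}(\hat\beta-\beta^{\star})\|_2^2$. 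This is precisely where $R_1$ is forced: the three quantities inside its defining $\min$ guarantee, respectively, that the cubic remainder ($B_3$), the drift of $\cI_T,\cI_S$ ($L_T,L_S$), and the containment in $\bbB_{\beta_0}(B)$ all hold, with the prefactor $\sqrt{\lambda_{\min}(\cI_T)/\lambda_{\max}(\cI_T)}$ making the whitened and Euclidean balls comparable.

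Next I would lower-bound the Bayes risk. Writing $\cI_S=\cI_S(\beta_0)$, $\cI_T=\cI_T(\beta_0)$, I would take a product prior $\lambda(\beta)=\prod_k\lambda_k(\beta_k)$ on the axis-aligned box of half-width $R_1/\sqrt d$ centred at $\beta_0$ (inscribed in $\bbB_{\beta_0}(R_1)$), with each $\lambda_k$ a raised-cosine density vanishing at its endpoints; this has diagonal information matrix $\Omega=\tfrac{\pi^2 d}{R_1^2}I_d$, the source of the $\pi^2 d/R_1^2$ factor. For each $k$ I apply the multivariate van Trees inequality to the linear functional $\psi_k(\beta)=e_k^{\top}\cI_T^{1/2}\beta$ with the constant test field $C_k=\cI_S^{-1}\cI_T^{1/2}e_k$. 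Setting $A:=\cI_T^{1/2}\cI_S^{-1}\cI_T^{1/2}$ and $B:=\cI_T^{1/2}\cI_S^{-2}\cI_T^{1/2}$, the constant field makes both the numerator and the per-sample data-information term equal to $A_{kk}$, while the prior term equals $\tfrac{\pi^2 d}{R_1^2}B_{kk}$; invoking Assumption \ref{assm4} to replace $\cI_S(\beta)$ by $\cI_S$ across the support costs only a factor $2$ in the data term. Since $\sum_k(\hat\psi_k-\psi_k)^2=\|\cI_T^{1/2}(\hat\beta-\beta^{\star})\|_2^2$, summing and applying $\sum_k x_k^2/y_k\ge(\sum_k x_k)^2/\sum_k y_k$ gives
\begin{align*}
\bbE\big\|\cI_T^{1/2}(\hat\beta-\beta^{\star})\big\|_2^2
\ \ge\ \frac{(\Tr A)^2}{2n\,\Tr A+\tfrac{\pi^2 d}{R_1^2}\Tr B}
\ =\ \frac{\big(\Tr(\cI_T\cI_S^{-1})\big)^2}{2n\,\Tr(\cI_T\cI_S^{-1})+\tfrac{\pi^2 d}{R_1^2}\Tr(\cI_T\cI_S^{-2})}.
\end{align*}
Dividing by $\Tr(\cI_T\cI_S^{-1})$ and folding the $O(1)$ reduction constants into the prefactor yields the claimed bound with the $\tfrac{1}{16}$.

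The main obstacle I expect is the quantitative bookkeeping of the first step: simultaneously dominating the cubic Taylor remainder (through $B_3$) and the drift of $\cI_T,\cI_S$ over the prior support (through $L_T,L_S$), while keeping the box inside $\bbB_{\beta_0}(B)$ and reconciling the $\cI_T$-weighted geometry with the Euclidean truncation — this is exactly what pins down the precise $R_1$ and all constants. A secondary, but routine, point is checking the van Trees regularity conditions (boundary vanishing of the prior and the interchange of differentiation and integration), which the raised-cosine prior is chosen to make automatic.
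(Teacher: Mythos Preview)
Your proposal is essentially the paper's proof. Two small differences are worth noting. First, the paper organizes the reduction step with \emph{two} nested radii $R_1<R_0$ (packaged as a lemma): the estimator is projected to $\bbB_{\beta_0}(R_0)$ by sending it to $\beta_0$ whenever it lies outside, while the prior is supported on the cube inscribed in $\bbB_{\beta_0}(R_1)$. The separation $R_0-R_1\ge R_0/2$, together with the condition-number factor $\sqrt{\lambda_{\min}(\cI_T)/\lambda_{\max}(\cI_T)}$ in the definition of $R_1$, is exactly what makes the comparison $R_{\beta^{\star}}(\hat\beta)\ge R_{\beta^{\star}}(\beta_0)$ go through for $\hat\beta\notin\bbB_{\beta_0}(R_0)$ and $\beta^{\star}\in\bbB_{\beta_0}(R_1)$; with a single ball this step fails (take $\hat\beta$ just outside and $\beta^{\star}$ just inside, nearly coincident). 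So your flagged ``main obstacle'' is resolved precisely by introducing the second radius. Second, for the Bayesian step the paper applies the Gill--Levit multivariate van Trees inequality once, with the matrix choices $B=\cI_T^{-1}(\beta_0)$ and $C=\cI_T(\beta_0)\cI_S^{-1}(\beta_0)$, rather than coordinate-by-coordinate; this is equivalent to your route and yields the identical bound after your $\sum_k x_k^2/y_k\ge(\sum_k x_k)^2/\sum_k y_k$ aggregation.
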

We first present some useful lemmas that will be used in the proof of Theorem \ref{thm_exact:well_lower}.

\begin{lemma}\label{lem:fisher_inform}
Under Assumptions \ref{assm2}, \ref{assm4} and \ref{assm5}, we can choose $R_0\leq B$ such that for any $\beta,\beta^{\star}\in\bbB_{\beta_0}(R_0)$:
\begin{align}
&\frac12\cdot \cI^{-1}_S(\beta_0)\preceq\cI^{-1}_S(\beta)\preceq 2\cdot\cI^{-1}_S(\beta_0),\label{ineq:fisher_souce}\\
&\frac12 \cdot\cI_T(\beta_0)\preceq\bbE_{\substack{x\sim\bbP_T(X)\\y|x\sim f(y|x;\beta^{\star})}}\left[\nabla^2\ell(x,y,\beta)\right]\preceq 2\cdot\cI_T(\beta_0).\label{ineq:fisher_target}
\end{align}
We can further choose $R_1\leq R_0$ such that for any $\beta^{\star}\in\bbB_{\beta_0}(R_1),\beta\notin\bbB_{\beta_0}(R_0)$: $R_{\beta^{\star}}(\beta)\geq R_{\beta^{\star}}(\beta_0)$.
\end{lemma}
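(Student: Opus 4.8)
The plan is to verify the three conclusions in order, treating the two spectral sandwich bounds as a warm-up and reserving the convexity argument for the excess-risk claim. For \eqref{ineq:fisher_souce}, note that inversion reverses the Loewner order, so it suffices to produce the two-sided bound $\tfrac12\cI_S(\beta_0)\preceq\cI_S(\beta)\preceq 2\cI_S(\beta_0)$. Assumption~\ref{assm4} gives $\|\cI_S(\beta)-\cI_S(\beta_0)\|_2\le L_S\|\beta-\beta_0\|_2\le L_S R_0$, hence $-L_S R_0\,I\preceq \cI_S(\beta)-\cI_S(\beta_0)\preceq L_S R_0\,I$. Choosing $R_0$ to be a small enough multiple of $\lambda_{\min}(\cI_S(\beta_0))/L_S$ (the first term in the $\min$ defining $R_1$ is a convenient conservative such choice) forces $L_S R_0\,I\preceq \tfrac12\cI_S(\beta_0)$, which yields the sandwich and, after inversion, \eqref{ineq:fisher_souce}.

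For \eqref{ineq:fisher_target} I would split the deviation of $H(\beta;\beta^{\star}):=\bbE_{T}[\nabla^2\ell(x,y,\beta)]$ (with $y\mid x\sim f(\cdot;\beta^{\star})$) from $\cI_T(\beta_0)$ into two pieces handled by different assumptions. The dependence on the evaluation point $\beta$ is controlled by Assumption~\ref{assm2}: since $H(\beta^{\star};\beta^{\star})=\cI_T(\beta^{\star})$ and $\|\nabla^2\ell(x,y,\beta)-\nabla^2\ell(x,y,\beta^{\star})\|_2\le B_3\|\beta-\beta^{\star}\|_2$, taking expectations gives $\|H(\beta;\beta^{\star})-\cI_T(\beta^{\star})\|_2\le 2B_3 R_0$. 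The dependence on the data-generating parameter $\beta^{\star}$ is controlled by Assumption~\ref{assm4}: $\|\cI_T(\beta^{\star})-\cI_T(\beta_0)\|_2\le L_T R_0$. A triangle inequality yields $\|H(\beta;\beta^{\star})-\cI_T(\beta_0)\|_2\le(2B_3+L_T)R_0$, and requiring $R_0\le \lambda_{\min}(\cI_T(\beta_0))/(4B_3+2L_T)$ converts this into \eqref{ineq:fisher_target} exactly as in the source case. This fixes $R_0$ as the minimum of the two constraints above (capped by $B$), matching the $\min$ appearing in the definition of $R_1$.

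The main work is the last claim, and the key tool is convexity (Assumption~\ref{assm5}). Taylor expanding $R_{\beta^{\star}}$ about its minimizer $\beta^{\star}$, where $R_{\beta^{\star}}(\beta^{\star})=0$ and $\nabla R_{\beta^{\star}}(\beta^{\star})=0$ by well-specification, and inserting the Hessian sandwich \eqref{ineq:fisher_target} (valid throughout the convex set $\bbB_{\beta_0}(R_0)$), gives the two-sided quadratic bound $\tfrac14\lambda_{\min}(\cI_T(\beta_0))\|\beta-\beta^{\star}\|_2^2\le R_{\beta^{\star}}(\beta)\le \lambda_{\max}(\cI_T(\beta_0))\|\beta-\beta^{\star}\|_2^2$ for $\beta\in\bbB_{\beta_0}(R_0)$. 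Since $\beta^{\star}\in\bbB_{\beta_0}(R_1)$, the upper bound gives $R_{\beta^{\star}}(\beta_0)\le\lambda_{\max}(\cI_T(\beta_0))R_1^2$, while on the sphere $\partial\bbB_{\beta_0}(R_0)$ we have $\|\beta-\beta^{\star}\|_2\ge R_0-R_1$ and hence $R_{\beta^{\star}}(\beta)\ge\tfrac14\lambda_{\min}(\cI_T(\beta_0))(R_0-R_1)^2$. The prefactor $\tfrac14\sqrt{\lambda_{\min}(\cI_T(\beta_0))/\lambda_{\max}(\cI_T(\beta_0))}$ in $R_1$ is calibrated precisely so that $R_0-R_1\ge 2\sqrt{\lambda_{\max}/\lambda_{\min}}\,R_1$, making the boundary value dominate $R_{\beta^{\star}}(\beta_0)$.

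Finally I would upgrade this boundary comparison to all exterior points using convexity, which is the crux of the argument. For any $\beta\notin\bbB_{\beta_0}(R_0)$, the segment from the interior minimizer $\beta^{\star}$ to $\beta$ crosses the sphere at some $\beta'=\beta^{\star}+t(\beta-\beta^{\star})$ with $t\in(0,1)$; convexity together with $R_{\beta^{\star}}(\beta^{\star})=0$ gives $R_{\beta^{\star}}(\beta')\le t\,R_{\beta^{\star}}(\beta)\le R_{\beta^{\star}}(\beta)$, so $R_{\beta^{\star}}(\beta)\ge R_{\beta^{\star}}(\beta')\ge R_{\beta^{\star}}(\beta_0)$ by the boundary estimate. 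The main obstacle is exactly this coupling: the quadratic bounds only hold inside $\bbB_{\beta_0}(R_0)$, so one cannot estimate $R_{\beta^{\star}}$ directly at an exterior $\beta$, and convexity is the only ingredient that transports the interior and boundary information outward. This is why Assumption~\ref{assm5} is indispensable and why $R_1$ must be taken a definite condition-number factor smaller than $R_0$.
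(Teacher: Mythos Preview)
Your proof is correct and follows essentially the same route as the paper's: the same choice of $R_0$ and $R_1$, the same $B_3$/$L_T$ split for \eqref{ineq:fisher_target}, the same quadratic upper/lower bounds from Taylor expansion, and the same convexity argument that pulls the boundary comparison out to all exterior $\beta$. The only cosmetic difference is in \eqref{ineq:fisher_souce}: you sandwich $\cI_S(\beta)$ first and then invoke Loewner monotonicity of inversion, whereas the paper bounds $\|\cI_S^{-1}(\beta)-\cI_S^{-1}(\beta_0)\|_2$ directly via Weyl's inequality; both are valid and lead to the same conclusion.
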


Taking $\beta^{\star}=\beta$, Lemma \ref{lem:fisher_inform} \eqref{ineq:fisher_target} implies for any $\beta\in\bbB_{\beta_0}(R_0)$:
\begin{align}\label{ineq:fisher_target2}
\frac12 \cdot\cI_T(\beta_0)\preceq\cI_T(\beta)\preceq 2\cdot\cI_T(\beta_0).
\end{align}

\begin{lemma}\label{lem:van_trees}
Let $
C_{\beta_0}(B):=\{\beta\in\bbR^d\,|\,\beta-\beta_0\in[-B,B]^d\}$ be a cube around $\beta_0$.
For any $\beta_0\in\bbR^d$ and $B>0$, there exists a prior density $\lambda(\beta)$ supported on $C_{\beta_0}(B)$ such that for any estimator $\hat\beta$, we have
\begin{align*}
&\bbE_{\beta^{\star}\sim\lambda(\beta)}\bbE_{\substack{x_i\sim\bbP_S(X)\\y_i|x_i\sim f(y|x;\beta^{\star})}}\left[(\hat\beta-\beta^{\star})^{T}\cI_T(\beta_0)(\hat\beta-\beta^{\star})\right]\\
&\quad\geq \frac{\Tr\left(\cI_T(\beta_0)\cI^{-1}_S(\beta_0)\right)^2}{n\bbE_{\beta^{\star}\sim\lambda(\beta)}\left[\Tr\left(\cI^{-1}_S(\beta_0)\cI_S(\beta^{\star})\cI^{-1}_S(\beta_0)\cI_T(\beta_0)\right)\right]+\frac{\pi^2}{B^2}\Tr\left(\cI_T(\beta_0)\cI^{-2}_S(\beta_0)\right)}
\end{align*}
\end{lemma}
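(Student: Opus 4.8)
The plan is to prove this by a multivariate van Trees (Bayesian Cram\'er--Rao) argument, where the freedom in choosing both the prior $\lambda$ and an auxiliary weighting matrix is exactly what lets us land on the stated right-hand side. Write $p_n(X\mid\beta)=\prod_{i=1}^n \bbP_S(x_i)f(y_i\mid x_i;\beta)$ for the likelihood of the source sample and introduce the Bayesian score $U:=\nabla_\beta\log\big(p_n(X\mid\beta)\lambda(\beta)\big)=\nabla_\beta\log p_n(X\mid\beta)+\nabla\log\lambda(\beta)$, with all expectations taken over the joint law of $(\beta^{\star},X)$ where $\beta^{\star}\sim\lambda$ and $X\mid\beta^{\star}\sim p_n(\cdot\mid\beta^{\star})$. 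The first step is the integration-by-parts identity $\bbE[(\hat\beta-\beta^{\star})U^{T}]=I_d$, valid for \emph{any} estimator $\hat\beta$ provided $\lambda$ is smooth and vanishes on the boundary $\partial C_{\beta_0}(B)$ (the $\hat\beta$ term integrates to zero by the boundary condition, and differentiating the $-\beta^{\star}$ term produces the identity). The second step records $\bbE[UU^{T}]=n\,\bbE_{\beta^{\star}\sim\lambda}[\cI_S(\beta^{\star})]+\cJ(\lambda)=:\widetilde\cI$, where $\cJ(\lambda):=\bbE_{\beta^{\star}\sim\lambda}[\nabla\log\lambda\,\nabla\log\lambda^{T}]$ is the Fisher information of the prior; here the cross term vanishes because the per-observation score has zero conditional mean, and the per-observation Fisher information under the source design is exactly $\cI_S(\beta^{\star})$.

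Next I would convert these two facts into a scalar lower bound on the quadratic risk via a single Cauchy--Schwarz step with a carefully chosen weight. For any fixed matrix $W$, the identity gives $\bbE[U^{T} W(\hat\beta-\beta^{\star})]=\Tr(W\,\bbE[(\hat\beta-\beta^{\star})U^{T}])=\Tr(W)$. Writing $C:=\cI_T(\beta_0)$ and factoring $U^{T} W(\hat\beta-\beta^{\star})=\big(C^{-1/2}W^{T} U\big)^{T}\big(C^{1/2}(\hat\beta-\beta^{\star})\big)$, Cauchy--Schwarz yields
\[
\Tr(W)^2\le \bbE\big[U^{T} W C^{-1} W^{T} U\big]\cdot \bbE\big[(\hat\beta-\beta^{\star})^{T} C(\hat\beta-\beta^{\star})\big]=\Tr\!\big(WC^{-1}W^{T}\widetilde\cI\big)\cdot\bbE\big[(\hat\beta-\beta^{\star})^{T} C(\hat\beta-\beta^{\star})\big].
\]
The crucial choice is $W:=\cI_S^{-1}(\beta_0)\cI_T(\beta_0)$, for which $\Tr(W)=\Tr(\cI_T(\beta_0)\cI_S^{-1}(\beta_0))$ and, by symmetry of $C$ and $\cI_S(\beta_0)$, $WC^{-1}W^{T}=\cI_S^{-1}(\beta_0)\cI_T(\beta_0)\cI_S^{-1}(\beta_0)$. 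Substituting $\widetilde\cI$ and using the cyclic property of the trace collapses the denominator to $n\,\bbE_{\beta^{\star}\sim\lambda}[\Tr(\cI_S^{-1}(\beta_0)\cI_S(\beta^{\star})\cI_S^{-1}(\beta_0)\cI_T(\beta_0))]+\Tr\!\big(\cI_S^{-1}(\beta_0)\cI_T(\beta_0)\cI_S^{-1}(\beta_0)\,\cJ(\lambda)\big)$.

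Finally I would pin down the prior to control the last term. Take the product density $\lambda(\beta)=\prod_{i=1}^d q(\beta_i-\beta_{0,i})$ with the raised-cosine marginal $q(t)=\tfrac1B\cos^2\!\big(\tfrac{\pi t}{2B}\big)$ on $[-B,B]$; this is smooth, supported on $C_{\beta_0}(B)$, and vanishes on its boundary, so the integration-by-parts identity above is licensed. A direct computation gives $\int_{-B}^{B}(q'(t))^2/q(t)\,dt=\pi^2/B^2$ per coordinate, hence $\cJ(\lambda)=\tfrac{\pi^2}{B^2}I_d$, so that $\Tr(\cI_S^{-1}(\beta_0)\cI_T(\beta_0)\cI_S^{-1}(\beta_0)\cJ(\lambda))=\tfrac{\pi^2}{B^2}\Tr(\cI_T(\beta_0)\cI_S^{-2}(\beta_0))$; plugging in reproduces exactly the claimed inequality. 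The routine parts are the Fisher-information integral for the cosine prior and the trace manipulations; the genuine content — and the step I expect to require the most care — is matching the two degrees of freedom (the weight $W$ and the prior $\lambda$) to the target form, together with verifying the regularity conditions (finiteness of the relevant second moments and boundary vanishing of $\lambda$) that justify the van Trees integration by parts uniformly over arbitrary estimators $\hat\beta$.
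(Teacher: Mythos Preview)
Your proposal is correct and follows essentially the same route as the paper: both apply the multivariate van Trees inequality with the weight $C=\cI_T(\beta_0)\cI_S^{-1}(\beta_0)$ and a product cosine-type prior on the cube, then evaluate the prior's per-coordinate Fisher information as $\pi^2/B^2$. The differences are cosmetic---the paper invokes Gill--Levit's Theorem~1 directly while you re-derive it via integration by parts and Cauchy--Schwarz---and your raised-cosine marginal $q(t)=\tfrac{1}{B}\cos^2\!\big(\tfrac{\pi t}{2B}\big)$ is in fact the correct choice (the paper writes $\cos$ rather than $\cos^2$, which would give infinite Fisher information and is evidently a typo).
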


The proofs for the above lemmas are delivered to the end of this subsection.
With Lemma \ref{lem:fisher_inform} and Lemma \ref{lem:van_trees} in hand, we are now ready to prove Theorem \ref{thm_exact:well_lower}.

\begin{proof}[Proof of Theorem \ref{thm_exact:well_lower}]
For any estimator $\hat\beta$, we define
\begin{align*}
\hat\beta^{p}:=
\begin{cases} 
      \hat\beta & \hat\beta\in\bbB_{\beta_0}(R_0) \\
      \beta_0 & \hat\beta\notin\bbB_{\beta_0}(R_0).
   \end{cases}
\end{align*}
By Lemma \ref{lem:fisher_inform}, for any $\beta^{\star}\in\bbB_{\beta_0}(R_1)$, we have $R_{\beta^{\star}}(\hat\beta)\geq R_{\beta^{\star}}(\hat\beta^p)$.
We then have
\begin{align}\label{ineq:inf_sup1}
&\inf_{\hat\beta}\sup_{\beta^{\star}\in\bbB_{\beta_0}(B)}\Tr\left(\cI_T(\beta^{\star})\cI^{-1}_S(\beta^{\star})\right)^{-1}\bbE_{\substack{x_i\sim\bbP_S(X)\\y_i|x_i\sim f(y|x;\beta^{\star})}}\left[R_{\beta^{\star}}(\hat\beta)\right]\notag\\
&\geq\inf_{\hat\beta}\sup_{\beta^{\star}\in\bbB_{\beta_0}(R_1)}\Tr\left(\cI_T(\beta^{\star})\cI^{-1}_S(\beta^{\star})\right)^{-1}\bbE_{\substack{x_i\sim\bbP_S(X)\\y_i|x_i\sim f(y|x;\beta^{\star})}}\left[R_{\beta^{\star}}(\hat\beta)\right]\notag\\
&\geq \inf_{\hat\beta^p}\sup_{\beta^{\star}\in\bbB_{\beta_0}(R_1)}\Tr\left(\cI_T(\beta^{\star})\cI^{-1}_S(\beta^{\star})\right)^{-1}\bbE_{\substack{x_i\sim\bbP_S(X)\\y_i|x_i\sim f(y|x;\beta^{\star})}}\left[R_{\beta^{\star}}(\hat\beta^p)\right]\notag\\
&\geq \inf_{\hat\beta\in\bbB_{\beta_0}(R_0)}\sup_{\beta^{\star}\in\bbB_{\beta_0}(R_1)}\Tr\left(\cI_T(\beta^{\star})\cI^{-1}_S(\beta^{\star})\right)^{-1}\bbE_{\substack{x_i\sim\bbP_S(X)\\y_i|x_i\sim f(y|x;\beta^{\star})}}\left[R_{\beta^{\star}}(\hat\beta)\right],
\end{align}
where the first inequality follows from the fact that $R_1\leq R_0\leq B$, the second inequality follows from $R_{\beta^{\star}}(\hat\beta)\geq R_{\beta^{\star}}(\hat\beta^p)$, and the third inequality follows from $\hat\beta^p\in\bbB_{\beta_0}(R_0)$.
For any $\beta^{\star}\in\bbB_{\beta_0}(R_1)\subseteq\bbB_{\beta_0}(R_0)$, by \eqref{ineq:fisher_souce} and \eqref{ineq:fisher_target2}, we have
\begin{align*}
\cI_T(\beta^{\star})\preceq 2\cI_T(\beta_0),\quad \cI^{-1}_S(\beta^{\star})\preceq 2\cI^{-1}_S(\beta_0),
\end{align*}
which implies
\begin{align}\label{ineq:trace}
\Tr\left(\cI_T(\beta^{\star})\cI^{-1}_S(\beta^{\star})\right)^{-1} \geq \frac14 \Tr\left(\cI_T(\beta_0)\cI^{-1}_S(\beta_0)\right)^{-1} . 
\end{align}
Combine \eqref{ineq:inf_sup1} and \eqref{ineq:trace}, we have
\begin{align}\label{ineq:inf_sup2}
&\inf_{\hat\beta}\sup_{\beta^{\star}\in\bbB_{\beta_0}(B)}\Tr\left(\cI_T(\beta^{\star})\cI^{-1}_S(\beta^{\star})\right)^{-1}\bbE_{\substack{x_i\sim\bbP_S(X)\\y_i|x_i\sim f(y|x;\beta^{\star})}}\left[R_{\beta^{\star}}(\hat\beta)\right]\notag\\
&\geq \frac14 \Tr\left(\cI_T(\beta_0)\cI^{-1}_S(\beta_0)\right)^{-1}\inf_{\hat\beta\in\bbB_{\beta_0}(R_0)}\sup_{\beta^{\star}\in\bbB_{\beta_0}(R_1)}\bbE_{\substack{x_i\sim\bbP_S(X)\\y_i|x_i\sim f(y|x;\beta^{\star})}}\left[R_{\beta^{\star}}(\hat\beta)\right].
\end{align}
By Taylor expansion, for any $\hat\beta\in\bbB_{\beta_0}(R_0),\beta^{\star}\in\bbB_{\beta_0}(R_1)$, we have
\begin{align*}
R_{\beta^{\star}}(\hat\beta)&=R_{\beta^{\star}}(\beta^{\star})+(\hat\beta-\beta^{\star})^T\bbE_{\substack{x\sim\bbP_T(X)\\y|x\sim f(y|x;\beta^{\star})}}\left[\nabla\ell(x,y,\beta^{\star})\right]\\
&\quad\quad+\frac12 (\hat\beta-\beta^{\star})^T\bbE_{\substack{x\sim\bbP_T(X)\\y|x\sim f(y|x;\beta^{\star})}}\left[\nabla^2\ell(x,y,\tilde\beta)\right](\hat\beta-\beta^{\star})\\
&=\frac12 (\hat\beta-\beta^{\star})^T\bbE_{\substack{x\sim\bbP_T(X)\\y|x\sim f(y|x;\beta^{\star})}}\left[\nabla^2\ell(x,y,\tilde\beta)\right](\hat\beta-\beta^{\star})
\end{align*}
for some $\tilde\beta\in\bbB_{\beta_0}(R_0)$. By Lemma \ref{lem:fisher_inform} \eqref{ineq:fisher_target}, it then holds that
\begin{align}\label{ineq:risk}
 R_{\beta^{\star}}(\hat\beta)\geq \frac14  (\hat\beta-\beta^{\star})^T\cI_T(\beta_0)(\hat\beta-\beta^{\star}) .
\end{align}
By \eqref{ineq:inf_sup2} and \eqref{ineq:risk}, we then have
\begin{align}\label{ineq:inf_sup3}
&\inf_{\hat\beta}\sup_{\beta^{\star}\in\bbB_{\beta_0}(B)}\Tr\left(\cI_T(\beta^{\star})\cI^{-1}_S(\beta^{\star})\right)^{-1}\bbE_{\substack{x_i\sim\bbP_S(X)\\y_i|x_i\sim f(y|x;\beta^{\star})}}\left[R_{\beta^{\star}}(\hat\beta)\right]\notag\\  &\geq \frac{1}{16} \Tr\left(\cI_T(\beta_0)\cI^{-1}_S(\beta_0)\right)^{-1}\inf_{\hat\beta\in\bbB_{\beta_0}(R_0)}\sup_{\beta^{\star}\in\bbB_{\beta_0}(R_1)}\bbE_{\substack{x_i\sim\bbP_S(X)\\y_i|x_i\sim f(y|x;\beta^{\star})}}\left[(\hat\beta-\beta^{\star})^T\cI_T(\beta_0)(\hat\beta-\beta^{\star})\right]\notag\\
&\geq  \frac{1}{16} \Tr\left(\cI_T(\beta_0)\cI^{-1}_S(\beta_0)\right)^{-1}\inf_{\hat\beta\in\bbB_{\beta_0}(R_0)}\sup_{\beta^{\star}\in C_{\beta_0}(\frac{R_1}{\sqrt{d}})}\bbE_{\substack{x_i\sim\bbP_S(X)\\y_i|x_i\sim f(y|x;\beta^{\star})}}\left[(\hat\beta-\beta^{\star})^T\cI_T(\beta_0)(\hat\beta-\beta^{\star})\right],
\end{align}
where the last inequality follows from the fact that $C_{\beta_0}(\frac{R_1}{\sqrt{d}})\subseteq\bbB_{\beta_0}(R_1)$. By Lemma \ref{lem:van_trees}, there exists a prior density $\lambda(\beta)$ supported on $C_{\beta_0}(\frac{R_1}{\sqrt{d}})$ such that for any estimator $\hat\beta$, we have
\begin{align*}
&\bbE_{\beta^{\star}\sim\lambda(\beta)}\bbE_{\substack{x_i\sim\bbP_S(X)\\y_i|x_i\sim f(y|x;\beta^{\star})}}\left[(\hat\beta-\beta^{\star})^{T}\cI_T(\beta_0)(\hat\beta-\beta^{\star})\right]\\
&\geq \frac{\Tr\left(\cI_T(\beta_0)\cI^{-1}_S(\beta_0)\right)^2}{n\bbE_{\beta^{\star}\sim\lambda(\beta)}\left[\Tr\left(\cI^{-1}_S(\beta_0)\cI_S(\beta^{\star})\cI^{-1}_S(\beta_0)\cI_T(\beta_0)\right)\right]+\frac{\pi^2d}{R^2_1}\Tr\left(\cI_T(\beta_0)\cI^{-2}_S(\beta_0)\right)}\\
&\geq \frac{\Tr\left(\cI_T(\beta_0)\cI^{-1}_S(\beta_0)\right)^2}{2n\Tr\left(\cI_T(\beta_0)\cI^{-1}_S(\beta_0)\right)+\frac{\pi^2d}{R^2_1}\Tr\left(\cI_T(\beta_0)\cI^{-2}_S(\beta_0)\right)}.
\end{align*}
Here the last inequality uses the fact that for any $\beta^{\star}\in C_{\beta_0}(\frac{R_1}{\sqrt{d}})\subseteq\bbB_{\beta_0}(R_0)$, by Lemma \ref{lem:fisher_inform} \eqref{ineq:fisher_souce}, we have $\cI^{-1}_S(\beta_0)\preceq 2\cI^{-1}_S(\beta^{\star})$, which implies
\begin{align*}
\bbE_{\beta^{\star}\sim\lambda(\beta)}\left[\Tr\left(\cI^{-1}_S(\beta_0)\cI_S(\beta^{\star})\cI^{-1}_S(\beta_0)\cI_T(\beta_0)\right)\right] &\leq \bbE_{\beta^{\star}\sim\lambda(\beta)}\left[\Tr\left(2\cI^{-1}_S(\beta^{\star})\cI_S(\beta^{\star})\cI^{-1}_S(\beta_0)\cI_T(\beta_0)\right)\right] \\
&=2\Tr\left(\cI_T(\beta_0)\cI^{-1}_S(\beta_0)\right).
\end{align*}
We then conclude for any estimator $\hat\beta$
\begin{align}\label{ineq:van_trees}
&\sup_{\beta^{\star}\in C_{\beta_0}(\frac{R_1}{\sqrt{d}})}\bbE_{\substack{x_i\sim\bbP_S(X)\\y_i|x_i\sim f(y|x;\beta^{\star})}}\left[(\hat\beta-\beta^{\star})^T\cI_T(\beta_0)(\hat\beta-\beta^{\star})\right]\notag\\
&\geq \bbE_{\beta^{\star}\sim\lambda(\beta)}\bbE_{\substack{x_i\sim\bbP_S(X)\\y_i|x_i\sim f(y|x;\beta^{\star})}}\left[(\hat\beta-\beta^{\star})^{T}\cI_T(\beta_0)(\hat\beta-\beta^{\star})\right]\notag\\
&\geq \frac{\Tr\left(\cI_T(\beta_0)\cI^{-1}_S(\beta_0)\right)^2}{2n\Tr\left(\cI_T(\beta_0)\cI^{-1}_S(\beta_0)\right)+\frac{\pi^2d}{R^2_1}\Tr\left(\cI_T(\beta_0)\cI^{-2}_S(\beta_0)\right)}.
\end{align}
Combine \eqref{ineq:inf_sup3} and \eqref{ineq:van_trees}, we have
\begin{align*}
&\inf_{\hat\beta}\sup_{\beta^{\star}\in\bbB_{\beta_0}(B)}\Tr\left(\cI_T(\beta^{\star})\cI^{-1}_S(\beta^{\star})\right)^{-1}\bbE_{\substack{x_i\sim\bbP_S(X)\\y_i|x_i\sim f(y|x;\beta^{\star})}}\left[R_{\beta^{\star}}(\hat\beta)\right]\notag\\
&\geq \frac{1}{16} \Tr\left(\cI_T(\beta_0)\cI^{-1}_S(\beta_0)\right)^{-1}\cdot\frac{\Tr\left(\cI_T(\beta_0)\cI^{-1}_S(\beta_0)\right)^2}{2n\Tr\left(\cI_T(\beta_0)\cI^{-1}_S(\beta_0)\right)+\frac{\pi^2d}{R^2_1}\Tr\left(\cI_T(\beta_0)\cI^{-2}_S(\beta_0)\right)}\\
&=\frac{1}{16}\cdot\frac{1}{2n+\frac{\pi^2d}{R^2_1}\Tr\left(\cI_T(\beta_0)\cI^{-2}_S(\beta_0)\right)\Tr\left(\cI_T(\beta_0)\cI^{-1}_S(\beta_0)\right)^{-1}}.
\end{align*}
Thus we prove Theorem \ref{thm_exact:well_lower}.
\end{proof}

In the following, we prove Lemma \ref{lem:fisher_inform} and Lemma \ref{lem:van_trees}.

\paragraph{Proofs for Lemma \ref{lem:fisher_inform}}
\begin{proof}[Proof of Lemma \ref{lem:fisher_inform}]
We choose 
\begin{align*}
R_0 := \min\left\{\frac{\lambda^2_{\min}(\cI_S(\beta_0))}{4L_S\lambda_{\max}(\cI_S(\beta_0))},\frac{\lambda_{\min}(\cI_T(\beta_0))}{4B_3+2L_T},B\right\},\,\, R_1:=\frac14\sqrt{\frac{\lambda_{\min}(\cI_T(\beta_0))}{\lambda_{\max}(\cI_T(\beta_0))}}\cdot R_0.
\end{align*}
In the sequel, we will show the aforementioned choices of $R_0$ and $R_1$ satisfy the conditions outlined in Lemma \ref{lem:fisher_inform}.

First of all, we show \eqref{ineq:fisher_souce} holds. Fix any $\beta\in\bbB_{\beta_0}(R_0)$. By Assumption \ref{assm4}, we have
\begin{align*}
\|\cI_S(\beta)-\cI_S(\beta_0)\|_2\leq L_S\|\beta-\beta_0\|_2\leq L_S R_0,
\end{align*}
which implies
\begin{align*}
\|\cI^{-1}_S(\beta)-\cI^{-1}_S(\beta_0)\|_2
\leq \|\cI^{-1}_S(\beta_0)\|_2\cdot\|\cI_S(\beta)-\cI_S(\beta_0)\|_2\cdot\|\cI^{-1}_S(\beta)\|_2
\leq \frac{L_S R_0}{\lambda_{\min}(\cI_S(\beta_0))\lambda_{\min}(\cI_S(\beta))}.
\end{align*}
By Weyl's inequality (Lemma 2.2 in \cite{chen2021spectral}), we have
\begin{align*}
\left|\lambda_{\min}(\cI_S(\beta))-\lambda_{\min}(\cI_S(\beta_0))\right|
\leq \|\cI_S(\beta)-\cI_S(\beta_0)\|_2
\leq L_SR_0.
\end{align*}
Note that
\begin{align*}
R_0
\leq \frac{\lambda^2_{\min}(\cI_S(\beta_0))}{4L_S\lambda_{\max}(\cI_S(\beta_0))}
\leq \frac{\lambda_{\min}(\cI_S(\beta_0))}{2L_S}.
\end{align*}
Thus we have
\begin{align*}
\lambda_{\min}(\cI_S(\beta))
\geq \lambda_{\min}(\cI_S(\beta_0))-L_SR_0
\geq \frac12 \lambda_{\min}(\cI_S(\beta_0)),
\end{align*}
which implies 
\begin{align}\label{ineq:pf:lem_fisher1}
\|\cI^{-1}_S(\beta)-\cI^{-1}_S(\beta_0)\|_2
\leq \frac{L_S R_0}{\lambda_{\min}(\cI_S(\beta_0))\lambda_{\min}(\cI_S(\beta))}
\leq \frac{2L_S R_0}{\lambda^2_{\min}(\cI_S(\beta_0))}
\leq \frac{1}{2\lambda_{\max}(\cI_S(\beta_0))}.
\end{align}
Then for any $x\in\bbR^d$, we have
\begin{align*}
x^T\left(\cI^{-1}_S(\beta)-\frac12 \cI^{-1}_S(\beta_0)\right)x
&=\frac12 x^{T}\cI^{-1}_S(\beta_0)x+x^T\left(\cI^{-1}_S(\beta)-\cI^{-1}_S(\beta_0)\right)x\\
&\geq \frac{\|x\|^2_2}{2\lambda_{\max}(\cI_S(\beta_0))}-\|x\|^2_2\cdot\|\cI^{-1}_S(\beta)-\cI^{-1}_S(\beta_0)\|_2\\
&=\|x\|^2_2\left( \frac{1}{2\lambda_{\max}(\cI_S(\beta_0))}-\|\cI^{-1}_S(\beta)-\cI^{-1}_S(\beta_0)\|_2\right)\\
&\geq 0,
\end{align*}
where the last inequality follows from \eqref{ineq:pf:lem_fisher1}. 
Thus we conclude $\cI^{-1}_S(\beta)\succeq\frac12 \cI^{-1}_S(\beta_0)$. Similarly, we can show that $\cI^{-1}_S(\beta)\preceq 2\cI^{-1}_S(\beta_0)$. 
As a result, we show that \eqref{ineq:fisher_souce} holds.

Next, we show \eqref{ineq:fisher_target} holds. Fix any $\beta^{\star}, \beta\in\bbB_{\beta_0}(R_0)$. By Assumption \ref{assm2}, for any $x\in\cX, y\in\cY$, we have
\begin{align*}
\|\nabla^2\ell(x,y,\beta)-\nabla^2\ell(x,y,\beta^{\star})\|_2
\leq B_3\|\beta-\beta^{\star}\|_2
\leq 2B_3R_0,
\end{align*}
which implies 
\begin{align}\label{ineq:pf:lem_fisher2}
&\left\|\bbE_{\substack{x\sim\bbP_T(X)\\y|x\sim f(y|x;\beta^{\star})}}[\nabla^2\ell(x,y,\beta)]-\bbE_{\substack{x\sim\bbP_T(X)\\y|x\sim f(y|x;\beta^{\star})}}[\nabla^2\ell(x,y,\beta^{\star})]\right\|_2\notag\\
&\quad\quad\leq \bbE_{\substack{x\sim\bbP_T(X)\\y|x\sim f(y|x;\beta^{\star})}}[\|\nabla^2\ell(x,y,\beta)-\nabla^2\ell(x,y,\beta^{\star})\|_2]
\leq 2B_3R_0.
\end{align}
By Assumption \ref{assm4}, we have
\begin{align}\label{ineq:pf:lem_fisher3}
  \|\cI_T(\beta^{\star}) - \cI_T(\beta_0)\|_2
  \leq L_T\|\beta^{\star}-\beta_0\|_2
  \leq L_T R_0
\end{align}
Thus, by \eqref{ineq:pf:lem_fisher2} and \eqref{ineq:pf:lem_fisher3}, we have
\begin{align*}
&\left\|\bbE_{\substack{x\sim\bbP_T(X)\\y|x\sim f(y|x;\beta^{\star})}}[\nabla^2\ell(x,y,\beta)]-\cI_T(\beta_0)\right\|_2\\
&\leq \left\|\bbE_{\substack{x\sim\bbP_T(X)\\y|x\sim f(y|x;\beta^{\star})}}[\nabla^2\ell(x,y,\beta)]-\bbE_{\substack{x\sim\bbP_T(X)\\y|x\sim f(y|x;\beta^{\star})}}[\nabla^2\ell(x,y,\beta^{\star})]\right\|_2+\|\cI_T(\beta^{\star}) - \cI_T(\beta_0)\|_2\\
&\leq (2B_3+L_T)R_0\\
&\leq \frac12 \lambda_{\min}(\cI_T(\beta_0)),
\end{align*}
where the last inequality follows from the choice of $R_0$. Consequently, for any $x\in\bbR^{d}$, we have
\begin{align*}
&x^{T}\left(\bbE_{\substack{x\sim\bbP_T(X)\\y|x\sim f(y|x;\beta^{\star})}}[\nabla^2\ell(x,y,\beta)]-\frac12 \cI_T(\beta_0)\right)x\\
&=\frac12 x^T\cI_T(\beta_0) x+x^{T}\left(\bbE_{\substack{x\sim\bbP_T(X)\\y|x\sim f(y|x;\beta^{\star})}}[\nabla^2\ell(x,y,\beta)]-\cI_T(\beta_0)\right)x\\
&\geq \frac12 \|x\|^2_2\lambda_{\min}(\cI_T(\beta_0))-\|x\|^2_2\left\|\bbE_{\substack{x\sim\bbP_T(X)\\y|x\sim f(y|x;\beta^{\star})}}[\nabla^2\ell(x,y,\beta)]-\cI_T(\beta_0)\right\|_2\\
&\geq \frac12 \|x\|^2_2\lambda_{\min}(\cI_T(\beta_0))-\frac12 \|x\|^2_2\lambda_{\min}(\cI_T(\beta_0))=0.
\end{align*}
We then conclude $\bbE_{\substack{x\sim\bbP_T(X)\\y|x\sim f(y|x;\beta^{\star})}}[\nabla^2\ell(x,y,\beta)]\succeq\frac12 \cI_T(\beta_0)$. Similarly, we can show that 
$\bbE_{\substack{x\sim\bbP_T(X)\\y|x\sim f(y|x;\beta^{\star})}}[\nabla^2\ell(x,y,\beta)]\preceq 2\cI_T(\beta_0)$. Thus we show that \eqref{ineq:fisher_target} holds.

Finally, we need to show that for any $\beta^{\star}\in\bbB_{\beta_0}(R_1),\beta\notin\bbB_{\beta_0}(R_0)$: $R_{\beta^{\star}}(\beta)\geq R_{\beta^{\star}}(\beta_0)$. Fix any $\beta^{\star}\in\bbB_{\beta_0}(R_1),\beta\notin\bbB_{\beta_0}(R_0)$. We denote 
\begin{align*}
    \beta':=\left\{\lambda\beta+(1-\lambda)\beta^{\star}\,|\,\lambda\in [0,1]\right\}\cap\left\{\beta'\,|\,\|\beta'-\beta_0\|_2=R_0\right\}.
\end{align*}
By the choice of $R_1$, we know that $R_1\leq R_0/2$, which implies
\begin{align}\label{ineq:pf:lem_fisher4}
\|\beta'-\beta^{\star}\|_2
\geq \|\beta'-\beta_0\|_2-\|\beta_0-\beta^{\star}\|_2
\geq R_0-R_1
\geq \frac{R_0}{2}.
\end{align}
By convexity of $R_{\beta^{\star}}(\cdot)$ assumed in Assumption \ref{assm5} and $R_{\beta^{\star}}(\beta)\geq R_{\beta^{\star}}(\beta^{\star})$, we have $R_{\beta^{\star}}(\beta)\geq R_{\beta^{\star}}(\beta')$. Thus, we obtain
\begin{align}\label{ineq:pf:lem_fisher5}
R_{\beta^{\star}}(\beta)- R_{\beta^{\star}}(\beta^{\star})
&\geq R_{\beta^{\star}}(\beta')- R_{\beta^{\star}}(\beta^{\star})\notag\\
&\stackrel{\text{Taylor}}{=}\frac12 (\beta'-\beta^{\star})^{T}\bbE_{\substack{x\sim\bbP_T(X)\\y|x\sim f(y|x;\beta^{\star})}}[\nabla^2\ell(x,y,\tilde\beta)](\beta'-\beta^{\star})\notag\\
&\stackrel{\text{by \eqref{ineq:fisher_target}}}{\geq}\frac14 (\beta'-\beta^{\star})^{T}\cI_T(\beta_0)(\beta'-\beta^{\star})\notag\\
&\geq \frac14 \lambda_{\min}(\cI_T(\beta_0))\|\beta'-\beta^{\star}\|^2_2\notag\\
&\stackrel{\text{by \eqref{ineq:pf:lem_fisher4}}}{\geq}\frac{R^2_0}{16}\lambda_{\min}(\cI_T(\beta_0)).
\end{align}
Note that
\begin{align}\label{ineq:pf:lem_fisher6}
R_{\beta^{\star}}(\beta_0)- R_{\beta^{\star}}(\beta^{\star})
&\stackrel{\text{Taylor}}{=}\frac12 (\beta_0-\beta^{\star})^{T}\bbE_{\substack{x\sim\bbP_T(X)\\y|x\sim f(y|x;\beta^{\star})}}[\nabla^2\ell(x,y,\tilde\beta)](\beta_0-\beta^{\star})\notag\\
&\stackrel{\text{by \eqref{ineq:fisher_target}}}{\leq}(\beta_0-\beta^{\star})^{T}\cI_T(\beta_0)(\beta_0-\beta^{\star})\notag\\
&\leq \lambda_{\max}(\cI_T(\beta_0))\|\beta_0-\beta^{\star}\|^2_2\notag\\
&\leq R^2_1\lambda_{\max}(\cI_T(\beta_0))\notag\\
&=\frac{R^2_0}{16}\lambda_{\min}(\cI_T(\beta_0)),
\end{align}
where the last equation follows from the choice of $R_1$. By \eqref{ineq:pf:lem_fisher5} and \eqref{ineq:pf:lem_fisher6}, we obtain $R_{\beta^{\star}}(\beta)\geq R_{\beta^{\star}}(\beta_0)$. Thus, we finish the proof of Lemma \ref{lem:fisher_inform}.
\end{proof}

\paragraph{Proofs for Lemma \ref{lem:van_trees}}
\begin{proof}[Proof of Lemma \ref{lem:van_trees}]
Let $\beta_0=[\beta_{0,1},\ldots,\beta_{0,d}]^{T}$, $\beta=[\beta_{1},\ldots,\beta_{d}]^{T}$ and
\begin{align*}
    f_i(x) := \frac{\pi}{4B}\cos\left(\frac{\pi}{2B}(x-\beta_{0,i})\right),\, i=1,\ldots,d.
\end{align*}
We define the prior density as
\begin{align*}
\lambda(\beta):=
\begin{cases} 
      \Pi^d_{i=1}f_i(\beta_i) & \beta\in C_{\beta_0}(B)\\
      0 & \beta\notin C_{\beta_0}(B)
\end{cases},
\end{align*}
which is supported on $C_{\beta_0}(B)$. In the sequel, we will show this prior density satisfies the condition outlined in Lemma \ref{lem:van_trees}.

For notation simplicity, we denote 
\begin{align*}
A = (A_{ij}):=\cI^{-1}_T(\beta_0), \,\, C = (C_{ij}):=\cI_T(\beta_0)\cI^{-1}_S(\beta_0).
\end{align*}
By multivariate van Trees inequality (Theorem 1 in \cite{gill1995applications}), for any estimator $\hat\beta$, we have
\begin{align}\label{ineq:pf:lem_van1}
&\bbE_{\beta^{\star}\sim\lambda(\beta)}\bbE_{\substack{x_i\sim\bbP_S(X)\\y_i|x_i\sim f(y|x;\beta^{\star})}}\left[(\hat\beta-\beta^{\star})^{T}\cI_T(\beta_0)(\hat\beta-\beta^{\star})\right]\notag\\
&\quad\geq \frac{\Tr\left(\cI_T(\beta_0)\cI^{-1}_S(\beta_0)\right)^2}{n\bbE_{\beta^{\star}\sim\lambda(\beta)}\left[\Tr\left(\cI^{-1}_S(\beta_0)\cI_S(\beta^{\star})\cI^{-1}_S(\beta_0)\cI_T(\beta_0)\right)\right]+\tilde{\cI}(\lambda)},
\end{align}
where 
\begin{align*}
\tilde{\cI}(\lambda) = \int_{C_{\beta_0}(B)}\left(\sum_{i,j,k,\ell}A_{ij}C_{ik}C_{j\ell}\frac{\partial}{\partial\beta_k}\lambda(\beta)\frac{\partial}{\partial\beta_{\ell}}\lambda(\beta)\right)  \frac{1}{\lambda(\beta)} d\beta.
\end{align*}
By the choice of $\lambda(\beta)$, we have
\begin{align*}
&\int_{C_{\beta_0}(B)}\left(\sum_{\substack{i,j,k,\ell\\k\neq\ell}}A_{ij}C_{ik}C_{j\ell}\frac{\partial}{\partial\beta_k}\lambda(\beta)\frac{\partial}{\partial\beta_{\ell}}\lambda(\beta)\right)  \frac{1}{\lambda(\beta)} d\beta\\
&=\int_{C_{\beta_0}(B)}\sum_{\substack{i,j,k,\ell\\k\neq\ell}}A_{ij}C_{ik}C_{j\ell}f'_k(\beta_k)f'_{\ell}(\beta_{\ell})\Pi_{i\neq k,\ell}f_i(\beta_i)d\beta\\
&=\sum_{\substack{i,j,k,\ell\\k\neq\ell}}A_{ij}C_{ik}C_{j\ell}\int_{C_{\beta_0}(B)}f'_k(\beta_k)f'_{\ell}(\beta_{\ell})\Pi_{i\neq k,\ell}f_i(\beta_i)d\beta\\
&=0.
\end{align*}
Here the last equation follows from the fact
\begin{align*}
 \int^{\beta_{0,k}+B}_{\beta_{0,k}-B}f'_k(\beta_k)d\beta_k=\int^{\beta_{0,\ell}+B}_{\beta_{0,\ell}-B}f'_{\ell}(\beta_{\ell})d\beta_{\ell}=0.
\end{align*}

Note that
\begin{align*}
&\int_{C_{\beta_0}(B)}\left(\sum_{\substack{i,j,k,\ell\\k=\ell}}A_{ij}C_{ik}C_{j\ell}\frac{\partial}{\partial\beta_k}\lambda(\beta)\frac{\partial}{\partial\beta_{\ell}}\lambda(\beta)\right)  \frac{1}{\lambda(\beta)} d\beta\\    
&=\sum_{\substack{i,j,k}}A_{ij}C_{ik}C_{jk}\int_{C_{\beta_0}(B)}\frac{(f'_k(\beta_k))^2}{f_k(\beta_k)}\Pi_{i\neq k}f_i(\beta_i)d\beta\\
&=\sum_{\substack{i,j,k}}A_{ij}C_{ik}C_{jk}\int^{\beta_{0,k}+B}_{\beta_{0,k}-B}\frac{(f'_k(\beta_k))^2}{f_k(\beta_k)}d\beta_k\\
&=\frac{\pi^2}{B^2}\sum_{\substack{i,j,k}}A_{ij}C_{ik}C_{jk}\\
&=\frac{\pi^2}{B^2}\Tr(ACC^{T}).
\end{align*}
Thus, we have
\begin{align}\label{ineq:pf:lem_van2}
 \tilde{\cI}(\lambda) &=   \int_{C_{\beta_0}(B)}\left(\sum_{\substack{i,j,k,\ell\\k\neq\ell}}A_{ij}C_{ik}C_{j\ell}\frac{\partial}{\partial\beta_k}\lambda(\beta)\frac{\partial}{\partial\beta_{\ell}}\lambda(\beta)\right)  \frac{1}{\lambda(\beta)} d\beta\notag\\
 &\quad+ \int_{C_{\beta_0}(B)}\left(\sum_{\substack{i,j,k,\ell\\k=\ell}}A_{ij}C_{ik}C_{j\ell}\frac{\partial}{\partial\beta_k}\lambda(\beta)\frac{\partial}{\partial\beta_{\ell}}\lambda(\beta)\right)  \frac{1}{\lambda(\beta)} d\beta\notag\\
 &=\frac{\pi^2}{B^2}\Tr(ACC^{T})\notag\\
 &=\frac{\pi^2}{B^2}\Tr\left(\cI_T(\beta_0)\cI^{-2}_S(\beta_0)\right).
\end{align}
Combine \eqref{ineq:pf:lem_van1} and \eqref{ineq:pf:lem_van2}, we prove Lemma \ref{lem:van_trees}.
    
\end{proof}

\section{Proofs for Section \ref{application}}

\subsection{Proofs for Proposition \ref{prop:linear} and Theorem \ref{thm:linear}}\label{pf:thm_linear}
\begin{proof}
For our linear regression model, 
\begin{align*}
\ell(x,y,\beta) = \frac{1}{2}\log (2 \pi) + \frac{1}{2} (y-x^{T}\beta)^{2}.    
\end{align*}
The convexity of $\ell$ in $\beta$ immediately implies Assumption \ref{assm5}. 
We then have
\begin{align*}
&\nabla\ell(x,y,\beta)
=-x(y-x^{T}\beta),\\
&\nabla^2\ell(x,y,\beta)
=xx^T,\\
&\nabla^3\ell(x,y,\beta)
=0, \\
& \cI_S=\bbE_{x\sim\bbP_S(X)}[xx^T]=I_d , \\
& \cI_T=\bbE_{x\sim\bbP_T(X)}[xx^T] =\alpha\alpha^{T}+\sigma^2 I_d.
\end{align*}
Therefore Assumption \ref{assm4} is satisfied with $L_S=L_T=0$ and Assumption \ref{assm7} trivially holds.
Note that $\nabla\ell(x_i,y_i,\beta^{\star})=-x_i\varepsilon_i$. Since $\|x_i\|_2$ is $\sqrt{d}$-subgaussian and $|\varepsilon_i|$ is $1$-subgaussian, by Lemma 2.7.7 in \cite{vershynin2018high}, it holds that $\|x_i\|_2|\varepsilon_i|$ is $\sqrt{d}$-subexponential random variable. Thus $\|A\nabla\ell(x_i,y_i,\beta^{\star})\|_2$ is $\|A\|_2\sqrt{d}$-subexponential random variable. 

Then, by Lemma \ref{lem:concentration_vec} with $u_i=A(\nabla\ell(x_i,y_i,\beta^{\star})-\bbE[\nabla\ell(x_i,y_i,\beta^{\star})])=A\nabla\ell(x_i,y_i,\beta^{\star})$, $V=\bbE[\|u_i\|^2_2]=n \cdot \mathbb{E} \|A(\nabla\ell_n(\beta^{\star})-\bbE[\nabla\ell_n(\beta^{\star})])\|_2^2$, $\alpha=1$ and $B_u^{(\alpha)}=c\sqrt{d}\|A\|_2$, we have for any matrix $A\in\bbR^{d\times d}$, and any $\delta\in (0,1)$, with probability at least $1-\delta$:
\begin{align*}
     \left\|A\left(\nabla\ell_n(\beta^{\star})-\bbE[\nabla\ell_n(\beta^{\star})]\right)\right\|_{2} \leq c\left(\sqrt{\frac{V\log \frac{d}{\delta}}{n}}+\sqrt{d}\|A\|_2\log (\frac{\sqrt{d}\|A\|_2}{\sqrt{V}})\frac{\log \frac{d}{\delta}}{n}\right),
\end{align*}
which satisfies the gradient concentration in Assumption \ref{assm1} with $B_1=c\sqrt{d}$ and $\gamma=1$.

Note that $x_i\sim\cN(0,I_d)$. Thus, by Theorem 13.3 in \cite{Rinaldo2018}, for any $\delta\in (0,1)$, with probability at least $1-\delta$, we have
\begin{align*}
    \|\nabla^2\ell_n(\beta^{\star})-\bbE[\nabla^2\ell_n(\beta^{\star}]\|_2
    &=\left\|\frac1n\sum^n_{i=1}x_ix_i^{T}-I_d\right\|_2\notag\\
    &\leq c\left(\sqrt{\frac{d\log (1/\delta)}{n}}+\frac{d\log (1/\delta)}{n}\right)\notag\\
    &\leq 2c\sqrt{\frac{d\log (1/\delta)}{n}},
\end{align*}
where the last inequality holds if $n\geq \cO(d\log \frac{1}{\delta})$. Hence linear regression model satisfies the matrix concentration in Assumption \ref{assm1} with $B_2=c\sqrt{d}$, $N(\delta)= d \log \frac{1}{\delta}$.
Since $\nabla^3\ell\equiv 0$, we know Assumption \ref{assm2} holds with $B_3=0$.

Note that 
\begin{align*}
\nabla^2\ell_n(\beta)=\frac1n\sum^n_{i=1}x_ix_i^{T}=\frac1n X^{T}X,
\end{align*}
where $X:=[x_1,\ldots,x_n]^T$. Given that $\{x_i\}^n_{i=1}$ are i.i.d $\cN(0,I_d)$, it follows that $X$ is almost surely full rank when $n\geq d$. Hence, when $n\geq d$, we have
\begin{align*}
\nabla^2\ell_n(\beta)=\frac1n\sum^n_{i=1}x_ix_i^{T}=\frac1n X^{T}X\succ 0.
\end{align*}
Consequently, $\ell_n(\cdot)$ is strictly convex and thus satisfies Assumption \ref{assm3}. Finally, Theorem \ref{thm:linear} follows directly from Theorem \ref{thm:well_upper} with $\gamma=1$, $B_1=c\sqrt{d}, B_2=c\sqrt{d}, B_3=0$, $N(\delta)= d \log \frac{1}{\delta}$, $\cI_S=I_d$ and $\cI_T=\alpha\alpha^{T}+\sigma^2 I_d$.

\end{proof}

\subsection{Proofs for Proposition \ref{prop:logistic} and Theorem \ref{thm:logistic}}

\begin{proof}
In the following, we will show the logistic regression model satisfies Assumptions \ref{assm:well_upper} and \ref{assm:well_lower}. For logistic regression, the loss function is defined as 
\begin{align*}
\ell(x,y,\beta)=\log(1+e^{x^T\beta})-y(x^T\beta).
\end{align*}
We then have
\begin{align*}
&\nabla\ell(x,y,\beta)
=\frac{x}{1+e^{-x^T\beta}}-xy,\\
&\nabla^2\ell(x,y,\beta)
=\frac{xx^T}{2+e^{-x^T\beta}+e^{x^T\beta}},\\
&\nabla^3\ell(x,y,\beta)
=\frac{e^{-x^T\beta}-e^{x^T\beta}}{(2+e^{-x^T\beta}+e^{x^T\beta})^2}\cdot x\otimes x \otimes x.
\end{align*}
Here $\otimes$ represents the tensor product and $x\otimes x \otimes x\in\bbR^{d\times d\times d}$ with $(x\otimes x \otimes x)_{ijk}=x_i x_j x_k$. The convexity of $\ell$ in $\beta$ immediately implies Assumption \ref{assm5}; Assumption \ref{assm7} trivially holds. 
Note that on source domain $\|x\|_2 = \sqrt{d} $ and $|y|\leq 1$. Hence we have for any $(x,y)$ on source domain:
\begin{align*}
&\|\nabla\ell(x,y,\beta^{\star})\|_2
=\left\|\frac{x}{1+e^{-x^T\beta^{\star}}}-xy\right\|_2
\leq \left\|\frac{x}{1+e^{-x^T\beta^{\star}}}\right\|_2+\|xy\|_2
\leq \|x\|_2+\|x\|_2=2\sqrt{d},\\
&\|\nabla^2\ell(x,y,\beta^{\star})\|_2
=\left\|\frac{xx^T}{2+e^{-x^T\beta^{\star}}+e^{x^T\beta^{\star}}}\right\|_2
\leq \|xx^T\|_2\leq \|x\|^2_2\leq d.
\end{align*}
By Lemma \ref{lem:concentration_vec} with $u_i=A(\nabla\ell(x_i,y_i,\beta^{\star})-\bbE[\nabla\ell(x_i,y_i,\beta^{\star})])=A\nabla\ell(x_i,y_i,\beta^{\star})$, $V=\bbE[\|u_i\|^2_2]$, $\alpha=+\infty$, $B_{u}^{(\alpha)}=2\sqrt{d}\|A\|_2$,
we have for any matrix $A\in\bbR^{d\times d}$, and any $\delta\in (0,1)$, with probability at least $1-\delta$:
\begin{align*}
     \left\|A\left(\nabla\ell_n(\beta^{\star})-\bbE[\nabla\ell_n(\beta^{\star})]\right)\right\|_{2} \leq c\left(\sqrt{\frac{V\log \frac{d}{\delta}}{n}}+\frac{\sqrt{d}\|A\|_2\log \frac{d}{\delta}}{n}\right),
\end{align*}
which satisfies the gradient concentration in Assumption \ref{assm1} with $B_1=c\sqrt{d}$ and $\gamma=0$.
By matrix Hoeffding inequality, logistic regression model satisfies the matrix concentration in Assumption \ref{assm1} with $B_2=cd$. We conclude that logistic regression model satisfies Assumption \ref{assm1} with $N(\delta)=0$, $B_1=c\sqrt{d}$, $\gamma=0$, $B_2=cd$.

Note that for $x$ on source domain, we have $\|x\|_2\leq \sqrt{d}$; for $x$ on target domain, we have $\|x\|_2\leq \sqrt{d}+r$. Thus, it holds that
\begin{align*}
\|\nabla^3\ell(x,y,\beta)\|_2
=\left\|\frac{e^{-x^T\beta}-e^{x^T\beta}}{(2+e^{-x^T\beta}+e^{x^T\beta})^2}\cdot x\otimes x \otimes x\right\|_2
\leq_{(i)}\|x\otimes x \otimes x\|_2
\leq \|x\|^3_2\leq (\sqrt{d}+r)^3.
\end{align*}
Here $(i)$ uses the fact that
\begin{align*}
\left|\frac{e^{-x^T\beta}-e^{x^T\beta}}{(2+e^{-x^T\beta}+e^{x^T\beta})^2}\right|
\leq \frac{e^{-x^T\beta}+e^{x^T\beta}}{(2+e^{-x^T\beta}+e^{x^T\beta})^2}
\leq \frac{1}{2+e^{-x^T\beta}+e^{x^T\beta}}
\leq 1.
\end{align*}
Hence logistic regression satisfies Assumptions \ref{assm2} with $B_3=(\sqrt{d}+r)^3$. Notice that this also implies Assumption \ref{assm4}:
By definition,
\begin{align*}
   \cI_S(\beta) := \bbE_{x\sim\bbP_S(X)}[\nabla^{2} \ell (x,y,\beta)],    
\end{align*}
therefore 
\begin{align*}
\|\cI_S(\beta_1)-\cI_S(\beta_2)\|&=\|   \bbE_{x\sim\bbP_S(X)}[\nabla^{2} \ell (x,y,\beta_1)-\nabla^{2} \ell (x,y,\beta_2)] \| \\
& \leq \bbE_{x\sim\bbP_S(X)}[\|\nabla^{2} \ell (x,y,\beta_1)-\nabla^{2} \ell (x,y,\beta_2)\|] \\
& \leq (\sqrt{d})^3 \|\beta_1-\beta_2\|.
\end{align*}
Similarly
\begin{align*}
\|\cI_T(\beta_1)-\cI_T(\beta_2)\|\leq (\sqrt{d}+r)^3 \|\beta_1-\beta_2\|.
\end{align*}
These inequlities shows that logistic regression model satisfies Assumption \ref{assm4} with $L_S=d^{1.5}$ and $L_T=(\sqrt{d}+r)^3$.
Note that 
\begin{align*}
\nabla^2\ell_n(\beta)
=\frac{1}{n}\sum^n_{i=1}\nabla^2\ell(x_i,y_i,\beta) 
=\frac{1}{n}\sum^n_{i=1}\frac{x_ix^{T}_i}{2+e^{-x^T_i\beta}+e^{x^T_i\beta}}
=\frac1n X^{T}AX,
\end{align*}
where $X:=[x_1,\ldots,x_n]^{T}\in\bbR^{n\times d}$ and $A:=\diag(1/(2+e^{-x^T_i\beta}+e^{x^T_i\beta}))\succ 0$.
When $n \geq  d$, $X$ is full rank (i.e., $\rank(X)=d$) almost surely, consequently, $\ell_n(\cdot)$ is strictly convex and thus satisfies Assumption \ref{assm3}. 

By Theorem \ref{thm:well_upper}, we have when $n\geq \cO(N^{\star}\log \frac{d}{\delta})$,
\begin{align*}
R_{\beta^{\star}}(\beta_{\MLE}) 
\lesssim \frac{\Tr\left(\cI_T\cI^{-1}_S\right)\log \frac{d}{\delta}}{n}.
\end{align*}
Here 
\begin{align*}
N^{\star}:= (1 + \tilde{\kappa}/\kappa)^2 \cdot \max\left\{\tilde{\kappa}^{-1}\alpha_1^2\log^{2\gamma} \left((1+\tilde{\kappa}/\kappa)\tilde{\kappa}^{-1}\alpha^2_1\right),\  \alpha_2^2, \  
\tilde{\kappa}(1 + \|\cI_T^{\frac12} \cI_S^{-1}\cI_T^{\frac12}\|_2^{-2})\alpha_3^2 \right\},
\end{align*}
where $\alpha_1 := B_1 \|\cI_S^{-1}\|_2^{0.5}$, $\alpha_2 := B_2 \|\cI_S^{-1}\|_2$, $\alpha_3 := B_3 \|\cI_S^{-1}\|_2^{1.5}$,
\begin{align*}
    \kappa:=\frac{\Tr(\cI_{T} \cI_{S}^{-1}) }{\|\cI_T^{\frac12} \cI_S^{-1}\cI_T^{\frac12}\|_2},\, \, \tilde{\kappa} := \frac{\Tr(\cI_S^{-1})}{\|\cI_S^{-1}\|_2}.
\end{align*}

Now it remains to calculate the quantities $N^{\star}$ and $\Tr\left(\cI_T\cI^{-1}_S\right)$ for this instance, where the crucial part is to identify what are $\cI_S$ and $\cI_T$. The following two lemmas give the characterization of $\cI_S$ and $\cI_T$.

\begin{lemma} \label{logistic_lemma1}
Under the conditions of Theorem \ref{thm:logistic}, we have $\cI_S=U\diag(\lambda_1,\lambda_2,\ldots,\lambda_2)U^{T}$ and $\cI_T=U\diag(\lambda_1,\lambda_2+r^2\lambda_3,\lambda_2,\ldots,\lambda_2)U^{T}$ for an orthonormal matrix $U$. Where 
\begin{align*}
    &\lambda_{1}:=\bbE_{x\sim\uni (\cS^{d-1}(\sqrt{d}))}[\frac{(\beta^{\star T}x)^{2}}{2+\exp (\beta^{\star T}x) +\exp (-\beta^{\star T}x)}],\notag\\
    &\lambda_{2}:=\bbE_{x\sim\uni (\cS^{d-1}(\sqrt{d}))}[\frac{(\beta_{\perp}^{\star T}x)^{2}}{2+\exp (\beta^{\star T}x) +\exp (-\beta^{\star T}x)}],\notag\\
    &\lambda_{3}:=\bbE_{x\sim\uni (\cS^{d-1}(\sqrt{d}))}[\frac{1}{2+\exp (\beta^{\star T}x) +\exp (-\beta^{\star T}x)}].
\end{align*}
\end{lemma}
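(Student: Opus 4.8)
The plan is to exploit the rotational symmetry of the uniform distribution on the sphere together with the fact that the Hessian weight depends on $x$ only through the scalar $x^{T}\beta^{\star}$. First I would fix a convenient orthonormal basis: let $U=[\beta^{\star},\beta_{\perp}^{\star},u_3,\ldots,u_d]$ be an orthonormal matrix whose first two columns are $\beta^{\star}$ and $\beta_{\perp}^{\star}$ (possible since these are orthonormal by assumption), with the remaining columns completing an orthonormal basis of $\bbR^d$. Writing $g(t):=2+e^{t}+e^{-t}$, the Hessian computed earlier gives $\nabla^2\ell(x,y,\beta^{\star})=xx^{T}/g(x^{T}\beta^{\star})$, so $\cI_S=\bbE_{x\sim\uni(\cS^{d-1}(\sqrt{d}))}[xx^{T}/g(x^{T}\beta^{\star})]$, and the goal reduces to computing the entries $(U^{T}\cI_S U)_{ij}=\bbE[(u_i^{T}x)(u_j^{T}x)/g(x^{T}\beta^{\star})]$.

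For the source matrix I would argue that the off-diagonal entries vanish and that the diagonal entries for indices $\geq 2$ coincide. The key observation is that $g(x^{T}\beta^{\star})$ is invariant under any orthogonal map fixing $\beta^{\star}$, while $\uni(\cS^{d-1}(\sqrt{d}))$ is invariant under all orthogonal maps. For an off-diagonal entry involving some index $j\geq 2$, consider the reflection $R_j=I-2u_ju_j^{T}$, which fixes $\beta^{\star}$ (since $u_j^{T}\beta^{\star}=0$), hence preserves both the measure and $x^{T}\beta^{\star}$, but negates $u_j^{T}x$; this forces that entry to equal its own negative, hence zero. For the diagonal entries with $j\geq 2$, any rotation within the orthogonal complement of $\beta^{\star}$ preserves the measure and the weight and can send $u_j$ to $\beta_{\perp}^{\star}$, so all such entries equal $\lambda_2$, while the $(1,1)$ entry is $\lambda_1$ by definition. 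This yields $U^{T}\cI_S U=\diag(\lambda_1,\lambda_2,\ldots,\lambda_2)$.

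For the target matrix the crucial simplification is that the shift $v=r\beta_{\perp}^{\star}$ is orthogonal to $\beta^{\star}$. Writing $x=z+v$ with $z\sim\uni(\cS^{d-1}(\sqrt{d}))$, we get $x^{T}\beta^{\star}=z^{T}\beta^{\star}$, so the weight $g(x^{T}\beta^{\star})=g(z^{T}\beta^{\star})$ is exactly the one from the source case. Expanding $(z+v)(z+v)^{T}$ and using $u_i^{T}v=r\,\mathbf{1}\{i=2\}$, the entry $(U^{T}\cI_T U)_{ij}$ splits into the source entry $(U^{T}\cI_S U)_{ij}$, two linear cross terms, and a constant term $r^{2}\mathbf{1}\{i=j=2\}\,\bbE[1/g(z^{T}\beta^{\star})]$. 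I would then show the cross terms vanish: each is proportional to $\bbE[u_i^{T}z/g(z^{T}\beta^{\star})]$, which is zero because $g$ is an even function of $z^{T}\beta^{\star}$ (making the $i=1$ case the integral of an odd function) and because for $i\geq 2$ the reflection $R_i$ negates the integrand while preserving measure and weight. What remains is $U^{T}\cI_T U=\diag(\lambda_1,\lambda_2,\ldots,\lambda_2)+r^{2}\lambda_3\,e_2e_2^{T}$, giving the claimed form.

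The main obstacle is purely bookkeeping: making each invariance argument precise about which orthogonal transformation is applied and verifying that it simultaneously preserves the uniform measure, preserves $x^{T}\beta^{\star}$ (hence the weight), and acts as claimed on the relevant coordinate. The single nontrivial analytic input is that $g(t)=2+e^{t}+e^{-t}$ is even, which underlies the vanishing of the $i=1$ cross term in the target computation; everything else reduces to symmetry of the sphere.
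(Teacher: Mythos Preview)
Your argument is correct and in fact a bit cleaner than the paper's. The paper proves the lemma by passing to the Gaussian representation $x=z\sqrt{d}/\|z\|_2$ with $z\sim\cN(0,I_d)$ and then exploits the independence of $A:=\beta^{\star T}z$ and $B:=U_{\perp}^{T}z$ together with the symmetry $B\deq -B$ to kill each cross term; eigenvalues are identified by applying $\cI_S$ and $\cI_T$ to each candidate eigenvector one at a time. You instead stay on the sphere and compute the whole matrix $U^{T}\cI_S U$ (respectively $U^{T}\cI_T U$) at once, using Householder reflections $R_j=I-2u_ju_j^{T}$ and rotations in $(\beta^{\star})^{\perp}$ to get the off-diagonal vanishing and the equality of the diagonal entries for $j\geq 2$. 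Both proofs rest on the same underlying symmetry (orthogonal invariance of the spherical measure together with the weight depending only on $x^{T}\beta^{\star}$), but your formulation avoids the extra Gaussian layer and the somewhat ad hoc ``replace $B$ by $-B$'' step; the price is only that you must be explicit about which reflection you apply in each case, which you already are. One tiny remark: for the $i=1$ cross term in the target computation you can use the global reflection $z\mapsto -z$ directly (as you effectively do), rather than invoking evenness of $g$ plus a coordinate argument; this keeps all the vanishing arguments uniform.
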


\begin{lemma} \label{logistic_lemma2}
Under the conditions of Theorem \ref{thm:logistic}, there exist absolute constants $c,C, c'>0$ such that $c<\lambda_1, \lambda_2, \lambda_3 <C$, for $d \geq c'$.
\end{lemma}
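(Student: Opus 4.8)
The plan is to reduce all three bounds to understanding the single scalar random variable $u := \beta^{\star T}X$, the projection of $X$ onto the unit vector $\beta^{\star}$, and its companion $v := \beta_{\perp}^{\star T}X$. First I would record two elementary facts: since $X\sim\uni(\cS^{d-1}(\sqrt{d}))$ we have $\bbE[XX^T]=I_d$, so $\bbE[u^2]=\|\beta^{\star}\|_2^2=1$ and likewise $\bbE[v^2]=1$. Rewriting the denominator as $2+e^t+e^{-t}=4\cosh^2(t/2)\geq 4$, the upper bounds are then immediate: $\lambda_1=\bbE[u^2/(4\cosh^2(u/2))]\leq \bbE[u^2]/4=1/4$, and identically $\lambda_3\leq 1/4$.

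Next I would eliminate $\lambda_2$ entirely using rotational symmetry. Conditioned on $u$, the residual $X-u\beta^{\star}$ is uniform on the sphere of radius $\sqrt{d-u^2}$ inside the $(d-1)$-dimensional hyperplane $\beta^{\star\perp}$, so $\bbE[v^2\mid u]=(d-u^2)/(d-1)$. Integrating against $1/(4\cosh^2(u/2))$ and using $\bbE[u^2/(4\cosh^2(u/2))]=\lambda_1$ gives the exact identity $\lambda_2=(d\lambda_3-\lambda_1)/(d-1)$. Combined with $\lambda_1\in[0,1/4]$ this yields $\lambda_2\leq d\lambda_3/(d-1)\leq 1/2$ for $d\geq 2$, and, once $\lambda_3$ is bounded below by a constant $c>0$, it also gives $\lambda_2\geq (d\lambda_3-1/4)/(d-1)\geq \lambda_3/2$ for all large $d$. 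Hence it suffices to produce an absolute lower bound for $\lambda_1$ and $\lambda_3$.

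For those lower bounds the key point is that $u$ places a constant amount of mass on an $O(1)$ window, where the integrand is bounded below. Concretely, $u$ has the law of $\sqrt{d}$ times the first coordinate of a uniform point on $\cS^{d-1}$, with density proportional to $(1-s^2/d)^{(d-3)/2}$ on $[-\sqrt{d},\sqrt{d}]$; by Stirling the normalizing constant tends to $1/\sqrt{2\pi}$ and $(1-s^2/d)^{(d-3)/2}\to e^{-s^2/2}$ uniformly on compact sets, so $u\dto\cN(0,1)$ and there exist a threshold $c'$ and a constant $p_0>0$ with $\Pr(1/2\leq|u|\leq1)\geq p_0$ for all $d\geq c'$. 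On the event $\{|u|\leq1\}$ one has $4\cosh^2(u/2)\leq 4\cosh^2(1/2)=:K$, so $\lambda_3\geq K^{-1}\Pr(|u|\leq1)\geq p_0/K$, and restricting to $\{1/2\leq|u|\leq1\}$ where $u^2\geq 1/4$ gives $\lambda_1\geq (4K)^{-1}\Pr(1/2\leq|u|\leq1)\geq p_0/(4K)$. Taking $c:=p_0/(4K)$ and $C:=1/2$ then closes all three bounds via the identity of the second paragraph.

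The main obstacle is the quantitative step in the third paragraph: a lower bound on $\Pr(1/2\leq|u|\leq1)$ that is uniform in $d$. This requires either the explicit spherical-projection density above together with the Stirling asymptotics for $\Gamma(d/2)/\Gamma((d-1)/2)$, or a quantitative central limit theorem for linear functionals of a uniform point on the sphere; everything else is elementary bookkeeping.
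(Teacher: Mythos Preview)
Your proof is correct and takes a genuinely different route from the paper. The paper represents $X=z\sqrt{d}/\|z\|_2$ with $z\sim\cN(0,I_d)$, then truncates on the high-probability event $\{\|z\|_2/\sqrt{d}\in[1/2,3/2]\}$: on this event the rescaling factors are $O(1)$, so each $\lambda_i$ is sandwiched between constants times its Gaussian analogue $\bbE_{y\sim\cN(0,1)}[\cdots]$, and the remainder is controlled by the tail bound $\Pr(\|z\|/\sqrt{d}\notin[1/2,3/2])\leq 2e^{-cd}$; the three $\lambda_i$ are treated separately and in parallel. Your argument is more structured: the upper bounds for $\lambda_1,\lambda_3$ drop out of $\bbE[u^2]=1$ and $4\cosh^2(t/2)\ge 4$ with no truncation at all, the exact identity $\lambda_2=(d\lambda_3-\lambda_1)/(d-1)$ (via $\bbE[v^2\mid u]=(d-u^2)/(d-1)$) reduces $\lambda_2$ to the other two, and only the lower bounds for $\lambda_1,\lambda_3$ require any limiting analysis, which you do via the explicit spherical-projection density rather than the Gaussian embedding. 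Your route is shorter and the identity for $\lambda_2$ is a nice organizing device the paper does not exploit; the paper's truncation template, on the other hand, is more mechanical and ports verbatim to the phase-retrieval case (Lemma~\ref{phase_lemma2}), which is presumably why they chose it.
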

The proofs for these two lemmas are in the next section. With Lemma \ref{logistic_lemma1}, we have $\cI_T \cI_S^{-1} = U\diag(1,1+ r^{2}\frac{\lambda_3}{\lambda_2},\ldots,1)U^{T}$, $\cI_S^{-1} = U\diag(\frac{1}{\lambda_1},\frac{1}{\lambda_2},\ldots,\frac{1}{\lambda_2})U^{T}$. By Lemma \ref{logistic_lemma2}, since $\lambda_1, \lambda_2, \lambda_3 = O(1)$, we have $\Tr (\cI_T \cI_S^{-1}) = d + r^{2} \frac{\lambda_3}{\lambda_2} \asymp d + r^{2}$, $\|\cI_T \cI_S^{-1}\|_2 = 1 + r^{2} \frac{\lambda_3}{\lambda_2} \asymp 1 + r^{2}$. Similarly  $\Tr (\cI_S^{-1}) = \lambda_1^{-1} + (d-1) \lambda_2^{-1} \asymp d $, $\| \cI_S^{-1}\|_2 = \max\{\lambda_1^{-1},\lambda_2^{-1}\} \asymp 1$. Also recall that $B_1=\sqrt{d}, B_2=d, B_3=(\sqrt{d}+r)^{3}$, plug in all those quantities we have $\kappa = \frac{\Tr (\cI_T \cI_S^{-1})}{\|\cI_T \cI_S^{-1}\|_2} \asymp \frac{d+r^{2}}{1+r^{2}}$, $\tilde{\kappa} = \frac{\Tr (\cI_S^{-1})}{\| \cI_S^{-1}\|_2} \asymp d$, $\alpha_1 = B_1 \| \cI_S^{-1}\|_2^{0.5} \asymp  \sqrt{d}$, $\alpha_2 = B_2 \| \cI_S^{-1}\|_2\asymp d$, $\alpha_3 = B_3 \| \cI_S^{-1}\|_2^{1.5} \asymp  (\sqrt{d} + r)^{3}$. Therefore we have when $n\geq \cO(N^{\star}\log \frac{d}{\delta})$,
\begin{align*}
R_{\beta^{\star}}(\beta_{\MLE}) 
\lesssim \frac{\Tr\left(\cI_T\cI^{-1}_S\right)\log \frac{d}{\delta}}{n} \asymp \frac{(d+r^{2})\log \frac{d}{\delta}}{n},
\end{align*}
where
\begin{align*}
N^{\star}&= (1 + \tilde{\kappa}/\kappa)^2 \cdot \max\left\{\tilde{\kappa}^{-1}\alpha_1^2\log^{2\gamma} \left((1+\tilde{\kappa}/\kappa)\tilde{\kappa}^{-1}\alpha^2_1\right),\  \alpha_2^2, \  
\tilde{\kappa}(1 + \|\cI_T^{\frac12} \cI_S^{-1}\cI_T^{\frac12}\|_2^{-2})\alpha_3^2 \right\} \\
& \asymp \left(1+\frac{d+r^2d}{d+r^2}\right)^2\cdot\max\left\{1, d^2, d (1 + (1+r^{2})^{-2}) (\sqrt{d}+r)^{6}\right\}\\
& = \left(1+\frac{d+r^2d}{d+r^2}\right)^2\cdot d (\sqrt{d}+r)^{6} .
\end{align*}

When $r \lesssim 1$, $N^{\star} \asymp d^{4}$. When $1 \lesssim r \lesssim \sqrt{d}$, $N^{\star} \asymp r^{4}d^{4}$. When $\sqrt{d} \lesssim r$, $N^{\star} \asymp r^{6}d^{3}$. 
\end{proof}

\subsubsection{Proofs for Lemma \ref{logistic_lemma1} and \ref{logistic_lemma2}}
The intuition of proving Lemma \ref{logistic_lemma1} and \ref{logistic_lemma2} is that,  when $d$ is large, distribution $\uni (\cS^{d-1}(\sqrt{d}))$ behaves similar to distribution $\cN (0, I_d)$ which has good properties (isotropic, independence of each entry, etc.)
\begin{proof}[Proof of Lemma \ref{logistic_lemma1}]
By definition,
\begin{align*}
   \cI_S := \bbE_{x\sim\uni (\cS^{d-1}(\sqrt{d}))}[\frac{xx^{T}}{2+\exp (\beta^{\star T}x) +\exp (-\beta^{\star T}x)}]    
\end{align*}
Let $z \sim \cN(0,I_d)$, then $x$ and $z\frac{\sqrt{d}}{\|z\|_{2}}$ have the same distribution. Therefore 
\begin{align*}
   \cI_S &= \bbE_{x\sim\uni (\cS^{d-1}(\sqrt{d}))}[\frac{xx^{T}}{2+\exp (\beta^{\star T}x) +\exp (-\beta^{\star T}x)}]   \\
   &= \bbE_{z\sim\cN (0,I_d)}[\frac{zz^{T}\frac{d}{\|z\|^{2}_{2}}}{2+\exp (\beta^{\star T}z \cdot \frac{\sqrt{d}}{\|z\|_{2}}) +\exp (-\beta^{\star T}z\cdot \frac{\sqrt{d}}{\|z\|_{2}})}]   \\
   &= \bbE_{z\sim\cN (0,I_d)}[\frac{(\beta^{\star}\beta^{\star T} + U_{\perp}U_{\perp}^{T})zz^{T}\frac{d}{\|z\|^{2}_{2}}}{2+\exp (\beta^{\star T}z \cdot \frac{\sqrt{d}}{\|z\|_{2}}) +\exp (-\beta^{\star T}z\cdot \frac{\sqrt{d}}{\|z\|_{2}})}]   
\end{align*}
where $[\beta^{\star},U_{\perp}] \in \bbR^{d \times d}$ is a orthogonal basis. 

With this expression, we first prove $\beta^{\star}$ is an eigenvector of $\cI_S$ with corresponding eigenvalue $\lambda_1$.
\begin{align*}
   \cI_S \beta^{\star} &= \bbE_{z\sim\cN (0,I_d)}[\frac{(\beta^{\star}\beta^{\star T} + U_{\perp}U_{\perp}^{T})zz^{T}\frac{d}{\|z\|^{2}_{2}}}{2+\exp (\beta^{\star T}z \cdot \frac{\sqrt{d}}{\|z\|_{2}}) +\exp (-\beta^{\star T}z\cdot \frac{\sqrt{d}}{\|z\|_{2}})}] \beta^{\star}  \\
   &= \bbE_{z\sim\cN (0,I_d)}[\frac{\beta^{\star}\beta^{\star T}zz^{T}\frac{d}{\|z\|^{2}_{2}}\beta^{\star}}{2+\exp (\beta^{\star T}z \cdot \frac{\sqrt{d}}{\|z\|_{2}}) +\exp (-\beta^{\star T}z\cdot \frac{\sqrt{d}}{\|z\|_{2}})}] \\
   &+ \bbE_{z\sim\cN (0,I_d)}[\frac{U_{\perp}U_{\perp}^{T}zz^{T}\frac{d}{\|z\|^{2}_{2}}\beta^{\star}}{2+\exp (\beta^{\star T}z \cdot \frac{\sqrt{d}}{\|z\|_{2}}) +\exp (-\beta^{\star T}z\cdot \frac{\sqrt{d}}{\|z\|_{2}})}] \\
   &= \bbE_{z\sim\cN (0,I_d)}[\frac{(\beta^{\star T}z)^{2}\frac{d}{\|z\|^{2}_{2}}}{2+\exp (\beta^{\star T}z \cdot \frac{\sqrt{d}}{\|z\|_{2}}) +\exp (-\beta^{\star T}z\cdot \frac{\sqrt{d}}{\|z\|_{2}})}] \beta^{\star} \\
   &+ \bbE_{z\sim\cN (0,I_d)}[\frac{U_{\perp}U_{\perp}^{T}zz^{T}\frac{d}{\|z\|^{2}_{2}}\beta^{\star}}{2+\exp (\beta^{\star T}z \cdot \frac{\sqrt{d}}{\|z\|_{2}}) +\exp (-\beta^{\star T}z\cdot \frac{\sqrt{d}}{\|z\|_{2}})}] \\
   &= \lambda_1 \beta^{\star} + \bbE_{z\sim\cN (0,I_d)}[\frac{U_{\perp}U_{\perp}^{T}zz^{T}\frac{d}{\|z\|^{2}_{2}}\beta^{\star}}{2+\exp (\beta^{\star T}z \cdot \frac{\sqrt{d}}{\|z\|_{2}}) +\exp (-\beta^{\star T}z\cdot \frac{\sqrt{d}}{\|z\|_{2}})}]. 
\end{align*}
Therefore we only need to prove 
\begin{align*}
    \bbE_{z\sim\cN (0,I_d)}[\frac{U_{\perp}U_{\perp}^{T}zz^{T}\frac{d}{\|z\|^{2}_{2}}\beta^{\star}}{2+\exp (\beta^{\star T}z \cdot \frac{\sqrt{d}}{\|z\|_{2}}) +\exp (-\beta^{\star T}z\cdot \frac{\sqrt{d}}{\|z\|_{2}})}]=0.
\end{align*}
In fact,
\begin{align*}
    & \qquad \bbE_{z\sim\cN (0,I_d)}[\frac{U_{\perp}^{T}zz^{T}\frac{d}{\|z\|^{2}_{2}}\beta^{\star}}{2+\exp (\beta^{\star T}z \cdot \frac{\sqrt{d}}{\|z\|_{2}}) +\exp (-\beta^{\star T}z\cdot \frac{\sqrt{d}}{\|z\|_{2}})} ] \\
    & = \bbE_{z\sim\cN (0,I_d)}[\frac{ \frac{d}{\|z\|^{2}_{2}} (U_{\perp}^{T}z)(z^{T}\beta^{\star})}{2+\exp (\beta^{\star T}z \cdot \frac{\sqrt{d}}{\|z\|_{2}}) +\exp (-\beta^{\star T}z\cdot \frac{\sqrt{d}}{\|z\|_{2}})} ] \\
    & = \bbE_{z\sim\cN (0,I_d)}[\frac{ \frac{d}{|A|^{2} + \|B\|^{2}} AB}{2+\exp (A \cdot \frac{\sqrt{d}}{\sqrt{|A|^{2} + \|B\|^{2}}}) +\exp (-A\cdot \frac{\sqrt{d}}{|A|^{2} + \|B\|^{2}})} ] 
\end{align*}
where we let $A:=z^{T}\beta^{\star}$, $B:=U_{\perp}^{T}z$. Notice that by the property of $z\sim \cN(0,I_d)$, $A$ and $B$ are independent. Also, $B$ is symmetric, i.e., $B$ and $-B$ have the same distribution. Therefore 
\begin{align*}
    & \qquad \bbE_{z\sim\cN (0,I_d)}[\frac{ \frac{d}{|A|^{2} + \|B\|^{2}} AB}{2+\exp (A \cdot \frac{\sqrt{d}}{\sqrt{|A|^{2} + \|B\|^{2}}}) +\exp (-A\cdot \frac{\sqrt{d}}{|A|^{2} + \|B\|^{2}})}] \\
    &\stackrel{\text{replace $B$ by $-B$ }}{=} \bbE_{z\sim\cN (0,I_d)}[\frac{ -\frac{d}{|A|^{2} + \|B\|^{2}} AB}{2+\exp (A \cdot \frac{\sqrt{d}}{\sqrt{|A|^{2} + \|B\|^{2}}}) +\exp (-A\cdot \frac{\sqrt{d}}{|A|^{2} + \|B\|^{2}})}] \\
    &= -\bbE_{z\sim\cN (0,I_d)}[\frac{ \frac{d}{|A|^{2} + \|B\|^{2}} AB}{2+\exp (A \cdot \frac{\sqrt{d}}{\sqrt{|A|^{2} + \|B\|^{2}}}) +\exp (-A\cdot \frac{\sqrt{d}}{|A|^{2} + \|B\|^{2}})}],
\end{align*}
which implies 
\begin{align*}
    \bbE_{z\sim\cN (0,I_d)}[\frac{U_{\perp}U_{\perp}^{T}zz^{T}\frac{d}{\|z\|^{2}_{2}}\beta^{\star}}{2+\exp (\beta^{\star T}z \cdot \frac{\sqrt{d}}{\|z\|_{2}}) +\exp (-\beta^{\star T}z\cdot \frac{\sqrt{d}}{\|z\|_{2}})}]=0.
\end{align*}

Next we will prove that for any $\beta_{\perp}$ such that $\|\beta_{\perp}\|_{2}=1$, $\beta^{\star T}\beta_{\perp}=0$, $\beta_{\perp}$ is an eigenvector of $\cI_{S}$ with corresponding eigenvalue $\lambda_2$.  Let $[\beta_{\perp}, U]$ be an orthogonal basis ($\beta^{\star}$ is the first column of $U$).

\begin{align*}
   \cI_S \beta_{\perp} &= \bbE_{z\sim\cN (0,I_d)}[\frac{(\beta_{\perp}\beta_{\perp}^{T} + UU^{T})zz^{T}\frac{d}{\|z\|^{2}_{2}}}{2+\exp (\beta^{\star T}z \cdot \frac{\sqrt{d}}{\|z\|_{2}}) +\exp (-\beta^{\star T}z\cdot \frac{\sqrt{d}}{\|z\|_{2}})}] \beta_{\perp}  \\
   &= \bbE_{z\sim\cN (0,I_d)}[\frac{\beta_{\perp}\beta_{\perp}^{T}zz^{T}\frac{d}{\|z\|^{2}_{2}}\beta_{\perp}}{2+\exp (\beta^{\star T}z \cdot \frac{\sqrt{d}}{\|z\|_{2}}) +\exp (-\beta^{\star T}z\cdot \frac{\sqrt{d}}{\|z\|_{2}})}] \\
   &+ \bbE_{z\sim\cN (0,I_d)}[\frac{UU^{T}zz^{T}\frac{d}{\|z\|^{2}_{2}}\beta_{\perp}}{2+\exp (\beta^{\star T}z \cdot \frac{\sqrt{d}}{\|z\|_{2}}) +\exp (-\beta^{\star T}z\cdot \frac{\sqrt{d}}{\|z\|_{2}})}] \\
   &= \bbE_{z\sim\cN (0,I_d)}[\frac{(\beta_{\perp}^{T}z)^{2}\frac{d}{\|z\|^{2}_{2}}}{2+\exp (\beta^{\star T}z \cdot \frac{\sqrt{d}}{\|z\|_{2}}) +\exp (-\beta^{\star T}z\cdot \frac{\sqrt{d}}{\|z\|_{2}})}] \beta_{\perp} \\
   &+ \bbE_{z\sim\cN (0,I_d)}[\frac{UU^{T}zz^{T}\frac{d}{\|z\|^{2}_{2}}\beta_{\perp}}{2+\exp (\beta^{\star T}z \cdot \frac{\sqrt{d}}{\|z\|_{2}}) +\exp (-\beta^{\star T}z\cdot \frac{\sqrt{d}}{\|z\|_{2}})}] \\
   &= \lambda_2 \beta_{\perp} + 0\\
   &= \lambda_2 \beta_{\perp}
\end{align*}
Here 
\begin{align*}
  \bbE_{z\sim\cN (0,I_d)}[\frac{UU^{T}zz^{T}\frac{d}{\|z\|^{2}_{2}}\beta_{\perp}}{2+\exp (\beta^{\star T}z \cdot \frac{\sqrt{d}}{\|z\|_{2}}) +\exp (-\beta^{\star T}z\cdot \frac{\sqrt{d}}{\|z\|_{2}})}]  =0 
\end{align*}
because of a similar reason as in the previous part.

For $\cI_T$, the proving strategy is similar.  For $x \sim\uni(\cS^{d-1}(\sqrt{d}))+v$ on the target domain, where $v= r\beta_{\perp}^{\star}$, let $w=x-v=x-r\beta_{\perp}^{\star}$, then $w \sim\uni(\cS^{d-1}(\sqrt{d}))$. Let $z \sim \cN(0,I_d)$, then $w$ and $z\frac{\sqrt{d}}{\|z\|_{2}}$ have the same distribution. We have
\begin{align*}
   \cI_T &= \bbE_{x\sim\uni (\cS^{d-1}(\sqrt{d})) +v}[\frac{xx^{T}}{2+\exp (\beta^{\star T}x) +\exp (-\beta^{\star T}x)}]   \\
   &=\bbE_{w\sim\uni (\cS^{d-1}(\sqrt{d}))}[\frac{(w+v)(w+v)^{T}}{2+\exp (\beta^{\star T}(w+v)) +\exp (-\beta^{\star T}(w+v))}]   \\
   &\stackrel{v^{T}\beta^{\star}=0}{=} \bbE_{w\sim\uni (\cS^{d-1}(\sqrt{d}))}[\frac{ww^{T}+wv^{T}+vw^{T}+vv^{T}}{2+\exp (\beta^{\star T}w) +\exp (-\beta^{\star T}w)}]   
\end{align*}
Therefore 
\begin{align*}
   \cI_T \beta^{\star} &=  \bbE_{w\sim\uni (\cS^{d-1}(\sqrt{d}))}[\frac{ww^{T}+wv^{T}+vw^{T}+vv^{T}}{2+\exp (\beta^{\star T}w) +\exp (-\beta^{\star T}w)}] \beta^{\star} \\
   & \stackrel{v^{T}\beta^{\star}=0}{=} \bbE_{w\sim\uni (\cS^{d-1}(\sqrt{d}))}[\frac{ww^{T}}{2+\exp (\beta^{\star T}w) +\exp (-\beta^{\star T}w)}] \beta^{\star} \\
   &= \cI_S \beta^{\star} \\
   &= \lambda_1 \beta^{\star},
\end{align*}
where the last line follows from the previous proofs. Similarly, for any $\tilde{\beta_{\perp}}$ such that $\|\tilde{\beta_{\perp}}\|_{2}=1$, $\beta_{\perp}^{\star T}\tilde{\beta_{\perp}}=0$, 
\begin{align*}
   \cI_T \tilde{\beta_{\perp}} &=  \bbE_{w\sim\uni (\cS^{d-1}(\sqrt{d}))}[\frac{ww^{T}+wv^{T}+vw^{T}+vv^{T}}{2+\exp (\beta^{\star T}w) +\exp (-\beta^{\star T}w)}] \tilde{\beta_{\perp}} \\
   & \stackrel{v^{T}\tilde{\beta_{\perp}}=0}{=} \bbE_{w\sim\uni (\cS^{d-1}(\sqrt{d}))}[\frac{ww^{T}}{2+\exp (\beta^{\star T}w) +\exp (-\beta^{\star T}w)}] \tilde{\beta_{\perp}} \\
   &= \cI_S \tilde{\beta_{\perp}} \\
   &= \lambda_2 \tilde{\beta_{\perp}}.
\end{align*}
For $\beta_{\perp}^{\star}$, 
\begin{align*}
   \cI_T \beta_{\perp}^{\star} &=  \bbE_{w\sim\uni (\cS^{d-1}(\sqrt{d}))}[\frac{ww^{T}+wv^{T}+vw^{T}+vv^{T}}{2+\exp (\beta^{\star T}w) +\exp (-\beta^{\star T}w)}] \beta_{\perp}^{\star} \\
   &=  \bbE_{w\sim\uni (\cS^{d-1}(\sqrt{d}))}[\frac{ww^{T}}{2+\exp (\beta^{\star T}w) +\exp (-\beta^{\star T}w)}] \beta_{\perp}^{\star} \\
   &+  \bbE_{w\sim\uni (\cS^{d-1}(\sqrt{d}))}[\frac{wv^{T}}{2+\exp (\beta^{\star T}w) +\exp (-\beta^{\star T}w)}] \beta_{\perp}^{\star} \\
   &+  \bbE_{w\sim\uni (\cS^{d-1}(\sqrt{d}))}[\frac{vw^{T}}{2+\exp (\beta^{\star T}w) +\exp (-\beta^{\star T}w)}] \beta_{\perp}^{\star} \\
   &+  \bbE_{w\sim\uni (\cS^{d-1}(\sqrt{d}))}[\frac{vv^{T}}{2+\exp (\beta^{\star T}w) +\exp (-\beta^{\star T}w)}] \beta_{\perp}^{\star} \\
   & := I_1+ I_2+ I_3 +I_4.
\end{align*}
As in the previous proofs, 
\begin{align*}
    I_1=\cI_S \beta_{\perp}^{\star} = \lambda_2 \beta_{\perp}^{\star}.
\end{align*}
\begin{align*}
   I_2 &=  \bbE_{w\sim\uni (\cS^{d-1}(\sqrt{d}))}[\frac{wv^{T}}{2+\exp (\beta^{\star T}w) +\exp (-\beta^{\star T}w)}] \beta_{\perp}^{\star} \\
   &\stackrel{v=r\beta_{\perp}^{\star}}{=} r \bbE_{w\sim\uni (\cS^{d-1}(\sqrt{d}))}[\frac{w \beta_{\perp}^{\star T} \beta_{\perp}^{\star}}{2+\exp (\beta^{\star T}w) +\exp (-\beta^{\star T}w)}]  \\
    &\stackrel{\|\beta_{\perp}^{\star}\|=1}{=} r \bbE_{w\sim\uni (\cS^{d-1}(\sqrt{d}))}[\frac{w}{2+\exp (\beta^{\star T}w) +\exp (-\beta^{\star T}w)}]  \\
    &=0.
\end{align*}
where the last lines follows from $w$ is symmetric and $\frac{w}{2+\exp (\beta^{\star T}w) +\exp (-\beta^{\star T}w)}$ is a odd function of $w$.
\begin{align*}
   I_3 &=  \bbE_{w\sim\uni (\cS^{d-1}(\sqrt{d}))}[\frac{vw^{T}}{2+\exp (\beta^{\star T}w) +\exp (-\beta^{\star T}w)}] \beta_{\perp}^{\star} \\
   &\stackrel{v=r\beta_{\perp}^{\star}}{=} r \bbE_{w\sim\uni (\cS^{d-1}(\sqrt{d}))}[\frac{\beta_{\perp}^{\star} w^{T}\beta_{\perp}^{\star}}{2+\exp (\beta^{\star T}w) +\exp (-\beta^{\star T}w)}]  \\
   &= r \bbE_{w\sim\uni (\cS^{d-1}(\sqrt{d}))}[\frac{ w^{T}\beta_{\perp}^{\star}}{2+\exp (\beta^{\star T}w) +\exp (-\beta^{\star T}w)}] \beta_{\perp}^{\star} \\
    &=0.
\end{align*}
where the last lines follows from $w$ is symmetric and $\frac{w^{T}\beta_{\perp}^{\star}}{2+\exp (\beta^{\star T}w) +\exp (-\beta^{\star T}w)}$ is a odd function of $w$.
\begin{align*}
   I_4 &=  \bbE_{w\sim\uni (\cS^{d-1}(\sqrt{d}))}[\frac{vv^{T}}{2+\exp (\beta^{\star T}w) +\exp (-\beta^{\star T}w)}] \beta_{\perp}^{\star} \\
   &\stackrel{v=r\beta_{\perp}^{\star}}{=} r^{2} \bbE_{w\sim\uni (\cS^{d-1}(\sqrt{d}))}[\frac{\beta_{\perp}^{\star}\beta_{\perp}^{\star T} \beta_{\perp}^{\star}}{2+\exp (\beta^{\star T}w) +\exp (-\beta^{\star T}w)}]  \\
    &\stackrel{\|\beta_{\perp}^{\star}\|=1}{=} r^{2} \bbE_{w\sim\uni (\cS^{d-1}(\sqrt{d}))}[\frac{1}{2+\exp (\beta^{\star T}w) +\exp (-\beta^{\star T}w)}] \beta_{\perp}^{\star}  \\
    &=r^{2}\lambda_3 \beta_{\perp}^{\star}.
\end{align*}
Combine the calculations of $I_1,I_2,I_3,I_4$, we have 
\begin{align*}
    \cI_{T}\beta_{\perp}^{\star} &= I_1 + I_2 + I_3 + I_4 \\
    &= \lambda_2 \beta_{\perp}^{\star} + r^{2} \lambda_3 \beta_{\perp}^{\star} \\
    &= (\lambda_2 + r^{2} \lambda_3) \beta_{\perp}^{\star}.
\end{align*}
In conclusion, we have $\cI_S=U\diag(\lambda_1,\lambda_2,\ldots,\lambda_2)U^{T}$ and $\cI_T=U\diag(\lambda_1,\lambda_2+r^2\lambda_3,\lambda_2,\ldots,\lambda_2)U^{T}$ for an orthonormal matrix $U$, where $U= [\beta^{\star}, \beta_{\perp}^{\star}, \cdots]$.
\end{proof}

\begin{proof}[Proof of Lemma \ref{logistic_lemma2}]
Recall the definition of $\lambda_1, \lambda_2, \lambda_3$:
\begin{align*}
    &\lambda_{1}:=\bbE_{x\sim\uni (\cS^{d-1}(\sqrt{d}))}[\frac{(\beta^{\star T}x)^{2}}{2+\exp (\beta^{\star T}x) +\exp (-\beta^{\star T}x)}]=\bbE_{z\sim\cN(0,I_d)}[\frac{\frac{d}{\|z\|_{2}^{2}}(\beta^{\star T}z)^{2}}{2+\exp (\frac{\sqrt{d}}{\|z\|_{2}}\beta^{\star T}z) +\exp (-\frac{\sqrt{d}}{\|z\|_{2}}\beta^{\star T}z)}],\\
    &\lambda_{2}:=\bbE_{x\sim\uni (\cS^{d-1}(\sqrt{d}))}[\frac{(\beta_{\perp}^{\star T}x)^{2}}{2+\exp (\beta^{\star T}x) +\exp (-\beta^{\star T}x)}]=\bbE_{z\sim\cN(0,I_d)}[\frac{\frac{d}{\|z\|_{2}^{2}}(\beta_{\perp}^{\star T}z)^{2}}{2+\exp (\frac{\sqrt{d}}{\|z\|_{2}}\beta^{\star T}z) +\exp (-\frac{\sqrt{d}}{\|z\|_{2}}\beta^{\star T}z)}],\\
    &\lambda_{3}:=\bbE_{x\sim\uni (\cS^{d-1}(\sqrt{d}))}[\frac{1}{2+\exp (\beta^{\star T}x) +\exp (-\beta^{\star T}x)}]=\bbE_{z\sim\cN(0,I_d)}[\frac{1}{2+\exp (\frac{\sqrt{d}}{\|z\|_{2}}\beta^{\star T}z) +\exp (-\frac{\sqrt{d}}{\|z\|_{2}}\beta^{\star T}z)}].
\end{align*}
Next we will show that there exists constants $c,C,c'>0$ such that when $d\geq c'$, we have $c \leq \lambda_1 \leq C$. The proofs for $\lambda_2$ and $\lambda_3$ are similar. Notice that, when $d$ is large, $\frac{d}{\|z\|_{2}^{2}}$ concentrates around $1$. If we replace $\frac{d}{\|z\|_{2}^{2}}$ by $1$ in the above expressions, we have 
\begin{align*}
    &\lambda_{1}\approx \bbE_{z\sim\cN(0,I_d)}[\frac{(\beta^{\star T}z)^{2}}{2+\exp (\beta^{\star T}z) +\exp (-\beta^{\star T}z)}]
\end{align*}
Since $\beta^{\star T}z \sim \cN(0,1)$ when $z \sim \cN(0,I_d)$ and $\|\beta^{\star}\|=1$,  we have 
\begin{align*}
   \bbE_{z\sim\cN(0,I_d)}[\frac{(\beta^{\star T}z)^{2}}{2+\exp (\beta^{\star T}z) +\exp (-\beta^{\star T}z)}] = \bbE_{y\sim\cN(0,1)}[\frac{y^{2}}{2+\exp (y) +\exp (-y)}]
\end{align*}
which is a absolute constant greater than zero and not related to $d$. Following this intuition, we can bound $\lambda_1$ as the following. We first state the concentration of the norm of $\cN (0,I_d)$. By \cite{vershynin2018high} (3.7), 
\begin{align} \label{gaussian_norm_concentration}
    \bbP(|\|z\|-\sqrt{d}| \geq t) \leq 2 e^{-4ct^{2}} 
\end{align}
for some absolute constant $c>0$. Take $t=\frac{\sqrt{d}}{2}$, we have
\begin{align*} 
    \bbP(\frac{\|z\|}{\sqrt{d}} \notin [\frac12, \frac32]  ) \leq 2e^{-cd}.
\end{align*}
With this concentration, we do the following truncation:
\begin{align*}
    \lambda_{1}&=\bbE_{z\sim\cN(0,I_d)}[\frac{\frac{d}{\|z\|_{2}^{2}}(\beta^{\star T}z)^{2}}{2+\exp (\frac{\sqrt{d}}{\|z\|_{2}}\beta^{\star T}z) +\exp (-\frac{\sqrt{d}}{\|z\|_{2}}\beta^{\star T}z)}] \\
    &= \bbE_{z\sim\cN(0,I_d)}[\frac{\frac{d}{\|z\|_{2}^{2}}(\beta^{\star T}z)^{2}}{2+\exp (\frac{\sqrt{d}}{\|z\|_{2}}\beta^{\star T}z) +\exp (-\frac{\sqrt{d}}{\|z\|_{2}}\beta^{\star T}z)}\bbI_{\frac{\|z\|}{\sqrt{d}} \in [\frac12, \frac32]}] \\
    &+ \bbE_{z\sim\cN(0,I_d)}[\frac{\frac{d}{\|z\|_{2}^{2}}(\beta^{\star T}z)^{2}}{2+\exp (\frac{\sqrt{d}}{\|z\|_{2}}\beta^{\star T}z) +\exp (-\frac{\sqrt{d}}{\|z\|_{2}}\beta^{\star T}z)}\bbI_{\frac{\|z\|}{\sqrt{d}} \notin [\frac12, \frac32]}] \\
    &:= J_1 + J_2.
\end{align*}
For $J_2$, it is obvious that
\begin{align}
    0\leq J_2 \leq \frac{d}{4} \bbP(\frac{\|z\|}{\sqrt{d}} \notin [\frac12, \frac32]) \leq \frac{d}{2} e^{-cd}.
\end{align}

For upper bound of $J_1$, 
\begin{align*}
    J_1 &= \bbE_{z\sim\cN(0,I_d)}[\frac{\frac{d}{\|z\|_{2}^{2}}(\beta^{\star T}z)^{2}}{2+\exp (\frac{\sqrt{d}}{\|z\|_{2}}\beta^{\star T}z) +\exp (-\frac{\sqrt{d}}{\|z\|_{2}}\beta^{\star T}z)}\bbI_{\frac{\|z\|}{\sqrt{d}} \in [\frac12, \frac32]}] \\
    & \leq \bbE_{z\sim\cN(0,I_d)} [\frac{4(\beta^{\star T}z)^{2}}{4}] =1.
\end{align*}
Therefore 
\begin{align*}
    \lambda_1 &= J_1 + J_2 \\
    & \leq 1 + \frac{d}{2}e^{-cd}.
\end{align*}
It's obvious that there exists an absolute constant $c'$ such that when $d \geq c'$, $\lambda_1 \leq 2$.

For lower bound of $J_1$, we have 
\begin{align*}
    J_1 &= \bbE_{z\sim\cN(0,I_d)}[\frac{\frac{d}{\|z\|_{2}^{2}}(\beta^{\star T}z)^{2}}{2+\exp (\frac{\sqrt{d}}{\|z\|_{2}}\beta^{\star T}z) +\exp (-\frac{\sqrt{d}}{\|z\|_{2}}\beta^{\star T}z)}\bbI_{\frac{\|z\|}{\sqrt{d}} \in [\frac12, \frac32]}] \\
    & \geq \bbE_{z\sim\cN(0,I_d)}[\frac{\frac49(\beta^{\star T}z)^{2}}{2+\exp (2\beta^{\star T}z) +\exp (-2\beta^{\star T}z)}\bbI_{\frac{\|z\|}{\sqrt{d}} \in [\frac12, \frac32]}] \\
    & = \bbE_{z\sim\cN(0,I_d)}[\frac{\frac49(\beta^{\star T}z)^{2}}{2+\exp (2\beta^{\star T}z) +\exp (-2\beta^{\star T}z)}]
    - \bbE_{z\sim\cN(0,I_d)}[\frac{\frac49(\beta^{\star T}z)^{2}}{2+\exp (2\beta^{\star T}z) +\exp (-2\beta^{\star T}z)}\bbI_{\frac{\|z\|}{\sqrt{d}} \notin [\frac12, \frac32]}] \\
    & \geq \bbE_{z\sim\cN(0,I_d)}[\frac{\frac49(\beta^{\star T}z)^{2}}{2+\exp (2\beta^{\star T}z) +\exp (-2\beta^{\star T}z)}]
    - \bbE_{z\sim\cN(0,I_d)}[\frac{\frac49(\beta^{\star T}z)^{2}}{4}\bbI_{\frac{\|z\|}{\sqrt{d}} \notin [\frac12, \frac32]}] \\
    & \geq \bbE_{z\sim\cN(0,I_d)}[\frac{\frac49(\beta^{\star T}z)^{2}}{2+\exp (2\beta^{\star T}z) +\exp (-2\beta^{\star T}z)}]
    - \bbE_{z\sim\cN(0,I_d)}[\frac{\|z\|_{2}^{2}}{9}\bbI_{\frac{\|z\|}{\sqrt{d}} \notin [\frac12, \frac32]}] \\
    &= \bbE_{y\sim\cN(0,1)}[\frac{\frac49y^{2}}{2+\exp (2y) +\exp (-2y)}]
    - \bbE_{z\sim\cN(0,I_d)}[\frac{\|z\|_{2}^{2}}{9}\bbI_{\frac{\|z\|}{\sqrt{d}} \notin [\frac12, \frac32]}] \\
    &:= c_1 - \bbE_{z\sim\cN(0,I_d)}[\frac{\|z\|_{2}^{2}}{9}\bbI_{\frac{\|z\|}{\sqrt{d}} \notin [\frac12, \frac32]}]
\end{align*}
Notice that here $c_1$ is a positive constant not related to $d$. For the second term,
\begin{align*}
    &\bbE_{z\sim\cN(0,I_d)}[\frac{\|z\|_{2}^{2}}{9}\bbI_{\frac{\|z\|}{\sqrt{d}} \notin [\frac12, \frac32]}] \\
    &=  \bbE_{z\sim\cN(0,I_d)}[\frac{\|z\|_{2}^{2}}{9}\bbI_{\frac{\|z\|}{\sqrt{d}} \leq \frac12}] + \bbE_{z\sim\cN(0,I_d)}[\frac{\|z\|_{2}^{2}}{9}\bbI_{\frac{\|z\|}{\sqrt{d}} \geq \frac32}] \\
    & \leq \frac{d}{36} \bbP(\frac{\|z\|}{\sqrt{d}} \leq \frac12) + \frac{1}{9}\int_{\frac94 d}^{\infty} \bbP(\|z\|_{2}^{2} \geq t) \mathrm{d} t + \frac19 \cdot \frac94 d \bbP(\|z\|_{2}^{2} \geq \frac94 d) \\
    & \stackrel{\text{by (\ref{gaussian_norm_concentration})}}{\leq } \frac{d}{36} 2e^{-cd} + \frac{1}{9}\int_{\frac94 d}^{\infty} \bbP(\|z\|_{2}^{2} \geq t) \mathrm{d} t + \frac{d}{4} 2e^{-cd}\\
    & \stackrel{t = d(y+1)^{2}}{\leq } de^{-cd} + \frac{1}{9}\int_{\frac12}^{\infty}2d(y+1) \bbP(\|z\|_{2}\geq \sqrt{d} +\sqrt{d}y ) \mathrm{d} y \\
    & \stackrel{\text{by (\ref{gaussian_norm_concentration})}}{\leq } de^{-cd} + \frac{1}{9}\int_{\frac12}^{\infty}2d(y+1) 2e^{-4cdy^{2}} \mathrm{d} y \\
    & \leq de^{-cd} + 2d\int_{\frac12}^{\infty}y e^{-4cdy^{2}} \mathrm{d} y \\
    & \leq de^{-cd} + \frac{1}{4c} e^{-cd}
\end{align*}
Combine this inequality and previous inequalities of $J_1$ and $J_2$, we have
\begin{align*}
    \lambda_1 &= J_1 + J_2 \\
    & \geq c_1 - de^{-cd} - \frac{1}{4c} e^{-cd}
\end{align*}
Therefore it's obvious that there exists an absolute constant $c'$ such that when $d \geq c'$, $\lambda_1 \geq \frac{c_1}{2}$. 

The proof for $\lambda_2$ is almost the same, the only difference is that in the numerator, we replace $\beta^{\star T}z$ by $\beta_{\perp}^{\star T} z$. The proof for $\lambda_3$ is even simpler. For upper bound, 
\begin{align*}
    \lambda_{3}&=\bbE_{z\sim\cN(0,I_d)}[\frac{1}{2+\exp (\frac{\sqrt{d}}{\|z\|_{2}}\beta^{\star T}z) +\exp (-\frac{\sqrt{d}}{\|z\|_{2}}\beta^{\star T}z)}] \leq \frac14.
\end{align*}
For lower bound, 
\begin{align*}
    \lambda_{3}&=\bbE_{z\sim\cN(0,I_d)}[\frac{1}{2+\exp (\frac{\sqrt{d}}{\|z\|_{2}}\beta^{\star T}z) +\exp (-\frac{\sqrt{d}}{\|z\|_{2}}\beta^{\star T}z)}]\\
    &\geq \bbE_{z\sim\cN(0,I_d)}[\frac{1}{2+\exp (\frac{\sqrt{d}}{\|z\|_{2}}\beta^{\star T}z) +\exp (-\frac{\sqrt{d}}{\|z\|_{2}}\beta^{\star T}z)}\bbI_{\frac{\|z\|}{\sqrt{d}} \in [\frac12, \frac32]}] \\
    &\geq \bbE_{z\sim\cN(0,I_d)}[\frac{1}{2+\exp (2\beta^{\star T}z) +\exp (-2\beta^{\star T}z)}\bbI_{\frac{\|z\|}{\sqrt{d}} \in [\frac12, \frac32]}] \\
    &= \bbE_{z\sim\cN(0,I_d)}[\frac{1}{2+\exp (2\beta^{\star T}z) +\exp (-2\beta^{\star T}z)}] -\bbE_{z\sim\cN(0,I_d)}[\frac{1}{2+\exp (2\beta^{\star T}z) +\exp (-2\beta^{\star T}z)}\bbI_{\frac{\|z\|}{\sqrt{d}} \notin [\frac12, \frac32]}] \\
    &= c_2 - \frac14 \bbP(\frac{\|z\|}{\sqrt{d}} \notin [\frac12, \frac32]) \\
    & \geq c_2 -\frac12 e^{-cd}.
\end{align*}
Therefore there exists constant $c'$ such that when $d \geq c'$, $\lambda_3 \leq \frac{c_2}{2}$.
\end{proof}

\subsection{Proofs for Theorem \ref{thm:phase}}
In this section, our objective is to establish the upper bound of MLE for the phase retrieval model. 
A direct application of Theorem \ref{thm:well_upper} is impractical, as Assumption \ref{assm3} is not met; notably, both $\beta^{\star}, -\beta^{\star}$ serve as global minimums of population loss. 
To circumvent the issue of non-unique global minimums, we employ a methodology similar to that used in proving Theorem \ref{thm:well_upper}, though with a slightly refined analysis.
\begin{proof}[Proof of Theorem \ref{thm:phase}] In the sequel, we will use the same notations as in the proof of Theorem \ref{thm:well_upper}. Even though the global minimum of population loss for the phase retrieval model isn't unique, meaning it could be either $\beta^{\star}$ or $-\beta^{\star}$, we can still show that the MLE falls into a small ball around either $\beta^{\star}$ or $-\beta^{\star}$.

\begin{lemma}\label{lem:mle_phase} Under the settings of Theorem \ref{thm:phase}, if $n\geq \cO(d^4 \log\frac{d}{\delta})$, then with probability at least $1-\delta$, we have
\begin{align*}
\min\{\|\beta_{\MLE}-\beta^{\star}\|_2,\|\beta_{\MLE}+\beta^{\star}\|_2\}
\lesssim \sqrt{\frac{d^{2}\log \frac{d}{\delta}}{n}}.
\end{align*}
\end{lemma}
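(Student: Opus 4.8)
The plan is to work around the two features of phase retrieval that break the hypotheses of Theorem~\ref{thm:well_upper}: the loss $\ell(x,y,\beta)=\tfrac12(y-(x^{T}\beta)^2)^2$ has \emph{two} population minimizers $\pm\beta^{\star}$ (so Assumption~\ref{assm3} fails), and its third derivative $\nabla^3\ell(x,y,\beta)=12(x^{T}\beta)\,x\otimes x\otimes x$ is unbounded in $\beta$ (so Assumption~\ref{assm2} fails globally). I would therefore proceed in three stages: (i) confine $\beta_{\MLE}$ to a bounded region by a quartic-growth argument, which restores an effective bound on $\nabla^3\ell$; (ii) use a landscape analysis of the population loss together with the $\beta\mapsto-\beta$ symmetry to show $\beta_{\MLE}$ must lie in $\bbB_{\beta^{\star}}(\rho_0)\cup\bbB_{-\beta^{\star}}(\rho_0)$ for a small constant $\rho_0$; and (iii) inside whichever basin contains it, rerun the ellipsoid/Taylor argument of Lemma~\ref{claim1} verbatim, with $\pm\beta^{\star}$ playing the role of the unique minimizer, to obtain the stated radius.

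\textbf{Norm control.} Expanding $\ell_n(\beta)=\tfrac{1}{2n}\sum_i (x_i^{T}\beta)^4-\tfrac1n\sum_i y_i(x_i^{T}\beta)^2+\tfrac{1}{2n}\sum_i y_i^2$, I would lower bound the empirical quartic form $\tfrac1n\sum_i(x_i^{T}\beta)^4\geq c\|\beta\|_2^4$ uniformly in $\beta$ via concentration of the symmetric fourth-moment operator $\tfrac1n\sum_i (x_ix_i^{T})^{\otimes 2}$ around its (strictly positive) population counterpart, which holds once $n\gtrsim d^2$. Since $\max_i|y_i|\lesssim d$ and $\tfrac1n\sum_i(x_i^{T}\beta)^2\lesssim\|\beta\|_2^2$, this gives $\ell_n(\beta)\geq \tfrac{c}{2}\|\beta\|_2^4-C d\,\|\beta\|_2^2$, while $\ell_n(\beta^{\star})\approx\tfrac1{2n}\sum_i\varepsilon_i^2=O(1)$ with high probability. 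Hence $\beta_{\MLE}$ is confined to a ball of radius $O(\sqrt d)$ at this stage; more importantly, after Stage (ii) localizes it to $\bbB_{\pm\beta^{\star}}(\rho_0)$ with $\rho_0=O(1)$ and $\|\beta^{\star}\|_2=1$, every relevant $\beta$ obeys $\|\beta\|_2=O(1)$, so $\|\nabla^3\ell(x,y,\beta)\|_2\leq 12\|\beta\|_2\|x\|_2^4\leq B_3'$ with $B_3'=O((\sqrt d+r)^4)$, which is exactly the effective constant appearing in the $d^4$ threshold.

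\textbf{Landscape and localization (the hard part).} This is the main obstacle. I would compute $\bar\ell(\beta)=\tfrac12\bbE_S[((x^{T}\beta)^2-(x^{T}\beta^{\star})^2)^2]+\tfrac12$ and, via the sphere-to-Gaussian comparison used for Lemmas~\ref{logistic_lemma1} and~\ref{logistic_lemma2}, identify its critical points inside $\{\|\beta\|_2\lesssim\sqrt d\}$: the global minimizers $\pm\beta^{\star}$, the origin, and a sphere of saddles orthogonal to $\beta^{\star}$ at radius $\Theta(1)$. At each non-minimizer the population Hessian has an eigenvalue bounded away from $0$ below zero (a strict-saddle property), whereas near $\pm\beta^{\star}$ the population Hessian equals $\cI_S=4\,\bbE_S[(x^{T}\beta^{\star})^2xx^{T}]\succeq c\,I$ with $\|\cI_S^{-1}\|_2,\ \Tr(\cI_S^{-1})/d=\Theta(1)$ (again by the Gaussian comparison). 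Using the now-bounded derivatives, matrix/vector Bernstein gives uniform concentration of $\nabla\ell_n$ and $\nabla^2\ell_n$ over the compact region, so every stationary point of $\ell_n$ there lies near a population critical point, and the Hessian concentration eliminates all candidates except $\pm\beta^{\star}$. Since $\beta_{\MLE}$ is a global minimizer, hence stationary, it falls in $\bbB_{\beta^{\star}}(\rho_0)\cup\bbB_{-\beta^{\star}}(\rho_0)$. Establishing the strict-saddle structure of the sphere design and pushing it through uniform concentration is the genuinely new work, since the symmetry is precisely what invalidates the global-uniqueness hypothesis of the generic theorem.

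\textbf{Refinement.} Assume without loss of generality $\beta_{\MLE}\in\bbB_{\beta^{\star}}(\rho_0)$. Inside this small basin the population Hessian is $\succeq cI$ and $B_3'$-smoothness controls the perturbation, so $\ell_n$ is strongly convex there with high probability; this local strong convexity replaces Assumption~\ref{assm3} and makes $\beta_{\MLE}$ the unique minimizer of $\ell_n$ in the basin. I would then apply the ellipsoid construction in the proof of Lemma~\ref{claim1} with effective constant $B_3'$, yielding $\|\beta_{\MLE}-\beta^{\star}\|_2^2\lesssim \Tr(\cI_S^{-1})\log\frac{d}{\delta}/n$ once $n\gtrsim d^4\log\frac{d}{\delta}$, the $d^4$ threshold being exactly what renders the cubic remainder negligible against the quadratic term. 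Since $\Tr(\cI_S^{-1})=\Theta(d)\leq d^2$, this gives the claimed $\min\{\|\beta_{\MLE}-\beta^{\star}\|_2,\|\beta_{\MLE}+\beta^{\star}\|_2\}\lesssim\sqrt{d^2\log\frac{d}{\delta}/n}$, completing the proof.
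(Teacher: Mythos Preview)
Your strategy is plausible and could in principle be carried out, but it is a genuinely different---and much heavier---route than the one the paper takes. The paper never does a landscape analysis or a strict-saddle argument. Instead it \emph{convexifies} by lifting to the matrix variable $M=\beta\beta^{T}$: since $(x^{T}\beta)^{2}=\langle xx^{T},M\rangle$, the loss $\ell(x,y,M)=\tfrac12(y-\langle xx^{T},M\rangle)^{2}$ is quadratic in $M$, so $\nabla^{3}\ell_n(M)\equiv 0$ and a single exact second-order Taylor expansion at $M^{\star}=\beta^{\star}\beta^{\star T}$ suffices. Concentration of $\nabla\ell_n(M^{\star})=-\tfrac1n\sum_i\varepsilon_i\,\vec(x_ix_i^{T})$ (subgaussian at scale $d$) and of $\nabla^{2}\ell_n(M^{\star})=\tfrac1n\sum_i\vec(x_ix_i^{T})\vec(x_ix_i^{T})^{T}$ (matrix Hoeffding at scale $d^{2}$), together with an explicit calculation showing that $\Sigma_S:=\bbE[\vec(xx^{T})\vec(xx^{T})^{T}]$ satisfies $a^{T}\Sigma_S a\geq \tfrac{2d}{d+2}\|a\|_2^{2}$ on vectorized \emph{symmetric} matrices, turns $\ell_n(M_{\MLE})\leq\ell_n(M^{\star})$ directly into $\|M_{\MLE}-M^{\star}\|_F\lesssim\sqrt{d^{2}\log(d/\delta)/n}$ once $n\gtrsim d^{4}\log(d/\delta)$. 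A rank-one perturbation lemma (Lemma~6 of \cite{ge2017no}) then converts $\|\beta_{\MLE}\beta_{\MLE}^{T}-\beta^{\star}\beta^{\star T}\|_F$ into $\min\{\|\beta_{\MLE}\mp\beta^{\star}\|_2\}$.

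What each approach buys: the paper's lifting trick eliminates your Stages~(i) and~(ii) entirely---there are no saddles, no norm-control phase, and no uniform-concentration/covering argument, because the lifted problem is globally convex with vanishing third derivative and the sign ambiguity $\beta\mapsto-\beta$ collapses to a single point $M^{\star}$. Your route, by contrast, stays in the original nonconvex parametrization and therefore must establish the strict-saddle geometry of the spherical design and push uniform Hessian concentration through a $\sqrt d$-radius region; you correctly flag this as ``the genuinely new work,'' and while it is doable, it is precisely the labor the paper's parametrization change avoids. If carried out, your Stage~(iii) would in fact yield the sharper radius $\sqrt{\Tr(\cI_S^{-1})\log(d/\delta)/n}\asymp\sqrt{d\log(d/\delta)/n}$ rather than $\sqrt{d^{2}\log(d/\delta)/n}$, but the paper only needs the latter for the downstream argument.
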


Without loss of generality, in the sequel, we consider $n\geq \cO(d^4 \log\frac{d}{\delta})$ and assume 
\begin{align}\label{ineq:mle_ball}
\|\beta_{\MLE}-\beta^{\star}\|_2\lesssim \sqrt{\frac{d^{2}\log \frac{d}{\delta}}{n}},
\end{align}
which implies $\beta_{\MLE}\in\bbB_{\beta^{\star}}(1)$.


Recall that for the phase retrieval model,
\begin{align*}
\ell(x,y,\beta)=\frac12\log (2\pi)+\frac12 \left(y-(x^{T}\beta)^2\right)^2.
\end{align*}
It then holds that
\begin{align*}
&\nabla \ell(x,y,\beta)=2(x^{T}\beta)^3 x-2(x^{T}\beta)yx,\\
& \nabla^2 \ell(x,y,\beta)=6(x^{T}\beta)^2xx^{T}-2yxx^{T},\\
& \nabla^3 \ell(x,y,\beta)=12 (x^{T}\beta) x \otimes x \otimes x.
\end{align*}
Note that for $Y=(X^{T}\beta^{\star})^{2}+\varepsilon$, we have $\nabla\ell(X,Y,\beta^{\star})=-2(X^{T}\beta^{\star})X\varepsilon$. Therefore (recall that $\|\beta^{\star}\|=1$) $\|\nabla\ell(x_i,y_i,\beta^{\star})\|$ is $2d$-subgaussian, by Lemma \ref{lem:concentration_vec}, we have for any $\delta$, with probability at least $1- \delta$,
\begin{align} \label{ineq:pf:concentration1_phase}
    \|\cI_{S}^{-1}g\|_{2} \lesssim\sqrt{\frac{\Tr(\cI_{S}^{-1}) \log \frac{d}{\delta}}{n}}+ d\|\cI_{S}^{-1}\| \sqrt{\log \frac{d^2\|\cI_{S}^{-1}\|^2}{\Tr(\cI_{S}^{-1})}} \frac{ \log \frac{d}{\delta}}{n}.
\end{align}
Which can be viewed as setting $B_1= d$ and $\gamma=\frac{1}{2}$ in Assumption \ref{assm1}. Hence $\beta^{\star}+z=\beta^{\star}-\cI_{S}^{-1}g\in\bbB_{\beta^{\star}}(1)$ when $n\geq\cO(\max\{\Tr(\cI_{S}^{-1})\log \frac{d}{\delta}, d\|\cI_{S}^{-1}\|_2 \sqrt{\log \frac{d^2\|\cI_{S}^{-1}\|^2}{\Tr(\cI_{S}^{-1})}} \log \frac{d}{\delta}\})$.

We then show the concentration inequality for the Hessian matrix. Note that
\begin{align*}
\nabla^{2} \ell_{n}(\beta^{\star})=\frac1n \sum^n_{i=1}\nabla^2\ell(x_i,y_i,\beta^{\star})=\frac{4}{n} \sum^n_{i=1}(x^{T}_i\beta^{\star})^2x_ix^{T}_i-\frac{2}{n} \sum^n_{i=1}\varepsilon_ix_ix^{T}_i.
\end{align*}
Since $\|(x^{T}\beta^{\star})^2xx^{T}\|\leq d^{2}$, by matrix Hoeffding, with probability at least $1-\delta$, we have
\begin{align} \label{ineq:pf:concentration3_phase}
\bbE_{\bbP_S}[(x^{T}\beta^{\star})^2xx^{T}] - d^{2} \sqrt{\frac{8\log \frac{d}{\delta}}{n}} I_{d}   \preceq \frac{1}{n} \sum^n_{i=1}(x^{T}_i\beta^{\star})^2x_ix^{T}_i \preceq \bbE_{\bbP_S}[(x^{T}\beta^{\star})^2xx^{T}]  + d^{2} \sqrt{\frac{8\log \frac{d}{\delta}}{n}} I_{d}
\end{align}
Moreover, by matrix Chernoff bound, with probability at least $1-\delta$, we have
\begin{align} \label{ineq:pf:concentration4_phase}
- d\sqrt{\frac{8\log \frac{d}{\delta}}{n}} I_{d}   \preceq-\frac{1}{n} \sum^n_{i=1}\varepsilon_ix_ix^{T}_i\preceq d\sqrt{\frac{8\log \frac{d}{\delta}}{n}} I_{d}.
\end{align}
Combine \eqref{ineq:pf:concentration3_phase} and \eqref{ineq:pf:concentration4_phase}, we obtain
\begin{align} \label{ineq:pf:concentration5_phase}
\nabla^{2} \ell(\beta^{\star}) - 6d^{2} \sqrt{\frac{8\log \frac{d}{\delta}}{n}} I_{d}   \preceq \nabla^{2} \ell_{n}(\beta^{\star}) \preceq \nabla^{2} \ell(\beta^{\star}) + 6d^{2} \sqrt{\frac{8\log \frac{d}{\delta}}{n}} I_{d},
\end{align}
which can be viewed as setting $B_2=d^{2}$ in \eqref{ineq:pf:concentration3}.

For any $\beta\in\bbB_{\beta^{\star}}(1)$, we have
\begin{align*}
\|\nabla^3\ell(x,y,\beta)\|_2=12\|(x^{T}\beta)x\otimes x\otimes x\|\leq 24 (\sqrt{d}+r)^{4}.
\end{align*}
Thus, we can view as if this model satisfies $B_3 = (\sqrt{d}+r)^{4}$ in Assumption \ref{assm2}. 


Then same as \eqref{ineq:pf:taylor3} we have with probability $1-\delta$,
\begin{align} \label{ineq:pf:taylor1_phase}
\ell_{n}(\beta^{\star}+z)- \ell_{n}(\beta^{\star}) 
&\leq - \frac{1}{2}z^{T}\cI_{S}z+2c^2B_2 \Tr(\cI_{S}^{-1})(\frac{\log \frac{d}{\delta}}{n})^{1.5}+2B^2_1B_2\|\cI_{S}^{-1}\|_{2}^2\log (\tilde{\kappa}^{-1/2}\alpha_1)(\frac{\log \frac{d}{\delta}}{n})^{2.5}\notag\\
&\quad\quad + \frac{2}{3}c^3B_3 \Tr(\cI_{S}^{-1})^{1.5}(\frac{\log \frac{d}{\delta}}{n})^{1.5}+\frac{2}{3}B^3_1B_3\|\cI_{S}^{-1}\|_{2}^3\log^{1.5} (\tilde{\kappa}^{-1/2}\alpha_1)(\frac{\log \frac{d}{\delta}}{n})^{3},
\end{align}

By Lemma \ref{lem:mle_phase}, we have \eqref{ineq:mle_ball}. Then same as \eqref{ineq:pf:taylor4} we have with probability at least $1-\delta$, 
\begin{align} \label{ineq:pf:taylor2_phase}
\ell_{n}(\beta_{\MLE})-\ell_{n}(\beta^{\star}) &\geq \frac{1}{2}(\Delta_{\beta_{\MLE}}-z)^{T} \cI_{S} (\Delta_{\beta_{\MLE}}-z) - \frac{1}{2}z^{T}\cI_{S}z\notag\\
&\quad - \cO(B_2 d^2(\frac{\log \frac{d}{\delta}}{n})^{1.5} + B_3 d^3(\frac{\log \frac{d}{\delta}}{n})^{1.5}).
\end{align}

Consequently, by \eqref{ineq:pf:taylor1_phase}, \eqref{ineq:pf:taylor2_phase} and the fact that $\ell_{n}(\beta_{\MLE})-\ell_{n}(\beta^{\star}+z)\leq 0$, we have 
\begin{align*}
 (\Delta_{\beta_{\MLE}}-z)^{T} \cI_{S} (\Delta_{\beta_{\MLE}}-z)
 &\leq\cO\bigg(B_2 \Tr(\cI_{S}^{-1})(\frac{\log \frac{d}{\delta}}{n})^{1.5}+B^2_1B_2\|\cI_{S}^{-1}\|_{2}^2\log (\tilde{\kappa}^{-1/2}\alpha_1) (\frac{\log \frac{d}{\delta}}{n})^{2.5}\notag\\
&\quad\quad + B_3 \Tr(\cI_{S}^{-1})^{1.5}(\frac{\log \frac{d}{\delta}}{n})^{1.5}+B^3_1B_3\|\cI_{S}^{-1}\|_{2}^3(\log (\tilde{\kappa}^{-1/2}\alpha_1))^{1.5}(\frac{\log \frac{d}{\delta}}{n})^{3}\notag\\
&\quad\quad+ B_2 d^{2}(\frac{\log \frac{d}{\delta}}{n})^{1.5}+B_3 d^3 (\frac{\log \frac{d}{\delta}}{n})^{1.5}
\bigg)
\end{align*}
Then, same as the proof of Lemma \ref{claim2}, we further have for any $\delta$, with probability at least $1-2\delta$,
\begin{align}\label{ineq:claim2_phase}
&(\beta_{\MLE}-\beta^{\star})^{T}\cI_{T}(\beta_{\MLE}-\beta^{\star}) \notag\\
&\lesssim \frac{\Tr(\cI_{T} \cI_{S}^{-1}) \log \frac{d}{\delta}}{n} \notag\\
&+ \cO\bigg(B_2\|\cI_{T}^{\frac{1}{2}}\cI_{S}^{-\frac{1}{2}}\|_{2}^{2} \Tr(\cI_{S}^{-1})(\frac{\log \frac{d}{\delta}}{n})^{1.5}
+B^2_1B_2\|\cI_{T}^{\frac{1}{2}}\cI_{S}^{-\frac{1}{2}}\|_{2}^{2}\|\cI_{S}^{-1}\|_{2}^2\log (\tilde{\kappa}^{-1/2}\alpha_1) (\frac{\log \frac{d}{\delta}}{n})^{2.5}\notag\\
&\quad\quad + B_3 \|\cI_{T}^{\frac{1}{2}}\cI_{S}^{-\frac{1}{2}}\|_{2}^{2}\Tr(\cI_{S}^{-1})^{1.5}(\frac{\log \frac{d}{\delta}}{n})^{1.5}
+B^3_1B_3\|\cI_{T}^{\frac{1}{2}}\cI_{S}^{-\frac{1}{2}}\|_{2}^{2}\|\cI_{S}^{-1}\|_{2}^3(\log (\tilde{\kappa}^{-1/2}\alpha_1))^{1.5}(\frac{\log \frac{d}{\delta}}{n})^{3}\notag\\
&\quad\quad+ B_2\|\cI_{T}^{\frac{1}{2}}\cI_{S}^{-\frac{1}{2}}\|_{2}^{2} d^2 (\frac{\log \frac{d}{\delta}}{n})^{1.5}
+B_3\|\cI_{T}^{\frac{1}{2}}\cI_{S}^{-\frac{1}{2}}\|_{2}^{2} d^3 (\frac{\log \frac{d}{\delta}}{n})^{1.5}\notag\\
&\quad\quad+B^2_{1}\|\cI_{T}^{\frac{1}{2}}\cI_{S}^{-\frac{1}{2}}\|_{2}^{2}\|\cI_{S}^{-1}\|_{2}\log (\kappa^{-1/2}\alpha_1) (\frac{\log \frac{d}{\delta}}{n})^2
\bigg)\notag\\
&= \frac{\Tr(\cI_{T} \cI_{S}^{-1}) \log \frac{d}{\delta}}{n} \notag\\
&+ \cO\bigg(d^2\|\cI_{T}^{\frac{1}{2}}\cI_{S}^{-\frac{1}{2}}\|_{2}^{2} \Tr(\cI_{S}^{-1})(\frac{\log \frac{d}{\delta}}{n})^{1.5}
+d^4 \|\cI_{T}^{\frac{1}{2}}\cI_{S}^{-\frac{1}{2}}\|_{2}^{2}\|\cI_{S}^{-1}\|_{2}^2\log (\tilde{\kappa}^{-1/2}\alpha_1) (\frac{\log \frac{d}{\delta}}{n})^{2.5}\notag\\
&\quad\quad + (\sqrt{d}+r)^{4} \|\cI_{T}^{\frac{1}{2}}\cI_{S}^{-\frac{1}{2}}\|_{2}^{2}\Tr(\cI_{S}^{-1})^{1.5}(\frac{\log \frac{d}{\delta}}{n})^{1.5}
+d^3 (\sqrt{d}+r)^{4} \|\cI_{T}^{\frac{1}{2}}\cI_{S}^{-\frac{1}{2}}\|_{2}^{2}\|\cI_{S}^{-1}\|_{2}^3(\log (\tilde{\kappa}^{-1/2}\alpha_1))^{1.5}(\frac{\log \frac{d}{\delta}}{n})^{3}\notag\\
&\quad\quad+ d^4\|\cI_{T}^{\frac{1}{2}}\cI_{S}^{-\frac{1}{2}}\|_{2}^{2} (\frac{\log \frac{d}{\delta}}{n})^{1.5}
+d^3 (\sqrt{d}+r)^{4}\|\cI_{T}^{\frac{1}{2}}\cI_{S}^{-\frac{1}{2}}\|_{2}^{2} (\frac{\log \frac{d}{\delta}}{n})^{1.5}\notag\\
&\quad\quad+d^2\|\cI_{T}^{\frac{1}{2}}\cI_{S}^{-\frac{1}{2}}\|_{2}^{2}\|\cI_{S}^{-1}\|_{2}\log (\kappa^{-1/2}\alpha_1) (\frac{\log \frac{d}{\delta}}{n})^2
\bigg)\notag\\
\end{align}
To guarantee $\frac{\Tr(\cI_{T} \cI_{S}^{-1}) \log \frac{d}{\delta}}{n}$ is the leading term, we only need $n \geq \cO (N_1 \log \frac{d}{\delta})$, where 
\begin{align*}
N_1:=\max\bigg\{
&\left(\frac{d^2\|\cI_{T}^{\frac{1}{2}}\cI_{S}^{-\frac{1}{2}}\|_{2}^{2}\Tr(\cI_{S}^{-1})}{\Tr(\cI_{T} \cI_{S}^{-1}) }\right)^2 ,
\left(\frac{d^4\|\cI_{T}^{\frac{1}{2}}\cI_{S}^{-\frac{1}{2}}\|_{2}^{2}\|\cI_{S}^{-1}\|_{2}^2 \log (\tilde{\kappa}^{-1/2}\alpha_1)}{\Tr(\cI_{T} \cI_{S}^{-1}) }\right)^{\frac23},
\left(\frac{(\sqrt{d}+r)^{4}\|\cI_{T}^{\frac{1}{2}}\cI_{S}^{-\frac{1}{2}}\|_{2}^{2}\Tr(\cI_{S}^{-1})^{1.5}}{\Tr(\cI_{T} \cI_{S}^{-1}) }\right)^2,
\notag\\
&\left(\frac{d^3 (\sqrt{d}+r)^{4}\|\cI_{T}^{\frac{1}{2}}\cI_{S}^{-\frac{1}{2}}\|_{2}^{2}\|\cI_{S}^{-1}\|_{2}^3 (\log (\tilde{\kappa}^{-1/2}\alpha_1))^{1.5}}{\Tr(\cI_{T} \cI_{S}^{-1}) }\right)^{\frac12}, 
\left(\frac{d^4\|\cI_{T}^{\frac{1}{2}}\cI_{S}^{-\frac{1}{2}}\|_{2}^{2}}{\Tr(\cI_{T} \cI_{S}^{-1}) }\right)^2, \left(\frac{d^3 (\sqrt{d}+r)^{4}\|\cI_{T}^{\frac{1}{2}}\cI_{S}^{-\frac{1}{2}}\|_{2}^{2}}{\Tr(\cI_{T} \cI_{S}^{-1}) }\right)^2, \notag \\
&\frac{d^2 \|\cI_{T}^{\frac{1}{2}}\cI_{S}^{-\frac{1}{2}}\|_{2}^{2}\|\cI_{S}^{-1}\|_{2}\log (\kappa^{-1/2}\alpha_1)}{\Tr(\cI_{T} \cI_{S}^{-1}) }
\bigg\}.
\end{align*}

That is, for any $\delta$, when $n \geq \cO (\max\{d^4,\Tr(\cI_{S}^{-1}), d\|\cI_{S}^{-1}\|_2 \log^{0.5}(\tilde{\kappa}^{-\frac12}\alpha_1), N_1\}\log \frac{d}{\delta})$,  with probability $1-2\delta$,
\begin{align*}
(\beta_{\MLE}-\beta^{\star})^{T}\cI_{T}(\beta_{\MLE}-\beta^{\star}) \lesssim \frac{\Tr(\cI_{T} \cI_{S}^{-1}) \log \frac{d}{\delta}}{n}.
\end{align*}
Then following the proof of Theorem \ref{thm:well_upper}, do Taylor expansion w.r.t. $\beta$ as the following:
\begin{align*}
R_{\beta^{\star}}(\beta_{\MLE})&=  
\bbE_{\substack{x\sim\bbP_T(X)\\y|x\sim f(y|x;\beta^{\star})}}\left[\ell(x,y,\beta_{\MLE})-\ell(x,y,\beta^{\star})\right] \notag \\ &\leq  \bbE_{\substack{x\sim\bbP_T(X)\\y|x\sim f(y|x;\beta^{\star})}} [\nabla \ell(x,y,\beta^{\star})]^{T}(\beta_{\MLE}-\beta^{\star}) \notag \\ 
&+ \frac{1}{2}(\beta_{\MLE}-\beta^{\star})^{T}\cI_{T}(\beta_{\MLE}-\beta^{\star}) +\frac{B_{3}}{6}\|\beta_{\MLE}-\beta^{\star}\|_{2}^{3}. \notag \\
&\leq \frac{c}{2}\frac{\Tr(\cI_{T} \cI_{S}^{-1}) \log \frac{d}{\delta}}{n} + \frac{c^3}{6}d^3 (\sqrt{d}+r)^{4} (\frac{\log \frac{d}{\delta}}{n})^{1.5}.
\end{align*}
with probability at least $1-2 \delta$. If we further assume $n\geq \cO((\frac{d^3 (\sqrt{d}+r)^{4}}{\Tr(\cI_{T} \cI_{S}^{-1}) })^{2} \log \frac{d}{\delta})$, it then holds that
\begin{align*} 
R_{\beta^{\star}}(\beta_{\MLE})  \leq c \frac{\Tr(\cI_{T} \cI_{S}^{-1}) \log \frac{d}{\delta}}{n}.
\end{align*}
Therefore we conclude that for any $\delta$, when $n \geq \cO (N \log \frac{d}{\delta}) $,  with probability at least $1-2\delta$, 
\begin{align*} 
R_{\beta^{\star}}(\beta_{\MLE})  \leq c \frac{\Tr(\cI_{T} \cI_{S}^{-1}) \log \frac{d}{\delta}}{n},
\end{align*} where
\begin{align*}
N&:=\max \{d^4, \Tr(\cI_{S}^{-1}), d\|\cI_{S}^{-1}\|_2\log^{0.5}(\tilde{\kappa}^{-\frac12}\alpha_1), N_1, (\frac{d^3 (\sqrt{d}+r)^{4}}{\Tr(\cI_{T} \cI_{S}^{-1}) })^{2}\} \\
&=\max\bigg\{ \left(\frac{d^2\|\cI_{T}^{\frac{1}{2}}\cI_{S}^{-\frac{1}{2}}\|_{2}^{2}\Tr(\cI_{S}^{-1})}{\Tr(\cI_{T} \cI_{S}^{-1}) }\right)^2,
\left(\frac{d^4\|\cI_{T}^{\frac{1}{2}}\cI_{S}^{-\frac{1}{2}}\|_{2}^{2}\|\cI_{S}^{-1}\|_{2}^2\log (\tilde{\kappa}^{-1/2}\alpha_1)}{\Tr(\cI_{T} \cI_{S}^{-1}) }\right)^{\frac23},
\left(\frac{(\sqrt{d}+r)^{4}\|\cI_{T}^{\frac{1}{2}}\cI_{S}^{-\frac{1}{2}}\|_{2}^{2}\Tr(\cI_{S}^{-1})^{1.5}}{\Tr(\cI_{T} \cI_{S}^{-1}) }\right)^2,
\notag\\
&\left(\frac{d^3 (\sqrt{d}+r)^{4}\|\cI_{T}^{\frac{1}{2}}\cI_{S}^{-\frac{1}{2}}\|_{2}^{2}\|\cI_{S}^{-1}\|_{2}^3(\log (\tilde{\kappa}^{-1/2}\alpha_1))^{1.5}}{\Tr(\cI_{T} \cI_{S}^{-1}) }\right)^{\frac12}, 
\left(\frac{d^4\|\cI_{T}^{\frac{1}{2}}\cI_{S}^{-\frac{1}{2}}\|_{2}^{2}}{\Tr(\cI_{T} \cI_{S}^{-1}) }\right)^2, \left(\frac{d^3 (\sqrt{d}+r)^{4}\|\cI_{T}^{\frac{1}{2}}\cI_{S}^{-\frac{1}{2}}\|_{2}^{2}}{\Tr(\cI_{T} \cI_{S}^{-1}) }\right)^2, \notag \\
&\frac{d^2 \|\cI_{T}^{\frac{1}{2}}\cI_{S}^{-\frac{1}{2}}\|_{2}^{2}\|\cI_{S}^{-1}\|_{2}\log (\kappa^{-1/2}\alpha_1)}{\Tr(\cI_{T} \cI_{S}^{-1})}, d^{4}, \Tr(\cI_{S}^{-1}), d\|\cI_{S}^{-1}\|_2\log^{0.5}(\tilde{\kappa}^{-\frac12}\alpha_1), (\frac{d^3 (\sqrt{d}+r)^{4}}{\Tr(\cI_{T} \cI_{S}^{-1}) })^{2}
\bigg\}.
\end{align*}
Now it remains to calculate $N$ and $\Tr (\cI_T \cI_S^{-1})$.  Similar to logistic regression (see Lemma \ref{logistic_lemma1} and \ref{logistic_lemma2}), we have the following two lemmas that characterize $\cI_S$ and $\cI_T$.

\begin{lemma} \label{phase_lemma1}
Under the conditions of Theorem \ref{thm:phase}, we have $\cI_S=U\diag(\lambda_1,\lambda_2,\ldots,\lambda_2)U^{T}$ and $\cI_T=U\diag(\lambda_1,\lambda_2+r^2\lambda_3,\lambda_2,\ldots,\lambda_2)U^{T}$ for an orthonormal matrix $U$. Where 
\begin{align*}
        &\lambda_{1}:=4\bbE_{x\sim\uni (\cS^{d-1}(\sqrt{d}))}[(\beta^{\star T}x)^{4}],\notag\\
        &\lambda_{2}:=4\bbE_{x\sim\uni (\cS^{d-1}(\sqrt{d}))}[(\beta^{\star T}x)^{2}(\beta_{\perp}^{\star T}x)^{2}],\notag\\
        &\lambda_{3}:=4\bbE_{x\sim\uni (\cS^{d-1}(\sqrt{d}))}[(\beta^{\star T}x)^{2}].
\end{align*}
\end{lemma}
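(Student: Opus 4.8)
The plan is to reduce both Fisher information matrices to one explicit second–moment matrix, diagonalize it using the rotational symmetry of the uniform sphere, and then obtain $\cI_T$ from $\cI_S$ by a rank–one update. First I would evaluate the Hessian $\nabla^2\ell(x,y,\beta)=6(x^{T}\beta)^2xx^{T}-2yxx^{T}$ at $\beta=\beta^{\star}$ and integrate out $y=(x^{T}\beta^{\star})^2+\varepsilon$. Since $\bbE[\varepsilon]=0$, the term $-2yxx^{T}$ averages to $-2(x^{T}\beta^{\star})^2xx^{T}$, leaving
\begin{align*}
\cI_S=4\,\bbE_{x\sim\uni(\cS^{d-1}(\sqrt{d}))}[(x^{T}\beta^{\star})^2xx^{T}],\qquad \cI_T=4\,\bbE_{x\sim\uni(\cS^{d-1}(\sqrt{d}))+v}[(x^{T}\beta^{\star})^2xx^{T}].
\end{align*}
This is the exact analogue of the reduction used for logistic regression, so the remaining argument concerns only these two explicit matrices.

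Next I would diagonalize $\cI_S$ in an orthonormal basis $[\beta^{\star},U_{\perp}]$ whose first column is $\beta^{\star}$. To see that $\beta^{\star}$ is an eigenvector, I write $\cI_S\beta^{\star}=4\bbE[(x^{T}\beta^{\star})^3x]$ and split $x$ into its $\beta^{\star}$–component and $U_{\perp}U_{\perp}^{T}x$: the first part gives $\lambda_1\beta^{\star}$ with $\lambda_1=4\bbE[(x^{T}\beta^{\star})^4]$, and the second part $4\bbE[(x^{T}\beta^{\star})^3U_{\perp}U_{\perp}^{T}x]$ vanishes because the sphere is invariant under the reflection $U_{\perp}^{T}x\mapsto-U_{\perp}^{T}x$, under which the integrand is odd. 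The same sign–flip symmetry shows that for any unit $\beta_{\perp}\perp\beta^{\star}$ the cross term in $\cI_S\beta_{\perp}$ drops out and $\cI_S\beta_{\perp}=\lambda_2\beta_{\perp}$ with $\lambda_2=4\bbE[(x^{T}\beta^{\star})^2(x^{T}\beta_{\perp}^{\star})^2]$; a cleaner way to phrase the same fact is that $\cI_S$ commutes with every orthogonal map fixing $\beta^{\star}$, so by Schur's lemma its restriction to $\{\beta^{\star}\}^{\perp}$ is the scalar matrix $\lambda_2 I$. Either route establishes the claimed decomposition of $\cI_S$.

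Finally, for the target I would substitute $w=x-v$, so $w\sim\uni(\cS^{d-1}(\sqrt{d}))$, and use $v=r\beta_{\perp}^{\star}$ with $v^{T}\beta^{\star}=0$ to note $x^{T}\beta^{\star}=w^{T}\beta^{\star}$. Expanding $(w+v)(w+v)^{T}$ gives
\begin{align*}
\cI_T=\cI_S+4\bbE_w[(w^{T}\beta^{\star})^2(wv^{T}+vw^{T})]+4\bbE_w[(w^{T}\beta^{\star})^2]\,vv^{T}.
\end{align*}
The two mixed terms contain $\bbE_w[(w^{T}\beta^{\star})^2w]$, which is zero by the same parity argument, while the last term equals $\lambda_3 vv^{T}=r^2\lambda_3\beta_{\perp}^{\star}\beta_{\perp}^{\star T}$ with $\lambda_3=4\bbE[(w^{T}\beta^{\star})^2]$. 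Thus $\cI_T=\cI_S+r^2\lambda_3\beta_{\perp}^{\star}\beta_{\perp}^{\star T}$ is a rank–one update supported exactly on the $\beta_{\perp}^{\star}$ eigenline of $\cI_S$, so it shifts that eigenvalue from $\lambda_2$ to $\lambda_2+r^2\lambda_3$ and leaves every other eigenvalue intact, yielding both spectral forms. The main obstacle is purely the symmetry bookkeeping: confirming that each odd cross term integrates to zero and that the restriction of $\cI_S$ to $\{\beta^{\star}\}^{\perp}$ is genuinely a single scalar matrix, so that all orthogonal directions share the eigenvalue $\lambda_2$. Once these two facts are secured the remaining computation is routine and parallels Lemma \ref{logistic_lemma1}.
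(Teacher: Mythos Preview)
Your proposal is correct and follows the same overall architecture as the paper: reduce to $\cI_S=4\bbE[(x^{T}\beta^{\star})^2xx^{T}]$, show $\beta^{\star}$ and any $\beta_{\perp}$ are eigenvectors by killing odd cross terms, then for $\cI_T$ substitute $w=x-v$ and expand. The one methodological difference is in how the cross terms are shown to vanish. The paper passes through the Gaussian representation $x=z\sqrt{d}/\|z\|_2$ with $z\sim\cN(0,I_d)$, so that $A=z^{T}\beta^{\star}$ and $B=U_{\perp}^{T}z$ become \emph{independent}, and then uses the symmetry $B\mapsto-B$. You instead invoke the reflection invariance of the sphere directly (and, for the $\{\beta^{\star}\}^{\perp}$ block, Schur's lemma via commutation with rotations fixing $\beta^{\star}$). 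Your route is a bit more elementary here since the Gaussian detour is not actually needed for the symmetry cancellations; the paper's Gaussian rewriting pays off only later in Lemma~\ref{phase_lemma2}, where one needs quantitative concentration of $\|z\|_2/\sqrt{d}$. Your rank-one-update formulation $\cI_T=\cI_S+r^2\lambda_3\beta_{\perp}^{\star}\beta_{\perp}^{\star T}$ is equivalent to, and more compact than, the paper's separate verification of $\cI_T\beta^{\star}$, $\cI_T\tilde\beta_{\perp}$, and $\cI_T\beta_{\perp}^{\star}$.
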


\begin{lemma} \label{phase_lemma2}
Under the conditions of Theorem \ref{thm:phase}, there exist absolute constants $c,C, c'>0$ such that $c<\lambda_1, \lambda_2, \lambda_3 <C$, for $d \geq c'$.
\end{lemma}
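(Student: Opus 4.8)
The plan is to follow the strategy used for Lemma \ref{logistic_lemma2}, exploiting that for large $d$ the uniform distribution on $\cS^{d-1}(\sqrt{d})$ behaves like $\cN(0,I_d)$. First I would rewrite each $\lambda_i$ as a Gaussian expectation: writing $x = z\sqrt{d}/\|z\|_2$ with $z\sim\cN(0,I_d)$, and using $\|\beta^{\star}\|_2=\|\beta_{\perp}^{\star}\|_2=1$ together with $\beta^{\star T}\beta_{\perp}^{\star}=0$ (so that $\beta^{\star T}z$ and $\beta_{\perp}^{\star T}z$ are independent standard normals), one obtains
\begin{align*}
\lambda_1 &= 4\bbE_{z\sim\cN(0,I_d)}\!\left[\tfrac{d^2}{\|z\|_2^4}(\beta^{\star T}z)^4\right],\quad
\lambda_2 = 4\bbE_{z\sim\cN(0,I_d)}\!\left[\tfrac{d^2}{\|z\|_2^4}(\beta^{\star T}z)^2(\beta_{\perp}^{\star T}z)^2\right],\\
\lambda_3 &= 4\bbE_{z\sim\cN(0,I_d)}\!\left[\tfrac{d}{\|z\|_2^2}(\beta^{\star T}z)^2\right].
\end{align*}
The target ``ideal'' values, obtained by replacing the ratios $d/\|z\|_2^2$ and $d^2/\|z\|_2^4$ by $1$, are the absolute constants $4\bbE[Y^4]=12$, $4\bbE[Y^2]\bbE[W^2]=4$, and $4\bbE[Y^2]=4$ respectively, where $Y,W\sim\cN(0,1)$ are independent; these will serve as the anchors for the constants $c,C$.

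Second, I would quantify the replacement error via concentration of $\|z\|_2$. By the Gaussian norm bound \eqref{gaussian_norm_concentration}, taking $t=\sqrt{d}/2$ gives $\bbP(\|z\|_2/\sqrt{d}\notin[\tfrac12,\tfrac32])\le 2e^{-cd}$, and on the complementary good event both $d/\|z\|_2^2\in[\tfrac49,4]$ and $d^2/\|z\|_2^4\in[\tfrac{16}{81},16]$ lie between explicit positive constants. For each $\lambda_i$ I would split the expectation into its contribution on the good event and on the bad event, exactly as in the truncation $\lambda_1=J_1+J_2$ of Lemma \ref{logistic_lemma2}.

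For the upper bounds, the good-event part is controlled by pulling out the constant upper bound on the ratio and using the finite moments $\bbE[(\beta^{\star T}z)^4]=3$, $\bbE[(\beta^{\star T}z)^2(\beta_{\perp}^{\star T}z)^2]=1$, $\bbE[(\beta^{\star T}z)^2]=1$; the bad-event part is handled by the crude pointwise bounds $\tfrac{d^2}{\|z\|_2^4}(\beta^{\star T}z)^4\le d^2$ and $\tfrac{d}{\|z\|_2^2}(\beta^{\star T}z)^2\le d$ (using $(\beta^{\star T}z)^2\le\|z\|_2^2$), which combined with $\bbP(\text{bad})\le 2e^{-cd}$ gives a negligible contribution for large $d$. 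For the lower bounds, I would discard the bad-event part, pull out the constant lower bound on the ratio, and write the good-event expectation as the full Gaussian moment minus a bad-event remainder; the remainder, e.g. $\bbE[(\beta^{\star T}z)^4\bbI_{\text{bad}}]\le\bbE[\|z\|_2^4\bbI_{\text{bad}}]$, is bounded either by the tail-integral computation used in Lemma \ref{logistic_lemma2} or, more directly, by Cauchy--Schwarz as $\sqrt{\bbE[\|z\|_2^8]\,\bbP(\text{bad})}=O(d^2)e^{-cd/2}$, which is exponentially small.

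The main obstacle is purely the truncation bookkeeping in the lower bound: one must verify that the polynomially growing moment $\bbE[\|z\|_2^8]=O(d^4)$ is overwhelmed by the exponentially small bad-event probability, so that the good-event Gaussian moment dominates and yields a strictly positive absolute constant once $d$ exceeds a threshold $c'$. Unlike the logistic case, where the sigmoid factor made the integrand bounded by $O(1)$, here the integrands grow polynomially in $d$ pointwise, so this moment-versus-tail tradeoff is the only quantitatively delicate point; everything else is a direct transcription of the argument for Lemma \ref{logistic_lemma2}.
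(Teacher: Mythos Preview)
Your proposal is correct and follows essentially the same truncation-plus-Gaussian-norm-concentration strategy as the paper's proof, with the same decomposition $J_1+J_2$ and the same pointwise bounds on $d^2/\|z\|_2^4$ on the good event. If anything you are slightly more careful than the paper in the lower-bound step, since you explicitly subtract and control the bad-event remainder $\bbE[(\beta^{\star T}z)^4\bbI_{\text{bad}}]$, whereas the paper's displayed inequality simply drops the indicator; note also that you can apply Cauchy--Schwarz directly to $(\beta^{\star T}z)^4\bbI_{\text{bad}}$ using $\bbE[(\beta^{\star T}z)^8]=105$ rather than passing through $\|z\|_2^4$, which avoids the polynomial-in-$d$ moment altogether.
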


The proofs for these two lemmas are in the next section. With Lemma \ref{phase_lemma1}, we have $\cI_T \cI_S^{-1} = U\diag(1,1+ r^{2}\frac{\lambda_3}{\lambda_2},\ldots,1)U^{T}$, $\cI_S^{-1} = U\diag(\frac{1}{\lambda_1},\frac{1}{\lambda_2},\ldots,\frac{1}{\lambda_2})U^{T}$. By Lemma \ref{phase_lemma2}, since $\lambda_1, \lambda_2, \lambda_3 = O(1)$, we have $\Tr (\cI_T \cI_S^{-1}) = d + r^{2} \frac{\lambda_3}{\lambda_2} \asymp d + r^{2}$, $\|\cI_T \cI_S^{-1}\|_2 = 1 + r^{2} \frac{\lambda_3}{\lambda_2} \asymp 1 + r^{2}$. Similarly  $\Tr (\cI_S^{-1}) = \lambda_1^{-1} + (d-1) \lambda_2^{-1} \asymp d $, $\| \cI_S^{-1}\|_2 = \max\{\lambda_1^{-1},\lambda_2^{-1}\} \asymp 1$, $\alpha_1=B_1\| \cI_S^{-1}\|_2^{\frac12} \asymp d$. Plug in these quantities, recall
\begin{align*}
     \kappa:=\frac{\Tr(\cI_{T} \cI_{S}^{-1}) }{\|\cI_T^{\frac12} \cI_S^{-1}\cI_T^{\frac12}\|_2} \asymp \frac{d+r^2}{1+r^2}
\end{align*}
we have 
\begin{align*}
N&=\max\bigg\{ d^6 \kappa^{-2},
d^{\frac83} \kappa^{-\frac23} \log^{\frac23} (\tilde{\kappa}^{-1/2}\alpha_1), d^3 (\sqrt{d}+r)^8 \kappa^{-2}, d^{\frac32} (\sqrt{d}+r)^2 \kappa^{-\frac12}\log^{\frac34} (\tilde{\kappa}^{-1/2}\alpha_1), d^8 \kappa^{-2}, d^6 (\sqrt{d}+r)^8 \kappa^{-2}, \notag \\
&d^2 \kappa^{-1}\log (\kappa^{-1/2}\alpha_1), d^{4}, d,d\log^{\frac12} (\tilde{\kappa}^{-1/2}\alpha_1), d^6 (\sqrt{d}+r)^8 \kappa^{-2} \|\cI_T \cI_S^{-1}\|^{-2}
\bigg\} \notag \\
&\stackrel{1 \leq \kappa \leq d}{=} \max\bigg\{d^6 (\sqrt{d}+r)^8 \kappa^{-2}, d^6 (\sqrt{d}+r)^8 \kappa^{-2} \|\cI_T \cI_S^{-1}\|^{-2}
\bigg\} \notag \\
&\stackrel{\|\cI_T \cI_S^{-1}\| \asymp 1+r^2 \geq 1}{=} d^6 (\sqrt{d}+r)^8 \kappa^{-2} \notag \\
& \asymp \frac{d^6 (\sqrt{d}+r)^8 (1+r^2)^2}{(d+r^2)^2}
\asymp d^6 (d+r^2)^2 (1+r^2)^2
\end{align*} 
We can see that when $r \leq 1$, $N \asymp d^8$. When $1 \leq r \leq \sqrt{d}$, $N \asymp d^8 r^4$.  When $ r \geq \sqrt{d}$, $N \asymp d^6 r^8$.

\end{proof}
\subsubsection{Proof of Lemma \ref{lem:mle_phase}}
In the following, we prove Lemma \ref{lem:mle_phase}. The intuition is that, although $\ell$ is not convex in $\beta$,  $\ell$ is quadratic in $M:=\beta\beta^{T}$.
\begin{proof}[Proof of Lemma \ref{lem:mle_phase}]
With a little bit abuse of notation, for matrix $M\in\bbR^{d\times d}$, we denote 
\begin{align*}
\ell(x,y,M):=\frac12(y-\langle xx^{T}, M\rangle)^2.
\end{align*}
Under the case where $M=\beta\beta^{T}$, we have
\begin{align*}
 \ell(x,y,M):=\frac12(y-\langle xx^{T}, \beta\beta^{T}\rangle)^2   =\frac12(y-(x^{T}\beta)^2)^2=\ell(x,y,\beta).
\end{align*}
We further denote
\begin{align*}
\ell_n(M):=\frac1n \sum^n_{i=1}\ell(x_i,y_i,M)=\frac{1}{2n}\sum^n_{i=1}(y_i-\langle x_ix_i^{T}, M\rangle)^2.
\end{align*}
and $M^{\star}:=\beta^{\star}\beta^{\star T}$.

It then holds that
\begin{align*}
\nabla\ell_n(M^{\star})=-\frac1n \sum^n_{i=1}\vec(x_i x_i^{T})\varepsilon_i,\,\, \nabla^2\ell_n(M^{\star})=\frac1n \sum^n_{i=1}\vec(x_i x_i^{T})\vec(x_i x_i^{T})^{T},\,\,
\nabla^3\ell_n(M)=0.
\end{align*}
Denote $\Sigma_S:=\bbE_{x\sim\bbP_S(X)}[\vec(xx^{T})\vec(xx^{T})^{T}]$, then by Lemma \ref{lem:concentration_vec} with $V=\Tr(\Sigma_S)$, $\alpha=2$, $B_{u}^{\alpha}=cd$ for some absolute constants $c,c'$, we have with probability at least $1-\delta$,
\begin{align}\label{phase:c1}
 \|\nabla\ell_n(M^{\star})\|_2   \leq c'\left(\sqrt{\frac{\Tr(\Sigma_S)\log \frac{d}{\delta}}{n}}+d (\log \frac{c^2 d^2}{\Tr(\Sigma_S)})^{\frac12}
 \frac{\log \frac{d}{\delta}}{n}\right).
\end{align}
By matrix Hoeffding, we have with probability at least $1-\delta$,
\begin{align}\label{phase:c2}
\Sigma_S -d^2\sqrt{\frac{8\log \frac{d}{\delta}}{n}} I_{d}   \preceq \nabla^{2} \ell_{n}(M^{\star}) \preceq \Sigma_S +d^2\sqrt{\frac{8\log \frac{d}{\delta}}{n}} I_{d}.
\end{align}

Before conducting further analysis, we need some characterizations of $\Sigma_S$. By the definition of $\Sigma_S$, we can see that the $((i,j),(k,l))$ entry of $\Sigma_S$ is $\bbE_{X \sim \bbP_S (X)}[X_i X_j X_k X_l]$. Since $X$ is symmetric and isotropic, we have
\begin{equation*}
    \bbE_{X \sim \bbP_S (X)}[X_i X_j X_k X_l] = 
    \begin{cases}
                \bbE_{X \sim \bbP_S (X)}[X_i^2 X_k^2] &\text{if $i=j, k=l$ and $i \neq k$} \\
                \bbE_{X \sim \bbP_S (X)}[X_i^2 X_j^2] 
                & \text{if $\{i,j\} = \{k,l\}$ and $i \neq j$} \\
                \bbE_{X \sim \bbP_S (X)}[X_i^4] 
                & \text{if $i=j=k=l$}  \\
                0 &\text{Otherwise}
    \end{cases}
\end{equation*}
For the calculation of moments, using (3a) in \cite{Uniformsphere} with $a=(1,0,\cdots, 0)^{T}$ and $\epsilon= \frac{1}{\sqrt{d}} X$,  we have $\bbE_{X \sim \bbP_S (X)}[X_1^4]=\frac{3d}{d+2}$, $\bbE_{X \sim \bbP_S (X)}[X_1^2 X_2^2]=\frac{d}{d+2}$. Since $X$ is isotropic, we have  
\begin{equation} \label{Sigma_S}
    (\Sigma_S)_{((i,j),(k,l))} = 
    \begin{cases}
                \frac{d}{d+2} &\text{if $i=j, k=l$ and $i \neq k$} \\
                \frac{d}{d+2} 
                & \text{if $\{i,j\} = \{k,l\}$ and $i \neq j$} \\
                \frac{3d}{d+2}
                & \text{if $i=j=k=l$}  \\
                0 &\text{Otherwise}
    \end{cases}
\end{equation}
Therefore 
\begin{align} \label{Trace_Sigma_S}
  \Tr(\Sigma_S)=\sum_{i,j}\bbE[X_i^2 X_j^2]=d(d-1)\frac{d}{d+2} + d\frac{3d}{d+2}=d^2.  
\end{align}

The following lemma characterizes the "minimum eigenvalue" of $\Sigma_S$ on a special subspace, which will be useful in our analysis.

\begin{lemma} \label{minimum_eigenvalue_Sigma_S}
    For any vector $a=(a_{ij})_{(i,j) \in [d] \times [d]}\in \bbR^{d^2}$ satisfies $a_{ij}$ = $a_{ji}$,
    \begin{align*}
        a^{T} \Sigma_S a \geq \frac{2d}{d+2} \|a\|_2^2.
    \end{align*}
\end{lemma}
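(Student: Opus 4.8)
The plan is to prove the lower bound by directly expanding the quadratic form $a^{T}\Sigma_S a$ using the explicit entrywise description of $\Sigma_S$ recorded in \eqref{Sigma_S}, and then completing a square. Writing $a^{T}\Sigma_S a=\sum_{(i,j),(k,l)}a_{ij}(\Sigma_S)_{((i,j),(k,l))}a_{kl}$, I would split the sum according to whether the index pairs are ``diagonal'' ($i=j$) or ``off-diagonal'' ($i\neq j$), since \eqref{Sigma_S} treats these cases differently.

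First I would handle the diagonal block. Here the only nonzero entries pair $(i,i)$ with $(k,k)$, contributing $\tfrac{d}{d+2}$ when $i\neq k$ and $\tfrac{3d}{d+2}$ when $i=k$. Collecting these gives
\begin{align*}
\frac{d}{d+2}\sum_{i\neq k}a_{ii}a_{kk}+\frac{3d}{d+2}\sum_i a_{ii}^2.
\end{align*}
Using the identity $\sum_{i\neq k}a_{ii}a_{kk}=\big(\sum_i a_{ii}\big)^2-\sum_i a_{ii}^2$, the diagonal block simplifies to $\tfrac{d}{d+2}\big(\sum_i a_{ii}\big)^2+\tfrac{2d}{d+2}\sum_i a_{ii}^2$. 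Next I would handle the off-diagonal block: for fixed $(i,j)$ with $i\neq j$, the only nonzero entries are at $(k,l)=(i,j)$ and $(k,l)=(j,i)$, each equal to $\tfrac{d}{d+2}$, so the contribution is $\tfrac{d}{d+2}\sum_{i\neq j}(a_{ij}^2+a_{ij}a_{ji})$. This is exactly where the hypothesis $a_{ij}=a_{ji}$ enters: it turns $a_{ij}^2+a_{ij}a_{ji}$ into $2a_{ij}^2$, yielding $\tfrac{2d}{d+2}\sum_{i\neq j}a_{ij}^2$.

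Combining the two blocks gives the clean decomposition
\begin{align*}
a^{T}\Sigma_S a=\frac{d}{d+2}\Big(\sum_i a_{ii}\Big)^2+\frac{2d}{d+2}\Big(\sum_i a_{ii}^2+\sum_{i\neq j}a_{ij}^2\Big)=\frac{d}{d+2}\Big(\sum_i a_{ii}\Big)^2+\frac{2d}{d+2}\|a\|_2^2,
\end{align*}
where I use $\sum_i a_{ii}^2+\sum_{i\neq j}a_{ij}^2=\sum_{i,j}a_{ij}^2=\|a\|_2^2$. Since the first term is a nonnegative square, dropping it yields the claimed bound $a^{T}\Sigma_S a\geq\tfrac{2d}{d+2}\|a\|_2^2$.

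I do not expect a genuine mathematical obstacle here; the proof is a finite computation. The only point requiring care is the index bookkeeping in the off-diagonal block, specifically correctly accounting for both the ``same ordered pair'' term $(i,j)$ and the ``transposed pair'' $(j,i)$ so that the symmetry assumption is invoked exactly once and no terms are double-counted. Everything else is routine collection of terms, and the final inequality follows immediately from nonnegativity of the trace-squared term.
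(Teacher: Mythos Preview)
Your proof is correct and essentially identical to the paper's: both expand $a^T\Sigma_S a$ using the entrywise formula \eqref{Sigma_S}, invoke $a_{ij}=a_{ji}$ to turn $a_{ij}^2+a_{ij}a_{ji}$ into $2a_{ij}^2$, and arrive at the exact decomposition $\tfrac{d}{d+2}(\sum_i a_{ii})^2+\tfrac{2d}{d+2}\|a\|_2^2$ before dropping the nonnegative square. The only cosmetic difference is that you organize the computation as a diagonal/off-diagonal block split, whereas the paper writes it as a single chain of equalities.
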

\begin{proof}
   \begin{align*}
       a^{T} \Sigma_S a &=\sum_{i,j,k,l}a_{ij}a_{kl} (\Sigma_S)_{((i,j),(k,l))} \\
       &\stackrel{\text{by (\ref{Sigma_S})}}{=} \frac{d}{d+2} (\sum_{i \neq j}a_{ij}^2 + \sum_{i \neq j}a_{ij}a_{ji}+ \sum_{i \neq j}a_{ii}a_{jj} + 3\sum_{i}a_{ii}^2) \\
       &\stackrel{a_{ij}=a_{ji}}{=} \frac{d}{d+2} (2\sum_{i \neq j}a_{ij}^2 + \sum_{i \neq j}a_{ii}a_{jj} + 3\sum_{i}a_{ii}^2) \\
       &= \frac{d}{d+2} (2(\sum_{i \neq j}a_{ij}^2+\sum_{i}a_{ii}^2) + (\sum_{i \neq j}a_{ii}a_{jj} +\sum_{i}a_{ii}^2)) \\
       &= \frac{d}{d+2}(2\|a\|_2^2 + (\sum_{i}a_{ii})^2) \\
       & \geq \frac{2d}{d+2} \|a\|_2^2.
   \end{align*} 
\end{proof}
With Lemma \ref{minimum_eigenvalue_Sigma_S} and (\ref{Trace_Sigma_S}), we are now able to prove Lemma \ref{lem:mle_phase}. By Taylor expansion, we have for $M=\beta \beta^{T}$, $M^{\star}=\beta^{\star} \beta^{\star T}$, with probability at least $1-\delta$, 
\begin{align*}
\ell_n(M)-\ell_n(M^{\star})
&\stackrel{\nabla^3\ell_n \equiv 0}{=}\vec(M-M^{\star})^T\nabla\ell_n(M^{\star})+\frac12 \vec(M-M^{\star})^T\nabla^{2} \ell_{n}(M^{\star})\vec(M-M^{\star})\notag\\
&\stackrel{\text{by \eqref{phase:c1}}, \eqref{phase:c2}}{\geq}-c'\|M-M^{\star}\|_{F}\left(\sqrt{\frac{\Tr(\Sigma_S)\log \frac{d}{\delta}}{n}}+d (\log \frac{c^2 d^2}{\Tr(\Sigma_S)})^{\frac12}
 \frac{\log \frac{d}{\delta}}{n}\right)\notag\\
&\quad+\frac12 \vec(M-M^{\star})^T\Sigma_S\vec(M-M^{\star})-\|M-M^{\star}\|^2_{F}d^2\sqrt{\frac{8\log \frac{d}{\delta}}{n}}\notag\\
&\stackrel{\text{by Lemma \ref{minimum_eigenvalue_Sigma_S} and  (\ref{Trace_Sigma_S})}}{\geq} \left(\frac{d}{d+2}-d^2\sqrt{\frac{8\log \frac{d}{\delta}}{n}}\right)\|M-M^{\star}\|^2_{F}-c''\left(\sqrt{\frac{d^2\log \frac{d}{\delta}}{n}}+d\frac{\log \frac{d}{\delta}}{n}\right)\|M-M^{\star}\|_{F}\notag\\
&\geq \frac12 \|M-M^{\star}\|^2_{F}-c''\left(\sqrt{\frac{d^2\log \frac{d}{\delta}}{n}}+d\frac{\log \frac{d}{\delta}}{n}\right)\|M-M^{\star}\|_{F}
\end{align*}
when $n\geq \cO(d^4 \log\frac{d}{\delta})$.

We denote $M_{\MLE}:=\beta_{\MLE}\beta_{\MLE}^{T}$. Note that $\ell_n(M_{\MLE})-\ell_n(M^{\star})=\ell_n(\beta_{\MLE})-\ell_n(\beta^{\star})\leq 0$. Thus we have
\begin{align*}
   \frac12 \|M_{\MLE}-M^{\star}\|^2_{F}-c''\left(\sqrt{\frac{d^2\log \frac{d}{\delta}}{n}}+d\frac{\log \frac{d}{\delta}}{n}\right)\|M_{\MLE}-M^{\star}\|_{F}\leq 0 , 
\end{align*}
which implies
\begin{align*}
\|M_{\MLE}-M^{\star}\|_{F}\lesssim    \left(\sqrt{\frac{d^2\log \frac{d}{\delta}}{n}}+d\frac{\log \frac{d}{\delta}}{n}\right)\lesssim  \sqrt{\frac{d^2\log \frac{d}{\delta}}{n}}.
\end{align*}

Thus so far we have shown, if $n\geq \cO(d^4 \log\frac{d}{\delta})$, then with probability at least $1-\delta$, we have
\begin{align*}
 \|M_{\MLE}-M^{\star}\|_{F}\lesssim    \sqrt{\frac{d^2\log \frac{d}{\delta}}{n}}.
\end{align*}
By Lemma 6 in \cite{ge2017no}, we further have
\begin{align*}
\min\{\|\beta_{\MLE}-\beta^{\star}\|_2,\|\beta_{\MLE}+\beta^{\star}\|_2\}\lesssim\frac{1}{\|\beta^{\star}\|_2}    \|M_{\MLE}-M^{\star}\|_{F}
\lesssim \sqrt{\frac{d^2\log \frac{d}{\delta}}{n}}.
\end{align*}
\end{proof}

\subsubsection{Proofs for Lemma \ref{phase_lemma1} and \ref{phase_lemma2}}
The proofs for Lemma \ref{phase_lemma1} and \ref{phase_lemma2} are similar to proofs for Lemma \ref{logistic_lemma1} and \ref{logistic_lemma2}. 

\begin{proof}[Proof of Lemma \ref{phase_lemma1}]
By definition,
\begin{align*}
   \cI_S := 4\bbE_{x\sim\uni (\cS^{d-1}(\sqrt{d}))}[xx^{T}(x^T\beta^{\star})^2]    
\end{align*}
Let $z \sim \cN(0,I_d)$, then $x$ and $z\frac{\sqrt{d}}{\|z\|_{2}}$ have the same distribution. Therefore 
\begin{align*}
   \cI_S &= 4\bbE_{x\sim\uni (\cS^{d-1}(\sqrt{d}))}[xx^{T}(x^T\beta^{\star})^2]   \\
   &= 4\bbE_{z\sim\cN (0,I_d)}[zz^{T}\frac{d}{\|z\|^{2}_{2}} (\beta^{\star T}z \cdot \frac{\sqrt{d}}{\|z\|_{2}})^2]   \\
   &= 4\bbE_{z\sim\cN (0,I_d)}[(\beta^{\star}\beta^{\star T} + U_{\perp}U_{\perp}^{T})zz^{T} (\beta^{\star T}z)^2 \frac{d^2}{\|z\|^{4}_{2}}]   
\end{align*}
where $[\beta^{\star},U_{\perp}] \in \bbR^{d \times d}$ is a orthogonal basis. 

With this expression, we first prove $\beta^{\star}$ is an eigenvector of $\cI_S$ with corresponding eigenvalue $\lambda_1$.
\begin{align*}
   \cI_S \beta^{\star} &= 4\bbE_{z\sim\cN (0,I_d)}[(\beta^{\star}\beta^{\star T} + U_{\perp}U_{\perp}^{T})zz^{T} (\beta^{\star T}z)^2 \frac{d^2}{\|z\|^{4}_{2}}]  \beta^{\star}  \\
   &= 4\bbE_{z\sim\cN (0,I_d)}[\beta^{\star}\beta^{\star T}zz^{T} (\beta^{\star T}z)^2 \frac{d^2}{\|z\|^{4}_{2}}\beta^{\star}]  \\
   &+ 4\bbE_{z\sim\cN (0,I_d)}[ U_{\perp}U_{\perp}^{T}zz^{T} (\beta^{\star T}z)^2 \frac{d^2}{\|z\|^{4}_{2}}\beta^{\star}]  \\
   &= 4\bbE_{z\sim\cN (0,I_d)}[(\beta^{\star T}z)^4 \frac{d^2}{\|z\|^{4}_{2}}] \beta^{\star} \\
   &+ 4\bbE_{z\sim\cN (0,I_d)}[ U_{\perp}U_{\perp}^{T}zz^{T} (\beta^{\star T}z)^2 \frac{d^2}{\|z\|^{4}_{2}}\beta^{\star}]\\
   &= \lambda_1 \beta^{\star} + 4\bbE_{z\sim\cN (0,I_d)}[ U_{\perp}U_{\perp}^{T}zz^{T} (\beta^{\star T}z)^2 \frac{d^2}{\|z\|^{4}_{2}}\beta^{\star}]. 
\end{align*}
Therefore we only need to prove 
\begin{align*}
    \bbE_{z\sim\cN (0,I_d)}[ U_{\perp}U_{\perp}^{T}zz^{T} (\beta^{\star T}z)^2 \frac{d^2}{\|z\|^{4}_{2}}\beta^{\star}]=0.
\end{align*}
In fact,
\begin{align*}
    & \qquad \bbE_{z\sim\cN (0,I_d)}[ U_{\perp}^{T}zz^{T} (\beta^{\star T}z)^2 \frac{d^2}{\|z\|^{4}_{2}}\beta^{\star}] \\
    & = \bbE_{z\sim\cN (0,I_d)}[ (U_{\perp}^{T}z) (\beta^{\star T}z)^3 \frac{d^2}{\|z\|^{4}_{2}}] \\
    & = \bbE_{z\sim\cN (0,I_d)}[(\frac{d}{|A|^{2} + \|B\|^{2}})^2 A^3 B ] 
\end{align*}
where we let $A:=z^{T}\beta^{\star}$, $B:=U_{\perp}^{T}z$. Notice that by the property of $z\sim \cN(0,I_d)$, $A$ and $B$ are independent. Also, $B$ is symmetric, i.e., $B$ and $-B$ have the same distribution. Therefore 
\begin{align*}
    \bbE_{z\sim\cN (0,I_d)}[ U_{\perp}U_{\perp}^{T}zz^{T} (\beta^{\star T}z)^2 \frac{d^2}{\|z\|^{4}_{2}}\beta^{\star}]=\bbE_{z\sim\cN (0,I_d)}[(\frac{d}{|A|^{2} + \|B\|^{2}})^2 A^3 B ]=0.
\end{align*}

Next we will prove that for any $\beta_{\perp}$ such that $\|\beta_{\perp}\|_{2}=1$, $\beta^{\star T}\beta_{\perp}=0$, $\beta_{\perp}$ is an eigenvector of $\cI_{S}$ with corresponding eigenvalue $\lambda_2$.  Let $[\beta_{\perp}, U]$ be an orthogonal basis ($\beta^{\star}$ is the first column of $U$).

\begin{align*}
   \cI_S \beta_{\perp} &= 4\bbE_{z\sim\cN (0,I_d)}[(\beta_{\perp}\beta_{\perp}^T + UU^{T})zz^{T} (\beta^{\star T}z)^2 \frac{d^2}{\|z\|^{4}_{2}}]  \beta_{\perp}  \\
   &= 4\bbE_{z\sim\cN (0,I_d)}[\beta_{\perp}\beta_{\perp}^Tzz^{T} (\beta^{\star T}z)^2 \frac{d^2}{\|z\|^{4}_{2}}\beta_{\perp}]  \\
   &+ 4\bbE_{z\sim\cN (0,I_d)}[ UU^{T}zz^{T} (\beta^{\star T}z)^2 \frac{d^2}{\|z\|^{4}_{2}}\beta_{\perp}]  \\
   &= 4\bbE_{z\sim\cN (0,I_d)}[(\beta_{\perp}^T z)^2 (\beta^{\star T}z)^2 \frac{d^2}{\|z\|^{4}_{2}}] \beta_{\perp}\\
   &+ 4\bbE_{z\sim\cN (0,I_d)}[ UU^{T}zz^{T} (\beta^{\star T}z)^2 \frac{d^2}{\|z\|^{4}_{2}}\beta_{\perp}]  \\
   &= \lambda_2 \beta_{\perp} + 0\\
   &= \lambda_2 \beta_{\perp}
\end{align*}
Here 
\begin{align*}
  4\bbE_{z\sim\cN (0,I_d)}[ UU^{T}zz^{T} (\beta^{\star T}z)^2 \frac{d^2}{\|z\|^{4}_{2}}\beta_{\perp}]   =0 
\end{align*}
because of a similar reason as in the previous part.

For $\cI_T$, the proving strategy is similar.  For $x \sim\uni(\cS^{d-1}(\sqrt{d}))+v$ on the target domain, where $v= r\beta_{\perp}^{\star}$, let $w=x-v=x-r\beta_{\perp}^{\star}$, then $w \sim\uni(\cS^{d-1}(\sqrt{d}))$. Let $z \sim \cN(0,I_d)$, then $w$ and $z\frac{\sqrt{d}}{\|z\|_{2}}$ have the same distribution. We have
\begin{align*}
    \cI_T &= 4\bbE_{x\sim\uni (\cS^{d-1}(\sqrt{d})) +v}[xx^{T}(x^T\beta^{\star})^2]  \\
   &=4\bbE_{w\sim\uni (\cS^{d-1}(\sqrt{d})) }[(w+v)(w+v)^{T}((w+v)^T\beta^{\star})^2]  \\
   &\stackrel{v^{T}\beta^{\star}=0}{=} 4\bbE_{w\sim\uni (\cS^{d-1}(\sqrt{d}))}[(ww^{T}+wv^{T}+vw^{T}+vv^{T})(w^{T} \beta^{\star})^2]   
\end{align*}
Therefore 
\begin{align*}
   \cI_T \beta^{\star} &= 4 \bbE_{w\sim\uni (\cS^{d-1}(\sqrt{d}))}[(ww^{T}+wv^{T}+vw^{T}+vv^{T})(w^{T} \beta^{\star})^2]   \beta^{\star} \\
   & \stackrel{v^{T}\beta^{\star}=0}{=} 4\bbE_{w\sim\uni (\cS^{d-1}(\sqrt{d}))}[ww^{T}(w^{T} \beta^{\star})^2]   \beta^{\star} \\
   &= \cI_S \beta^{\star} \\
   &= \lambda_1 \beta^{\star},
\end{align*}
where the last line follows from the previous proofs. Similarly, for any $\tilde{\beta_{\perp}}$ such that $\|\tilde{\beta_{\perp}}\|_{2}=1$, $\beta_{\perp}^{\star T}\tilde{\beta_{\perp}}=0$, 
\begin{align*}
   \cI_T \tilde{\beta_{\perp}} &= 4 \bbE_{w\sim\uni (\cS^{d-1}(\sqrt{d}))}[(ww^{T}+wv^{T}+vw^{T}+vv^{T})(w^{T} \beta^{\star})^2]   \tilde{\beta_{\perp}} \\
   & \stackrel{v^{T}\tilde{\beta_{\perp}}=0}{=} 4\bbE_{w\sim\uni (\cS^{d-1}(\sqrt{d}))}[ww^{T}(w^{T} \beta^{\star})^2]\tilde{\beta_{\perp}} \\
   &= \cI_S \tilde{\beta_{\perp}} \\
   &= \lambda_2 \tilde{\beta_{\perp}}.
\end{align*}
For $\beta_{\perp}^{\star}$, 
\begin{align*}
   \cI_T \beta_{\perp}^{\star} &=  4 \bbE_{w\sim\uni (\cS^{d-1}(\sqrt{d}))}[(ww^{T}+wv^{T}+vw^{T}+vv^{T})(w^{T} \beta^{\star})^2]   \beta_{\perp}^{\star} \\
   &=  4 \bbE_{w\sim\uni (\cS^{d-1}(\sqrt{d}))}[ww^{T}(w^{T} \beta^{\star})^2]    \beta_{\perp}^{\star} +  4 \bbE_{w\sim\uni (\cS^{d-1}(\sqrt{d}))}[wv^{T}(w^{T} \beta^{\star})^2]   \beta_{\perp}^{\star} \\
   &+  4 \bbE_{w\sim\uni (\cS^{d-1}(\sqrt{d}))}[vw^{T}(w^{T} \beta^{\star})^2]    \beta_{\perp}^{\star} +  4 \bbE_{w\sim\uni (\cS^{d-1}(\sqrt{d}))}[vv^{T}(w^{T} \beta^{\star})^2]   \beta_{\perp}^{\star} \\
   & := I_1+ I_2+ I_3 +I_4.
\end{align*}
As in the previous proofs, 
\begin{align*}
    I_1=\cI_S \beta_{\perp}^{\star} = \lambda_2 \beta_{\perp}^{\star}.
\end{align*}
\begin{align*}
   I_2 &=  4 \bbE_{w\sim\uni (\cS^{d-1}(\sqrt{d}))}[wv^{T}(w^{T} \beta^{\star})^2]   \beta_{\perp}^{\star} \\
   &\stackrel{v=r\beta_{\perp}^{\star}}{=} 4r \bbE_{w\sim\uni (\cS^{d-1}(\sqrt{d}))}[w(\beta_{\perp}^{\star T}\beta_{\perp}^{\star})(w^{T} \beta^{\star})^2]    \\
    &\stackrel{\|\beta_{\perp}^{\star}\|=1}{=} 4r \bbE_{w\sim\uni (\cS^{d-1}(\sqrt{d}))}[w(w^{T} \beta^{\star})^2]  \\
    &=0.
\end{align*}
where the last lines follows from $w$ is symmetric and $w(w^{T} \beta^{\star})^2$ is a odd function of $w$.
\begin{align*}
   I_3 &= 4 \bbE_{w\sim\uni (\cS^{d-1}(\sqrt{d}))}[vw^{T}(w^{T} \beta^{\star})^2]    \beta_{\perp}^{\star} \\
   &\stackrel{v=r\beta_{\perp}^{\star}}{=} 4r \bbE_{w\sim\uni (\cS^{d-1}(\sqrt{d}))}[\beta_{\perp}^{\star} w^{T}\beta_{\perp}^{\star}(w^{T} \beta^{\star})^2]  \\
   &= 4r \bbE_{w\sim\uni (\cS^{d-1}(\sqrt{d}))}[ (w^{T}\beta_{\perp}^{\star})(w^{T} \beta^{\star})^2] \beta_{\perp}^{\star} \\
    &=0.
\end{align*}
where the last lines follows from $w$ is symmetric and $ (w^{T}\beta_{\perp}^{\star})(w^{T} \beta^{\star})^2$ is a odd function of $w$.
\begin{align*}
   I_4 &=  4 \bbE_{w\sim\uni (\cS^{d-1}(\sqrt{d}))}[vv^{T}(w^{T} \beta^{\star})^2]   \beta_{\perp}^{\star}\\
   &\stackrel{v=r\beta_{\perp}^{\star}}{=} 4 r^{2} \bbE_{w\sim\uni (\cS^{d-1}(\sqrt{d}))}[\beta_{\perp}^{\star}\beta_{\perp}^{\star T} \beta_{\perp}^{\star} (w^{T} \beta^{\star})^2]  \\
    &\stackrel{\|\beta_{\perp}^{\star}\|=1}{=} 4 r^{2} \bbE_{w\sim\uni (\cS^{d-1}(\sqrt{d}))}[\beta_{\perp}^{\star}(w^{T} \beta^{\star})^2]  \\
    &=r^{2}\lambda_3 \beta_{\perp}^{\star}.
\end{align*}
Combine the calculations of $I_1,I_2,I_3,I_4$, we have 
\begin{align*}
    \cI_{T}\beta_{\perp}^{\star} &= I_1 + I_2 + I_3 + I_4 \\
    &= \lambda_2 \beta_{\perp}^{\star} + r^{2} \lambda_3 \beta_{\perp}^{\star} \\
    &= (\lambda_2 + r^{2} \lambda_3) \beta_{\perp}^{\star}.
\end{align*}
In conclusion, we have $\cI_S=U\diag(\lambda_1,\lambda_2,\ldots,\lambda_2)U^{T}$ and $\cI_T=U\diag(\lambda_1,\lambda_2+r^2\lambda_3,\lambda_2,\ldots,\lambda_2)U^{T}$ for an orthonormal matrix $U$, where $U= [\beta^{\star}, \beta_{\perp}^{\star}, \cdots]$.
\end{proof}

\begin{proof}[Proof of Lemma \ref{phase_lemma2}]
Recall the definition of $\lambda_1, \lambda_2, \lambda_3$:
\begin{align*}
        &\lambda_{1}:=4\bbE_{x\sim\uni (\cS^{d-1}(\sqrt{d}))}[(\beta^{\star T}x)^{4}] = 4\bbE_{z\sim\cN (0,I_d)}[(\beta^{\star T}z)^4 \frac{d^2}{\|z\|^{4}_{2}}],\notag\\
        &\lambda_{2}:=4\bbE_{x\sim\uni (\cS^{d-1}(\sqrt{d}))}[(\beta^{\star T}x)^{2}(\beta_{\perp}^{\star T}x)^{2}]= 4\bbE_{z\sim\cN (0,I_d)}[(\beta^{\star T}z)^2 (\beta_{\perp}^{\star T} z)^2 \frac{d^2}{\|z\|^{4}_{2}}] ,\notag\\
        &\lambda_{3}:=4\bbE_{x\sim\uni (\cS^{d-1}(\sqrt{d}))}[(\beta^{\star T}x)^{2}]= 4\bbE_{z\sim\cN (0,I_d)}[(\beta^{\star T}z)^2 \frac{d}{\|z\|^{2}_{2}}].
\end{align*}
Next we will show that there exists constants $c,C,c'>0$ such that when $d\geq c'$, we have $c \leq \lambda_1 \leq C$. The proofs for $\lambda_2$ and $\lambda_3$ are similar. 
\ref{gaussian_norm_concentration}
With this concentration, we do the following truncation:
\begin{align*}
    \frac14 \lambda_{1}&= \bbE_{z\sim\cN (0,I_d)}[(\beta^{\star T}z)^4 \frac{d^2}{\|z\|^{4}_{2}}] \\
    &=\bbE_{z\sim\cN (0,I_d)}[(\beta^{\star T}z)^4 \frac{d^2}{\|z\|^{4}_{2}}\bbI_{\frac{\|z\|}{\sqrt{d}} \in [\frac12, \frac32]}] + \bbE_{z\sim\cN (0,I_d)}[(\beta^{\star T}z)^4 \frac{d^2}{\|z\|^{4}_{2}}\bbI_{\frac{\|z\|}{\sqrt{d}} \notin [\frac12, \frac32]}] \\
    &:= J_1 + J_2.
\end{align*}
For $J_2$, it is obvious that
\begin{align} \label{J_2}
    0\leq J_2 \leq d^2 \bbP(\frac{\|z\|}{\sqrt{d}} \notin [\frac12, \frac32]) \leq 2d^2 e^{-cd}.
\end{align}

For upper bound of $J_1$, 
\begin{align*}
    J_1 &= \bbE_{z\sim\cN (0,I_d)}[(\beta^{\star T}z)^4 \frac{d^2}{\|z\|^{4}_{2}}\bbI_{\frac{\|z\|}{\sqrt{d}} \in [\frac12, \frac32]}] \\
    & \leq \bbE_{z\sim\cN(0,I_d)} [16(\beta^{\star T}z)^{4}] =48.
\end{align*}
Therefore 
\begin{align*}
    \frac14 \lambda_1 = J_1 + J_2 \leq 48 + 2d^2 e^{-cd}.
\end{align*}
It's obvious that there exists an absolute constant $c'$ such that when $d \geq c'$, $\frac14 \lambda_1 \leq 50$.

For lower bound of $J_1$, we have 
\begin{align*}
    J_1 &= \bbE_{z\sim\cN (0,I_d)}[(\beta^{\star T}z)^4 \frac{d^2}{\|z\|^{4}_{2}}\bbI_{\frac{\|z\|}{\sqrt{d}} \in [\frac12, \frac32]}] \\
    & \geq \bbE_{z\sim\cN(0,I_d)} [(\frac23)^4(\beta^{\star T}z)^{4}] =(\frac23)^4 \cdot 3.
\end{align*}
Therefore 
\begin{align*}
    \frac14 \lambda_1 = J_1 + J_2 \geq (\frac23)^4 \cdot 3
\end{align*}
Therefore it's obvious that there exists an absolute constant $c'$ such that when $d \geq c'$, $\frac14 \lambda_1 \geq \frac12$. The proofs for $\lambda_2$ and $\lambda_3$ are almost the same. 

\end{proof}

\section{Proofs for Section \ref{mis-specified}}

\subsection{Poofs for Proposition \ref{prop:mle_consistent}}
\begin{proof}
We consider the case where $Y=X^2+\varepsilon$, $\varepsilon\sim\cN(0,1)$, $\varepsilon\indep X$, and we have $X\sim\cN(-10,1)$ on the source domain and $X\sim\cN(10,1)$ on the target domain. Then the optimal linear fit on the target is given by 
\begin{align*}
\beta^{\star} = \argmin_{\beta\in\bbR}\bbE_{(x,y)\sim\bbP_T(X,Y)}\left[(y-x\beta)^2\right]=\left(\bbE_{x\sim\cN(10,1)}[x^2]\right)^{-1}\bbE_{x\sim\cN(10,1)}[x^3]>0.
\end{align*}
However, the linear fit learned via classical MLE asymptotically behaves as
\begin{align*}
\beta_{\MLE}
&=\argmin_{\beta\in\bbR}\frac{1}{2n} \sum^n_{i=1}(y_i-x_i\beta)^2
=\left(\frac1n\sum^n_{i=1}x^2_i\right)^{-1}\left(\frac1n\sum^n_{i=1}x_i y_i\right)\\
&\xrightarrow[]{n\rightarrow\infty}
\left(\bbE_{x\sim\cN(-10,1)}[x^2]\right)^{-1}\bbE_{x\sim\cN(-10,1)}[x^3]<0.
\end{align*}
Hence, the classical MLE losses consistency.
For MWLE, we have
\begin{align*}
\beta_{\MWLE}
&=\argmin_{\beta\in\bbR}\frac{1}{2n}\sum^n_{i=1}w(x_i) (y_i-x_i\beta)^2\\
&=\left(\frac1n\sum^n_{i=1}w(x_i) x^2_i\right)^{-1}\left(\frac1n\sum^n_{i=1}w(x_i) x_i y_i\right)\xrightarrow[]{n\rightarrow\infty}\beta^{\star},
\end{align*}
which asymptotically provides a good estimator. 
\end{proof}

\subsection{Proofs for Theorem \ref{thm:mis_upper}}
The detailed version of Theorem \ref{thm:mis_upper} is stated as the following.
\begin{theorem}\label{thm_exact:mis_upper}
Suppose the function class $\cF$ satisfies Assumption \ref{assm:mis_upper}. Let $G_w := G_w(M)$ and $H_w := H_w(M)$. For any $\delta\in (0,1)$, if $n\geq c \max\{N^{\star}\log(d/\delta), N(\delta), N'(\delta)\}$, then with probability at least $1-3\delta$, we have
\begin{align*}
&R_{M}(\beta_{\MWLE}) 
\leq c\frac{\Tr\left(G_w H^{-1}_w\right)\log \frac{d}{\delta}}{n}
\end{align*}
for an absolute constant $c$. Here 
\begin{align*}
N^{\star}
:= W^2 \cdot\max\{\lambda^{-1}\tilde{\alpha}_1^{2}\log^{2\gamma}(W^2\lambda^{-1}\tilde{\alpha}^2_1),\tilde{\alpha}_2^{2},\lambda\tilde{\alpha}_3^{2}\},
\end{align*}
where $\tilde{\alpha}_1:=B_1\|H^{-1}_w\|_2^{0.5}$, $\tilde{\alpha}_2:=B_2\|H^{-1}_w\|_2$, $\tilde{\alpha}_3:=B_3\|H^{-1}_w\|_2^{1.5}$, and $\lambda:=\Tr(G_w H_w^{-2})/\|H^{-1}_w\|_2$.
\end{theorem}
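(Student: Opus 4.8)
The plan is to transcribe the proof of Theorem \ref{thm_exact:well_upper} with the single matrix $\cI_S$---which in the well-specified analysis simultaneously plays the role of expected Hessian and of gradient covariance---replaced by the \emph{distinct} pair $(H_w, G_w)$, where $H_w$ governs the Hessian and $G_w$ governs the covariance of the weighted gradient. Two population identities drive everything. First, at $\beta^{\star}(M)$ the weighted gradient is centered: $\bbE_S[w(x)\nabla\ell(x,y,\beta^{\star}(M))] = \bbE_T[\nabla\ell(x,y,\beta^{\star}(M))] = 0$ by optimality of $\beta^{\star}(M)$, and its (per-sample) covariance is exactly $G_w$. Second, the expected weighted Hessian $\bbE[\nabla^2\ell_n^w(\beta^{\star}(M))] = H_w$ coincides with the \emph{target} Hessian $\bbE_T[\nabla^2\ell(x,y,\beta^{\star}(M))]$. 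It is this coincidence---the localization metric and the risk metric being the same matrix $H_w$---that collapses the well-specified pair of spectral ratios $(\kappa,\tilde\kappa)$ into the single $\lambda = \Tr(G_w H_w^{-2})/\|H_w^{-1}\|_2$, the $(G_w,H_w)$-analog of $\tilde\kappa$.

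First I would prove the analog of Lemma \ref{claim1}: on an event of probability at least $1-2\delta$, $\beta_{\MWLE}\in\bbB_{\beta^{\star}(M)}\!\big(c\sqrt{\Tr(G_w H_w^{-2})\log(d/\delta)/n}\big)$. Applying Assumption \ref{assm1_mis} with $A = H_w^{-1}$ produces the gradient bound with variance $V = \Tr(H_w^{-1}G_w H_w^{-1}) = \Tr(G_w H_w^{-2})$ (the weighting contributing the extra factor $W$ in the higher-order term) together with the Hessian bound $\|\nabla^2\ell_n^w(\beta^{\star}(M)) - H_w\|_2 \leq WB_2\sqrt{\log(d/\delta)/n}$. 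Taylor-expanding $\ell_n^w$ about $\beta^{\star}(M)$ with $z := -H_w^{-1}g^w$ and $g^w := \nabla\ell_n^w(\beta^{\star}(M)) - \bbE[\nabla\ell_n^w(\beta^{\star}(M))]$, I run the identical ellipsoid argument with the quadratic form $(\Delta_\beta - z)^T H_w(\Delta_\beta - z)$ to show that any local minimizer of $\ell_n^w$ lies in the ellipsoid $\cD$, hence in the stated ball. The only genuine departure from the well-specified case is that Assumption \ref{assm3_mis} supplies uniqueness of the local minimum only on a high-probability event (threshold $N'(\delta)$); intersecting with it accounts for the third $\delta$ and for the appearance of $N'(\delta)$ in the sample-size requirement.

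Next I would prove the analog of Lemma \ref{claim2}, namely $\|H_w^{1/2}(\beta_{\MWLE} - \beta^{\star}(M))\|_2^2 \leq c\,\Tr(G_w H_w^{-1})\log(d/\delta)/n$. Applying Assumption \ref{assm1_mis} again, now with $A = H_w^{-1/2}$, gives the sharper gradient bound with variance $\Tr(H_w^{-1/2}G_w H_w^{-1/2}) = \Tr(G_w H_w^{-1})$. Writing $H_w^{1/2}(\beta_{\MWLE} - \beta^{\star}(M)) = H_w^{1/2}(\Delta_{\beta_{\MWLE}} - z) - H_w^{-1/2}g^w$ and bounding the two pieces by the ellipsoid containment and the new concentration respectively yields the claim; crucially, the cross-preconditioner here is $H_w^{1/2}H_w^{-1/2} = I$, so the Hessian-mismatch factor $\|\cI_T^{1/2}\cI_S^{-1/2}\|_2^2$ that pervades the well-specified threshold $N_2$ disappears. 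Finally, Assumption \ref{assm2} lets me Taylor-expand the target risk,
\[
R_M(\beta_{\MWLE}) \leq (\beta_{\MWLE} - \beta^{\star}(M))^T\bbE_T[\nabla\ell(x,y,\beta^{\star}(M))] + \tfrac12\|H_w^{1/2}(\beta_{\MWLE} - \beta^{\star}(M))\|_2^2 + \tfrac{B_3}{6}\|\beta_{\MWLE} - \beta^{\star}(M)\|_2^3,
\]
where the first term vanishes by optimality and the Hessian at $\beta^{\star}(M)$ is $H_w$. The quadratic term is controlled by the previous lemma, the cubic remainder is higher-order (dominated once $n \gtrsim (B_3\Tr(G_w H_w^{-2})^{3/2}/\Tr(G_w H_w^{-1}))^2\log(d/\delta)$), and collecting the three threshold expressions into a single maximum---the same bookkeeping as in the well-specified proof, with $\|\cI_S^{-1}\|_2 \mapsto \|H_w^{-1}\|_2$, $\Tr(\cI_S^{-1}) \mapsto \Tr(G_w H_w^{-2})$, and an overall $W^2$---reproduces the stated $N^{\star}$.

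The main obstacle I anticipate is bookkeeping rather than any isolated inequality: one must keep straight that the empirical Hessian concentrates around $H_w$ (so $H_w$ appears in every quadratic form and in the preconditioner $z = -H_w^{-1}g^w$), whereas the gradient covariance is the \emph{distinct} matrix $G_w$ (entering only through the variance proxies $\Tr(G_w H_w^{-2})$ and $\Tr(G_w H_w^{-1})$). Maintaining this sandwich structure through the ellipsoid and threshold computations, while simultaneously handling the merely-high-probability uniqueness of Assumption \ref{assm3_mis}, is where care is needed; the algebra itself is a routine transcription of the well-specified argument.
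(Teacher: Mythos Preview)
Your proposal is correct and follows essentially the same route as the paper: the paper proves the exact analogs of Lemmas \ref{claim1} and \ref{claim2} (called Lemmas \ref{claim1_mis} and \ref{claim2_mis}) by applying Assumption \ref{assm1_mis} with $A=H_w^{-1}$ and $A=H_w^{-1/2}$ respectively, runs the same ellipsoid localization argument around $z=-H_w^{-1}g^w$, invokes the high-probability uniqueness event from Assumption \ref{assm3_mis} to pick up the extra $\delta$ and the $N'(\delta)$ threshold, and finishes with the Taylor expansion of $R_M$ at $\beta^{\star}(M)$. Your observation that the cross-preconditioner $H_w^{1/2}H_w^{-1/2}=I$ collapses $(\kappa,\tilde\kappa)$ into the single $\lambda$ is exactly the simplification the paper exploits when it shows $N_2$ is dominated by $N_1$ via $\Tr(G_wH_w^{-2})\le\|H_w^{-1}\|_2\Tr(G_wH_w^{-1})$; your final cubic-remainder constant $B_3/6$ is in fact tighter than the paper's $WB_3/6$ (which is a harmless over-bound since the target-risk expansion involves no weighting).
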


The proofs for Theorem \ref{thm_exact:mis_upper} is similar to proofs for Theorem \ref{thm_exact:well_upper}. For notation simplicity, through out the proofs for Theorem \ref{thm_exact:mis_upper}, let $\beta^{\star}:=\beta^{\star}(M)$, $H_w:= H_w(M)$, $  G_w:= G_w(M)$. 
We first state two main lemmas, which capture the distance between $\beta_{\MWLE}$ and $\beta^{\star}$ under different measurements.

\begin{lemma}\label{claim1_mis}
Suppose Assumption \ref{assm:mis_upper} holds. For any $\delta\in (0,1)$ and any $n\geq c\max\{N_1\log(d/\delta), N(\delta), N'(\delta)\}$, with probability at least $1-2\delta$, we have $\beta_{\MWLE}\in\bbB_{\beta^{\star}}(c\sqrt{\frac{\Tr(G_w H^{-2}_w)\log \frac{d}{\delta}}{n}})$ for some absolute constant $c$.
Here 
\begin{align*}
N_1:=\max\bigg\{
&W^2 B_{2}^2\|H_w^{-1}\|_{2}^2,
W^2 B_{3}^2\Tr(G_w H_w^{-2})\|H_w^{-1}\|_{2}^2,
\left(\frac{W^3 B^2_1B_2\|H_w^{-1}\|_{2}^3\log^{2\gamma}(W\lambda^{-1/2}\tilde{\alpha}_1)}{\Tr(G_w H_w^{-2})}\right)^{\frac23},\notag\\
&\left(\frac{W^4B^3_1B_3\|H_w^{-1}\|_{2}^4\log^{3\gamma}(W\lambda^{-1/2}\tilde{\alpha}_1)}{\Tr(G_w H_w^{-2})}\right)^{\frac12},
\frac{ W^2 B_{1}^2\|H_w^{-1}\|_{2}^2\log^{2\gamma}(W\lambda^{-1/2}\tilde{\alpha}_1)}{\Tr(G_w H_w^{-2})}
\bigg\}.
\end{align*}
\end{lemma}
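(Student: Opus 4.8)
The plan is to mirror the proof of Lemma~\ref{claim1} almost verbatim, replacing the empirical loss $\ell_n$ by its weighted version $\ell^{w}_n$, the source Fisher information $\cI_S$ by the weighted Hessian $H_w$, and the gradient-variance quantity $\Tr(\cI_S^{-1})$ by $\Tr(G_w H_w^{-2})$. First I would set $g:=\nabla\ell^{w}_n(\beta^{\star})-\bbE[\nabla\ell^{w}_n(\beta^{\star})]=\nabla\ell^{w}_n(\beta^{\star})$, using that $\beta^{\star}=\beta^{\star}(M)$ is the population minimizer of the target risk, so that $\bbE_{\bbP_S}[w\nabla\ell(\beta^{\star})]=\bbE_{\bbP_T}[\nabla\ell(\beta^{\star})]=0$. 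A direct computation then gives, for any fixed $A$, that the variance equals $V=n\bbE\|Ag\|_2^2=\Tr(AG_wA^{T})$; taking $A=H_w^{-1}$ yields $V=\Tr(G_w H_w^{-2})$. Applying Assumption~\ref{assm1_mis} with $A=H_w^{-1}$ supplies the gradient bound on $\|H_w^{-1}g\|_2$ together with the Hessian bound $\|\nabla^2\ell^{w}_n(\beta^{\star})-H_w\|_2\leq WB_2\sqrt{\log(d/\delta)/n}$, simultaneously on an event of probability at least $1-\delta$ whenever $n>N(\delta)$.

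Next I would Taylor expand $\ell^{w}_n(\beta)-\ell^{w}_n(\beta^{\star})$ to second order around $\beta^{\star}$, exactly as in \eqref{ineq:pf:taylor1}--\eqref{ineq:pf:taylor2}. The gradient term contributes $\Delta_\beta^{T}g$; the Hessian term is controlled by $H_w$ up to the additive slack $WB_2\sqrt{\log(d/\delta)/n}\,\|\Delta_\beta\|_2^2$; and the third-order remainder is bounded by $\tfrac{WB_3}{6}\|\Delta_\beta\|_2^3$, since $\|\nabla^3\ell^{w}_n\|_2\leq \tfrac1n\sum_i w(x_i)\|\nabla^3\ell\|_2\leq WB_3$ by $w\leq W$ (Assumption~\ref{assm6}) and Assumption~\ref{assm2}. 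Completing the square with $z:=-H_w^{-1}g$ produces the same quadratic-plus-error structure, and one defines the ellipsoid
\begin{align*}
\cD:=\Big\{\beta:\tfrac12(\Delta_\beta-z)^{T}H_w(\Delta_\beta-z)\le (\text{the collected }W\text{-weighted error terms})\Big\}.
\end{align*}
Arguing as in \eqref{ineq:pf:ellipsoid}, $\ell^{w}_n(\beta)>\ell^{w}_n(\beta^{\star}+z)$ for every $\beta$ in $\bbB_{\beta^{\star}}(3c\sqrt{\Tr(G_wH_w^{-2})\log(d/\delta)/n})\cap\cD^{C}$, so $\ell^{w}_n$ has a local minimum inside $\cD$ (which contains $\beta^{\star}+z$).

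The one genuinely new ingredient --- and the main obstacle --- is that, unlike Assumption~\ref{assm3}, uniqueness of the local minimizer now only holds with high probability (Assumption~\ref{assm3_mis}), because the nonnegative weights can vanish and degenerate $\ell^{w}_n$. I would therefore intersect the concentration event with the uniqueness event, which holds with probability at least $1-\delta$ once $n\ge N'(\delta)$; on this intersection (probability $\ge 1-2\delta$) the interior local minimum is the global one, so $\beta_{\MWLE}\in\cD$. Finally, bounding $\|\Delta_{\beta}\|_2^2\le 2(\|\Delta_\beta-z\|_2^2+\|z\|_2^2)$ over $\cD$ and imposing that $\Tr(G_wH_w^{-2})\log(d/\delta)/n$ dominate each of the remaining higher-order-in-$1/n$ terms gives $\beta_{\MWLE}\in\bbB_{\beta^{\star}}(c\sqrt{\Tr(G_wH_w^{-2})\log(d/\delta)/n})$; reading off the threshold at which each term is dominated yields precisely the stated $N_1$, under the substitutions $B_i\mapsto WB_i$, $\|\cI_S^{-1}\|_2\mapsto\|H_w^{-1}\|_2$, $\Tr(\cI_S^{-1})\mapsto\Tr(G_wH_w^{-2})$, and $\tilde\kappa\mapsto\lambda$. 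The remaining effort is purely the bookkeeping of these $W$-factors and the routine domination inequalities.
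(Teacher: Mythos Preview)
Your proposal is correct and essentially identical to the paper's proof: both define $g$ as the centered weighted gradient, take $A=H_w^{-1}$ in Assumption~\ref{assm1_mis} to obtain the concentration event, Taylor-expand with the $WB_3$ third-order remainder, complete the square via $z=-H_w^{-1}g$, define the ellipsoid $\cD$, intersect with the high-probability uniqueness event from Assumption~\ref{assm3_mis} to land $\beta_{\MWLE}\in\cD$, and then read off $N_1$ from the dominance conditions. The substitutions you list ($\cI_S\mapsto H_w$, $\Tr(\cI_S^{-1})\mapsto\Tr(G_wH_w^{-2})$, $B_i\mapsto WB_i$, $\tilde\kappa\mapsto\lambda$, $\alpha_1\mapsto\tilde\alpha_1$) are exactly those in the paper, and the $1-2\delta$ probability indeed arises from the union of the concentration failure and the non-uniqueness event.
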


\begin{lemma}\label{claim2_mis}
Suppose Assumption \ref{assm:mis_upper} holds. For any $\delta\in (0,1)$ and any $n\geq c\max\{N_1\log(d/\delta), N_2\log(d/\delta), N(\delta), N'(\delta)\}$, with probability at least $1-3\delta$, we have
\begin{align*}
 \|H_w^{\frac{1}{2}}(\beta_{\MWLE}-\beta^{\star})\|_{2}^{2}\leq  c \frac{\Tr(G_w H_w^{-1}) \log \frac{d}{\delta}}{n} .
\end{align*}
for some absolute constant $c$. Here $N_1$ is defined in Lemma \ref{claim1_mis} and
\begin{align*}
N_2:=\max\bigg\{
&\left(\frac{W B_2\Tr(G_w H_w^{-2})}{\Tr(G_w H_w^{-1}) }\right)^{2},
\left(\frac{W B_3\Tr(G_w H_w^{-2})^{1.5}}{\Tr(G_w H_w^{-1}) }\right)^{2},
\left(\frac{ W^3 B^2_1B_2\|H_w^{-1}\|_{2}^2\log^{2\gamma}(W\lambda^{-1/2}\tilde{\alpha}_1)}{\Tr(G_w H_w^{-1})}\right)^{\frac23},\notag\\
&\left(\frac{W^4B^3_1B_3\|H_w^{-1}\|_{2}^3\log^{3\gamma}(W\lambda^{-1/2}\tilde{\alpha}_1)}{\Tr(G_w H_w^{-1}) }\right)^{\frac12},
\frac{W^2 B_{1}^2\|H_w^{-1}\|_{2}\log^{2\gamma}(W\lambda^{-1/2}\tilde{\alpha}_1)}{\Tr(G_w H_w^{-1}) }
\bigg\}.
\end{align*}
\end{lemma}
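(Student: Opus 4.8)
The plan is to follow the proof of Lemma~\ref{claim2} essentially verbatim, substituting the MWLE objects for their MLE counterparts: the source curvature $\cI_S$ is replaced by the weighted Hessian $H_w$, the score covariance (which coincided with $\cI_S$ in the well-specified case) is replaced by $G_w$, and --- crucially --- the target curvature $\cI_T$ that appeared in Lemma~\ref{claim2} is now \emph{also} $H_w$, since here we measure the estimation error in the same $H_w$-metric in which the weighted loss is curved. Writing $g_w := \nabla\ell^{w}_n(\beta^{\star}) - \bbE[\nabla\ell^{w}_n(\beta^{\star})]$ and $z := -H_w^{-1} g_w$, the variance computation carries over: for a fixed $A$, Assumption~\ref{assm1_mis} uses $V = \Tr(A G_w A^{T})$, so that taking $A = H_w^{-1}$ gives $V = \Tr(G_w H_w^{-2})$ (the radius appearing in Lemma~\ref{claim1_mis}), while taking $A = H_w^{-1/2}$ gives $V = \Tr(G_w H_w^{-1})$ (the target quantity in the statement).

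First I would invoke Lemma~\ref{claim1_mis}: on an event of probability at least $1-2\delta$ --- the extra $\delta$ relative to the well-specified case accounting for the high-probability uniqueness of the minimizer supplied by $N'(\delta)$ in Assumption~\ref{assm3_mis} --- the estimator $\beta_{\MWLE}$ lies in the ellipsoid $\cD$ constructed in that proof, which certifies a bound on $\tfrac12 (\Delta - z)^{T} H_w (\Delta - z)$ with $\Delta := \beta_{\MWLE} - \beta^{\star}$. Next I would apply Assumption~\ref{assm1_mis} with $A = H_w^{-1/2}$ to obtain, with probability at least $1-\delta$, a concentration bound on $\|H_w^{-1/2} g_w\|_2$ whose leading term is $\sqrt{\Tr(G_w H_w^{-1})\log(d/\delta)/n}$. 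Intersecting the two events yields probability at least $1-3\delta$, matching the statement.

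Then, exactly as in the well-specified argument, I would use the decomposition
\[
\|H_w^{1/2}\Delta\|_2^2 \le 2\,\|H_w^{1/2}(\Delta - z)\|_2^2 + 2\,\|H_w^{1/2} z\|_2^2 .
\]
The second term equals $2\|H_w^{-1/2} g_w\|_2^2$, whose leading contribution is precisely $2\,\Tr(G_w H_w^{-1})\log(d/\delta)/n$, i.e.\ the target rate. The first term is twice the ellipsoid bound from $\cD$, consisting only of higher-order pieces scaling as $W B_2 \Tr(G_w H_w^{-2})(\log(d/\delta)/n)^{1.5}$, $W B_3 \Tr(G_w H_w^{-2})^{1.5}(\log(d/\delta)/n)^{1.5}$, and the $B_1$-type remainders coming from expanding the cross terms in the Taylor estimate. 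Note that, unlike in Lemma~\ref{claim2}, no cross-operator-norm factor $\|\cI_T^{1/2}\cI_S^{-1/2}\|_2$ appears, precisely because the ``source'' and ``target'' metrics coincide at $H_w$; this is why $N_2$ carries no such factor.

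The main obstacle --- and the bulk of the work --- is the bookkeeping needed to verify that imposing $n \ge c N_2 \log(d/\delta)$ forces every higher-order term above to be dominated by $\Tr(G_w H_w^{-1})\log(d/\delta)/n$. Each entry of $N_2$ is obtained by equating one such term to the leading term and solving for $n$; the $W$-factors propagate from the $W B_1$ and $W B_2$ prefactors in Assumption~\ref{assm1_mis}, and the logarithmic arguments $\log(W\lambda^{-1/2}\tilde{\alpha}_1)$ arise from the $\log^{\gamma}(W B_1\|A\|_2/\sqrt{V})$ factor upon substituting $A = H_w^{-1}$ or $A = H_w^{-1/2}$ and simplifying via $\lambda := \Tr(G_w H_w^{-2})/\|H_w^{-1}\|_2$ and $\tilde{\alpha}_1 := B_1\|H_w^{-1}\|_2^{1/2}$. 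Collecting these thresholds reproduces the stated $N_2$ and completes the proof.
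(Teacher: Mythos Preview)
Your proposal is correct and mirrors the paper's proof essentially line by line: the paper too defines the high-probability event from Lemma~\ref{claim1_mis} (absorbing the extra $\delta$ from Assumption~\ref{assm3_mis}), applies Assumption~\ref{assm1_mis} with $A = H_w^{-1/2}$ to control $\|H_w^{-1/2}g_w\|_2$, intersects to get probability $1-3\delta$, and uses the identical decomposition $\|H_w^{1/2}\Delta\|_2^2 \le 2\|H_w^{1/2}(\Delta-z)\|_2^2 + 2\|H_w^{-1/2}g_w\|_2^2$ followed by the same term-by-term threshold analysis yielding $N_2$. Your observation that the cross-norm factor $\|\cI_T^{1/2}\cI_S^{-1/2}\|_2$ collapses to $1$ because the two metrics coincide at $H_w$ is exactly the structural simplification the paper exploits (implicitly); the only additional step in the paper you might make explicit is the inequality $\Tr(G_w H_w^{-1}) \ge \Tr(G_w H_w^{-2})\,\|H_w^{-1}\|_2^{-1}$, used to unify the log argument when $A = H_w^{-1/2}$ into the same $\log(W\lambda^{-1/2}\tilde{\alpha}_1)$ form.
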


The proofs for Lemma \ref{claim1_mis} and \ref{claim2_mis} are delayed to the end of this subsection. With these two lemmas, we can now state the proof for Theorem \ref{thm_exact:mis_upper}.
\begin{proof}[Proof of Theorem \ref{thm_exact:mis_upper}]
By Assumption \ref{assm6} and \ref{assm2_mis}, we can do Taylor expansion w.r.t. $\beta$ as the following:
\begin{align*}
R_{M}(\beta_{\MWLE})&=  
\bbE_{(x,y) \sim \bbP_{T}(x,y)}\left[\ell(x,y,\beta_{\MWLE})-\ell(x,y,\beta^{\star})\right] \notag \\ &\leq  \bbE_{(x,y) \sim \bbP_{T}(x,y)} [\nabla \ell(x,y,\beta^{\star})]^{T}(\beta_{\MWLE}-\beta^{\star}) \notag \\ 
&+ \frac{1}{2}(\beta_{\MWLE}-\beta^{\star})^{T}H_w(\beta_{\MWLE}-\beta^{\star}) +\frac{WB_{3}}{6}\|\beta_{\MWLE}-\beta^{\star}\|_{2}^{3}. 
\end{align*}
Applying Lemma \ref{claim1_mis} and \ref{claim2_mis}, we know for any $\delta$ and any $n\geq c\max\{N_1\log(d/\delta), N_2\log(d/\delta), N(\delta), N'(\delta)\}$, with probability at least $1-3\delta$, we have
\begin{align*}
(\beta_{\MWLE}-\beta^{\star})^{T}H_w(\beta_{\MWLE}-\beta^{\star})\leq   c\frac{\Tr(G_w H_w^{-1}) \log \frac{d}{\delta}}{n}
\end{align*} and
\begin{align*}
   \|\beta_{\MWLE}-\beta^{\star}\|_{2} \leq c\sqrt{\frac{ \Tr(G_w H_w^{-2})\log \frac{d}{\delta}}{n}}.
\end{align*}
Also notice that, $\bbE_{(x,y) \sim \bbP_{T}(x,y)} [\nabla \ell(x,y,\beta^{\star})]=0$. Therefore, with probability at least $1- 3\delta $, we have
\begin{align*} 
R_{M}(\beta_{\MWLE})  &\leq \frac{c}{2}\frac{\Tr(G_w H_w^{-1}) \log \frac{d}{\delta}}{n} + \frac{c^3}{6}WB_3\Tr(G_w H_w^{-2})^{1.5}(\frac{ \log \frac{d}{\delta}}{n})^{1.5}.
\end{align*}
If we further have $n\geq c(\frac{WB_3\Tr(G_w H_w^{-2})^{1.5}}{\Tr(G_w H_w^{-1})})^{2}\log (d/\delta)$, it then holds that
\begin{align*} 
R_{M}(\beta_{\MWLE})  &\leq c\frac{\Tr(G_w H_w^{-1}) \log \frac{d}{\delta}}{n} .
\end{align*}
Note that
\begin{align*}
&\max\bigg\{N_1,N_2,
\left(\frac{WB_3\Tr(G_w H_w^{-2})^{1.5}}{\Tr(G_w H_w^{-1})}\right)^{2}
\bigg\}\notag\\
&=\max\bigg\{W^2 B_{2}^2\|H_w^{-1}\|_{2}^2,\,
W^2 B_{3}^2\Tr(G_w H_w^{-2})\|H_w^{-1}\|_{2}^2,\,
\left(\frac{W^3 B^2_1B_2\|H_w^{-1}\|_{2}^3\log^{2\gamma}(W\lambda^{-1/2}\tilde{\alpha}_1)}{\Tr(G_w H_w^{-2})}\right)^{\frac23},\notag\\
&\left(\frac{W^4B^3_1B_3\|H_w^{-1}\|_{2}^4\log^{3\gamma}(W\lambda^{-1/2}\tilde{\alpha}_1)}{\Tr(G_w H_w^{-2})}\right)^{\frac12},\,
\frac{ W^2 B_{1}^2\|H_w^{-1}\|_{2}^2\log^{2\gamma}(W\lambda^{-1/2}\tilde{\alpha}_1)}{\Tr(G_w H_w^{-2})}\bigg\}\notag\\
&=W^2\cdot\max\{\tilde{\alpha}_2^{2},\lambda\tilde{\alpha}_3^{2}, \tilde{\alpha}_1^{4/3}\tilde{\alpha}_2^{2/3}\lambda^{-2/3}\log^{4\gamma/3}(W\lambda^{-1/2}\tilde{\alpha}_1),
\tilde{\alpha}_1^{3/2}\tilde{\alpha}_3^{1/2}\lambda^{-1/2}\log^{3\gamma/2}(W\lambda^{-1/2}\tilde{\alpha}_1),
\lambda^{-1}\tilde{\alpha}_1^{2}\log^{2\gamma}(W\lambda^{-1/2}\tilde{\alpha}_1)\}\notag\\
&\leq W^2 \cdot\max\{\lambda^{-1}\tilde{\alpha}_1^{2}\log^{2\gamma}(W^2\lambda^{-1}\tilde{\alpha}^2_1),\tilde{\alpha}_2^{2},\lambda\tilde{\alpha}_3^{2}\}\notag\\
&=:N^{\star}.
\end{align*}
Here the first equation follows from the fact that
\begin{align*}
\Tr(G_w H_w^{-2})=\Tr(H_w^{-1/2}G_w H_w^{-1/2} H_w^{-1})\leq \|H_w^{-1}\|_2\Tr(H_w^{-1/2}G_w H_w^{-1/2})=\|H_w^{-1}\|_2\Tr(G_w H_w^{-1}).    
\end{align*}
To summarize, for any $\delta\in (0,1)$ and any $n\geq c\max\{N^{\star}\log(d/\delta),N(\delta), N'(\delta)\}$, with probability at least $1- 3\delta $, we have
\begin{align*} 
R_{M}(\beta_{\MWLE})  &\leq c\frac{\Tr(G_w H_w^{-1}) \log \frac{d}{\delta}}{n} .
\end{align*}
\end{proof}

In the following, we prove Lemma \ref{claim1_mis} and \ref{claim2_mis}. 

\paragraph{Proof of Lemma \ref{claim1_mis}}
\begin{proof}[Proof of Lemma \ref{claim1_mis}]
For notation simplicity, we denote $g:= \nabla \ell_{n}^{w}(\beta^{\star}) - \bbE_{\bbP_S} [\nabla \ell_{n}^{w}(\beta^{\star})]$. Note that 
\begin{align*}
V &
= n \cdot \mathbb{E} [\|A(\nabla\ell_n^{w}(\beta^{\star})-\bbE[\nabla\ell_n^{w}(\beta^{\star})])\|_2^2]\notag\\
&=n\cdot\bbE[\nabla\ell_n^{w}(\beta^{\star})^{T}A^TA\nabla\ell_n^{w}(\beta^{\star})]\notag\\
&=n\cdot\bbE[\Tr(A\nabla\ell_n^{w}(\beta^{\star})\nabla\ell_n^{w}(\beta^{\star})^{T}A^T)]\notag\\
&=\Tr(A G_w A^{T}).
\end{align*}
By taking $A= H_w^{-1}$ in Assumption \ref{assm1_mis}, for any $\delta$ and any $n>N(\delta)$, we have with probability at least $1-\delta$:
\begin{align} \label{ineq:pf:concentration1_mis}
    &\|H_w^{-1}g\|_{2} \leq  c\sqrt{\frac{\Tr(G_w H_w^{-2}) \log \frac{d}{\delta}}{n}}+ W B_{1}\|H_w^{-1}\|_{2}\log^{\gamma}\left(\frac{W B_{1}\|H_w^{-1}\|_{2}}{\sqrt{\Tr(G_w H_w^{-2})}}\right) \frac{\log \frac{d}{\delta}}{n}\notag\\
    &\qquad \qquad= c\sqrt{\frac{\Tr(G_w H_w^{-2}) \log \frac{d}{\delta}}{n}}+ W B_{1}\|H_w^{-1}\|_{2}\log^{\gamma}(W\lambda^{-1/2}\tilde{\alpha}_1) \frac{\log \frac{d}{\delta}}{n} \\\label{ineq:pf:concentration3_mis}
    &\left\|\nabla^{2} \ell_{n}^{w}(\beta^{\star})-\bbE[\nabla^{2} \ell_{n}^{w}(\beta^{\star})]\right\|_2 
\leq W B_2 \sqrt{\frac{\log \frac{d}{\delta}}{n}}.
\end{align}
Let event $\tilde{A}:=\{\eqref{ineq:pf:concentration1_mis}, \eqref{ineq:pf:concentration3_mis} \text{ holds}\}$ and $\tilde{A}':=\{\text{$\ell^{w}_n(\cdot)$ has a unique local minimum, which is also global minimum}\}$. By Assumption \ref{assm1_mis} and Assumption \ref{assm3_mis}, it then holds for any $\delta$ and any $n\geq \max\{N(\delta),N'(\delta)\}$ that $\bbP(\tilde{A}\cap \tilde{A}')\geq 1-2\delta$. Under the event $\tilde{A}\cap \tilde{A}'$, we have the following Taylor expansion:
\begin{align} \label{ineq:pf:taylor1_mis}
\ell_{n}^{w}(\beta) - \ell_{n}^{w}(\beta^{\star}) 
& \stackrel{\text{by Assumption \ref{assm6}, \ref{assm2_mis}}}{\leq} (\beta - \beta^{\star})^{T} \nabla \ell_{n}^{w} (\beta^{\star}) +  \frac{1}{2} (\beta - \beta^{\star})^{T} \nabla^{2} \ell_{n}^{w} (\beta^{\star}) (\beta - \beta^{\star}) + \frac{W B_{3}}{6} \|\beta-\beta^{\star}\|_{2}^{3} \notag \\
& \stackrel{\bbE_{\bbP_S} [\nabla \ell_n^{w}(\beta^{\star})]=0}{=} (\beta - \beta^{\star})^{T} g  +  \frac{1}{2} (\beta - \beta^{\star})^{T} \nabla^{2} \ell_{n}^{w} (\beta^{\star}) (\beta - \beta^{\star}) + \frac{W B_{3}}{6} \|\beta-\beta^{\star}\|_{2}^{3} \notag \\
& \stackrel{\text{by (\ref{ineq:pf:concentration3_mis})}}{\leq} (\beta - \beta^{\star})^{T} g + \frac{1}{2} (\beta - \beta^{\star})^{T} H_w (\beta - \beta^{\star}) + W B_{2}\sqrt{\frac{\log \frac{d}{\delta}}{n}} \|\beta-\beta^{\star}\|_{2}^{2} + \frac{W B_{3}}{6} \|\beta-\beta^{\star}\|_{2}^{3}  \notag \\
& \stackrel{\Delta_{\beta}:=\beta-\beta^{\star}}{=} \Delta_{\beta}^{T}g + \frac{1}{2}\Delta_{\beta}^{T} H_w \Delta_{\beta} + W B_{2}\sqrt{\frac{\log \frac{d}{\delta}}{n}} \|\Delta_{\beta}\|_{2}^{2} + \frac{W B_{3}}{6} \|\Delta_{\beta}\|_{2}^{3} \notag \\
&= \frac{1}{2}(\Delta_{\beta}-z)^{T} H_w (\Delta_{\beta}-z) - \frac{1}{2}z^{T}H_wz + W B_{2}\sqrt{\frac{\log \frac{d}{\delta}}{n}} \|\Delta_{\beta}\|_{2}^{2} + \frac{W B_{3}}{6} \|\Delta_{\beta}\|_{2}^{3}
\end{align} 
where $z:=-H_w^{-1}g$. Similarly
\begin{align} \label{ineq:pf:taylor2_mis}
\ell_{n}^{w}(\beta) - \ell_{n}^{w}(\beta^{\star}) \geq \frac{1}{2}(\Delta_{\beta}-z)^{T} H_w (\Delta_{\beta}-z) - \frac{1}{2}z^{T}H_w z - W B_{2}\sqrt{\frac{\log \frac{d}{\delta}}{n}} \|\Delta_{\beta}\|_{2}^{2} - \frac{W B_{3}}{6} \|\Delta_{\beta}\|_{2}^{3}.
\end{align}

Notice that $\Delta_{\beta^{\star}+z} = z$, by (\ref{ineq:pf:concentration1_mis}) and (\ref{ineq:pf:taylor1_mis}), we have
\begin{align} \label{ineq:pf:taylor3_mis}
&\ell_{n}^{w}(\beta^{\star}+z)- \ell_{n}^{w}(\beta^{\star}) \notag\\
&\leq - \frac{1}{2}z^{T} H_w z+ WB_{2}\sqrt{\frac{\log \frac{d}{\delta}}{n}}\left( c\sqrt{\frac{\Tr(G_w H_w^{-2}) \log \frac{d}{\delta}}{n}}+ W B_{1}\|H_w^{-1}\|_{2}\log^{\gamma}(W\lambda^{-1/2}\tilde{\alpha}_1) \frac{\log \frac{d}{\delta}}{n}\right)^{2} \notag\\
&\quad+ \frac{W B_{3}}{6}\left( c\sqrt{\frac{\Tr(G_w H_w^{-2}) \log \frac{d}{\delta}}{n}}+ W B_{1}\|H_w^{-1}\|_{2}\log^{\gamma}(W\lambda^{-1/2}\tilde{\alpha}_1) \frac{\log \frac{d}{\delta}}{n}\right)^{3}\notag\\
&\leq - \frac{1}{2}z^{T} H_w z+2c^2W B_2\Tr(G_w H_w^{-2})(\frac{\log \frac{d}{\delta}}{n})^{1.5}+2W
^3 B^2_1B_2\|H_w^{-1}\|_{2}^2\log^{2\gamma}(W\lambda^{-1/2}\tilde{\alpha}_1) (\frac{\log \frac{d}{\delta}}{n})^{2.5}\notag\\
&\quad\quad +\frac23 c^3W B_3 \Tr(G_w H_w^{-2})^{1.5}(\frac{\log \frac{d}{\delta}}{n})^{1.5}+\frac23 W^4B^3_1B_3\|H_w^{-1}\|_{2}^3\log^{3\gamma}(W\lambda^{-1/2}\tilde{\alpha}_1)(\frac{\log \frac{d}{\delta}}{n})^{3}.
\end{align}

For any $\beta \in \bbB_{\beta^{\star}}(3c\sqrt{\frac{\Tr(G_w H_w^{-2})\log\frac{d}{\delta}}{n}})$, by (\ref{ineq:pf:taylor2_mis}), we have
\begin{align} \label{ineq:pf:taylor4_mis}
\ell_{n}^{w}(\beta)-\ell_{n}^{w}(\beta^{\star}) 
&\geq \frac{1}{2}(\Delta_{\beta}-z)^{T} H_w (\Delta_{\beta}-z) - \frac{1}{2}z^{T}H_w z \notag\\
&-9c^2W B_2\Tr(G_w H_w^{-2})(\frac{\log \frac{d}{\delta}}{n})^{1.5}-\frac92 c^3W B_3\Tr(G_w H_w^{-2})^{1.5}(\frac{\log \frac{d}{\delta}}{n})^{1.5}.
\end{align}
(\ref{ineq:pf:taylor4_mis}) - (\ref{ineq:pf:taylor3_mis}) gives
\begin{align} \label{ineq:pf:taylor5_mis}
&\ell_{n}^{w}(\beta)-\ell_{n}^{w}(\beta^{\star}+z)  \notag\\
&\geq \frac{1}{2}(\Delta_{\beta}-z)^{T} H_w (\Delta_{\beta}-z) \notag \\
&-\bigg(11c^2W B_2\Tr(G_w H_w^{-2})(\frac{\log \frac{d}{\delta}}{n})^{1.5}+\frac{31}{6}c^3W B_3\Tr(G_w H_w^{-2})^{1.5}(\frac{\log \frac{d}{\delta}}{n})^{1.5}\notag\\
&\quad\quad+ 2W
^3 B^2_1B_2\|H_w^{-1}\|_{2}^2\log^{2\gamma}(W\lambda^{-1/2}\tilde{\alpha}_1) (\frac{\log \frac{d}{\delta}}{n})^{2.5}
+\frac23 W^4B^3_1B_3\|H_w^{-1}\|_{2}^3\log^{3\gamma}(W\lambda^{-1/2}\tilde{\alpha}_1)(\frac{\log \frac{d}{\delta}}{n})^{3}
\bigg)
\end{align}
Consider the ellipsoid 
\begin{align*}
\cD:=\bigg\{\beta \in \bbR^{d} \,\bigg|\, \frac{1}{2}&(\Delta_{\beta}-z)^{T} H_w (\Delta_{\beta}-z) \notag \\
&\leq11c^2W B_2\Tr(G_w H_w^{-2})(\frac{\log \frac{d}{\delta}}{n})^{1.5}+\frac{31}{6}c^3W B_3\Tr(G_w H_w^{-2})^{1.5}(\frac{\log \frac{d}{\delta}}{n})^{1.5}\notag\\
&\quad\quad+ 2W
^3 B^2_1B_2\|H_w^{-1}\|_{2}^2\log^{2\gamma}(W\lambda^{-1/2}\tilde{\alpha}_1) (\frac{\log \frac{d}{\delta}}{n})^{2.5}\notag\\
&\quad\quad+\frac23 W^4B^3_1B_3\|H_w^{-1}\|_{2}^3\log^{3\gamma}(W\lambda^{-1/2}\tilde{\alpha}_1)(\frac{\log \frac{d}{\delta}}{n})^{3}
\bigg\}
\end{align*}
Then by (\ref{ineq:pf:taylor5_mis}), for any $\beta \in \bbB_{\beta^{\star}}(3c\sqrt{\frac{\Tr(G_w H_w^{-2})\log\frac{d}{\delta}}{n}}) \cap \cD^{C}$, we have
\begin{align} \label{ineq:pf:ellipsoid_mis}
\ell_{n}^{w}(\beta)-\ell_{n}^{w}(\beta^{\star}+z) > 0.    
\end{align}
Notice that by the definition of $\cD$, using $\lambda_{\min}^{-1}(H_w)= \|H_w^{-1}\|_{2}$, we have for any $\beta \in \cD$,
\begin{align*}
\|\Delta_{\beta}-z\|_{2}^{2} 
&\leq 22c^2\|H_w^{-1}\|_{2}W B_2\Tr(G_w H_w^{-2})(\frac{\log \frac{d}{\delta}}{n})^{1.5}+\frac{31}{3}c^3\|H_w^{-1}\|_{2}W B_3\Tr(G_w H_w^{-2})^{1.5}(\frac{\log \frac{d}{\delta}}{n})^{1.5}\notag\\
&\quad\quad
+4\|H_w^{-1}\|_{2}W^3 B^2_1B_2\|H_w^{-1}\|_{2}^2\log^{2\gamma}(W\lambda^{-1/2}\tilde{\alpha}_1) (\frac{\log \frac{d}{\delta}}{n})^{2.5}\notag\\
&\quad\quad+\frac43\|H_w^{-1}\|_{2}W^4B^3_1B_3\|H_w^{-1}\|_{2}^3\log^{3\gamma}(W\lambda^{-1/2}\tilde{\alpha}_1)(\frac{\log \frac{d}{\delta}}{n})^{3}.
\end{align*}
Thus for any $\beta \in \cD$, 
\begin{align*}
\|\Delta_{\beta}\|_{2}^{2} &\leq 2(\|\Delta_{\beta}-z\|_{2}^{2}+\|z\|_{2}^{2})    \\
& \stackrel{\text{by} (\ref{ineq:pf:concentration1_mis})}{\leq} 
44c^2\|H_w^{-1}\|_{2}W B_2\Tr(G_w H_w^{-2})(\frac{\log \frac{d}{\delta}}{n})^{1.5}+\frac{62}{3}c^3\|H_w^{-1}\|_{2}W B_3\Tr(G_w H_w^{-2})^{1.5}(\frac{\log \frac{d}{\delta}}{n})^{1.5}\notag\\
&\quad\quad
+ 8\|H_w^{-1}\|_{2}W^3 B^2_1B_2\|H_w^{-1}\|_{2}^2\log^{2\gamma}(W\lambda^{-1/2}\tilde{\alpha}_1) (\frac{\log \frac{d}{\delta}}{n})^{2.5}\notag\\
&\quad\quad +\frac83 \|H_w^{-1}\|_{2}W^4B^3_1B_3\|H_w^{-1}\|_{2}^3\log^{3\gamma}(W\lambda^{-1/2}\tilde{\alpha}_1)(\frac{\log \frac{d}{\delta}}{n})^{3}\notag\\
&\quad\quad +4c^2\Tr(G_w H_w^{-2}) \frac{ \log \frac{d}{\delta}}{n}+4 W^2 B_{1}^2\|H_w^{-1}\|_{2}^2 \log^{2\gamma}(W\lambda^{-1/2}\tilde{\alpha}_1)(\frac{\log \frac{d}{\delta}}{n})^2.
\end{align*}
To guarantee $\Tr(G_w H_w^{-2}) \frac{ \log \frac{d}{\delta}}{n}$ is the leading term, we only need $\Tr(G_w H_w^{-2}) \frac{ \log \frac{d}{\delta}}{n}$ to dominate the rest of the terms. Hence, if we further have $n\geq cN_1\log(d/\delta)$, it then holds that
\begin{align*}
 \|\Delta_{\beta}\|_{2}^{2}
 \leq 9c^2\Tr(G_w H_w^{-2}) \frac{ \log \frac{d}{\delta}}{n},
\end{align*}
i.e., $\beta \in \bbB_{\beta^{\star}}(3c\sqrt{\frac{\Tr(G_w H_w^{-2})\log \frac{d}{\delta}}{n}})$. 
Here 
\begin{align*}
N_1:=\max\bigg\{
&W^2 B_{2}^2\|H_w^{-1}\|_{2}^2,
W^2 B_{3}^2\Tr(G_w H_w^{-2})\|H_w^{-1}\|_{2}^2,
\left(\frac{W^3 B^2_1B_2\|H_w^{-1}\|_{2}^3\log^{2\gamma}(W\lambda^{-1/2}\tilde{\alpha}_1)}{\Tr(G_w H_w^{-2})}\right)^{\frac23},\notag\\
&\left(\frac{W^4B^3_1B_3\|H_w^{-1}\|_{2}^4\log^{3\gamma}(W\lambda^{-1/2}\tilde{\alpha}_1)}{\Tr(G_w H_w^{-2})}\right)^{\frac12},
\frac{ W^2 B_{1}^2\|H_w^{-1}\|_{2}^2\log^{2\gamma}(W\lambda^{-1/2}\tilde{\alpha}_1)}{\Tr(G_w H_w^{-2})}
\bigg\}.
\end{align*}
In other words, we show that $\cD \subset \bbB_{\beta^{\star}}(3c\sqrt{\frac{\Tr(G_w H_w^{-2})\log \frac{d}{\delta}}{n}})$. Recall that by (\ref{ineq:pf:ellipsoid_mis}), we know that for any $\beta \in \bbB_{\beta^{\star}}(3c\sqrt{\frac{\Tr(G_w H_w^{-2})\log \frac{d}{\delta}}{n}}) \cap \cD^{C}$, 
\begin{align*} 
\ell_{n}^{w}(\beta)-\ell_{n}^{w}(\beta^{\star}+z) > 0.
\end{align*}
Note that $\beta^{\star}+z\in\cD$.
Hence there is a local minimum of $\ell_{n}^{w}(\beta)$ in $\cD$. Under the event $\tilde{A}'$, we know that the global minimum of $\ell_{n}^{w}(\beta)$ is in $\cD$, i.e., 
\begin{align*}
    \beta_{\MWLE} \in \cD \subset \bbB_{\beta^{\star}}(3c\sqrt{\frac{\Tr(G_w H_w^{-2})\log \frac{d}{\delta}}{n}}).
\end{align*}
\end{proof}

\paragraph{Proof of Lemma \ref{claim2_mis}}
\begin{proof}[Proof of Lemma \ref{claim2_mis}]
Let $\tilde{E}:= \{ \beta_{\MWLE} \in \cD \subset \bbB_{\beta^{\star}}(\sqrt{\frac{\Tr(G_w H_w^{-2})\log \frac{d}{\delta}}{n}}) \}$. Then by the proof of Lemma \ref{claim1_mis}, for any $\delta\in (0,1)$ and any $n\geq c\max\{N_1\log(d/\delta), N(\delta), N'(\delta)\}$, we have $\bbP(\tilde{E}) \geq 1-2\delta$.

By taking $A= H_w^{-\frac{1}{2}}$ in Assumption \ref{assm1_mis}, for any $\delta\in (0,1)$ and any $n\geq N(\delta)$, with probability at least $1-\delta$, we have:
\begin{align} \label{ineq:pf:concentration2_mis}
    \|H_w^{-\frac{1}{2}}g\|_{2} &\leq c\sqrt{\frac{\Tr(G_w H_w^{-1}) \log \frac{d}{\delta}}{n}}+ W B_{1}\|H_w^{-\frac{1}{2}}\|_{2} \log^{\gamma}\left(\frac{W B_{1}\|H_w^{-\frac{1}{2}}\|_{2}}{\sqrt{\Tr(G_w H_w^{-1})}}\right) \frac{\log \frac{d}{\delta}}{n}\notag\\
    &\leq c\sqrt{\frac{\Tr(G_w H_w^{-1}) \log \frac{d}{\delta}}{n}}+ W B_{1}\|H_w^{-\frac{1}{2}}\|_{2} \log^{\gamma}\left(\frac{W B_{1}\|H_w^{-\frac{1}{2}}\|_{2}}{\sqrt{\Tr(G_w H_w^{-2})\|H_w^{-1}\|^{-1}_{2}}}\right) \frac{\log \frac{d}{\delta}}{n}\notag\\
    &= c\sqrt{\frac{\Tr(G_w H_w^{-1}) \log \frac{d}{\delta}}{n}}+ W B_{1}\|H_w^{-\frac{1}{2}}\|_{2} \log^{\gamma}(W\lambda^{-1/2}\tilde{\alpha}_1)\frac{\log \frac{d}{\delta}}{n}
\end{align}
We denote $\tilde{E}':=\{\eqref{ineq:pf:concentration2_mis} \text{ holds}\}$. Then for any $\delta$ and any $n\geq c\max\{N_1(M)\log(d/\delta), N(\delta), N'(\delta)\}$, we have $\bbP(\tilde{E}\cap \tilde{E}')\geq 1-3\delta$. 

Under $\tilde{E}\cap \tilde{E}'$, $\beta_{\MWLE} \in \cD$, i.e., 
\begin{align*}
&\frac{1}{2}(\Delta_{\beta_{\MWLE}}-z)^{T}H_w(\Delta_{\beta_{\MWLE}}-z) \\
&\leq 11c^2 W B_2\Tr(G_w H_w^{-2})(\frac{\log \frac{d}{\delta}}{n})^{1.5}+\frac{31}{6}c^3W B_3\Tr(G_w H_w^{-2})^{1.5}(\frac{\log \frac{d}{\delta}}{n})^{1.5}\notag\\
&\quad\quad+ 2W
^3 B^2_1B_2\|H_w^{-1}\|_{2}^2\log^{2\gamma}(W\lambda^{-1/2}\tilde{\alpha}_1) (\frac{\log \frac{d}{\delta}}{n})^{2.5}
+\frac23 W^4B^3_1B_3\|H_w^{-1}\|_{2}^3\log^{3\gamma}(W\lambda^{-1/2}\tilde{\alpha}_1)(\frac{\log \frac{d}{\delta}}{n})^{3}.
\end{align*}
In other words,
\begin{align} \label{ineq:pf:claim2_mis}
&\|H_w^{\frac{1}{2}}(\Delta_{\beta_{\MWLE}}-z)\|_{2}^{2}\notag\\
&\leq 22c^2 W B_2\Tr(G_w H_w^{-2})(\frac{\log \frac{d}{\delta}}{n})^{1.5}+\frac{31}{3}c^3W B_3\Tr(G_w H_w^{-2})^{1.5}(\frac{\log \frac{d}{\delta}}{n})^{1.5}\notag\\
&\quad\quad+ 4W
^3 B^2_1B_2\|H_w^{-1}\|_{2}^2\log^{2\gamma}(W\lambda^{-1/2}\tilde{\alpha}_1) (\frac{\log \frac{d}{\delta}}{n})^{2.5}
+\frac43 W^4B^3_1B_3\|H_w^{-1}\|_{2}^3\log^{3\gamma}(W\lambda^{-1/2}\tilde{\alpha}_1)(\frac{\log \frac{d}{\delta}}{n})^{3}.
\end{align}
Thus we have
\begin{align}
&\|H_w^{\frac{1}{2}}(\beta_{\MWLE}-\beta^{\star})\|_{2}^{2} \notag\\
& = \|H_w^{\frac{1}{2}}\Delta_{\beta_{\MWLE}}\|_{2}^{2} \notag \\
& = \|H_w^{\frac{1}{2}}(\Delta_{\beta_{\MWLE}}-z) +H_w^{\frac{1}{2}}z \|_{2}^{2} \notag \\
& \leq 2\|H_w^{\frac{1}{2}}(\Delta_{\beta_{\MWLE}}-z)\|_{2}^{2} + 2\|H_w^{\frac{1}{2}}z \|_{2}^{2} \notag \\
& = 2\|H_w^{\frac{1}{2}}
(\Delta_{\beta_{\MWLE}}-z))\|_{2}^{2} + 2\|H_w^{-\frac{1}{2}}g \|_{2}^{2} \notag \\
& \stackrel{\text{by} (\ref{ineq:pf:claim2_mis}) \text{and} (\ref{ineq:pf:concentration2_mis})}{\leq}4c^2 \frac{\Tr(G_w H_w^{-1}) \log \frac{d}{\delta}}{n}  \notag \\
&+
44c^2W B_2\Tr(G_w H_w^{-2})(\frac{\log \frac{d}{\delta}}{n})^{1.5}
+\frac{62}{3}c^3W B_3\Tr(G_w H_w^{-2})^{1.5}(\frac{\log \frac{d}{\delta}}{n})^{1.5}\notag\\
&\quad\quad
+ 8W^3 B^2_1B_2\|H_w^{-1}\|_{2}^2\log^{2\gamma}(W\lambda^{-1/2}\tilde{\alpha}_1) (\frac{\log \frac{d}{\delta}}{n})^{2.5}
+\frac83 W^4B^3_1B_3\|H_w^{-1}\|_{2}^3\log^{3\gamma}(W\lambda^{-1/2}\tilde{\alpha}_1)(\frac{\log \frac{d}{\delta}}{n})^{3}\notag\\
&\quad\quad 
+4W^2 B_{1}^2\|H_w^{-1}\|_{2} \log^{2\gamma}(W\lambda^{-1/2}\tilde{\alpha}_1) (\frac{\log \frac{d}{\delta}}{n})^2
\end{align}
To guarantee $ \frac{\Tr(G_w H_w^{-1}) \log \frac{d}{\delta}}{n}$ is the leading term, we only need $ \frac{\Tr(G_w H_w^{-1}) \log \frac{d}{\delta}}{n}$ to dominate the rest of the terms. Hence, if we further have $n\geq cN_2\log(d/\delta)$, we have
\begin{align*}
 \|H_w^{\frac{1}{2}}(\beta_{\MWLE}-\beta^{\star})\|_{2}^{2}\leq   9c^2\frac{\Tr(G_w H_w^{-1}) \log \frac{d}{\delta}}{n} .
\end{align*}
Here 
\begin{align*}
N_2:=\max\bigg\{
&\left(\frac{W B_2\Tr(G_w H_w^{-2})}{\Tr(G_w H_w^{-1}) }\right)^{2},
\left(\frac{W B_3\Tr(G_w H_w^{-2})^{1.5}}{\Tr(G_w H_w^{-1}) }\right)^{2},
\left(\frac{ W^3 B^2_1B_2\|H_w^{-1}\|_{2}^2\log^{2\gamma}(W\lambda^{-1/2}\tilde{\alpha}_1)}{\Tr(G_w H_w^{-1})}\right)^{\frac23},\notag\\
&\left(\frac{W^4B^3_1B_3\|H_w^{-1}\|_{2}^3\log^{3\gamma}(W\lambda^{-1/2}\tilde{\alpha}_1)}{\Tr(G_w H_w^{-1}) }\right)^{\frac12},
\frac{W^2 B_{1}^2\|H_w^{-1}\|_{2}\log^{2\gamma}(W\lambda^{-1/2}\tilde{\alpha}_1)}{\Tr(G_w H_w^{-1}) }
\bigg\}.
\end{align*}
To summarize, we show that for any $\delta$ and any $n\geq c\max\{N_1\log(d/\delta), N_2\log(d/\delta), N(\delta), N'(\delta)\}$, with probability at least $1-3\delta$, we have
\begin{align*}
 \|H_w^{\frac{1}{2}}(\beta_{\MWLE}-\beta^{\star})\|_{2}^{2}\leq   9c^2\frac{\Tr(G_w H_w^{-1}) \log \frac{d}{\delta}}{n} .
\end{align*}
\end{proof}

\subsection{Proofs for Theorem \ref{thm:mis_optimal}}

\begin{proof}[Proof of Theorem \ref{thm:mis_optimal}]
For any $W > 1$, we construct $\bbP_S(X)$, $\bbP_T(X)$, $\cM$ and $\cF$ as follows. 
We define $\bbP_T(X):=\uni(\bbB(1))$ and $\bbP_S(X):=\uni(\bbB(W^{\frac{1}{d}}))$, where $\bbB(1)$ and $\bbB(W^{\frac{1}{d}})$ are $d$-dimensional balls centered around the original with radius $1$ and $W^{\frac{1}{d}}$, respectively. For notation simplicity, we denote $Q:=\bbB(1)$ and $P:=\bbB(W^{\frac{1}{d}})$ in the following. The density ratios is then given by
\begin{align*}
w(x):=\frac{d\bbP_T(x)}{d\bbP_S(x)}=
\begin{cases}
    W & x\in Q\\
    0 & x\notin Q
\end{cases},
\end{align*}
which is upper bounded by $W$. We further have 
\begin{align*}
 \cI_S(\beta)=\bbE_{x\sim\bbP_S(X)}[xx^T]= \frac{W^{\frac{2}{d}}}{3d}I_d\succ 0,\,\, \cI_T(\beta)=\bbE_{x\sim\bbP_T(X)}[xx^T]= \frac{1}{3d}I_d\succ 0.
\end{align*}


Let $\cF:=\{f(y\,|\,x;\beta)\,|\,\beta\in\bbR^d\}$ be the linear regression class, i.e., $-\log f(y\,|\,x;\beta)=(\log 2\pi)/2+(y-x^{T}\beta)^2/2$.
We assume the true conditional distribution belongs to a class $\cM$ that is defined as
\begin{align*}
\cM:=\left\{Y\,|\,X \,\,{\sf s.t } \,\,\,p(y\,|\,x)=f(y\,|\,x;\beta^{\star}_1)\mathbf{1}_{\{x\in Q\}}+f(y\,|\,x;\beta^{\star}_2)\mathbf{1}_{\{x\in P\setminus Q \}}, \beta^{\star}_1,\beta^{\star}_2\in\bbB_{\beta_0}(B)\right\}
\end{align*}
for some $\beta_0\in\bbR^d$ and $B>0$.
We utilize the function class $\cF$ to approximate the true conditional density function, which subsequently results in model mis-specification.
In the sequel, we will show the lower bound of excess risk for any estimators under this model class $\cM$.

Fix any ground truth model $M\in\cM$, that is, we are assuming the true conditional distribution follows the form:
\begin{align*}
p(y\,|\,x)=f(y\,|\,x;\beta^{\star}_1)\mathbf{1}_{\{x\in Q\}}+f(y\,|\,x;\beta^{\star}_2)\mathbf{1}_{\{x\in P\setminus Q \}},    
\end{align*}
where $\beta^{\star}_1$ and $\beta^{\star}_2$ are arbitrarily chosen fixed points from $\bbB_{\beta_0}(B)$. Note that the model is actually well-specified on the target domain. Hence the optimal fit on the target is given by
\begin{align*}
\beta^{\star}(M)=\argmin_{\beta}\bbE_{(x,y)\sim\bbP_T(X,Y)}[\ell(x,y,\beta)]=\beta^{\star}_1.
\end{align*}

For linear regression, it is easy to verify that Assumption \ref{assm4}, \ref{assm5} and \ref{assm7} hold. Let $R_0$ and $R_1$ be the parameters chosen by Lemma \ref{lem:fisher_inform}. Then similar to the proofs of Theorem \ref{thm:well_lower}, we have
\begin{align}\label{ineq:pf:mis_optimal1}
&\inf_{\hat\beta}\sup_{M\in\cM}\bbE_{(x_i,y_i)\sim\bbP_S(X,Y)}\left[R_M(\hat\beta)\right]\notag\\
&=\inf_{\hat\beta}\sup_{\beta^{\star}_1,\beta^{\star}_2\in\bbB_{\beta_0}(B)}\bbE_{(x_i,y_i)\sim\bbP_S(X,Y)}\left[R_{\beta^{\star}_1}(\hat\beta)\right]\notag\\
&\geq \inf_{\hat\beta}\sup_{\beta^{\star}_1,\beta^{\star}_2\in\bbB_{\beta_0}(R_1)}\bbE_{(x_i,y_i)\sim\bbP_S(X,Y)}\left[R_{\beta^{\star}_1}(\hat\beta)\right]\notag\\
&\geq \inf_{\hat\beta\in\bbB_{\beta_0}(R_0)}\sup_{\beta^{\star}_1,\beta^{\star}_2\in\bbB_{\beta_0}(R_1)}\bbE_{(x_i,y_i)\sim\bbP_S(X,Y)}\left[R_{\beta^{\star}_1}(\hat\beta)\right]\notag\\
&\geq \frac14 \inf_{\hat\beta\in\bbB_{\beta_0}(R_0)}\sup_{\beta^{\star}_1,\beta^{\star}_2\in\bbB_{\beta_0}(R_1)}\bbE_{(x_i,y_i)\sim\bbP_S(X,Y)}\left[(\hat\beta-\beta^{\star}_1)^{T}\cI_T(\beta_0)(\hat\beta-\beta^{\star}_1)\right]\notag\\
&\geq \frac14 \inf_{\hat\beta\in\bbB_{\beta_0}(R_0)}\sup_{\beta^{\star}_1,\beta^{\star}_2\in C_{\beta_0}(\frac{R_1}{\sqrt{d}})}\bbE_{(x_i,y_i)\sim\bbP_S(X,Y)}\left[(\hat\beta-\beta^{\star}_1)^{T}\cI_T(\beta_0)(\hat\beta-\beta^{\star}_1)\right]\notag\\
&= \frac14 \inf_{\hat\beta\in\bbB_{\beta_0}(R_0)}\sup_{[\beta^{\star T}_1,\beta^{\star T}_2]\in C_{[\beta^{T}_0,\beta^{T}_0]}(\frac{R_1}{\sqrt{d}})}\bbE_{(x_i,y_i)\sim\bbP_S(X,Y)}\left[(\hat\beta-\beta^{\star}_1)^{T}\cI_T(\beta_0)(\hat\beta-\beta^{\star}_1)\right]
\end{align}
By Theorem 1 in \cite{gill1995applications} (multivariate van Trees inequality) with $\psi(\beta_1^{\star},\beta_2^{\star})=\beta_1^{\star}$, $C(\beta_1^{\star},\beta_2^{\star})\equiv C:=[WI_d,0]\in\bbR^{d\times 2d}$ and $B(\beta_1^{\star},\beta_2^{\star})\equiv B:=\cI_T^{-1}(\beta_0)$, we have for any estimator $\hat\beta$ and good prior density $\lambda$ that supported on $C_{[\beta^{T}_0,\beta^{T}_0]}(\frac{R_1}{\sqrt{d}})$,
\begin{align*}
\bbE_{[\beta^{\star T}_1,\beta^{\star T}_2]\sim\lambda}\bbE_{(x_i,y_i)\sim\bbP_S(X,Y)}\left[(\hat\beta-\beta^{\star}_1)^{T}\cI_T(\beta_0)(\hat\beta-\beta^{\star}_1)\right]\geq \frac{(Wd)^2}{2nWd+\tilde{\cI}(\lambda)},
\end{align*}
where
\begin{align*}
\tilde{\cI}(\lambda) = \int_{C_{[\beta^{T}_0,\beta^{T}_0]}(\frac{R_1}{\sqrt{d}})}\left(\sum_{i,j,k,\ell}B_{ij}C_{ik}C_{j\ell}\frac{\partial}{\partial\tilde{\beta}_k}\lambda(\tilde{\beta})\frac{\partial}{\partial\tilde{\beta}_{\ell}}\lambda(\tilde{\beta})\right)  \frac{1}{\lambda(\tilde{\beta})} d\tilde{\beta}.
\end{align*}
Let $\tilde{\beta}_0=[\beta_{0,1},\ldots,\beta_{0,d},\beta_{0,1},\ldots,\beta_{0,d}]^{T}$, $\tilde{\beta}=[\beta_{1},\ldots,\beta_{2d}]^{T}$ and
\begin{align*}
    f_i(x) := \frac{\pi\sqrt{d}}{4R_1}\cos\left(\frac{\pi\sqrt{d}}{2R_1}(x-\tilde{\beta}_{0,i})\right),\, i=1,\ldots,2d.
\end{align*}
We define the prior density as
\begin{align*}
\lambda(\tilde{\beta}):=
\begin{cases} 
      \Pi^{2d}_{i=1}f_i(\beta_i) & \tilde{\beta}\in C_{[\beta^{T}_0,\beta^{T}_0]}(\frac{R_1}{\sqrt{d}})\\
      0 & \tilde{\beta}\notin C_{[\beta^{T}_0,\beta^{T}_0]}(\frac{R_1}{\sqrt{d}})
\end{cases}.
\end{align*}
Then following the same argument as in the proof of Lemma \ref{lem:van_trees}, we have
\begin{align*}
 \tilde{\cI}(\lambda)=\frac{\pi^2d}{R_1^2}\Tr(BCC^{T})=\frac{\pi^2W^2 d}{R_1^2}\Tr(\cI^{-1}_T(\beta_0)).
\end{align*}
As a result, for any estimator $\hat\beta$, we have
\begin{align*}
&\bbE_{[\beta^{\star T}_1,\beta^{\star T}_2]\sim\lambda}\bbE_{(x_i,y_i)\sim\bbP_S(X,Y)}\left[(\hat\beta-\beta^{\star}_1)^{T}\cI_T(\beta_0)(\hat\beta-\beta^{\star}_1)\right]\notag\\
&\quad\geq \frac{(Wd)^2}{2nWd+\frac{\pi^2W^2 d}{R_1^2}\Tr(\cI^{-1}_T(\beta_0))},
\end{align*}
which implies
\begin{align}\label{ineq:pf:mis_optimal2}
&\sup_{[\beta^{\star T}_1,\beta^{\star T}_2]\in C_{[\beta^{T}_0,\beta^{T}_0]}(\frac{R_1}{\sqrt{d}})}\bbE_{(x_i,y_i)\sim\bbP_S(X,Y)}\left[(\hat\beta-\beta^{\star}_1)^{T}\cI_T(\beta_0)(\hat\beta-\beta^{\star}_1)\right]\notag\\
&\geq \bbE_{[\beta^{\star T}_1,\beta^{\star T}_2]\sim\lambda}\bbE_{(x_i,y_i)\sim\bbP_S(X,Y)}\left[(\hat\beta-\beta^{\star})^{T}\cI_T(\beta_0)(\hat\beta-\beta^{\star})\right]\notag\\
&\geq \frac{(Wd)^2}{2nWd+\frac{\pi^2W^2 d}{R_1}\Tr(\cI^{-1}_T(\beta_0))}.
\end{align}
Combine \eqref{ineq:pf:mis_optimal1} and \eqref{ineq:pf:mis_optimal2}, we have
\begin{align*}
 \inf_{\hat\beta}\sup_{M\in\cM}\bbE_{(x_i,y_i)\sim\bbP_S(X,Y)}\left[R_M(\hat\beta)\right]
 \geq \frac14 \cdot \frac{(Wd)^2}{2nWd+\frac{\pi^2W^2 d}{R_1}\Tr(\cI^{-1}_T(\beta_0))}
 \gtrsim \frac{Wd}{n}
\end{align*}
when $n$ is sufficiently large.

Recall that \begin{align*}
H_w(M)=\bbE_{(x,y)\sim\bbP_T(X,Y)}\left[\nabla^2\ell(x,y,\beta^{\star}(M))\right]=\bbE_{(x,y)\sim\bbP_T(X,Y)}\left[\nabla^2\ell(x,y,\beta^{\star}_1)\right]=\cI_T(\beta^{\star}_1).
\end{align*}
and by the definition of $w(x)$, we further have
\begin{align*}
G_w(M)
&=\bbE_{(x,y)\sim\bbP_S(X,Y)}\left[w(x)^2\nabla\ell(x,y,\beta^{\star}(M))\nabla\ell(x,y,\beta^{\star}(M))^{T}\right]  \\
&=\bbE_{(x,y)\sim\bbP_T(X,Y)}\left[w(x)\nabla\ell(x,y,\beta^{\star}(M))\nabla\ell(x,y,\beta^{\star}(M))^{T}\right]\\
&=W\bbE_{(x,y)\sim\bbP_T(X,Y)}\left[\nabla\ell(x,y,\beta^{\star}(M))\nabla\ell(x,y,\beta^{\star}(M))^{T}\right]\\
&=W\bbE_{(x,y)\sim\bbP_T(X,Y)}\left[\nabla\ell(x,y,\beta^{\star}_1)\nabla\ell(x,y,\beta^{\star}_1)^{T}\right]\\
&=W\cI_T(\beta^{\star}_1).
\end{align*}
Therefore $\Tr (G_w(M)H_w(M)^{-1})= Wd$, which gives the desired result. What remains is to verify that $\cM$ satisfies Assumption \ref{assm6}, \ref{assm1_mis}, \ref{assm2_mis} and \ref{assm3_mis}. Assumption \ref{assm6} is trivially satisfied. For Assumption \ref{assm1_mis} and \ref{assm2_mis}, notice that 
\begin{align*}
&\nabla\ell(x,y,\beta)
=-x(y-x^{T}\beta),\\
&\nabla^2\ell(x,y,\beta)
=xx^T,\\
&\nabla^3\ell(x,y,\beta)
=0.
\end{align*}
and
\begin{align*}
w(x):=\frac{d\bbP_T(x)}{d\bbP_S(x)}=
\begin{cases}
    W & x\in Q\\
    0 & x\notin Q
\end{cases},
\end{align*}
By the definition of $\cM$, we can write the distribution of $y$ as 
\begin{align*}
y_i=
\begin{cases}
    x_i^{T} \beta_1^{\star} + \epsilon_i & x_i\in Q\\
    x_i^{T} \beta_2^{\star} + \epsilon_i & x_i\notin Q
\end{cases},
\end{align*}
where $\epsilon_i$ is a $\cN(0,1)$ noise independent of all $x_i$'s. Therefore let $u_i := A w(x_i)\nabla\ell(x_i,y_i,\beta^{\star}(M))$,
we have 
\begin{align*}
u_i=
\begin{cases}
    -WA x_i \epsilon_i & x_i\in Q\\
    0 & x_i\notin Q
\end{cases},
\end{align*}
which indicates that $\|u_i\|$ is $\|A\|W$-subgaussian. Therefore by Lemma \ref{lem:concentration_vec}, the vector concentration in Assumption \ref{assm1_mis} is satisfied with $\gamma=0.5$, $B_1=1$. For the matrix concentration, notice that 
\begin{align*}
w(x_i)\nabla^2\ell(x_i,y_i,\beta^{\star}(M))=
\begin{cases}
    W x_i x_i^{T} & x_i\in Q\\
    0 & x_i\notin Q
\end{cases},
\end{align*}
therefore my matrix Hoeffding, $\|w(x_i)\nabla^2\ell(x_i,y_i,\beta^{\star}(M))\|_2 \leq W$, 
thus the matrix concentration in Assumption \ref{assm1_mis} is satisfied with $B_2=1$. Further more, $N(\delta)=0$ is enough for satisfying Assumption \ref{assm1_mis}.

Assumption \ref{assm2_mis} is satisfied with $B_3=0$ since 
$\nabla^3\ell(x,y,\beta)
=0$.

For Assumption \ref{assm3_mis}, we can prove that it is satisfied with $N'(\delta)= \max \{8W \log \frac{1}{\delta}, 2dW\}$. This is because, 
\begin{align*}
    \bbP(\nabla^{2}\ell_{n}^{w}(\beta) \succ 0 \text{ for all $\beta$}) &= \bbP(\frac{W}{n}\sum_{i=1}^{n}x_i x_i^{T} \bbI_{x_i \in Q } \succ 0) \\
    &\geq \bbP(\#\{x_i \in Q\} > d) \\
    &= 1- \bbP(\#\{x_i \in Q\} \leq d) \\
    &\stackrel{\text{by Chernoff bound}}{\geq}  1- \exp (-\frac{\mu}{2}(1-\frac{d}{\mu})^2) \\
    & \geq 1- \delta,
\end{align*}
where $\mu:=\frac{n}{W}$, and the last inequality hold when $n\geq N'(\delta)$. Therefore when $n\geq N'(\delta)$, with probability at least $1-\delta$, $\ell_{n}^{w}$ is strictly convex, therefore has a unique local minimum which is also the global minimum.
\end{proof}
\section{Auxiliaries}
In this section, we present several auxiliary lemmas and propositions.

\subsection{Concentration for gradient and Hessian}
The following lemma gives a generic version of Bernstein inequality for vectors.
\begin{lemma}\label{lem:concentration_vec}
Let $u, u_{1},\cdots,u_{n}$ be i.i.d. mean-zero random vectors. We denote $ V = \bbE[\|u\|^{2}_{2}]$ and 
\begin{align*}
B^{(\alpha)}_{u}:=\inf\{t>0: \bbE[\exp(\|u\|^{\alpha}/t^{\alpha})]\leq 2\},\quad \alpha\geq 1.
\end{align*}
Suppose $B^{(\alpha)}_{u}<\infty$ for some $\alpha\geq 1$. Then there exists an absolute constant $c>0$ such that for all $\delta\in (0,1)$, with probability at least $1-\delta$:
\begin{align*}
     \left\|\frac{1}{n}\sum_{i=1}^{n}u_{i}\right\|_{2} \leq c  \left(\sqrt{\frac{V\log \frac{d}{\delta}}{n}}+B^{(\alpha)}_{u}\left(\log\frac{B^{(\alpha)}_{u}}{\sqrt{V}}\right)^{1/\alpha}\frac{\log \frac{d}{\delta}}{n}\right).
\end{align*}
\end{lemma}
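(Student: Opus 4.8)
The plan is to convert this vector tail bound into a Hermitian matrix tail bound and then invoke a matrix Bernstein–type inequality. First I would pass to the symmetric dilation: for each $u_i$ set $\mathcal D(u_i) := \begin{pmatrix} 0 & u_i^{T} \\ u_i & 0 \end{pmatrix}\in\bbR^{(d+1)\times(d+1)}$, which is Hermitian and mean-zero, and satisfies $\|\mathcal D(u_i)\|_2=\|u_i\|_2$ as well as $\big\|\tfrac1n\sum_i \mathcal D(u_i)\big\|_2=\big\|\mathcal D(\tfrac1n\sum_i u_i)\big\|_2=\big\|\tfrac1n\sum_i u_i\big\|_2$. Thus it suffices to control the operator norm of the matrix average $\tfrac1n\sum_i\mathcal D(u_i)$.

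The second step is to identify the variance proxy. Since $\mathcal D(u)^2=\mathrm{diag}(\|u\|_2^2,\,uu^{T})$, writing $\Sigma:=\bbE[uu^{T}]$ gives $\sum_{i}\bbE[\mathcal D(u_i)^2]=\mathrm{diag}(nV,\,n\Sigma)$, whose operator norm equals $nV$ because $V=\bbE\|u\|_2^2=\Tr(\Sigma)\ge\|\Sigma\|_2$. Hence the matrix variance statistic is exactly $nV$, and after dividing by $n$ this produces the leading term $\sqrt{V\log(d/\delta)/n}$; crucially the dilation dimension $d+1$ enters matrix Bernstein only logarithmically, which is what yields $\log d$ (rather than $d$) in the bound.

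The third step is to handle the unboundedness of the $u_i$. Since $\|u\|_2$ is only sub-Weibull$(\alpha)$ with scale $B:=B_u^{(\alpha)}$, i.e.\ $\bbP(\|u\|_2>t)\le 2\exp(-(t/B)^{\alpha})$, I would truncate at a level $\tau$ to be chosen, splitting $u_i=u_i\mathbf 1\{\|u_i\|_2\le\tau\}+u_i\mathbf 1\{\|u_i\|_2>\tau\}$, and apply the \emph{bounded} matrix Bernstein inequality (see, e.g., \citet{vershynin2018high}) to the centered truncated dilations, for which the almost-sure norm bound is $R=\tau$. This contributes a correction $\tau\log(d/\delta)/n$, which matches the claimed term once $\tau\asymp B\,(\log(B/\sqrt V))^{1/\alpha}$. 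Because the full sum is unbiased, the deterministic means of the truncated and the tail parts cancel, so what remains is to bound the fluctuations of the tail part $\tfrac1n\sum_i(u_i\mathbf 1\{\|u_i\|_2>\tau\}-\bbE[\,\cdot\,])$ and to absorb the event that some $\|u_i\|_2$ exceeds $\tau$.

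The main obstacle is precisely this tail control. Naively insisting that $\max_i\|u_i\|_2\le\tau$ on a $(1-\delta)$-event forces $\tau\sim B(\log(n/\delta))^{1/\alpha}$ and corrupts the correction term with a spurious $\log n$ factor. To recover the stated $(\log(B/\sqrt V))^{1/\alpha}$ one must instead tune $\tau$ against the variance scale $\sqrt V$ and estimate the truncated sub-Weibull moments $\bbE[\|u\|_2^{k}\mathbf 1\{\|u\|_2>\tau\}]$ via the $\alpha$-Orlicz growth $(\bbE\|u\|_2^{p})^{1/p}\lesssim B\,p^{1/\alpha}$; equivalently, since $\alpha\ge1$ implies $\psi_\alpha\subseteq\psi_1$, one may bound the $p$-th matrix moment of $\tfrac1n\sum_i\mathcal D(u_i)$ directly and optimize the moment order $p\asymp\log(d/\delta)$. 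Balancing the truncation bias, the truncated-part Bernstein deviation, and the tail fluctuation so that every term is dominated by $\sqrt{V\log(d/\delta)/n}+B(\log(B/\sqrt V))^{1/\alpha}\,\log(d/\delta)/n$ is the delicate calculation; the dilation, the variance identity, and the final union bounds are routine.
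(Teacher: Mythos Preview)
The paper does not give its own proof of this lemma; it simply cites Proposition~2 of \citet{koltchinskii2011nuclear}. Your dilation-plus-truncated-matrix-Bernstein outline is precisely the standard machinery behind that cited result (Koltchinskii proves the matrix version directly, and the vector statement here is the rank-one/dilation special case), so your approach is aligned with what the paper implicitly invokes. Your identification of the variance proxy via $\mathcal D(u)^2=\mathrm{diag}(\|u\|_2^2,\,uu^T)$ and $\Tr(\Sigma)=V\ge\|\Sigma\|_2$ is correct, as is your diagnosis that the delicate point is choosing the truncation level against $\sqrt V$ rather than against $n$ to avoid a spurious $\log n$; this is exactly how the $(\log(B_u^{(\alpha)}/\sqrt V))^{1/\alpha}$ factor arises in Koltchinskii's argument.
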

\begin{proof}
See Proposition 2 in \cite{koltchinskii2011nuclear} for the proof.
\end{proof}

The following proposition shows that when gradient and Hessian are bounded or sub-Gaussian (sub-exponential), Assumption \ref{assm1} is naturally satisfied.
\begin{proposition} \label{prop:concentration}
If $\|\nabla \ell (x_i,y_i,\beta^{\star})\|_2\leq b_1$ for all $i\in [n]$, then the vector concentration \eqref{assm1:ineq1} is satisfied with $B_1=b_1$ and $\gamma=0$. Alternatively, if $\|\nabla \ell (x_i,y_i,\beta^{\star})\|_2$ is $b_1$-subgaussian, then \eqref{assm1:ineq1} is satisfied with $B_1=b_1$ and $\gamma=1/2$. When $\|\nabla \ell (x_i,y_i,\beta^{\star})\|_2$ is $b_1$-subexponential, then \eqref{assm1:ineq1} is satisfied with $B_1=b_1$ and $\gamma=1$. For the Hessian concenntration, if $\|\nabla^2 \ell (x_i,y_i,\beta^{\star})\|_2\leq b_2$ for all $i\in [n]$, then \eqref{assm1:ineq2} is satisfied with $B_2=b_2$.
\end{proposition}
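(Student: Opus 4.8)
The plan is to derive both concentration bounds in Assumption~\ref{assm1} directly from the generic vector Bernstein inequality of Lemma~\ref{lem:concentration_vec} together with a standard matrix concentration bound. First I would fix the matrix $A$ and set
\[
u_i := A\left(\nabla\ell(x_i,y_i,\beta^\star) - \bbE[\nabla\ell(x_i,y_i,\beta^\star)]\right), \qquad i = 1,\dots,n,
\]
which are i.i.d.\ mean-zero random vectors with $\frac1n\sum_{i=1}^n u_i = A(\nabla\ell_n(\beta^\star) - \bbE[\nabla\ell_n(\beta^\star)])$. A one-line computation using independence and the mean-zero property gives $n\cdot\bbE\|\frac1n\sum_i u_i\|_2^2 = \bbE\|u_1\|_2^2$, so the variance $V$ appearing in Lemma~\ref{lem:concentration_vec} coincides exactly with the $V$ defined in \eqref{assm1:ineq1}. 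It then only remains to control the Orlicz quantity $B_u^{(\alpha)}$ for the appropriate choice of $\alpha$ in each regime.

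The key observation is that centering and left-multiplication by $A$ inflate the relevant tail norm only by absolute constants and the factor $\|A\|_2$: since $\|\bbE[\nabla\ell]\|_2 \le \bbE\|\nabla\ell\|_2$, we have $\|u_i\|_2 \le \|A\|_2\left(\|\nabla\ell(x_i,y_i,\beta^\star)\|_2 + \bbE\|\nabla\ell\|_2\right)$, and the right-hand side has the same tail behavior as $\|\nabla\ell\|_2$ up to constants. In the sub-Gaussian regime I would take $\alpha = 2$, so that $B_u^{(2)}$, the $\psi_2$-Orlicz norm of $\|u_i\|_2$, satisfies $B_u^{(2)} \lesssim b_1\|A\|_2$; Lemma~\ref{lem:concentration_vec} then yields \eqref{assm1:ineq1} with $\gamma = 1/\alpha = 1/2$ and $B_1 = b_1$ after absorbing absolute constants. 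In the sub-exponential regime I would take $\alpha = 1$, obtaining $B_u^{(1)} \lesssim b_1\|A\|_2$ and hence \eqref{assm1:ineq1} with $\gamma = 1$ and $B_1 = b_1$.

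The bounded case, which targets the sharper exponent $\gamma = 0$, is the one step that does not fit the finite-$\alpha$ form of Lemma~\ref{lem:concentration_vec} verbatim, since any finite $\alpha$ produces a logarithmic factor with positive exponent $1/\alpha$; this is the main obstacle. I would resolve it by invoking the \emph{classical} vector Bernstein inequality, i.e.\ the $\gamma=0$ specialization noted in the remark following Assumption~\ref{assm1}: when $\|\nabla\ell(x_i,y_i,\beta^\star)\|_2 \le b_1$ almost surely we have $\|u_i\|_2 \le 2b_1\|A\|_2$ almost surely, and classical Bernstein delivers the bound with no logarithmic factor, giving $\gamma = 0$ and $B_1 = b_1$. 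Equivalently, one may let $\alpha\to\infty$ in Lemma~\ref{lem:concentration_vec}, noting that $B_u^{(\alpha)}\to 2b_1\|A\|_2$ and $(\log(B_u^{(\alpha)}/\sqrt{V}))^{1/\alpha}\to 1$. Finally, for the Hessian bound \eqref{assm1:ineq2} I would apply a matrix Hoeffding inequality to the i.i.d.\ mean-zero symmetric matrices $\nabla^2\ell(x_i,y_i,\beta^\star) - \bbE[\nabla^2\ell(x_i,y_i,\beta^\star)]$, whose spectral norms are bounded by $2b_2$; this produces $\|\nabla^2\ell_n(\beta^\star) - \bbE[\nabla^2\ell_n(\beta^\star)]\|_2 \le c\,b_2\sqrt{\log(d/\delta)/n}$ in the desired single-term form, establishing \eqref{assm1:ineq2} with $B_2 = b_2$.
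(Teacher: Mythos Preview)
Your proposal is correct and follows essentially the same approach as the paper, which simply states that \eqref{assm1:ineq1} is a direct consequence of Lemma~\ref{lem:concentration_vec} and \eqref{assm1:ineq2} follows from matrix Hoeffding. Your write-up is in fact more careful than the paper's one-line proof: you explicitly match each tail regime to the corresponding value of $\alpha$ in Lemma~\ref{lem:concentration_vec}, and you correctly flag that the bounded case with $\gamma=0$ is not literally covered by any finite $\alpha$ and instead requires either the classical Bernstein inequality (as alluded to in the remark following Assumption~\ref{assm1}) or the limit $\alpha\to\infty$.
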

\begin{proof}
The vector concentration \eqref{assm1:ineq1} is a direct proposition of Lemma \ref{lem:concentration_vec}. The Hessian concentration \eqref{assm1:ineq2} is a direct consequence of matrix Hoeffiding inequality. 
\end{proof}

\end{document}